\definecolor{chaptercolor}{HTML}{1A254B}
\definecolor{darkblue}{HTML}{1A254B}
\definecolor{linkcolor}{HTML}{2B50AA}
\definecolor{citecolor}{HTML}{2B50AA}
\definecolor{lightlinkcolor}{HTML}{9A8F97}
\definecolor{darklinkcolor}{HTML}{1A254B}
\definecolor{pink}{HTML}{E05F60}
\definecolor{lightblue}{HTML}{A7BED3}
\definecolor{red}{HTML}{F2545B}
\definecolor{blue}{HTML}{2b50aa}
\theoremstyle{plain}
\newtheorem{theorem}{Theorem}[section]
\newtheorem{proposition}[theorem]{Proposition}
\newtheorem{lemma}[theorem]{Lemma}
\newtheorem{inflemma}[theorem]{Informal Lemma}
\newtheorem{corollary}[theorem]{Corollary}
\theoremstyle{definition}
\newtheorem{definition}[theorem]{Definition}
\newtheorem{assumption}[theorem]{Assumption}
\newtheorem{example}[theorem]{Example}
\theoremstyle{remark}
\newtheorem{remark}[theorem]{Remark}
\Crefname{assumption}{Assumption}{Assumptions}
\crefname{assumption}{Assumption}{Assumptions}
\renewcommand{\paragraph}{%
  \@startsection{paragraph}{4}%
  {\z@}{0ex \@plus 0ex \@minus 0ex}{-1em}%
  {\normalfont\normalsize\bfseries}%
}
\newlist{lemenum}{enumerate}{1} %
\setlist[lemenum]{label=(\roman*),ref=\thelemma\,(\roman*),topsep=0pt}
\Crefname{lemenumi}{Lemma}{Lemmas}
\newlist{corenum}{enumerate}{1} %
\setlist[corenum]{label=(\roman*),ref=\thecorollary\,(\roman*),topsep=0pt}
\Crefname{corenumi}{Corollary}{Corollaries}
\newlist{thmenum}{enumerate}{1} %
\setlist[thmenum]{label=(\roman*),ref=\thetheorem\,(\roman*),topsep=0pt}
\Crefname{thmenumi}{Theorem}{Theorems}
\newlist{propenum}{enumerate}{1} %
\setlist[propenum]{label=(\roman*),ref=\thedefinition\,(\roman*),topsep=0pt}
\Crefname{propenumi}{Property}{Properties}
\newlist{assenum}{enumerate}{1} %
\setlist[assenum]{label=(\roman*),ref=\theassumption\,(\roman*),topsep=0pt}
\Crefname{assenumi}{Assumption}{Assumptions}
\pgfplotsset{compat=newest}
\NewDocumentCommand{\incfig}{mo}{
  \begin{center}
    \IfValueT{#2}{\def\svgwidth{#2}}{\def\svgwidth{\columnwidth}}
    \import{./figures/}{#1.pdf_tex}
  \end{center}
}
\NewDocumentCommand{\incplt}{O{\columnwidth}m}{%
  \begin{center}
    \adjustbox{width=#1}{\import{./plots/output/}{#2.pgf}}
  \end{center}
}
\newenvironment{chapquote}[2][2em]
  {%
   \def\chapquote@author{#2}%
   \itshape}
  {\par\normalfont\hfill--\ \chapquote@author\par}
\newcommand{\rmnum}[1]{\romannumeral #1}
\newcommand{\eleq}[1]{\overset{(\rmnum{#1})}&{\leq}}
\newcommand{\eeq}[1]{\overset{(\rmnum{#1})}&{=}}
\newcommand{\e}[1]{\ensuremath{(\rmnum{#1})}}
\newcommand{\ebf}[1]{\ensuremath{\textbf{(\rmnum{#1})}}}
\newcommand{\itl}{{\textsc{ITL}}\xspace}
\newcommand{\gitl}{{\textbf{\texttt{G-ITL}}}\xspace}
\newcommand{\litl}{{\textbf{\texttt{L-ITL}}}\xspace}
\newcommand{\mmitl}{{\textsc{MM-ITL}}\xspace}
\newcommand{\vtl}{\textsc{VTL}\xspace}
\newcommand{\ctl}{\textsc{CTL}\xspace}
\newcommand{\gctl}{\textbf{\texttt{G-CTL}}\xspace}
\newcommand{\lctl}{\textbf{\texttt{L-CTL}}\xspace}
\DeclareFontFamily{U}{mathb}{\hyphenchar\font45}
\DeclareFontShape{U}{mathb}{m}{n}{
      <5> <6> <7> <8> <9> <10> gen * mathb
      <10.95> mathb10 <12> <14.4> <17.28> <20.74> <24.88> mathb12
      }{}
\DeclareSymbolFont{mathb}{U}{mathb}{m}{n}
\DeclareMathSymbol{\Asterisk}      {2}{mathb}{"06}
\newcommand{\pref}{\hyperref[principle]{\psym}\xspace}
\newcommand*{\abs}[1]{| #1 |}
\NewDocumentCommand{\norm}{sm}{\IfBooleanTF{#1}{\|#2\|}{\left\| #2 \right\|}}
\newcommand*{\const}{\mathrm{const}}
\DeclareMathOperator*{\defeq}{\smash{\overset{\mathrm{def}}{=}}}
\DeclareMathOperator*{\eqdef}{\smash{\overset{\mathrm{def}}{=}}}
\DeclareMathOperator*{\argmax}{arg\,max}
\DeclareMathOperator*{\argmin}{arg\,min}
\DeclareMathOperator*{\spn}{span}
\DeclarePairedDelimiter\parentheses{(}{)}
\DeclarePairedDelimiter\brackets{[}{]}
\DeclarePairedDelimiter\braces{\{}{\}}
\newcommand{\R}{\mathbb{R}}
\newcommand{\Nat}{\mathbb{N}}
\renewcommand{\vec}[1]{\boldsymbol{#1}}
\newcommand{\mat}[1]{\boldsymbol{#1}}
\newcommand{\spa}[1]{\mathcal{#1}}
\newcommand{\opt}[1]{#1^\star}
\DeclareMathOperator*{\iid}{\smash{\overset{\mathrm{iid}}{\sim}}}
\DeclareMathOperator*{\uar}{\smash{\overset{\mathrm{u.a.r.}}{\sim}}}
\NewDocumentCommand{\Ind}{m}{\mathbbm{1}\{{#1}\}}
\NewDocumentCommand{\fnPr}{}{\mathbb{P}}
\RenewDocumentCommand{\Pr}{om}{\fnPr\IfValueT{#1}{_{#1}}\parentheses*{#2}}
\NewDocumentCommand{\Prsm}{om}{\fnPr\IfValueT{#1}{_{#1}}\parentheses{#2}}
\RenewDocumentCommand{\H}{mo}{\mathrm{H}\IfValueTF{#2}{\!\left(#1\ \middle|\ #2\right)}{\parentheses*{#1}}}
\NewDocumentCommand{\Hsm}{mo}{\mathrm{H}\IfValueTF{#2}{(#1 \mid #2)}{\parentheses{#1}}}
\NewDocumentCommand{\I}{mmo}{\mathrm{I}\IfValueTF{#3}{\!\left(#1;#2\ \middle|\ #3\right)}{\parentheses*{#1; #2}}}
\NewDocumentCommand{\Ism}{mmo}{\mathrm{I}\IfValueTF{#3}{(#1;#2 \mid #3)}{\parentheses{#1; #2}}}
\NewDocumentCommand{\E}{somo}{\ensuremath{\mathbb{E}\IfValueT{#2}{_{#2}}{} \IfBooleanTF{#1}{#3}{\IfValueTF{#4}{\!\left[#3\ \middle|\ #4\right]}{\brackets*{#3}}}}}
\NewDocumentCommand{\Esm}{somo}{\ensuremath{\mathbb{E}\IfValueT{#2}{_{#2}}{} \IfBooleanTF{#1}{#3}{\IfValueTF{#4}{\!\left[#3\ \middle|\ #4\right]}{\brackets{#3}}}}}
\NewDocumentCommand{\Var}{somo}{\mathrm{Var}\IfValueT{#2}{_{#2}}{} \IfBooleanTF{#1}{#3}{\IfValueTF{#4}{\!\left(#3\ \middle|\ #4\right)}{\parentheses*{#3}}}}
\NewDocumentCommand{\Varsm}{somo}{\mathrm{Var}\IfValueT{#2}{_{#2}}{} \IfBooleanTF{#1}{#3}{\IfValueTF{#4}{\left(#3\ \middle|\ #4\right)}{\parentheses{#3}}}}
\NewDocumentCommand{\Cov}{som}{\mathrm{Cov}\IfValueT{#2}{_{#2}}{} \IfBooleanTF{#1}{#3}{\parentheses*{#3}}}
\NewDocumentCommand{\Cor}{som}{\mathrm{Cor}\IfValueT{#2}{_{#2}}{} \IfBooleanTF{#1}{#3}{\parentheses*{#3}}}
\NewDocumentCommand{\KL}{mm}{\mathrm{KL}\parentheses*{#1 \| #2}}
\NewDocumentCommand{\grad}{e_}{\boldsymbol{\nabla}\IfValueT{#1}{_{\!\!#1}\,}}
\NewDocumentCommand{\BigO}{m}{O\parentheses*{#1}}
\NewDocumentCommand{\BigOTil}{m}{\widetilde{O}\parentheses*{#1}}
\NewDocumentCommand{\transpose}{m}{#1^\top}
\NewDocumentCommand{\inv}{m}{#1^{-1}}
\RenewDocumentCommand{\det}{m}{\left| #1 \right|}
\NewDocumentCommand{\tr}{m}{\mathrm{tr}\;#1}
\NewDocumentCommand{\diag}{som}{\mathrm{diag}\IfValueT{#2}{_{#2}}{}\,#3}
\NewDocumentCommand{\msqrt}{m}{#1^{\nicefrac{1}{2}}}
\NewDocumentCommand{\N}{somm}{\mathcal{N}\IfBooleanTF{#1}{\left(}{(}\IfValueT{#2}{#2;}{} #3, #4\IfBooleanTF{#1}{\right)}{)}}
\NewDocumentCommand{\GP}{omm}{\mathcal{GP}(\IfValueT{#1}{#1;}{} #2, #3)}
\newcommand{\vzero}{\vec{0}}
\newcommand{\vf}{\vec{f}}
\newcommand{\vfsub}[1]{\vec{f}_{\!\!#1}}
\newcommand{\vk}{\vec{k}}
\newcommand{\vu}{\vec{u}}
\newcommand{\vv}{\vec{v}}
\newcommand{\vx}{\vec{x}}
\newcommand{\vs}{\vec{s}}
\newcommand{\vT}{\vec{T}}
\newcommand{\vxhat}{\widehat{\vec{x}}}
\newcommand{\vxp}{\vec{x'}}
\newcommand{\vy}{\vec{y}}
\newcommand{\vysub}[1]{\vec{y}_{\!#1}}
\newcommand{\vz}{\vec{z}}
\newcommand{\vbeta}{\boldsymbol{\beta}}
\newcommand{\vdelta}{\boldsymbol{\delta}}
\newcommand{\vvarepsilon}{\boldsymbol{\varepsilon}}
\newcommand{\vmu}{\boldsymbol{\mu}}
\newcommand{\vmusub}[1]{\boldsymbol{\mu}_{\!#1}}
\newcommand{\vphi}{\boldsymbol{\phi}}
\newcommand{\vpi}{\boldsymbol{\pi}}
\newcommand{\vtheta}{\boldsymbol{\theta}}
\newcommand{\vthetap}{\boldsymbol{\theta'}}
\newcommand{\vthetahat}{\boldsymbol{\widehat{\theta}}}
\newcommand{\mzero}{\mat{0}}
\newcommand{\mA}{\mat{A}}
\newcommand{\mD}{\mat{D}}
\newcommand{\mI}{\mat{I}}
\newcommand{\mK}{\mat{K}}
\newcommand{\mKsub}[1]{\mat{K}_{\!#1}}
\newcommand{\mP}{\mat{P}}
\newcommand{\mPsub}[1]{\mat{P}_{\!#1}}
\newcommand{\mPhi}{\mat{\Phi}}
\newcommand{\mPi}{\mat{\Pi}}
\newcommand{\mQ}{\mat{Q}}
\newcommand{\mR}{\mat{R}}
\newcommand{\mV}{\mat{V}}
\newcommand{\mLambda}{\mat{\Lambda}}
\newcommand{\mSigma}{\mat{\Sigma}}
\newcommand{\spA}{\spa{A}}
\newcommand{\spB}{\spa{B}}
\newcommand{\spC}{\spa{C}}
\newcommand{\spD}{\spa{D}}
\newcommand{\spE}{\spa{E}}
\newcommand{\spF}{\spa{F}}
\newcommand{\spG}{\spa{G}}
\newcommand{\spH}{\spa{H}}
\newcommand{\spI}{\spa{I}}
\newcommand{\spL}{\spa{L}}
\newcommand{\spM}{\spa{M}}
\newcommand{\spO}{\spa{O}}
\newcommand{\spR}{\spa{R}}
\newcommand{\spS}{\spa{S}}
\newcommand{\spU}{\spa{U}}
\newcommand{\spW}{\spa{W}}
\newcommand{\spX}{\spa{X}}
\newcommand{\spY}{\spa{Y}}
\newcommand{\spPA}{\ensuremath{\spa{P}_{\!\!\spA}}}
\newcommand{\spPS}{\ensuremath{\spa{P}_{\!\spS}}}
\newcommand{\kappabar}{\bar{\kappa}}
\title{Transductive Active Learning: \\ Theory and Applications}
\author{
    Jonas Hübotter\thanks{Correspondence to \texttt{jonas.huebotter@inf.ethz.ch}} \quad Bhavya Sukhija \quad Lenart Treven \quad Yarden As \quad Andreas Krause \\[3pt]
    Department of Computer Science\\
    ETH Zürich, Switzerland
}
\begin{document}

\maketitle

\vspace{-0.6\baselineskip}\begin{abstract}
  We study a generalization of classical active learning to real-world settings with concrete prediction targets where sampling is restricted to an accessible region of the domain, while prediction targets may lie outside this region.
  We analyze a family of decision rules that sample adaptively to minimize uncertainty about prediction targets.
  We are the first to show, under general regularity assumptions, that such decision rules converge uniformly to the smallest possible uncertainty obtainable from the accessible data.
  We demonstrate their strong sample efficiency in two key applications: active fine-tuning of large neural networks and safe Bayesian optimization, where they achieve state-of-the-art performance.
\end{abstract}

\vspace{-3pt}
\section{Introduction}
\vspace{-2pt}

\begin{wrapfigure}{r}{0.4\textwidth}
  \vspace{-0.3cm}
  \centering
  \includesvg[width=0.4\columnwidth]{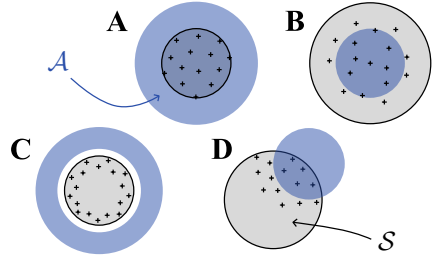}
  \vspace{-0.4cm}
  \caption{Instances of transductive active learning with target~space~$\spA$ shown in blue and sample~space~$\spS$ shown in gray. The points denote plausible observations within~$\spS$ to ``learn''~$\spA$. In \textbf{(A)}, the target space contains ``everything'' within~$\spS$ as well as points \emph{outside}~$\spS$. In \textbf{(B, C, D)}, one makes observations \emph{directed} towards learning about a particular target. Prior work on inductive active learning has focused on the instance $\spA = \spS$.\looseness=-1}
  \label{fig:transductive_active_learning}
  \vspace{-0.3cm}
\end{wrapfigure}

Machine learning, at its core, is about designing systems that can extract knowledge or patterns from data.
One part of this challenge is determining not just how to learn from observational data but deciding what data to obtain next, given the information already available.
More formally, for an unknown and sufficiently regular function~$f$ over a domain~$\spX$, we ask: \emph{How can we learn $f$ sample-efficiently from (noisy) observations?}
This problem is widely studied in active learning and experimental design \citep{chaloner1995bayesian,settles2009active}.\looseness-1

Active learning methods commonly aim to learn~$f$ globally, i.e., across the entire domain~$\spX$.
However, in many real-world problems, \ebf{1} the domain is so large that learning~$f$ globally is hopeless or \ebf{2} agents have limited information and cannot access the entire domain~(e.g., due to restricted access or to act safely). %
Thus, global learning is often not desirable or even possible.
Instead, intelligent systems are typically required to act in a more \emph{directed} manner and \emph{extrapolate} beyond their limited information.
This work formalizes the above two aspects of active learning, which have remained largely unaddressed by prior work.
We provide a comprehensive overview of related work in \cref{sec:related_work}.\looseness-1

\paragraph{``Directed'' transductive active learning}

We consider the generalized problem of \emph{transductive active learning} (TAL), where given two arbitrary subsets of the domain $\spX$; a \emph{target space} $\spA \subseteq \spX$, and a \emph{sample space} $\spS \subseteq \spX$, we study the question: \vspace{-0.2cm}\begin{center}
    \emph{How can we learn $f$ within $\spA$ by actively sampling observations within $\spS$?}
\end{center}

This problem is ubiquitous in real-world applications such as safe Bayesian optimization, where~$\spS$ is a set of safe parameters and~$\spA$ might represent parameters outside~$\spS$ whose safety we want to infer.
Active fine-tuning of neural networks is another example, where the target space~$\spA$ represents the test set over which we want to minimize risk, and the sample space~$\spS$ represents the dataset from which we can retrieve data points to fine-tune our model to~$\spA$.
\Cref{fig:transductive_active_learning} visualizes some instances of TAL.\looseness=-1

Whereas most prior work has focused on the ``global'' inductive instance ${\spX = \spA = \spS}$, \cite{mackay1992information} was the first to consider specific target spaces~$\spA$ and proposed the principle of selecting points in~$\spS$ to minimize the ``posterior uncertainty'' about points in~$\spA$.
Since then, several works have studied this principle empirically~\citep[e.g.,][]{seo2000gaussian,yu2006active,bogunovic2016truncated,wang2021beyond,kothawade2021similar,smith2023prediction}.
In this work, we model $f$ as a Gaussian process or (equivalently) as a function in a reproducing kernel Hilbert space, for which the above principle is analytically and computationally tractable.
Our contributions are: \begin{itemize}
  \item \textbf{Theory (\cref{sec:algorithm}):} We are the first to give rates for the uniform convergence of uncertainty over the target space~$\spA$ to the smallest attainable value, given samples from the sample space~$\spS$~(\cref{thm:variance_convergence}),
  Our results provide a theoretical justification for the principle of minimizing posterior uncertainty in transductive active learning, and indicate that transductive active learning can be more sample efficient than inductive active learning.

  \item \textbf{Applications:} We show that transductive active learning improves upon the state-of-the-art in the batch-wise \emph{active fine-tuning} of neural networks for image classification~(\cref{sec:nns}) and in \emph{safe Bayesian optimization}~(\cref{sec:safe_bo}).\looseness=-1
\end{itemize}

\section{Problem Setting}\label{sec:problem setting}

In this work, we study algorithmic principles which can be derived from various perspectives.
In our presentation, we adopt a probabilistic view and model~$f$ as a stochastic process with marginals~$f_{\vx}$ and joint random vectors~$\{f_{\vx}\}_{\vx \in X}$ for~${X \subseteq \spX}, {\abs{X} < \infty}$ denoted by~$\vfsub{X}$.
Let~$\vysub{X}$ denote noisy observations of~$\vfsub{X}$, ${\{y_{\vx} = f_{\vx} + \varepsilon_{\vx}\}_{\vx \in X}}$, where $\varepsilon_{\vx}$ is independent noise.\footnote{$X$ may be a multiset in which case repeated occurrence of $\vx$ corresponds to independent observations of~$y_{\vx}$.}
We study the ``adaptive'' setting, where in round $n$ the agent selects a point $\vx_n \in \spS$ and observes ${y_n = y_{\vx_n}}$.
The agent's choice of $\vx_n$ may depend on the outcome of prior observations $\spD_{n-1} \defeq \{(\vx_i, y_i)\}_{i<n}$.
We assume here for simplicity that target space $\spA$ and sample space $\spS$ are finite, and relax this in \cref{sec:proofs:variance_convergence_generalization,sec:generalizations:roi}.
\looseness=-1

\paragraph{Background on information theory}

We briefly recap several important concepts from information theory of which we provide formal definitions in \cref{sec:definitions}.
The (differential) entropy~$\H{\vf}$ is one possible measure of uncertainty about~$\vf$ and the conditional entropy $\H{\vf}[\vy]$ is the (expected) posterior uncertainty about~$\vf$ after observing~$\vy$.
The information gain~${\I{\vf}{\vy} = \H{\vf} - \H{\vf}[\vy]}$ measures the (expected) reduction in uncertainty about $\vf$ due to $\vy$.
We denote the information gain about $\spA$ from observing $X$ by $\I{\vfsub{\spA}}{\vysub{X}}$.
The maximum information gain about $\spA$ from $n$ observations within $\spS$ is \begin{align*}
    \gamma_{\spA,\spS}(n) \defeq \max_{\substack{X \subseteq \spS \\ \abs{X} \leq n}} \I{\vfsub{\spA}}{\vysub{X}}.
\end{align*}
This ``information capacity'' measures the information about $\vfsub{\spA}$ that is accessible from within $\spS$, and has been used previously \citep[e.g., by][]{srinivas2009gaussian,chowdhury2017kernelized,vakili2021information} in the setting where ${\spX = \spA = \spS}$, taking the form of ${\gamma_n \defeq \gamma_{\spX}(n) \defeq \gamma_{\spX, \spX}(n)}$.
We remark that ${\gamma_{\spA, \spS}(n) \leq \gamma_{\spS}(n)}$ holds uniformly for all $\spA$, $\spS$, and $n$ due to the data processing inequality.
Generally, $\gamma_{\spA, \spS}(n)$ can be substantially smaller if the target space is a sparse subset of the sample space.\looseness=-1

\subsection{Algorithms for Transductive Active Learning}

We analyze the following unifying principle for transductive active learning: \begin{center}
    \emph{Select samples to minimize the posterior ``uncertainty'' about $f$ within $\spA$ (in expectation).} %
\end{center}
This principle yields a family of simple and natural decision rules which depend on the chosen measure of \emph{uncertainty} $\mathrm{Unc}_{\!\spA}$: \begin{align}
    \vx_{n} &= \argmin_{\vx \in \spS} \E*[y \sim p(y_{\vx} \mid \spD_{n-1})]{\mathrm{Unc}_{\!\spA}(\spD_{n-1} \cup \{(\vx, y)\})}. \label{eq:alg}
\end{align}
That is, the decision rules select $\vx_n$ so as to minimize the uncertainty about the prediction targets~$\vfsub{\spA}$ as quantified by the uncertainty $\mathrm{Unc}_{\!\spA}$ (in expectation) after having received the feedback $y_n$.
The following measures of uncertainty have been explored in prior work.\looseness=-1 \begin{itemize}
    \item \textbf{\vtl}: the total \emph{variance} of prediction targets, ${\mathrm{Unc}_{\!\spA}(\spD) = \sum_{\vxp \in \spA} \Var{f_{\vxp}}[\spD]}$, previously explored by \cite{seo2000gaussian} and \cite{yu2006active}.
    \item \textbf{\itl}: the \emph{entropy} of prediction targets, ${\mathrm{Unc}_{\!\spA}(\spD) = \H{\vfsub{\spA}}[\spD]}$, which was originally proposed by \cite{mackay1992information} and later studied by \cite{kothawade2021similar}.
    Note that unlike variance, entropy takes into account the mutual dependence between points in~$\spA$.
    Furthermore, with entropy, \cref{eq:alg} can be expressed simply as maximizing \emph{information gain}:\looseness=-1 \begin{align}
        \vx_{n} &= \argmin_{\vx \in \spS} \H{\vfsub{\spA}}[\spD_{n-1}, y_{\vx}] = \argmax_{\vx \in \spS} \Ism{\vfsub{\spA}}{y_{\vx}}[\spD_{n-1}].
    \end{align}
\end{itemize}
Other decision rules such as the \emph{mean-marginal information gain}~\citep{mackay1992information,wang2021beyond} and its recent extension EPIG~\citep{smith2023prediction} also lie within this framework, and we discuss this instance further in \cref{sec:interpretations_approximations:mm_itl}.\looseness=-1

The transductive decision rules can recover standard active learning algorithms from the inductive setting where ${\spS \subseteq \spA}$.
For example, \itl reduces to ${\vx_n = \argmax_{\vx \in \spS} \Ism{f_{\vx}}{y_{\vx}}[\spD_{n-1}]}$, i.e., is ``undirected'' and reduces to standard uncertainty-based active learning (cf.~\cref{sec:proofs:undirected_itl}).
The convergence properties for the special instance ${\spS = \spA}$ have been studied extensively. %
Next, we derive similar guarantees in the more general setting of transductive active learning.
\looseness=-1

\section{Main Results}\label{sec:algorithm}

We show in the following that the above algorithms for TAL are tractable, both computationally and theoretically, when $f$ is a Gaussian process~\citep[GP,][]{williams2006gaussian}.
GPs are a flexible class of probabilistic models where probabilistic inference can be performed analytically.\looseness=-1

In particular, when $f \mid \spD_{n} \sim \GP{\mu_{n}}{k_{n}}$ and the noise $\varepsilon_{\vx}$ is independent and zero-mean Gaussian with known variance $\rho^2$, the \vtl and \itl objectives have a closed form: \begin{align}
    \vx_{n+1} &= \argmin_{\vx \in \spX} \sum_{\vxp \in \spA} \parentheses*{k_{n}(\vxp, \vxp) - \frac{k_{n}(\vx, \vxp)^2}{k_{n}(\vx, \vx) + \rho^2}}, \tag{\textbf{\vtl}} \\
    \vx_{n+1} &= \argmin_{\vx \in \spX} \frac{k_n(\vx,\vx) - \transpose{\vk_n(\vx)}\inv{\mK_n}\vk_n(\vx) + \rho^2}{k_n(\vx,\vx) + \rho^2}, \tag{\textbf{\itl}}
\end{align} where $(\vk_n(\vx))_{\vxp \in \spA} = k_n(\vx,\vxp)$ and $(\mK_n)_{\vx,\vxp \in \spA} = k_n(\vx,\vxp)$.
These objectives can be computed efficiently in the size of the domain $\spX$, and are easily parallelizable.

\subsection{Convergence Guarantees}

Our main contribution of this section is to derive convergence rates for \vtl and \itl.
To the best of our knowledge, this work is first to present convergence guarantees for TAL.

Consider first the non-probabilistic setting where the ``ground truth'' function $\opt{f}$ may be any sufficiently regular fixed function on $\spX$.
\cite{abbasi2013online} and \cite{chowdhury2017kernelized} have shown that in this case we may use the GP model~$f$ as a misspecified model of~$\opt{f}$:\looseness=-1
\begin{inflemma}\label{lem:confidence_intervals:informal}
    Pick ${\delta \in (0,1)}$, let $\opt{f}$ be sufficiently regular,\footnote{More formally, $\opt{f}$ must lie in the reproducing kernel Hilbert space of kernel $k$.} and let the observation noise $\{\varepsilon_i\}_{i \leq n}$ be sub-Gaussian.
    Then, for all $\vx \in \spX$ and $n \geq 0$ jointly with probability at least $1-\delta$, \begin{align}
        |\opt{f}(\vx) - \mu_{n}(\vx)| \leq \beta_{n}(\delta) \cdot \sigma_{n}(\vx)
    \end{align} with the scaling factor $\smash{\beta_n(\delta) \leq O(\norm{\opt{f}}_k + \sqrt{\gamma_n + \log(1/\delta)})}$.
    Here, ${\mu_{n}(\vx) \defeq \E{f_{\vx}}[\spD_n]}$ and ${\sigma_{n}^2(\vx) \defeq \Var{f_{\vx}}[\spD_n]}$ are mean and variance of the GP posterior of $f_{\vx}$ conditional on the observations $\spD_n$, pretending that $\{\varepsilon_i\}_{i \leq n}$ is independent Gaussian noise.\looseness=-1
\end{inflemma}
The above lemma shows that a GP may accurately model any sufficiently regular function.
The error of the GP model at a prediction target $\vx$ is governed by the posterior variance $\sigma_n(\vx)$, which depends on the selected observations $\vx_{1:n}$ and which is minimized directly by \vtl.
In the following, we will prove rates for how fast $\sigma_n(\vx)$ converges to its smallest possible value.\looseness=-1

Our convergence analysis shows that TAL can fully reduce the uncertainty $\sigma_n(\vx)$ in the prediction only if the sample space contains sufficient information to determine the true value $\opt{f}(\vx)$.
If the sample space does not contain all relevant information, the remaining uncertainty is quantified by the limiting uncertainty after seeing ``all data in the sample space infinitely often'', which we call the \emph{irreducible uncertainty}:\looseness=-1 \begin{equation*}
    \eta^2_{\spS}(\vx) \defeq \Varsm{f_{\vx} \mid \vfsub{\spS}}.
\end{equation*}
Clearly, we have $\sigma_n^2(\vx) \geq \eta^2_{\spS}(\vx)$ for any $n$ observations, that is, $\eta^2_{\spS}(\vx)$ represents the smallest uncertainty one can hope to achieve from observing only within $\spS$.
It can be seen that $\eta^2_{\spS}(\vx) = 0$ for all $\vx \in \spS$.
However, for points $\vx \not\in \spS$, it may be (and typically is!) strictly positive.\looseness=-1

We will denote the \emph{uncertainty reduction} of data $X = \{\vx_i\}_{i=1}^m$ about the prediction targets $\spA$ by \begin{equation*}
    \psi_{\spA}(X) \defeq \underbrace{\mathrm{Unc}_{\!\spA}(\emptyset)}_{\text{prior uncertainty}} -\ \underbrace{\E*[\spD_X \sim p(\vysub{X})]\mathrm{Unc}_{\!\spA}(\spD_X)}_{\text{posterior uncertainty}}
\end{equation*}
and note that, in the GP setting, \cref{eq:alg} selects $\vx_n = \argmax_{\vx \in \spS} \psi_{\spA}(\vx_{1:n-1} \cup \{\vx\})$.
In case of \itl, where uncertainty is measured by entropy, this uncertainty reduction is simply the information gain of observing $X$.
In our presented convergence guarantees, we make the following assumption.\looseness=-1

\begin{assumption}\label{asm:submodularity}
    The uncertainty reduction $\psi_{\spA}(X)$ is submodular.
\end{assumption}\vspace{-0.2cm}

Intuitively, \cref{asm:submodularity} states that the marginal uncertainty reduction achieved by adding a point to the selected data (i.e., the ``marginal gain'') decreases as the size of the selected data increases, which is a common assumption in prior work.\footnote{Similar assumptions have been made by \cite{bogunovic2016truncated} and \cite{kothawade2021similar}.}
Formally, \cref{asm:submodularity} is satisfied if, for all $\vx \in \spS$ and $X' \subseteq X \subseteq \spS$, \begin{align*}
    \Delta_{\spA}(\vx \mid X') \geq \Delta_{\spA}(\vx \mid X)
\end{align*} where $\Delta_{\spA}(\vx \mid X) \defeq \psi_{\spA}(X \cup \{\vx\}) - \psi_{\spA}(X)$ is the \emph{marginal uncertainty reduction} of $\vx$ given~$X$.

This assumption is satisfied exactly for \itl when ${\spS \subseteq \spA}$~(cf.~\cref{lem:information_ratio_monotonicity}), but is violated by some other instances (e.g., \cref{ex:gaussian_synergies}).
We observe in our experiments that the assumption holds approximately, and we provide an extensive discussion of our results in the appendix for such instances, relying on the notion of weak submodularity~\citep{das2018approximate}.
Notably, under \cref{asm:submodularity}, the achieved uncertainty reduction,~$\psi_{\spA}(\vx_{1:n})$, is a constant factor approximation of the maximal uncertainty reduction,~$\max_{X \subseteq \spS, \abs{X} \leq n}\psi_{\spA}(X)$, due to the seminal result on submodular function maximization by \cite{nemhauser1978analysis}.
That is, the iterative scheme of \cref{eq:alg} achieves a constant factor approximation of the optimal uncertainty reduction.\looseness=-1

With this, we can now state our main results on the convergence of \vtl and \itl:\looseness=-1

\begin{theorem}[Convergence rates]\label{thm:variance_convergence}
    Let \cref{asm:submodularity} hold and the data be selected by either \vtl or \itl.
    Assume that ${f \sim \GP{\mu}{k}}$ with known mean function $\mu$ and kernel $k$, the noise $\varepsilon_{\vx}$ is mutually independent and zero-mean Gaussian with known variance, and $\gamma_n$ is sublinear in $n$.
    Then there exists a constant~$C$ such that for any $n \geq 1$ and $\vx \in \spA$, \begin{align}
        \sigma_n^2(\vx) \leq \underbrace{\eta^2_{\spS}(\vx)}_{\text{irreducible}} +\ \underbrace{C \gamma_{\spA, \spS}(n) / \sqrt{n}}_{\text{reducible}}. \label{eq:variance_convergence:extrapolation}
    \end{align}
    Moreover, if ${\vx \in \spA \cap \spS}$, there exists a constant $C'$ such that \begin{align}
        \sigma_n^2(\vx) \leq C' \gamma_{\spA, \spS}(n) / n. \label{eq:variance_convergence:interpolation}
    \end{align}
\end{theorem}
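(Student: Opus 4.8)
The plan is to reduce both inequalities to controlling a single \emph{reducible} quantity and then to bound that quantity by the marginal information gains of the selected samples, whose sum is capped by $\gamma_{\spA,\spS}(n)$. The starting point is the exact decomposition
\[
\sigma_n^2(\vx) = \eta^2_\spS(\vx) + \Var{g_{\vx}}[\spD_n], \qquad g_{\vx} \defeq \E{f_\vx}[\vfsub{\spS}].
\]
Since $f$ is a GP, the residual $f_\vx - g_{\vx}$ is independent of $\vfsub{\spS}$, hence of $\spD_n$ (the samples lie in $\spS$ and the noise is independent), so its variance $\eta^2_\spS(\vx)$ is unchanged by conditioning; this yields the display and shows it suffices to bound $r_n(\vx) \defeq \Var{g_{\vx}}[\spD_n]$, which is non-increasing in $n$ and whose learnable part $g_{\vx}$ lies in the span of $\{f_\vs\}_{\vs \in \spS}$. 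For $\vx \in \spA \cap \spS$ one has $g_{\vx} = f_\vx$ and $\eta^2_\spS(\vx) = 0$, so $r_n(\vx) = \sigma_n^2(\vx)$ and \eqref{eq:variance_convergence:interpolation} is exactly this special case.

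For the \emph{interpolation} bound \eqref{eq:variance_convergence:interpolation}, $\vx \in \spS$ is itself an admissible query, so at each round the greedy value dominates that of querying $\vx$, and by monotonicity of mutual information (as $f_\vx$ is a coordinate of $\vfsub{\spA}$),
\[
\I{\vfsub{\spA}}{y_{\vx_i}}[\spD_{i-1}] \geq \I{\vfsub{\spA}}{y_{\vx}}[\spD_{i-1}] \geq \I{f_\vx}{y_{\vx}}[\spD_{i-1}] = \tfrac12 \log\parentheses*{1 + \sigma_\epsilon^{-2}\sigma_{i-1}^2(\vx)}.
\]
Summing over $i \leq n$, the left side telescopes to $\I{\vfsub{\spA}}{\vysub{\vx_{1:n}}} \leq \gamma_{\spA,\spS}(n)$ by definition of the information capacity, while monotonicity of $\sigma_i^2(\vx)$ lower-bounds the right side by $\tfrac{n}{2}\log(1 + \sigma_\epsilon^{-2}\sigma_n^2(\vx))$. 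Applying $\log(1+t) \geq c\,t$ on the bounded range $t \in [0, \sigma_\epsilon^{-2}\bar{k}]$ with $\bar{k} = \sup_\vx k(\vx,\vx)$ then yields $\sigma_n^2(\vx) \leq C' \gamma_{\spA,\spS}(n)/n$. Here \cref{asm:submodularity} supplies the greedy/near-optimality structure, and the \vtl case is identical with the variance reduction of $\vfsub{\spA}$ replacing information gain, lower-bounded by the single-coordinate reduction $\sigma_{i-1}^4(\vx)/(\sigma_{i-1}^2(\vx) + \sigma_\epsilon^2)$.

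For the \emph{extrapolation} bound \eqref{eq:variance_convergence:extrapolation}, $\vx \notin \spS$ cannot be queried, so I would compare the greedy choice against the best admissible \emph{surrogate} $\vs \in \spS$: for each $i$, $\I{\vfsub{\spA}}{y_{\vx_i}}[\spD_{i-1}] \geq \max_{\vs \in \spS} \I{f_\vx}{y_{\vs}}[\spD_{i-1}]$, which lower-bounds (up to the factor $2\bar{k}$) the best one-step reduction of $r(\vx)$. Combined with $\sum_i \I{\vfsub{\spA}}{y_{\vx_i}}[\spD_{i-1}] \leq \gamma_{\spA,\spS}(n)$ and the monotonicity of $r_n(\vx)$, the remaining task is geometric: to show that, because $g_{\vx}$ lies in the span of the samplable variables, some $\vs \in \spS$ reduces the remaining reducible variance by a controlled fraction of (a power of) its current value. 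Feeding such a progress bound into the capped sum and applying Cauchy--Schwarz across the $n$ rounds gives $r_n(\vx) = \BigO{\sqrt{\gamma_{\spA,\spS}(n)/n}}$, which is $\BigO{\gamma_{\spA,\spS}(n)/\sqrt{n}}$ once $\gamma_{\spA,\spS}(n) \geq 1$, establishing the claim after absorbing small $n$ into $C$.

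The main obstacle is precisely this extrapolation step: quantifying, uniformly over $\vx \in \spA$, how much the best admissible query in $\spS$ reduces the learnable variance $r(\vx)$ relative to its current level. Expanding $g_{\vx} = \sum_{\vs} w_\vs f_\vs$ and bounding crudely by Cauchy--Schwarz incurs constants depending on the representation weights (e.g.\ $\norm{\vw}_1$) or on how fast the posterior variances of the \emph{supporting} points $\vs \in \spS$---which need not lie in $\spA$, so the interpolation argument does not apply to them---decay. Controlling these without a dimension- or representation-dependent factor is what forces the weaker $\sqrt{n}$ rate and constitutes the delicate part of the argument.
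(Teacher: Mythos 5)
Your interpolation argument is correct and is, in substance, the paper's own: greedy dominance of \itl over the admissible query $\vx \in \spA \cap \spS$, monotonicity of information gain (since $f_{\vx}$ is a coordinate of $\vfsub{\spA}$), the chain rule to telescope the per-round gains into $\Ism{\vfsub{\spA}}{\vysub{\vx_{1:n}}} \leq \gamma_{\spA,\spS}(n)$, and a $\log(1+t) \geq ct$ conversion on a bounded range. (The paper routes this through the marginal gain $\Gamma_n$, \cref{lem:convergence_within_S}, \cref{thm:objective_convergence_itl} and \cref{lem:difference_bound_by_log}, whereas you exploit monotonicity of $\sigma_i^2(\vx)$ at the fixed target; both are fine.) Your opening decomposition $\sigma_n^2(\vx) = \eta_{\spS}^2(\vx) + \Var{g_{\vx}}[\spD_n]$ with $g_{\vx} = \E{f_{\vx}}[\vfsub{\spS}]$ is also valid and is the same law-of-total-variance split the paper uses inside \cref{lem:approx_markov_boundary_property}.

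The gap is the extrapolation bound \cref{eq:variance_convergence:extrapolation}, which is the theorem's main content, and your proposal stops precisely at the missing idea: you reduce everything to an unproven ``progress bound'' --- that some single $\vs \in \spS$ reduces $\Var{g_{\vx}}[\spD_n]$ by a controlled fraction of its current value --- and you flag it yourself as the delicate step. No such per-round multiplicative progress bound is established in the paper, and it is doubtful one holds in the form you need: \itl maximizes information about all of $\vfsub{\spA}$, not about $\vx$, so nothing forces any individual round (or even the best available query) to shrink $r(\vx)$ multiplicatively. The paper's resolution is structurally different. For each $\epsilon > 0$ it constructs an $\epsilon$-approximate Markov boundary $B_{n,\epsilon}(\vx) \subseteq \spS$ whose size $b_\epsilon$ is \emph{independent of $n$} (built by running undirected \itl within $\spS$; \cref{lem:learn_markov_boundary,lem:approx_markov_boundary_property}, with constants depending on $\abs{\spS}$ and $\lambda_{\min}(\mKsub{\spS\spS})$), so that conditioning on $\vysub{B_{n,\epsilon}(\vx)}$ brings the variance within $\epsilon$ of $\eta_{\spS}^2(\vx)$. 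It then bounds $\Ism{f_{\vx}}{\vysub{B_{n,\epsilon}(\vx)}}[\spD_n] \leq (b_\epsilon / \kappabar_n)\, \Gamma_n$ via the information ratio --- this is exactly where \cref{asm:submodularity} is used --- and finally balances the two error sources by choosing $\epsilon \sim \gamma_{\sqrt{n}}/\sqrt{n}$ so that $b_\epsilon \leq \sqrt{n}$. The $\sqrt{n}$ rate thus comes from trading off the boundary \emph{size} against the approximation \emph{accuracy}, not from a Cauchy--Schwarz across rounds. Note also that the representation-dependent quantities you were trying to avoid ($\abs{\spS}$, $\lambda_{\min}(\mKsub{\spS\spS})$) are simply absorbed into the constant $C$ in the paper; the obstruction was never to eliminate them but to get any $n$-independent bound on how many samplable points suffice to capture the reducible part of $f_{\vx}$, and without that construction your sketch does not yield \cref{eq:variance_convergence:extrapolation}.
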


Intuitively, \cref{eq:variance_convergence:extrapolation} can be understood as bounding an epistemic ``generalization gap'' \citep{wainwright2019high} of the learner, while \cref{eq:variance_convergence:interpolation} matches prior results for the setting $\spS = \spA$.
Naturally, a smaller target space (i.e., more targeted sampling) leads to faster convergence due to a smaller information capacity $\gamma_{\spA,\spS}(n) \ll \gamma_n$.
The reducible uncertainty is guaranteed to converge to zero at all prediction targets, e.g., for linear, Gaussian, and smooth Matérn kernels based on known rates for $\gamma_n$ which we summarize in \cref{table:gamma_rates} of the appendix.
We provide a formal proof of \cref{thm:variance_convergence} in \cref{sec:proofs:variance_convergence}.
In the following, we validate this result experimentally by showing that TAL exhibits strong empirical performance across a broad range of applications.\looseness=-1

\subsection{Experiments in the Gaussian Process Setting}\label{sec:gps}

First, we aim to answer the following questions in the well-specified GP setting.

\paragraph{How does the smoothness of $f$ affect \itl?}

We contrast two ``extreme'' kernels: the \emph{Gaussian kernel} $\smash{k(\vx,\vxp) = \exp(- \norm{\vx - \vxp}_2^2 / 2)}$ and the \emph{Laplace kernel} $\smash{k(\vx,\vxp) = \exp(- \norm{\vx - \vxp}_1)}$.
In the mean-squared sense, the Gaussian kernel yields a smooth process $f$ whereas the Laplace kernel yields a continuous but non-differentiable~$f$ \citep{williams2006gaussian}.
\Cref{fig:gps:directed_advantage} shows how \itl adapts to the smoothness of~$f$:
Under the ``smooth'' Gaussian kernel, points outside~$\spA$ provide higher-order information.
In contrast, under the ``rough'' Laplace kernel and if~${\spA \subseteq \spS}$, points outside~$\spA$ do not provide any additional information, and therefore are not sampled by \itl.
If, however, ${\spA \not\subseteq \spS}$, information ``leaks''~$\spA$ even under a Laplace kernel prior.
That is, even for non-smooth functions, the point with most information need not be in~$\spA$.\looseness=-1

\paragraph{Does TAL outperform uncertainty sampling?}

Uncertainty sampling \citep[\textsc{UnSa},][]{lewis1994heterogeneous} is one of the most popular active learning methods.
\textsc{UnSa} selects points $\vx$ with high \emph{prior} uncertainty: ${\vx_{n} = \argmax_{\vx \in \spS} \sigma_{n-1}^2(\vx)}$.
This is in stark contrast to \itl and \vtl which select points~$\vx$ that minimize \emph{posterior} (epistemic) uncertainty about~$\spA$.
It can be seen that \textsc{UnSa} is a special case of \itl when $\spS \subseteq \spA$ and noise is homoscedastic.\looseness=-1

In \cref{fig:gps:directed_advantage2}, We compare \textsc{UnSa} to \itl, \vtl, and \ctl which is a heuristic for TAL that selects: $\vx_{n} = \argmax_{\vx \in \spS} \sum_{\vxp \in \spA} \Cor{f_{\vx},f_{\vxp} \mid \spD_{n-1}}$.
We observe that \itl and \vtl outperform \textsc{UnSa} which also samples points that are not informative about $\spA$.
Further, \itl and \vtl outperform ``local''~\textsc{UnSa}~(i.e., \textsc{UnSa} constrained to $\spA \cap \spS$) which neglects all information provided by points outside~$\spA$.\footnote{If $\spA \not\subseteq \spS$ then ``local''~\textsc{UnSa} does \emph{not even} converge to the irreducible uncertainty.}
As one would expect, \vtl has an advantage with respect to reducing the total variance of $\vfsub{\spA}$, whereas \itl reduces the entropy of~$\vfsub{\spA}$ faster.
We include ablations in \cref{sec:gps_appendix} where we, in particular, observe that the advantage of \itl and \vtl over \textsc{UnSa} increases as the volume of prediction targets shrinks in comparison to the size of domain, supporting our theoretical results in~\cref{thm:variance_convergence}.\looseness=-1

\begin{figure*}[t]
    \centering
    \adjustbox{valign=c}{\includesvg[width=0.75\textwidth]{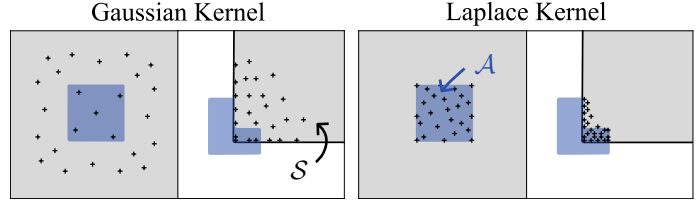}}
    \vspace{0.2cm}
    \caption{Initial $25$ samples of \itl under a Gaussian kernel with lengthscale~$1$~\textbf{(left)} and a Laplace kernel with lengthscale~$10$~\textbf{(right)}. Shown in gray is the sample space $\spS$ and shown in blue is the target space $\spA$. In three of the four examples, points outside the target space provide useful information.\looseness=-1
    }
    \label{fig:gps:directed_advantage}
\end{figure*}

\begin{figure*}[t]
    \centering
    \adjustbox{width=0.7\textwidth,valign=c}{\import{./plots/output/}{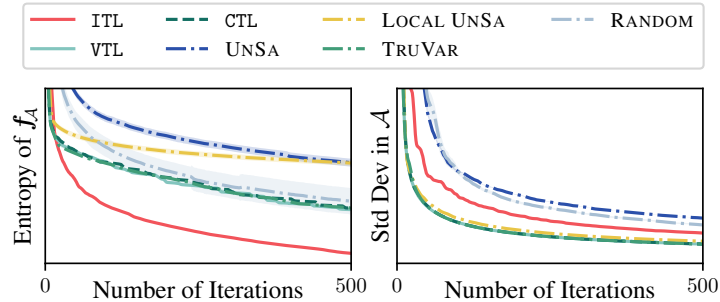}}
    \caption{Entropy of $\vfsub{\spA}$ ranging from $-3850$ to $-3725$ and the mean marginal standard deviations of $\vfsub{\spA}$ ranging from $0$ to $0.15$.
    Experiment is using the Gaussian kernel of the left instance ($\spA \subset \spS$) from~\cref{fig:gps:directed_advantage}.
    It can be seen that \itl and \vtl outperform \textsc{UnSa} and \textsc{Random}.
    Uncertainty bands correspond to one standard error over $10$ random seeds.
    }
    \label{fig:gps:directed_advantage2}
\end{figure*}

\section{Active Fine-Tuning of Neural Networks}\label{sec:nns}

As a first application of transductive active learning, we study automatic data selection for fine-tuning neural networks.
Fine-tuning a large pre-trained model can be a cost- and computation-effective approach to improve performance on a given target domain \citep[e.g.,][]{lee2022surgical}.
While previous work has extensively studied the effectiveness of various training procedures for fine-tuning, we ask: \emph{How can we select the right data for fine-tuning to a specific task?}
This \emph{active} fine-tuning problem is an instance of the ``directed'' transductive learning problem:
Concretely, consider a supervised setting, where the function $f$ maps inputs ${\vx \in \spX}$ to outputs ${y \in \spY}$.
We have access to noisy samples from a training set~$\spS$ on $\spX$, and we would like to learn $f$ such that our estimate minimizes a given risk measure, such as classification error, with respect to a test distribution~$\spPA$ on~$\spX$.
The goal is to actively and efficiently sample from $\spS$ to minimize risk with respect to $\spPA$.\footnote{The setting with target distributions $\spPA$ can be reduced to considering target sets $\spA$~(cf.~\cref{sec:generalizations:roi}).}
We show in this section that TAL can learn $f$ with only \emph{few examples} from $\spS$.\looseness -1%

\paragraph{How can we leverage the latent structure learned by the pre-trained model?}

As common in related work, we approximate the (pre-trained) neural network (NN) $f(\cdot; \vtheta)$ as a linear function in a latent embedding space, $\smash{f(\vx; \vtheta) \approx \transpose{\vbeta} \vphi_{\vtheta}(\vx)}$, with weights $\vbeta \in \R^p$ and embeddings $\vphi_{\vtheta} : \spX \to \R^p$.
Common choices of embeddings include last-layer embeddings \citep{devlin2018bert,holzmuller2023framework}, neural tangent embeddings arising from neural tangent kernels \citep{jacot2018neural} which are motivated by their relationship to the training and fine-tuning of ultra-wide NNs \citep{arora2019exact,lee2019wide,khan2019approximate,he2020bayesian,malladi2023kernel}, and loss gradient embeddings \citep{ash2020deep}.
We provide a comprehensive overview of embeddings in \cref{sec:nns_appendx:embeddings}.
Now, supposing the prior $\vbeta \sim \N{\vzero}{\mSigma}$, often with ${\mSigma = \mI}$~\citep{khan2019approximate,he2020bayesian,antoran2022adapting,wei2022more}, this approximation of $f$ is a Gaussian process with kernel $k(\vx,\vxp) = \transpose{\vphi_{\vtheta}(\vx)} \mSigma \vphi_{\vtheta}(\vxp)$ which quantifies the similarity between points in terms of their alignment in the learned latent space.
In this context, \cref{thm:variance_convergence,lem:confidence_intervals:informal} bound the prediction error of the trained model under the approximation $\smash{\transpose{\vbeta} \vphi_{\vtheta}(\vx)}$.\looseness -1

\paragraph{Batch selection: Diversity via conditional embeddings}

Efficient labeling and training necessitates a batch-wise selection of inputs.
The selection of a batch of size ${b > 1}$ can be seen as an individual \emph{non-adaptive} active learning problem, and recent work has shown that batch diversity is crucial in this setting \citep{ash2020deep,zanette2021design,holzmuller2023framework,pacchiano2024experiment}.
We can formalize a batch-wise selection strategy for TAL by the following non-adaptive optimization problem \citep{chen2013near} which selects elements~$\vx_{n,i}$ of the $n$-th batch iteratively:\looseness=-1 \begin{align}
  \vx_{n,i} = \argmax_{\vx \in \spS} \psi_{\spA}(\vx_{1:n-1,1:b} \cup \vx_{n,1:i-1} \cup \{\vx\}).
  \label{eq:batch_selection}
\end{align}
\vspace{-0.5cm}

\begin{figure*}[t]
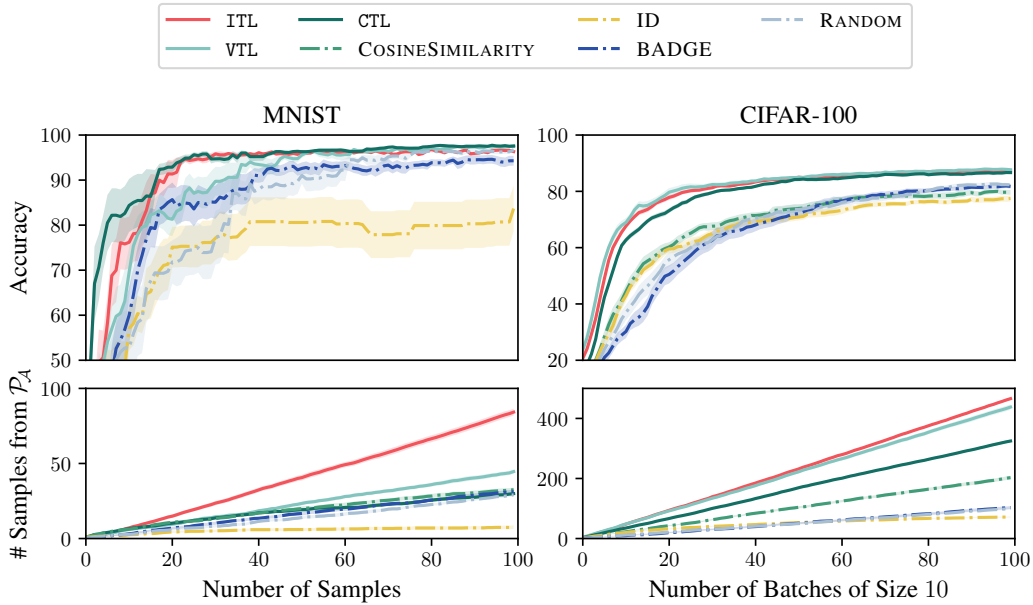

    \vspace{-0.3cm}
    \incplt[\textwidth]{nns}
    \caption{Active fine-tuning on MNIST (left) and CIFAR-100 (right). \textsc{Random} selects each observation uniformly at random from $\spS$. The batch size is $1$ for MNIST and $10$ for CIFAR-100. Uncertainty bands correspond to one standard error over $10$ random seeds. We see that transductive active learning with \itl and \vtl significantly outperforms competing methods, and in particular, retrieves substantially more samples from the support of $\spPA$. See \cref{sec:nns_appendix} for details and ablations.}
    \label{fig:nns}
\end{figure*}

\subsection{Experiments}

Our empirical evaluation is motivated by the following practical example:
We deploy a pre-trained image classifier to user's phones who use it within their local environment.
We would like to locally fine-tune a user's model to their environment.
Since the users' images~$\spA$ are unlabeled, this requires selecting a small number of relevant and diverse images from the set of labeled images~$\spS$.
As such, we will focus here on the setting where the points in our test set do not lie in our training set~(i.e.,~${\spA \cap \spS = \emptyset}$), and discuss alternative instances such as active domain adaptation in \cref{sec:nns_fine_tuning_settings}.\looseness=-1

\paragraph{Testbeds \& architectures}

We use the MNIST \citep{lecun1998mnist} and CIFAR-100 \citep{krizhevsky2009learning} datasets as testbeds.
In both cases, we take $\spS$ to be the training set, and we consider the task of learning the digits $3$, $6$, and $9$ (MNIST) or the first $10$ categories of CIFAR-100.\footnote{That is, we restrict $\spPA$ to the support of points with labels $\{3,6,9\}$ (MNIST) or labels $\{0, \dots, 9\}$ (CIFAR-100) and train a neural network using few examples drawn from the training set $\spS$.}
For MNIST, we train a simple convolutional neural network with ReLU activations, three convolutional layers with max-pooling, and two fully-connected layers.
For CIFAR-100, we fine-tune an EfficientNet-B0 \citep{tan2019efficientnet} pre-trained on ImageNet \citep{deng2009imagenet}, augmented by a final fully-connected layer.
We train the NNs using the cross-entropy loss and the ADAM optimizer \citep{kingma2014adam}.\looseness=-1

\paragraph{Results}

In \cref{fig:nns}, We compare against \textbf{(i)} active learning methods which largely aim for sample diversity but which are not directed towards the target distribution $\spPA$ \citep[e.g., BADGE;][]{ash2020deep}, and \textbf{(ii)} search methods that aim to retrieve the most relevant samples from $\spS$ with respect to the targets $\spPA$ \citep[e.g., maximizing cosine similarity to target embeddings as is common in vector databases;][]{settles2008analysis,johnson2019billion}. \textsc{InformationDensity} \citep[ID,][]{settles2008analysis} is a heuristic approach aiming to combine \textbf{(i)} diversity and \textbf{(ii)} relevance.
In \cref{sec:nns_appendix:undirected}, we also compare against a wide range of additional baselines (e.g., \textsc{CoreSet}~\citep{sener2017active}, \textsc{TypiClust}~\citep{hacohen2022active}, \textsc{ProbCover}~\citep{yehuda2022active}, etc.) that fall into one of the categories \textbf{(i)} and \textbf{(ii)}, and which perform similar to the baselines listed here.

We observe that \itl, \vtl, and \ctl consistently and significantly outperform random sampling from $\spS$ as well as all baselines.
We see that relevance-based methods such as \textsc{CosineSimilarity} have an initial advantage over \textsc{Random} but for batch sizes larger than $1$ they quickly fall behind due to diminishing informativeness of the selected data.
In contrast, diversity-based methods such as \textsc{BADGE} are more competitive with \textsc{Random} but do not explicitly aim to retrieve relevant samples.

Remarkably, transductive active learning outperforms random data selection even in the MNIST experiment where the model is randomly initialized.
This suggests that the learned embeddings can be informative for data selection even in the early stages of training, bootstrapping the learning progress.\looseness=-1

\paragraph{Balancing sample relevance and diversity}

Our proposed methods unify approaches to coverage (promoting \emph{diverse} samples) and search (aiming for \emph{relevant} samples with respect to a given query~$\spA$) which leads to the significant improvement upon the state-of-the-art in \cref{fig:nns}.
Notably, for a batch size and query size of $1$ and if correlations are non-negative, \itl, \vtl, \ctl, and the canonical cosine similarity are equivalent.
\ctl can be seen as a direct generalization of cosine similarity-based retrieval to batch and query sizes larger than one.
In contrast to \ctl, \itl and \vtl may also sample points which exhibit a strong negative correlation (which is also informative).\looseness=-1

We observe empirically that \itl obtains samples from $\spPA$ at more than twice the rate of \textsc{CosineSimilarity}, which translates to a significant improvement in accuracy in more difficult learning tasks, while requiring fewer (labeled) samples from $\spS$.
This phenomenon manifests for both MNIST and CIFAR-100, as well as imbalanced datasets $\spS$ or imbalanced reference samples from $\spPA$ (cf. \cref{sec:nns_appendix:additional_experiments}).
The improvement in accuracy appears to increase in the large-data regime, where the learning tasks become more~difficult.
Akin to a previously identified scaling trend with size of the pre-training dataset \citep{tamkin2022active}, this suggests a potential scaling trend where the improvement of \itl over random batch selection grows as models are fine-tuned on a larger pool of data.\looseness=-1

\paragraph{Towards task-driven few-shot learning}

Being able to efficiently and automatically select data may allow dynamic few-shot fine-tuning to individual tasks~\citep{vinyals2016matching,hardt2024test}, e.g., fine-tuning the model to each test point / query / prompt.
Such task-driven few-shot learning can be seen as a form of ``memory recall'' akin to associative memory~\citep{hopfield1982neural}.
Our results are a first indication that task-driven learning can lead to substantial performance gains, and we believe that this is a promising direction for future studies.\looseness=-1

\section{Safe Bayesian Optimization}\label{sec:safe_bo}

Another practical problem that can be cast as ``directed'' learning is safe Bayesian optimization \citep[Safe BO,][]{sui2015safe, berkenkamp2021bayesian} which has applications in natural science~\citep{cooper2022multidimensional} and robotics~\citep{wischnewski2019a,sukhija2022scalable,widmer2023tuning}.
Safe BO solves the following optimization problem \begin{equation}
    \max_{\vx \in \opt{\spS}} \opt{f}(\vx) \quad\text{where}\quad \opt{\spS} = \{\vx \in \spX \mid \opt{g}(\vx) \geq 0\} \label{eq:safe_bo}
\end{equation} which can be generalized to multiple constraints.
The functions $\opt{f}$ and $\opt{g}$, and hence also the ``safe set'' $\opt{\spS}$, are unknown and have to be actively learned from data.
However, it is crucial that the data collection does not violate the constraint, i.e., ${\vx_n \in \opt{\spS}, \forall n \geq 1}$.\looseness=-1

\paragraph{Safe Bayesian optimization as Transductive Active Learning}

In the non-probabilistic setting from \cref{sec:algorithm}, GPs $f$ and $g$ can be used as well-calibrated models of the ground truths $\opt{f}$ and $\opt{g}$, and we denote lower- and upper-confidence bounds by $l_n^f(\vx), l_n^g(\vx)$ and $u_n^f(\vx), u_n^g(\vx)$, respectively.
These confidence bounds induce a \emph{pessimistic} safe set ${\spS_n \defeq \{\vx \mid l_n^g(\vx) \geq 0\}}$ and an \emph{optimistic} safe set ${\smash{\widehat{\spS}}_n \defeq \{\vx \mid u_n^g(\vx) \geq 0\}}$ which satisfy ${\spS_n \subseteq \opt{\spS} \subseteq \smash{\widehat{\spS}}_n}$ with high probability at all times.
Similarly, the set of \emph{potential maximizers} \begin{align}
    \spA_n \defeq \{\vx \in \widehat{\spS}_n \mid u_n^f(\vx) \geq \max_{\vxp \in \spS_n} l_n^f(\vxp)\} \label{eq:potential_maximizers}
\end{align} contains the solution to \cref{eq:safe_bo} at all times with high probability.\looseness=-1

The (simple) regret ${r_n(\spS) \defeq \max_{\vx \in \spS} \opt{f}(\vx) - \opt{f}(\vxhat_n)}$ with ${\vxhat_n \defeq \argmax_{\vx \in \spS_n} l_n^f(\vx)}$ measures the worst-case performance of a decision rule.
To achieve small regret, one faces an \emph{exploration-expansion} dilemma wherein
one needs to explore points that are known-to-be-safe, i.e., lie in the estimated safe set $\spS_n$, and might be optimal, while at the same time discovering new safe points by ``expanding''~$\spS_n$.
Accordingly, a natural choice for the target space of Safe~BO is~$\spA_n$ since it captures both exploration and expansion \emph{simultaneously}.\footnote{An alternative possibility is to weigh each point in $\spA_n$ according to how likely it is to be the safe optimum. Which approach performs better is task-dependent, and we include a detailed discussion in \cref{sec:safe_bo_appendix:thompson_sampling}.}
To prevent constraint violation, the sample space is restricted to the pessimistic safe set $\spS_n$.
In Safe~BO, both the target space  and sample space change with each round~$n$, and we generalize our theoretical results from \cref{sec:algorithm} in \cref{sec:proofs} to this setting.\looseness=-1

\begin{theorem}[Convergence to safe optimum]\label{thm:safebo_main}
    Pick any ${\epsilon > 0}$, ${\delta \in (0,1)}$.
    Assume that $\opt{f}$, $\opt{g}$ lie in the reproducing kernel Hilbert space $\spH_k(\spX)$ of the kernel $k$, and that the noise $\varepsilon_n$ is conditionally $\rho$-sub-Gaussian.
    Then, we have with probability at least $1-\delta$,
    \begin{enumerate}[align=left, topsep=0pt,noitemsep]
        \item[Safety:] for all $n \geq 1$,\quad $\vx_n \in \opt{\spS}$.
    \end{enumerate}
    Moreover, assume $\spS_0 \neq \emptyset$ and denote with $\spR$ the largest reachable safe set starting from $\spS_0$.
    Then, the convergence of reducible uncertainty implies that there exists $\opt{n} > 0$ such that
    with probability at least $1-\delta$, \begin{enumerate}[align=left, topsep=0pt,noitemsep]
        \item[Optimality:] for all $n \geq \opt{n}$,\quad $r_n(\spR) \leq \epsilon$.
    \end{enumerate}
\end{theorem}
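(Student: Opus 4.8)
The plan is to condition throughout on the single high-probability event that the confidence bounds for both $\opt{f}$ and $\opt{g}$ are simultaneously valid. This follows from applying \cref{thm:width_convergence} (equivalently, the concentration result of \cite{chowdhury2017kernelized}) to each function with confidence parameter $\delta/2$, so that $l_n^g(\vx) \leq \opt{g}(\vx) \leq u_n^g(\vx)$ and $l_n^f(\vx) \leq \opt{f}(\vx) \leq u_n^f(\vx)$ hold jointly for all $n \geq 1$ and $\vx \in \spX$ with probability at least $1-\delta$. \textbf{Safety} is then immediate: by definition the pessimistic safe set satisfies $\spS_n = \{\vx \mid l_n^g(\vx) \geq 0\} \subseteq \{\vx \mid \opt{g}(\vx) \geq 0\} = \opt{\spS}$, and since the decision rule only ever samples $\vx_n \in \spS_n$, we obtain $\vx_n \in \opt{\spS}$ for all $n$.

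For \textbf{optimality}, the key structural observation is that the target space $\spA_n$ of potential maximizers drives exploration and expansion simultaneously, so a single uncertainty-reduction guarantee suffices. First I would invoke the generalization of \cref{thm:variance_convergence} to time-varying target and sample spaces (developed in \cref{sec:proofs}) to conclude that the reducible uncertainty vanishes uniformly over the potential maximizers, i.e. $\max_{\vx \in \spA_n} \sigma_n(\vx) \leq \max_{\vx \in \spA_n} \eta_{\spS_n}(\vx) + \nu_{\spA_n, \spS_n}(n)$ with $\nu_{\spA_n, \spS_n}(n) \to 0$. Since the confidence widths $w_n(\vx) = u_n^f(\vx) - l_n^f(\vx) = 2\beta_n(\delta)\sigma_n(\vx)$ and their counterparts for $g$ are controlled by $\sigma_n$, shrinking the uncertainty at the expander points contained in $\spA_n \cap (\widehat{\spS}_n \setminus \spS_n)$ forces each such point either to be certified safe (enlarging $\spS_n$) or to drop out of $\widehat{\spS}_n$.

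The core of the argument is the safe-set expansion, which I would establish by induction over the layers of the reachability operator defining $\spR$. Starting from $\spS_0$, whenever a boundary point is reachable, the vanishing of $\nu_{\spA_n, \spS_n}(n)$ eventually makes its confidence width for $g$ small enough that $l_n^g \geq 0$, adding it to $\spS_n$; because $\spR$ is generated by finitely many applications of this operator and each completes in finite time, there is a finite round after which $\spS_n \supseteq \spR$. Once this holds, the reachable maximizer $\opt{\vx}_\spR \defeq \argmax_{\vx \in \spR} \opt{f}(\vx)$ lies in $\spS_n$ and is a potential maximizer, so $\opt{\vx}_\spR \in \spA_n$ with $\eta_{\spS_n}(\opt{\vx}_\spR) = 0$. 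I then bound the regret through the confidence width at the optimum: using $\opt{f}(\vxhat_n) \geq l_n^f(\vxhat_n) = \max_{\vx \in \spS_n} l_n^f(\vx) \geq l_n^f(\opt{\vx}_\spR)$ and $\opt{f}(\opt{\vx}_\spR) \leq u_n^f(\opt{\vx}_\spR)$, we get $r_n(\spR) = \opt{f}(\opt{\vx}_\spR) - \opt{f}(\vxhat_n) \leq u_n^f(\opt{\vx}_\spR) - l_n^f(\opt{\vx}_\spR) = w_n(\opt{\vx}_\spR) \leq 2\beta_n(\delta)\,\nu_{\spA_n, \spS_n}(n)$. As $\beta_n(\delta)$ grows only as $\sqrt{\gamma_n}$ (polylogarithmically for the kernels of interest) while $\nu_{\spA_n, \spS_n}(n) \to 0$, the product falls below $\epsilon$ for all $n \geq \opt{n}$, giving the claim.

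The main obstacle I anticipate is the expansion step, because the target and sample spaces $\spA_n, \spS_n$ are themselves random and change every round, so \cref{thm:variance_convergence} cannot be applied with a fixed pair $\spA, \spS$. Handling this requires the time-varying generalization together with a monotonicity argument ensuring that expansion is irreversible under valid confidence bounds (a point certified safe never leaves $\spS_n$) and that the reachability operator terminates after finitely many layers, so that the finitely many ``finite-time'' sub-claims can be combined into a single $\opt{n}$.
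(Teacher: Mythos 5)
Your safety argument and your final regret chain (bounding $r_n$ by the confidence width at the reachable optimum, using $l^f_n(\vxhat_n) \geq l^f_n(\vx)$ for all $\vx \in \spS_n$) match the paper's proof. The genuine gap is in your expansion argument. You claim, by induction over the layers of the reachability operator, that \emph{every} reachable point is eventually certified safe, so that $\spS_n \supseteq \spR$ after finitely many rounds. This step is unsound for the algorithm being analyzed: the target space is the set of potential maximizers, and \cref{thm:width_convergence} (including its time-varying generalization) controls confidence widths only at points \emph{inside} the target space. A reachable point that is already known to be suboptimal falls out of $\spA_n$ and is never targeted again; nothing forces its constraint width below the certification threshold, so it need never enter $\spS_n$. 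Indeed, the paper proves only the weaker statement $\spA_{\opt{n}} \cap \bar{\spR}_{\epsilon,\bar{\beta}_{\opt{n}}} \subseteq \spS_{\opt{n}}$ (\cref{lem:convergence_to_reachable_safe_region}), and explicitly notes that your full-coverage conclusion holds only when the potential expanders $\spE_n$ of \cref{eq:potential_expanders} are contained in the target space (\cref{lem:convergence_to_reachable_safe_region_with_expanders}). Your regret bound only needs the reachable \emph{optimum} to be certified, but your route to that fact goes through the broken induction, and it cannot be repaired layer-by-layer since the intermediate layers need not consist of target points; the paper instead shows the optimum always remains a potential maximizer, restricts the one-step reachability claim to target points, and passes to the closure $\bar{\spR}$ via \cref{lem:safe_region_prop2} and monotonicity. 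Relatedly, your trichotomy "certified safe or dropped from $\widehat{\spS}_n$" is too strong: because widths only shrink to $\beta_n(\eta + \epsilon)$ and not to zero, the correct dichotomy (\cref{lem:main}) is "certified safe or \emph{not reachable up to $(\epsilon,\beta)$-slack}"; an undetermined point may stay in $\widehat{\spS}_n$ forever.

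You also correctly identify the time-varying pair $(\spA_n, \spS_n)$ as the main obstacle, but you do not supply the mechanism that resolves it and yields a finite $\opt{n}$. The paper's device is the dichotomy of \cref{lem:main} combined with a counting argument: over any window of $n'$ rounds in which the safe set stays constant, the sample space is fixed and the target spaces are nested, so the width bound applies; hence at the end of such a window either all one-step-reachable target points are already certified, or the safe set must have grown. Since $\spS_n \subseteq \opt{\spS}$ with high probability and $\opt{\spS}$ is finite, growth can occur at most $\abs{\opt{\spS}}$ times, giving the explicit $\opt{n} = (\abs{\opt{\spS}}+1)n'$. Finally, your concluding limit argument --- that $\beta_n(\delta)\,\nu_{\spA_n,\spS_n}(n) \to 0$ because $\beta_n$ grows only like $\sqrt{\gamma_n}$ --- is false for general kernels: the product is of order $\sqrt{\gamma_n\,\gamma_{\spA,\spS}(n)}/n^{1/4}$, which diverges, e.g., for sufficiently rough Mat\'ern kernels. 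The paper avoids this in two ways: the optimality claim is stated \emph{conditionally} on convergence of the reducible uncertainty, and the factor $\beta$ is folded into the benchmark itself --- the reachable set of \cref{defn:reachable_safe_set} has slack $\beta(\eta_i(\vx;\spS) + \epsilon)$, so regret is measured against the $\beta$-dependent set $\bar{\spR}_{\epsilon,\bar{\beta}_{\opt{n}}}$, whereas you treat $\spR$ as a $\beta$-free set, which requires the stronger (and generally invalid) vanishing of $\beta_n \nu(n)$.
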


We provide a formal proof in \cref{sec:proofs:safe_bo}.
Central to the proof is the application of \cref{thm:variance_convergence} to show that the safety of parameters \emph{outside} the safe set $\spS_n$ can be inferred efficiently.
In \cref{sec:algorithm}, we outline settings where the reducible uncertainty converges which is the case for a very general class of functions, and for such instances \cref{thm:safebo_main} guarantees optimality in the largest reachable safe set~$\spR$.
$\spR$ represents the largest set any safe learning algorithm can explore without violating the safety constraints (with high probability) during learning~(cf.~\cref{defn:reachable_safe_set}).
Our guarantees are similar to those of other Safe BO algorithms~\citep{berkenkamp2021bayesian} but require fewer assumptions and generalize to continuous domains. %
We obtain \cref{thm:safebo_main} from a more general result (\cref{lem:convergence_to_reachable_safe_region}) which can be specialized to yield ``free'' novel convergence guarantees for problems other than Bayesian optimization, such as level set estimation, by choosing an appropriate target space.\looseness=-1

\subsection{Experiments}

We evaluate two synthetic experiments for a 1d and 2d parameter space, respectively (cf. \cref{sec:safe_bo_appendix:details} for details), which demonstrate the various shortcomings of existing Safe BO baselines.
Additionally, as third experiment, we safely tune the controller of a quadcopter.\looseness=-1

\paragraph{Safe controller tuning for a quadcopter}

We consider a quadcopter with unknown dynamics; ${\vs_{t+1} = \vT(\vs_t, \vu_t)}$ where ${\vu_t \in \R^{d_u}}$ is the control signal and $\vs_t \in \R^{d_s}$ is the state at time $t$.
The inputs $\vu_t$ are calculated through a deterministic function of the state $\vpi : \spS \to \spU$ which we call the policy.
The policy is parameterized via parameters $\vx \in \spX$, e.g., PID controller gains, such that $\vu_t = \vpi_{\vx}(\vs_t)$. %
The goal is to find the optimal parameters with respect to an unknown objective~$\opt{f}$ while satisfying some unknown constraint(s)~${\opt{g}(\vx) \geq 0}$, e.g., the quadcopter does not fall on the ground.
This is a typical Safe BO problem which is widely applied for safe controller learning in robotics~\citep{berkenkamp2021bayesian,baumann2021gosafe,widmer2023tuning}.\looseness=-1

\begin{figure*}[t]
    \vspace{-0.3cm}
    \incplt[\textwidth]{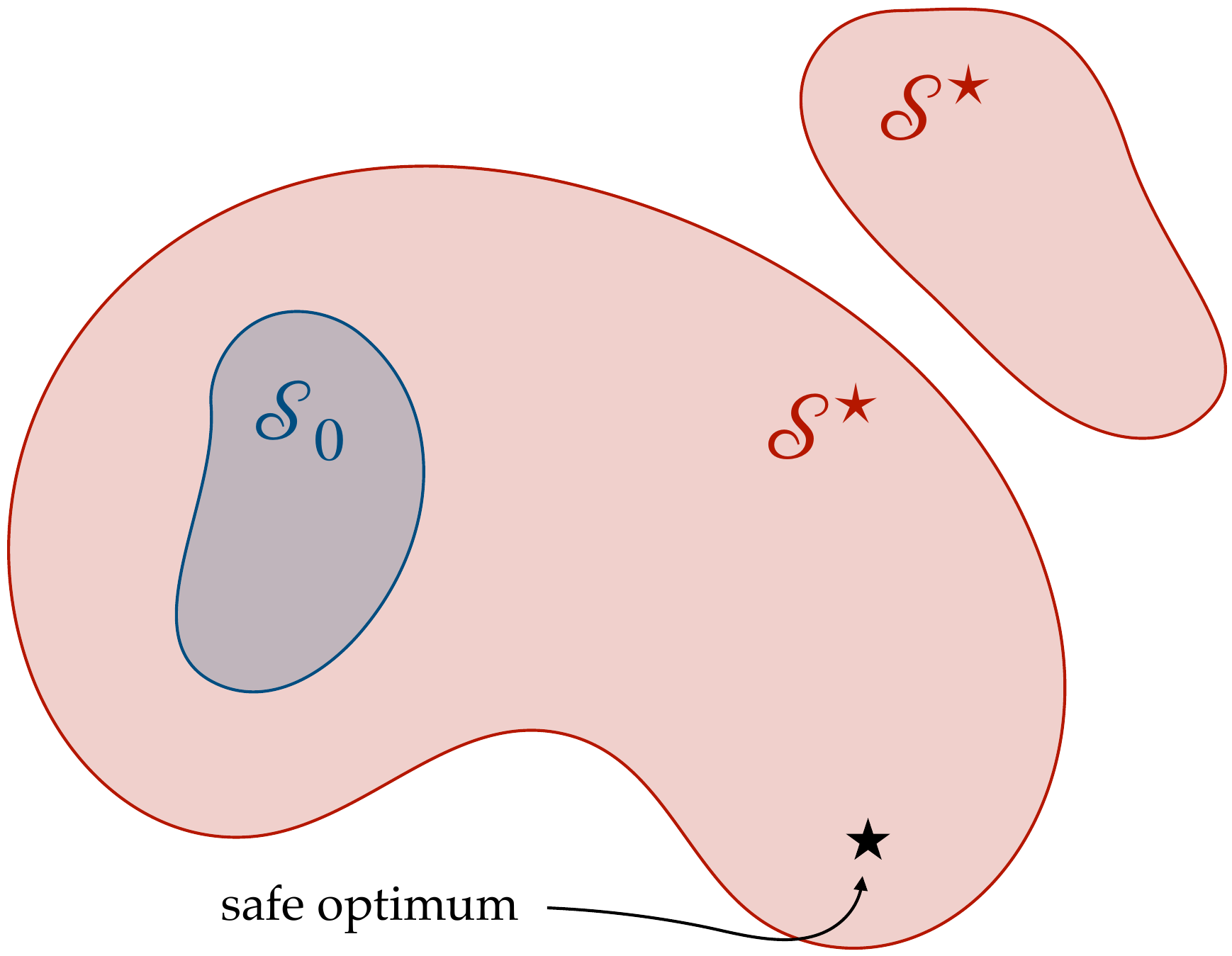}
    \caption{We compare \itl and \vtl to \textsc{Oracle SafeOpt}, which has oracle knowledge of the Lipschitz constants, \textsc{SafeOpt}, where the Lipschitz constants are estimated from the GP, as well as \textsc{Heuristic SafeOpt} and ISE, and observe that \itl and \vtl systematically perform well. We compare against additional baselines in \cref{sec:safe_bo_appendix:thompson_sampling}. The regret is evaluated with respect to the ground truth objective $\opt{f}$ and constraint $\opt{g}$, and averaged over 10 (in synthetic experiments) and 25 (in the quadcopter experiment) random seeds. Additional details can be found in \cref{sec:safe_bo_appendix:details}.}
    \label{fig:safe_bo}
    \vspace{-0.1cm}
\end{figure*}

\paragraph{Results}
We compare \itl and \vtl to \textsc{SafeOpt}~\citep{berkenkamp2021bayesian}, which is undirected, i.e., expands in all directions including ones that are known-to-be suboptimal, and ISE~\citep{bottero2022information}, which is solely expansionist --- does not trade-off expansion-exploration. We provide a detailed discussion of baselines in \cref{sec:safe_bo_appendix:comparison}.
In all our experiments, summarized in \cref{fig:safe_bo}, we observe that \itl and \vtl systematically perform well, i.e., better or on par with the state-of-the-art.
We attribute this to its directed exploration and less conservative expansion over \textsc{Safeopt} (cf.~1d task and quadcopter experiment), and natural trade-off between expansion and exploration as opposed to ISE (see 2d task).
Generally, \vtl has a slight advantage over \itl, which is because \vtl minimizes marginal variances (as opposed to entropy), which are decisive for expanding the safe set.
While \itl and \vtl do not violate constraints, we observe that other methods that do not explicitly enforce safety such as EIC~\citep{gardner2014bayesian} lead to constraint violation (cf. \cref{sec:safe_bo_appendix:details:quadcopter}).\looseness=-1

\vspace{-2pt}
\section{Related Work}\label{sec:related_work}
\vspace{-2pt}

\paragraph{(Inductive) active learning}

The special case of transductive active learning where ${\spA = \spS = \spX}$ has been widely studied.
We refer to this special instance as \emph{inductive} active learning, since the goal is to extract as much information as possible as opposed to making predictions on a specific target set.

Several works have previously found uncertainty-based decision rules to be useful for inductive active learning~\citep{cohn1993neural,krause2007nonmyopic,guo2007optimistic,krause2008near} and semi-supervised learning~\citep{grandvalet2004semi}.
\vtl was recently re-derived by \cite{shoham2023experimental} along other experimental design criteria under the lens of minimizing risk for inductive one-shot learning in overparameterized models.
Substantial work on active learning has studied entropy-based criteria in \emph{parameter-space}, most notably BALD \citep{mackay1992information,houlsby2011bayesian,gal2017deep,kirsch2019batchbald}, which selects ${\vx_n = \argmax_{\vx \in \spX} \I{\vtheta}{y_{\vx}}[\spD_{n-1}]}$, where $\vtheta$ is the probabilistic parameter vector of a parametric model.
Such methods are inherently inductive in the sense that they do not facilitate learning on specific prediction targets.\looseness=-1

\paragraph{Transductive active learning}

In contrast, TAL operates in \emph{output-space} where it is straightforward to specify prediction targets, and which is computationally easier.
Special cases of \itl when ${\spS = \spX}$ and ${\abs{\spA} = 1}$ have been proposed in the foundational work of \cite{mackay1992information}.
As generalization to larger target spaces, \cite{mackay1992information} proposed the \emph{mean-marginal information gain}, which measures uncertainty by $\mathrm{Unc}_{\!\spA}(\spD) = \sum_{\vxp \in \spA} \H{f_{\vxp}}[\spD]$: \begin{align*}
  \vx_{n+1} = \argmax_{\vx \in \spS} \sum_{\vxp \in \spA} \I{f_{\vxp}}{y_{\vx}}[\spD_{n}] = \argmin_{\vx \in \spX} \sum_{\vxp \in \spA} \log\parentheses*{k_{n}(\vxp, \vxp) - \frac{k_{n}(\vx, \vxp)^2}{k_{n}(\vx, \vx) + \rho^2}}, \tag{\textbf{\mmitl}}
\end{align*} with the second equality holding in the GP setting of \cref{sec:algorithm}.
We derive an analogous version of \cref{thm:variance_convergence} for \mmitl in \cref{sec:interpretations_approximations:mm_itl}.
Similarly to \vtl, \mmitl disregards the mutual dependence of points in the target space $\spA$ and differs from \vtl only in a different weighting of the posterior marginal variances of the prediction targets.
Recently, \cite{smith2023prediction} generalized \mmitl by treating the prediction target as a random variable, and
\cite{kothawade2021similar} and \cite{smith2024making} demonstrated the use of output-space decision rules for image classification tasks in a pre-training context.\looseness=-1

\paragraph{Influence functions}

Influence functions measure the change in a model's prediction when a single data point is removed from the training data~\citep{cook1977detection,koh2017understanding,pruthi2020estimating}.
They have been used for data selection in settings closely related to the transductive active fine-tuning of neural networks proposed in this work~\citep{xia2024less}.
They select data that reduces a first-order Taylor approximation to the test loss after fine-tuning a neural network, which corresponds to maximizing cosine similarity to the prediction targets in a loss-gradient embedding space.
We show in our experiments that transductive active learning can substantially outperform \textsc{CosineSimilarity}.
We attribute this primarily to influence functions implicitly assuming that the influence of selected data adds linearly \citep[i.e., two equally scored data points are expected to doubly improve the model performance,][Section 3.2]{xu2019understanding}.
This assumption does not hold in practice as seen, e.g., by simply duplicating data.
The same limitation applies to the related approach of datamodels~\citep{ilyas2022datamodels}.\looseness=-1

\paragraph{Other work on directed active learning}

Directed active learning methods have been proposed for the problem of determining the optimum of an unknown function, also known as best-arm identification \citep{audibert2010best} or pure exploration bandits \citep{bubeck2009pure}.
Entropy search methods \citep{hennig2012entropy,hernandez2014predictive} are widely used and select ${\vx_n = \argmax_{\vx \in \spX} \I{\vx^*}{y_{\vx}}[\spD_{n-1}]}$ in \emph{input-space} where ${\vx^* = \argmax_{\vx} f_{\vx}}$.
Similarly to \itl, \emph{output-space} entropy search methods \citep{hoffman2015output,wang2017max}, which select ${\vx_n = \argmax_{\vx \in \spX} \I{f^*}{y_{\vx}}[\spD_{n-1}]}$ with ${f^* = \max_{\vx} f_{\vx}}$, are more computationally tractable.
In fact, output-space entropy search is a special case of \itl with a stochastic target space~(cf.~\cref{eq:safe_bo_stochastic_target_space}~in~\cref{sec:safe_bo_appendix:thompson_sampling}).
\cite{bogunovic2016truncated} proposed \textsc{TruVar} in the context of Bayesian optimization and level set estimation.
\textsc{TruVar} is akin to \vtl with a similar notion of ``target space'', but their algorithm and analysis rely on a threshold scheme which requires that ${\spA \subseteq \spS}$.
\cite{fiez2019sequential} introduce the \emph{transductive linear bandit} problem, which is a special case of transductive active learning limited to a linear function class and with the objective of determining the maximum within an initial candidate set.\footnote{The transductive bandit problem can be solved analogously to Safe BO, by maintaining the set $\spA_n$.}
We mention additional more loosely related works in \cref{sec:related_work_additional}.\looseness=-1

\vspace{-2pt}
\section{Conclusion}\label{sec:limitations}
\vspace{-2pt}

We investigated the generalization of active learning to settings with concrete prediction targets and/or with limited information due to constrained sample spaces, which we call transductive active learning~(TAL).
TAL provides a flexible framework, applicable also to other domains than were discussed~(such as recommender systems, molecular design, robotics, etc.) by varying the choice of target space and sample space.
Further, we proved novel generalization bounds which indicate that TAL can be more sample-efficient than inductive active learning.
Finally, we demonstrated across broad applications that using TAL to sample \emph{relevant and diverse} points (as opposed to only one of the two) can lead to a substantial improvement upon the state-of-the-art.\looseness=-1

\section*{Acknowledgements}

Many thanks to Armin Lederer, Johannes Kirschner, Jonas Rothfuss, Lars Lorch, Manish Prajapat, Nicolas~Emmenegger, Parnian Kassraie, and Scott Sussex for their insightful feedback on different versions of this manuscript, as well as Anton Baumann for helpful discussions.
We further thank Freddie Bickford Smith for a constructive discussion regarding the relationship between our work and prior work.\looseness=-1

This project was supported in part by the European Research Council (ERC) under the European Union's Horizon 2020 research and Innovation Program Grant agreement no.~815943, the Swiss National Science Foundation under NCCR Automation, grant agreement~51NF40~180545, and by a grant of the Hasler foundation (grant no.~21039).
\looseness=-1

{\bibliography{sources}}
\bibliographystyle{icml2024} %
\clearpage
\appendix
\section*{\LARGE Appendices}

A general principle of ``transductive learning'' was already formulated by the famous computer scientist Vladimir Vapnik in the 20th century.
Vapnik proposes the following ``imperative for a complex world'':

\begin{chapquote}{\cite{vapnik1982estimation}}
    When solving a problem of interest, do not solve a more general problem as an intermediate step. Try to get the answer that you really need but not a more general one.
\end{chapquote}

These appendices provide additional background, proofs, experiment details, and ablation studies.

\section*{Contents}
\startcontents
\printcontents{}{0}[2]{}
\clearpage

\section{Additional Related Work}\label{sec:related_work_additional}

The general principle of non-active ``transductive learning'' was introduced by \cite{vapnik1982estimation}.
The notion of ``target'' from transductive active learning is akin to the notion of ``task'' in curriculum learning \citep{bengio2009curriculum,graves2017automated,soviany2022curriculum}.
The study of settings where the irreducible uncertainty is zero is related to the study of estimability in experimental design \citep{graybill1961introduction,mutny2022experimental}.
In feature selection, selecting features that maximize information gain with respect to a to-be-predicted label is a standard approach \citep{peng2005feature,vergara2014review,beraha2019feature} which is akin to~\itl~(cf.~\cref{sec:interpretations_approximations}).
The themes of relevance and diversity are also important for efficient in-context learning \citep[e.g.,][]{ye2023compositional,kumari2024end} and data pruning \citep{zheng2023coverage}.
Transductive active learning is complimentary to other learning methodologies, such as semi-supervised learning \citep{gao2020consistency}, self-supervised learning \citep{shwartz2023compress,balestriero2023cookbook}, and meta-learning \citep{kaddour2020probabilistic,rothfuss2023meta}.\looseness=-1

\section{Background}\label{sec:definitions}

\subsection{Information Theory}

Throughout this work, $\log$ denotes the natural logarithm.
Given random vectors $\vx$ and $\vy$, we denote~by \begin{align*}
    \H{\vx} &\defeq \E[p(\vx)]{- \log p(\vx)}, \\
    \H{\vx}[\vy] &\defeq \E[p(\vx, \vy)]{- \log p(\vx \mid \vy)}, \quad\text{and} \\
    \I{\vx}{\vy} &\defeq \H{\vx} - \H{\vx}[\vy]
\end{align*} the (differential) entropy, conditional entropy, and information gain, respectively \citep{cover1999elements}.\footnote{One has to be careful to ensure that $\I{\vx}{\vy}$ exists, i.e., $\abs{\I{\vx}{\vy}} < \infty$. We will assume that this is the case throughout this work. When $\vx$ and $\vy$ are jointly Gaussian, this is satisfied when the noise variance $\rho^2$ is positive.}\looseness=-1

The \emph{multivariate information gain} \citep{murphy2023probabilistic} between random vectors $\vx, \vy, \vz$ is given by \begin{align}
  \I{\vx}{\vy ; \vz} &\defeq \I{\vx}{\vy} - \I{\vx}{\vy}[\vz] \label{eq:multivariate_information_gain2} \\
  &= \I{\vx}{\vy} + \I{\vx}{\vz} - \I{\vx}{\vy, \vz}. \label{eq:multivariate_information_gain}
\end{align}
When $\I{\vx}{\vy ; \vz} \neq 0$ it is said that $\vy$ and $\vz$ \emph{interact} regarding their information about $\vx$.
If the interaction is positive, it is said that the information of $\vz$ about $\vx$ is \emph{redundant} given $\vy$.
Conversely, if the interaction is negative, it is said that the information of $\vz$ about $\vx$ is \emph{synergistic} with $\vy$.
The notion of synergy is akin to the frequentist notion of ``suppressor variables'' in linear regression \citep{das2008algorithms}.\looseness=-1

\subsection{Gaussian Processes}\label{sec:definitions:gps}

The stochastic process $f$ is a Gaussian process (GP, \cite{williams2006gaussian}), denoted ${f \sim \GP{\mu}{k}}$, with mean function $\mu$ and kernel $k$ if for any finite subset ${X = \{\vx_1, \dots, \vx_n\} \subseteq \spX}$, ${\vfsub{X} \sim \N{\vmusub{X}}{\mKsub{XX}}}$ is jointly Gaussian with mean vector ${\vmusub{X}(i) = \mu(\vx_i)}$ and covariance matrix~${\mKsub{XX}(i, j) = k(\vx_i, \vx_j)}$.\looseness=-1

In the following, we formalize the assumptions from the GP setting.\looseness=-1

\begin{assumption}[Gaussian prior]\label{asm:bayesian_prior}
   We assume that ${f \sim \GP{\mu}{k}}$ with known mean function $\mu$ and kernel $k$.
\end{assumption}

\begin{assumption}[Gaussian noise]\label{asm:bayesian_noise}
   We assume that the noise $\varepsilon_{\vx}$ is mutually independent and zero-mean Gaussian with known variance $\rho^2(\vx) > 0$.
   We write ${\mPsub{X} = \diag{\rho^2(\vx_1), \dots, \rho^2(\vx_n)}}$.
\end{assumption}

Under \cref{asm:bayesian_prior,asm:bayesian_noise}, the posterior distribution of~$f$ after observing points $X$ is $\GP{\mu_n}{k_n}$ with \begin{align*}
    \mu_n(\vx) &= \mu(\vx) + \mKsub{\vx X} \inv{(\mKsub{XX} + \mPsub{X})} (\vysub{X} - \vmusub{X}), \\
    k_n(\vx,\vxp) &= k(\vx,\vxp) - \mKsub{\vx X} \inv{(\mKsub{XX} + \mPsub{X})} \mKsub{X\vxp}, \\
    \sigma_n^2(\vx) &= k_n(\vx,\vx).
\end{align*}
For Gaussian random vectors $\vf$ and $\vy$, the entropy is ${\H{\vf} = \frac{n}{2} \log(2 \pi e) + \frac{1}{2} \log \det{\Var{\vf}}}$, the information gain is ${\I{\vf}{\vy} = \frac{1}{2} \parentheses*{\log\det{\Var{\vy}} - \log\det{\Var{\vy \mid \vf}}}}$, and \begin{align*}
    \gamma_n &= \max_{\substack{X \subseteq \spX \\ |X| \leq n}} \frac{1}{2} \log\det{\mI + \inv{\mPsub{X}} \mKsub{XX}}.
\end{align*}

\section{Proofs}\label{sec:proofs}

We will write \begin{itemize}
  \item $\sigma^2 \defeq \max_{\vx \in \spX} \sigma_{0}^2(\vx)$, and
  \item $\tilde{\sigma}^2 \defeq \max_{\vx \in \spX} \sigma_{0}^2(\vx) + \rho^2(\vx)$.
\end{itemize}

The following is a brief overview of the structure of this section: \begin{enumerate}
  \item \Cref{sec:proofs:undirected_itl} relates \itl in the inductive learning setting ($\spS \subseteq \spA$) to prior work.
  \item \Cref{sec:proofs:non_adaptive_data_selection} relates the designs selected by \itl and \vtl to the optimal designs for corresponding non-adaptive objectives.
  \item \Cref{sec:batch_diversity_proof} shows that batch selection via \itl or \vtl leads to informative and diverse batches, utilizing the results from \cref{sec:proofs:non_adaptive_data_selection}.
  \item \Cref{sec:synergies} introduces measures of synergies that generalize the submodularity assumption~(cf.~\cref{asm:submodularity}).
  \item \Cref{sec:proofs:objective_convergence} proves key results on the convergence of the \itl and \vtl objectives.
  \item \Cref{sec:proofs:variance_convergence} proves \cref{thm:variance_convergence} (convergence in GP setting).
  \item \Cref{sec:proofs:width_convergence} proves \cref{thm:width_convergence} (convergence in agnostic setting).
  \item \Cref{sec:proofs:safe_bo} proves \cref{thm:safebo_main} (convergence in safe BO application).
  \item \Cref{sec:proofs:useful_facts} includes useful facts.
\end{enumerate}

\subsection{Undirected Case of \itl}\label{sec:proofs:undirected_itl}

We briefly examine the important special case of \itl where $\spS \subseteq \spA$.
In this setting, for all $\vx \in \spS$, the decision rule of \itl simplifies to \begin{align*}
  \Ism{\vfsub{\spA}}{y_{\vx}}[\spD_n] \eeq{1} \Ism{\vfsub{\spA \setminus \{\vx\}}}{y_{\vx}}[f_{\vx}, \spD_n] + \I{f_{\vx}}{y_{\vx}}[\spD_n] \\
  \eeq{2} \Ism{f_{\vx}}{y_{\vx}}[\spD_n] \\
  &= \Hsm{y_{\vx} \mid \spD_n} - \Hsm{\varepsilon_{\vx}}
\end{align*} where \e{1} follows from the chain rule of information gain and $\vx \in \spS \subseteq \spA$; and \e{2} follows from the conditional independence $\vfsub{\spA} \perp y_{\vx} \mid f_{\vx}$.

If additionally $f$ is a GP then \begin{align*}
    \Hsm{y_{\vx} \mid \spD_n} - \H{\varepsilon_{\vx}} = \frac{1}{2} \log \parentheses*{1 + \frac{\Var{f_{\vx} \mid \spD_n}}{\Var{\varepsilon_{\vx}}}}.
\end{align*}
This decision rule has also been termed \emph{total information gain} \citep{mackay1992information}.
When $\spS \subseteq \spA$ and observation noise is homoscedastic, this decision rule is equivalent to uncertainty sampling.

\subsection{Non-adaptive Data Selection \& Submodularity}\label{sec:proofs:non_adaptive_data_selection}

Recall the non-myopic information gain ${\psi_{\spA}(X) = \Ism{\vfsub{\spA}}{\vysub{X}}}$ (\itl) and variance reduction ${\psi_{\spA}(X) = \tr{\Var{\vfsub{\spA}}} - \tr{\Var{\vfsub{\spA} \mid \vysub{X}}}}$ (\vtl) objective functions from \cref{asm:submodularity}.
In this section, we will relate the designs selected by \itl and \vtl to the optimal designs for these objectives.
To this end, consider the non-adaptive optimization problem \begin{align*}
  \opt{X} = \argmax_{\substack{X \subseteq \spS \\ \abs{X} = k}} \psi_{\spA}(X).
\end{align*}

\begin{lemma}
  For both \itl and \vtl, $\psi_{\spA}$ is non-negative and monotone.
\end{lemma}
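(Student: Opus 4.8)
The plan is to treat \itl and \vtl separately and, in each case, to establish \emph{monotonicity} as the principal property; non-negativity then follows for free, since $\psi_{\spA}(\emptyset) = 0$ for both objectives and monotonicity gives $\psi_{\spA}(X) \geq \psi_{\spA}(\emptyset) = 0$ because $\emptyset \subseteq X$. Recall that $\psi_{\spA}$ is monotone precisely when $X \subseteq X'$ implies $\psi_{\spA}(X) \leq \psi_{\spA}(X')$. Throughout I would use the GP posterior formulas from \cref{sec:definitions:gps} and the information-theoretic identities recalled there.

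For \itl, where $\psi_{\spA}(X) = \Ism{\vfsub{\spA}}{\vysub{X}}$, I would invoke the chain rule of information gain. For $X \subseteq X'$ (as multisets), decompose $\I{\vfsub{\spA}}{\vysub{X'}} = \I{\vfsub{\spA}}{\vysub{X}} + \I{\vfsub{\spA}}{\vysub{X' \setminus X}}[\vysub{X}]$. The second term is a conditional mutual information and hence non-negative, which yields $\psi_{\spA}(X) \leq \psi_{\spA}(X')$. Non-negativity is then immediate from $\psi_{\spA}(\emptyset) = \I{\vfsub{\spA}}{\vysub{\emptyset}} = 0$ (and also follows directly from non-negativity of mutual information). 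This is the ``information never hurts'' principle applied to the prediction targets $\vfsub{\spA}$.

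For \vtl, where $\psi_{\spA}(X) = \tr{\Var{\vfsub{\spA}}} - \tr{\Var{\vfsub{\spA} \mid \vysub{X}}}$, monotonicity is equivalent to $\tr{\Var{\vfsub{\spA} \mid \vysub{X'}}} \leq \tr{\Var{\vfsub{\spA} \mid \vysub{X}}}$ for $X \subseteq X'$. I would prove this via the (conditional) law of total variance, $\Var{\vfsub{\spA} \mid \vysub{X}} = \E{\Var{\vfsub{\spA} \mid \vysub{X'}} \mid \vysub{X}} + \Var{\E{\vfsub{\spA} \mid \vysub{X'}} \mid \vysub{X}}$. Taking traces and discarding the second (positive semidefinite) term gives $\tr{\Var{\vfsub{\spA} \mid \vysub{X}}} \geq \E{\tr{\Var{\vfsub{\spA} \mid \vysub{X'}}} \mid \vysub{X}}$. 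In the GP setting the posterior covariance is deterministic, i.e.\ independent of the realized values $\vysub{X'}$, so the conditional expectation collapses and we obtain exactly $\tr{\Var{\vfsub{\spA} \mid \vysub{X}}} \geq \tr{\Var{\vfsub{\spA} \mid \vysub{X'}}}$. Setting $X = \emptyset$ recovers non-negativity.

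The main obstacle is less a difficulty than a matter of interpreting the objectives consistently with the ``implicit expectation over the feedback $y_{\vx}$'' mentioned in the definitions of the decision rules: one must be clear whether $\Var{\vfsub{\spA} \mid \vysub{X}}$ denotes the (data-dependent) posterior covariance or its expectation over the feedback. The cleanest route, which I would take, exploits that in the GP model the posterior covariance is data-independent, so both readings coincide and the law-of-total-variance argument is exact rather than merely in expectation. The only remaining technical caveat is ensuring the information quantities for \itl are well-defined (finite), which holds under the positive-noise-variance assumption already noted in \cref{sec:definitions}.
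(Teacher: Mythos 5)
Your proof is correct and takes essentially the same route as the paper: for \itl, your chain-rule-plus-nonnegativity-of-conditional-mutual-information argument is exactly the mechanism behind the paper's ``information never hurts'' step, and for \vtl you supply, via the conditional law of total variance, a proof of the variance-monotonicity fact that the paper merely recalls. The remaining differences are cosmetic --- you derive non-negativity from monotonicity together with $\psi_{\spA}(\emptyset)=0$ rather than directly, and you make explicit the data-independence of the GP posterior covariance, which the paper handles with its parenthetical ``implicit expectation'' remark.
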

\begin{proof}
  For \itl, $\psi_{\spA}(X) \geq 0$ follows from the non-negativity of mutual information.
  To conclude monotonicity, note that for any $X' \subseteq X \subseteq \spS$, \begin{align*}
    \Ism{\vfsub{\spA}}{\vysub{X'}} = \H{\vfsub{\spA}} - \H{\vfsub{\spA}}[\vysub{X'}] \leq \H{\vfsub{\spA}} - \H{\vfsub{\spA}}[\vysub{X}] = \Ism{\vfsub{\spA}}{\vysub{X}}
  \end{align*} due to monotonicity of conditional entropy (which is also called the ``information never hurts'' principle).\looseness=-1

  For \vtl, recall that $\tr{\Var{\vfsub{\spA} \mid \vysub{X}}} \leq \tr{\Var{\vfsub{\spA} \mid \vysub{X'}}}$ for any $X' \subseteq X \subseteq \spS$ (with an implicit expectation over $\vysub{X}, \vysub{X'}$).
  Non-negativity and monotonicity of $\psi_{\spA}$ then follow analogously to~\itl.
\end{proof}

\begin{lemma}
  The marginal gain $\Delta_{\spA}(\vx \mid X) \defeq \psi_{\spA}(X \cup \{\vx\}) - \psi_{\spA}(X)$ of $\vx \in \spS$ given $X \subseteq \spS$ is the \itl and \vtl objective, respectively.
\end{lemma}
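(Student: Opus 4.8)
The plan is to handle \itl and \vtl separately, in each case substituting the definition of $\psi_{\spA}$ from \cref{asm:submodularity} into $\Delta_{\spA}(\vx \mid X) = \psi_{\spA}(X \cup \{\vx\}) - \psi_{\spA}(X)$ and checking that the difference collapses to the respective one-step decision objective, with the conditioning data $\spD_{n-1}$ of the decision rules identified with the observations $\vysub{X}$. Both cases are purely definitional telescopings and require no inequalities.

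For \itl, I would write the marginal gain as
\begin{align*}
  \Delta_{\spA}(\vx \mid X) = \Ism{\vfsub{\spA}}{\vysub{X \cup \{\vx\}}} - \Ism{\vfsub{\spA}}{\vysub{X}},
\end{align*}
split $\vysub{X \cup \{\vx\}} = (\vysub{X}, y_{\vx})$, and apply the chain rule of information gain (the same identity invoked in \cref{sec:proofs:undirected_itl}), which gives $\Ism{\vfsub{\spA}}{\vysub{X}, y_{\vx}} = \Ism{\vfsub{\spA}}{\vysub{X}} + \Ism{\vfsub{\spA}}{y_{\vx}}[\vysub{X}]$. Cancelling the common term $\Ism{\vfsub{\spA}}{\vysub{X}}$ leaves $\Delta_{\spA}(\vx \mid X) = \Ism{\vfsub{\spA}}{y_{\vx}}[\vysub{X}]$, which is exactly the \itl objective $\Ism{\vfsub{\spA}}{y_{\vx}}[\spD_{n-1}]$ with $\spD_{n-1}$ corresponding to $X$.

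For \vtl, I would substitute the variance-reduction definition and observe that the prior term $\tr{\Var{\vfsub{\spA}}}$ appears in both $\psi_{\spA}(X \cup \{\vx\})$ and $\psi_{\spA}(X)$ and hence cancels, giving
\begin{align*}
  \Delta_{\spA}(\vx \mid X) = \tr{\Var{\vfsub{\spA} \mid \vysub{X}}} - \tr{\Var{\vfsub{\spA} \mid \vysub{X \cup \{\vx\}}}},
\end{align*}
which is precisely the one-step reduction in total posterior variance that \vtl maximizes. The only point warranting care is the bookkeeping of the implicit expectation over $\vysub{X}$ (and $y_{\vx}$) carried by the conditional entropies and conditional variances; since this expectation is common to both terms in each difference, it passes through the telescoping unchanged. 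There is thus no genuine obstacle — the statement is essentially the chain rule of information gain on the one hand and the cancellation of the data-independent prior variance on the other.
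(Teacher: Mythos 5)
Your proposal is correct and follows essentially the same route as the paper: the paper's \itl derivation is exactly the chain rule of information gain you invoke (just written out via conditional entropies), and its \vtl derivation performs the same cancellation of the data-independent term $\tr{\Var{\vfsub{\spA}}}$, identifying the marginal gain with the posterior trace up to an additive constant. Nothing further is needed.
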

\begin{proof}
  For \itl, \begin{align*}
    \Delta_{\spA}(\vx \mid X) &= \I{\vfsub{\spA}}{\vysub{X}, y_{\vx}} - \I{\vfsub{\spA}}{\vysub{X}} \\
    &= \H{\vfsub{\spA} \mid \vysub{X}} - \H{\vfsub{\spA} \mid \vysub{X}, y_{\vx}} \\
    &= \I{\vfsub{\spA}}{y_{\vx} \mid \vysub{X}}
  \end{align*} which is precisely the \itl objective.

  For \vtl, \begin{align*}
    \Delta_{\spA}(\vx \mid X) &= \tr{\Var{\vfsub{\spA} \mid \vysub{X}}} - \tr{\Var{\vfsub{\spA} \mid \vysub{X}, y_{\vx}}} \\
    &= - \tr{\Var{\vfsub{\spA} \mid \vysub{X}, y_{\vx}}} + \const
  \end{align*} which is precisely the \vtl objective.
\end{proof}

\begin{definition}[Submodularity]
  $\psi_{\spA}$ is submodular if and only if for all $\vx \in \spS$ and $X' \subseteq X \subseteq \spS$, \begin{align*}
    \Delta_{\spA}(\vx \mid X') \geq \Delta_{\spA}(\vx \mid X).
  \end{align*}
\end{definition}

\begin{theorem}[\cite{nemhauser1978analysis}]\label{thm:design_guarantee}
  Let \cref{asm:submodularity} hold.
  For any $n \geq 1$, if \itl or \vtl selected $\vx_{1:n}$, respectively, then \vspace{-0.2cm}\begin{align*}
    \psi_{\spA}(\vx_{1:n}) \geq (1-1/e) \max_{\substack{X \subseteq \spS \\ \abs{X} \leq n}} \psi_{\spA}(X).
    \vspace{-0.2cm}
  \end{align*}
\end{theorem}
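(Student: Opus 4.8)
The statement is the classical greedy guarantee of \cite{nemhauser1978analysis} for maximizing a monotone submodular function under a cardinality constraint, so the plan is to reproduce its standard proof, relying on the three facts just established: $\psi_{\spA}$ is non-negative and monotone, its marginal gains $\Delta_{\spA}(\vx \mid X)$ coincide exactly with the \itl/\vtl objective (so that the greedy rule $\vx_i = \argmax_{\vx \in \spS} \Delta_{\spA}(\vx \mid \vx_{1:i-1})$ is precisely what \itl and \vtl execute), and $\psi_{\spA}$ is submodular by \cref{asm:submodularity}. I would also record at the outset that $\psi_{\spA}(\emptyset) = 0$ in both cases, which is immediate from the definitions (for \itl the information gain from no observations vanishes, and for \vtl the variance reduction from no observations is $\tr{\Var{\vfsub{\spA}}} - \tr{\Var{\vfsub{\spA}}} = 0$).

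First I would fix an optimal design $\opt{X}$ with $\abs{\opt{X}} \leq n$ attaining the maximum, abbreviate $\mathrm{OPT} \defeq \psi_{\spA}(\opt{X})$, and write $X_i \defeq \vx_{1:i}$ for the greedy iterate after $i$ steps. The heart of the argument is the per-step inequality $\Delta_{\spA}(\vx_i \mid X_{i-1}) \geq \frac{1}{n}\bigl(\mathrm{OPT} - \psi_{\spA}(X_{i-1})\bigr)$. To obtain it I would chain three observations: by monotonicity, $\mathrm{OPT} = \psi_{\spA}(\opt{X}) \leq \psi_{\spA}(\opt{X} \cup X_{i-1})$; telescoping over an arbitrary ordering of the elements of $\opt{X} \setminus X_{i-1}$ and applying submodularity gives $\psi_{\spA}(\opt{X} \cup X_{i-1}) - \psi_{\spA}(X_{i-1}) \leq \sum_{\vx \in \opt{X} \setminus X_{i-1}} \Delta_{\spA}(\vx \mid X_{i-1})$; and since each summand is at most $\Delta_{\spA}(\vx_i \mid X_{i-1})$ (because $\vx \in \opt{X} \subseteq \spS$ and $\vx_i$ is the greedy maximizer) while $\abs{\opt{X} \setminus X_{i-1}} \leq n$, the sum is bounded by $n\,\Delta_{\spA}(\vx_i \mid X_{i-1})$.

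With this in hand, setting $\delta_i \defeq \mathrm{OPT} - \psi_{\spA}(X_i)$ converts the per-step inequality into the recursion $\delta_i \leq (1 - 1/n)\,\delta_{i-1}$, using that $\delta_{i-1} - \delta_i = \Delta_{\spA}(\vx_i \mid X_{i-1})$. Unrolling from $\delta_0 = \mathrm{OPT}$ (which uses $\psi_{\spA}(\emptyset) = 0$) yields $\delta_n \leq (1-1/n)^n\,\mathrm{OPT} \leq e^{-1}\,\mathrm{OPT}$, and rearranging gives $\psi_{\spA}(\vx_{1:n}) = \mathrm{OPT} - \delta_n \geq (1 - 1/e)\,\mathrm{OPT}$, which is exactly the claimed bound.

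The argument is entirely standard and presents no genuine obstacle; the only point that requires care is the submodular telescoping bound in the second step, where each incremental gain along the prefix expansion must be bounded by the marginal gain relative to the \emph{smaller} set $X_{i-1}$ (this is precisely where submodularity enters), rather than relative to the growing prefix. Since the greedy rule here coincides with \itl/\vtl and the marginal-gain identity has already been verified in the preceding lemma, no problem-specific adaptation beyond invoking \cref{asm:submodularity} is needed.
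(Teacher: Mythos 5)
Your proposal is correct and takes essentially the same route as the paper: the paper's proof of \cref{thm:design_guarantee} simply invokes the canonical greedy guarantee for non-negative monotone submodular maximization of \cite{nemhauser1978analysis}, after the preceding lemmas establish non-negativity, monotonicity, and the identification of the marginal gain $\Delta_{\spA}(\cdot \mid \cdot)$ with the \itl/\vtl objectives --- exactly the reduction you set up. The only difference is that you additionally write out the standard proof of that cited guarantee (the per-step inequality via monotonicity, submodular telescoping, and the greedy choice, followed by the recursion $\delta_i \leq (1-1/n)\,\delta_{i-1}$ and $(1-1/n)^n \leq e^{-1}$), which the paper leaves entirely to the citation.
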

\begin{proof}
  This is a special case of a canonical result from non-negative monotone submodular function maximization \citep{nemhauser1978analysis,krause2014submodular}.
\end{proof}

\subsection{Batch Diversity: Batch Selection as Non-adaptive Data Selection}\label{sec:batch_diversity_proof}

Recall the non-adaptive optimization problem \begin{align*}
  B_{n,k} = \argmax_{\substack{B \subseteq \spS \\ \abs{B} = k}} \I{\vfsub{\spA}}{\vysub{B}}[\spD_{n-1}]
\end{align*} from \cref{eq:batch_selection} with batch size $k > 0$, and denote by ${B'_{n,k} = \vx_{n,1:k}}$ the greedy approximation from \cref{eq:batch_selection}.
The selection of an individual batch can be seen as a single non-adaptive optimization problem with marginal gain \begin{align*}
  \Delta_n(\vx \mid B) &= \I{\vfsub{\spA}}{\vysub{B}, y_{\vx}}[\spD_{n-1}] - \I{\vfsub{\spA}}{\vysub{B}}[\spD_{n-1}] \\
  &= \H{\vfsub{\spA} \mid \spD_{n-1}, \vysub{B}} - \H{\vfsub{\spA} \mid \spD_{n-1}, \vysub{B}, y_{\vx}} \\
  &= \I{\vfsub{\spA}}{y_{\vx} \mid \spD_{n-1}, \vysub{B}}
\end{align*} and which is precisely the objective function of \itl from \cref{eq:batch_selection}.
Hence, the approximation guarantees from \cref{thm:design_guarantee,thm:approx_design_guarantee} apply.
The derivation is analogous for \vtl.\looseness=-1

Prior work has shown that the greedy solution $B'_n$ is also competitive with a fully sequential ``batchless'' decision rule \citep{chen2013near,esfandiari2021adaptivity}.\looseness=-1

\subsection{Measures of Synergies \& Approximate Submodularity}\label{sec:synergies}

We will now show that ``downstream synergies'', if present, can be seen as a source of learning complexity, which is orthogonal to the information capacity $\gamma_n$.

\begin{example}
  Consider the example where $f$ is a stochastic process of three random variables $X, Y, Z$ where $X$ and~$Y$ are Bernoulli ($p=\frac{1}{2}$), and $Z$ is the XOR of $X$ and~$Y$.
  Suppose that observations are exact (i.e., $\varepsilon_n = 0$), that the target space $\spA$ comprises the output variable $Z$ while the sample space $\spS$ comprises the input variables $X$ and~$Y$.
  Observing any single $X$ or $Y$ yields no information about~$Z$: $\I{Z}{X} = \I{Z}{Y} = 0$, however, observing both inputs jointly perfectly determines $Z$: $\I{Z}{X, Y} = 1$.
  Thus, $\gamma_n(\spA; \spS) = 1$ if $n \geq 2$ and $\gamma_n(\spA; \spS) = 0$ else.
\end{example}\vspace{-0.15cm}

Learning about $Z$ in examples of this kind is difficult for agents that make decisions greedily, since the next action (observing $X$ or $Y$) yields no signal about its long-term usefulness.
We call a sequence of observations, such as~$\{X,Y\}$, \emph{synergistic} since its combined information value is larger than the individual values.
The prevalence of synergies is not captured by the information capacity~$\gamma_n(\spA; \spS)$ since it measures only the joint information gain of $n$~samples within~$\spS$.
Instead, the prevalence of synergies is captured by the sequence ${\Gamma_n \defeq \max_{\vx \in \spS} \Delta_{\spA}(\vx \mid \vx_{1:n})}$, which measures the maximum information gain of $y_{n+1}$.
If $\Gamma_{n} > \Gamma_{n-1}$ at any round $n$, this indicates a synergy.
The following key object measures the additional complexity due to synergies.\looseness=-1

\begin{definition}[Task complexity]\label{defn:task_complexity}
  For $n \geq 1$, assuming ${\Gamma_i > 0}$ for all $1 \leq i \leq n$, we define the \emph{task complexity} as \vspace{-0.1cm}\begin{align*}
      \alpha_{\spA,\spS}(n) \defeq \max_{i \in \{0, \dots, n-1\}} \frac{\Gamma_{n-1}}{\Gamma_i}.
  \end{align*}
\end{definition}\vspace{-0.2cm}

Note that $\alpha_{\spA,\spS}(n)$ is large only if the information gain of $y_{n}$ is larger than that of a previous observation $y_i$.
Intuitively, if $\alpha_{\spA,\spS}(n)$ is large, the agent had to discover the \emph{implicit} intermediate observations $y_{1}, \dots, y_{n-1}$ that lead to downstream synergies.
We will subsequently formalize the intimate connections of the task complexity to synergies and submodularity.
Note that in the GP setting, $\alpha_{\spA,\spS}(n)$ can be computed online by keeping track of the smallest $\Gamma_i$ during previous rounds~$i$.
Further, note that $\alpha_{\spA,\spS}(n) \leq 1$ if $\psi_{\spA}$ is submodular.\looseness=-1

\subsubsection{The Information Ratio}

Another object will prove useful in our analysis of synergies.

Consider an alternative multiplicative interpretation of the multivariate information gain~(cf.~\cref{eq:multivariate_information_gain}), which we call the \emph{information ratio} of ${X \subseteq \spS}$ given $D \subseteq \spS$, ${\abs{X}, \abs{D} < \infty}$: \begin{align}
  \kappabar(X \mid D) \defeq \frac{\sum_{\vx \in X} \Delta_{\spA}(\vx \mid D)}{\Delta_{\spA}(X \mid D)} \in [0, \infty). \label{eq:information_ratio}
\end{align}
Observe that $\kappabar(X \mid D)$ measures the synergy properties of $\vysub{X}$ with respect to $\vfsub{\spA}$ given $\vysub{D}$ in a multiplicative sense.
That is, if ${\kappabar(X \mid D) > 1}$ then information in $\vysub{X}$ is redundant, whereas if ${\kappabar(X \mid D) < 1}$ then information in $\vysub{X}$ is synergistic, and if ${\kappabar(X \mid D) = 1}$ then $\vysub{X}$ do not mutually interact with respect to~$\vfsub{\spA}$ (all given $\vysub{D}$).
In the degenerate case where ${\Delta_{\spA}(X \mid D) = 0}$ (which implies $\sum_{\vx \in X} \Delta_{\spA}(\vx \mid D) = 0$) we therefore let ${\kappabar(X \mid D) = 1}$.\looseness=-1

\paragraph{The information ratio of \itl is strictly positive in the Gaussian case}

We prove the following straightforward lower bound to the information ratio of \itl.\looseness=-1

\begin{lemma}\label{lem:information_ratio_lower_bound}
  Let $X, D \subseteq \spS, \abs{X}, \abs{D} < \infty$.
  If $\vfsub{\spA}$ and $\vysub{X \cup D}$ are jointly Gaussian then ${\kappabar(X \mid D) > 0}$.
\end{lemma}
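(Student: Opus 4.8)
The plan is to handle the degenerate case by the stated convention and reduce everything else to a single Gaussian fact. Using the marginal-gain identity established above for \itl, the numerator and denominator of $\kappabar(X \mid D)$ are $\sum_{\vx \in X} \Delta_{\spA}(\vx \mid D) = \sum_{\vx \in X} \I{\vfsub{\spA}}{y_{\vx}}[\vysub{D}] \geq 0$ and $\Delta_{\spA}(X \mid D) = \I{\vfsub{\spA}}{\vysub{X}}[\vysub{D}] \geq 0$, where nonnegativity is just nonnegativity of (conditional) mutual information. If $\Delta_{\spA}(X \mid D) = 0$, then by monotonicity each $\Delta_{\spA}(\vx \mid D)$ is squeezed between $0$ and $\Delta_{\spA}(X \mid D) = 0$, so the numerator vanishes too and the convention sets $\kappabar(X \mid D) = 1 > 0$. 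It therefore remains to treat the case $\Delta_{\spA}(X \mid D) > 0$, where the denominator is finite and positive, so it suffices to show that the numerator $\sum_{\vx \in X} \I{\vfsub{\spA}}{y_{\vx}}[\vysub{D}]$ is strictly positive.

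I would prove this by contraposition: assume every individual term vanishes, $\I{\vfsub{\spA}}{y_{\vx}}[\vysub{D}] = 0$ for all $\vx \in X$, and deduce $\I{\vfsub{\spA}}{\vysub{X}}[\vysub{D}] = 0$, contradicting $\Delta_{\spA}(X \mid D) > 0$. Here joint Gaussianity is the essential ingredient. Since $\vfsub{\spA}$ and $\vysub{X \cup D}$ are jointly Gaussian, conditioning on $\vysub{D}$ keeps everything jointly Gaussian with a deterministic (value-independent, Schur-complement) conditional covariance, and for Gaussian vectors zero conditional mutual information is equivalent to a vanishing conditional cross-covariance. Hence the hypothesis gives $\Cov{\vfsub{\spA}, y_{\vx} \mid \vysub{D}} = \vzero$ for each $\vx \in X$.

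The key assembly step is that the conditional cross-covariance of $\vfsub{\spA}$ with the entire vector $\vysub{X}$ is just the concatenation of the per-coordinate blocks $\Cov{\vfsub{\spA}, y_{\vx} \mid \vysub{D}}$, so all blocks vanishing yields $\Cov{\vfsub{\spA}, \vysub{X} \mid \vysub{D}} = \mzero$. Joint Gaussianity then upgrades this from uncorrelatedness to full conditional independence $\vfsub{\spA} \perp \vysub{X} \mid \vysub{D}$, whence $\I{\vfsub{\spA}}{\vysub{X}}[\vysub{D}] = 0$. This contradicts $\Delta_{\spA}(X \mid D) > 0$, so the numerator must be strictly positive and $\kappabar(X \mid D) > 0$ follows.

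I expect the main obstacle to be precisely this assembly step, because pairwise conditional uninformativeness of $\vfsub{\spA}$ with each $y_{\vx}$ does \emph{not} in general force joint conditional uninformativeness with $\vysub{X}$ — this is exactly the synergy phenomenon illustrated by the XOR example above, where single inputs carry no information about the target yet jointly determine it. What rescues the Gaussian case is that the full dependence structure is encoded in second moments, so vanishing per-coordinate cross-covariances genuinely imply a vanishing joint cross-covariance; the only care needed is to invoke correctly that a jointly Gaussian family admits a Gaussian conditional law for which zero cross-covariance is equivalent to independence.
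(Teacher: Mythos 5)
Your proof is correct and takes essentially the same route as the paper's: both arguments reduce to the fact that, for jointly Gaussian vectors, the joint (conditional) information gain can only be positive if some individual $y_{\vx}$ has nonzero (conditional) cross-covariance with $\vfsub{\spA}$, so at least one summand in the numerator is strictly positive. The only differences are presentational --- the paper argues directly via determinants after a w.l.o.g. reduction to $D = \emptyset$, while you argue by contraposition, keep the conditioning on $\vysub{D}$ explicit, and spell out the cross-covariance concatenation step that the paper asserts tersely.
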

\begin{proof}
  W.l.o.g. assume ${D = \emptyset}$.
  We let ${X = \{\vx_1, \dots, \vx_k\}}$ and prove lower and upper bound separately.
  We assume w.l.o.g. that ${\I{\vfsub{\spA}}{\vysub{X}} > 0}$ which implies ${\det{\Var{\vfsub{\spA} \mid \vysub{X}}} < \det{\Var{\vfsub{\spA}}}}$.
  Thus, there exists some $i$ such that $\vfsub{\spA}$ and $y_{\vx_i}$ are dependent, so ${\det{\Var{\vfsub{\spA} \mid y_{\vx_i}}} < \det{\Var{\vfsub{\spA}}}}$ which implies ${\I{\vfsub{\spA}}{y_{\vx_i}} > 0}$.
  We therefore conclude that $\kappabar(X) > 0$.
\end{proof}

The following example shows that this lower bound is tight.\looseness=-1

\begin{example}[Synergies of Gaussian random variables, inspired by Section 3 of \cite{barrett2015exploration}]\label{ex:gaussian_synergies}
  Consider the three random variables $X$, $Y$, and $Z$ (think ${\spA = \{X\}}$ and ${\spS = \{Y,Z\}}$) which are jointly Gaussian with mean vector~$\vzero$ and covariance matrix \begin{align*}
    \mSigma = \begin{bmatrix}
      1 & a & a \\
      a & 1 & 0 \\
      a & 0 & 1
    \end{bmatrix}, \qquad \text{for $2 a^2 < 1$}
  \end{align*} where the constraint on $a$ is to ensure that $\mSigma$ is positive definite.
  Computing the mutual information, we have \begin{align*}
    \I{X}{Y} = \I{X}{Z} = -\frac{1}{2}\log(1-a^2)
  \end{align*} and $\I{X}{Y,Z} = -\frac{1}{2}\log(1-2a^2)$.
  Therefore, \begin{align*}
    \frac{\I{X}{Y} + \I{X}{Z}}{\I{X}{Y, Z}} = \frac{\log(1 - 2a^2 + a^4)}{\log(1 - 2a^2)} < 1.
  \end{align*}
  Note that \begin{align*}
    \lim_{a \to \frac{1}{\sqrt{2}}} \frac{\log(1 - 2a^2 + a^4)}{\log(1 - 2a^2)} = 0,
  \end{align*} and hence --- perhaps unintuitively --- even if $Y$ and $Z$ are uncorrelated, their information about $X$ may be arbitrarily synergistic.\looseness=-1
\end{example}

\subsubsection{The Submodularity of the Special ``Undirected'' Case of \itl}

In the inductive active learning problem considered in most prior works, where $\spS \subseteq \spA$ and $f$ is a Gaussian process, it holds for \itl that $\alpha_{\spA,\spS}(n) = 1$ since all learning targets appear \emph{explicitly} in~$\spS$:\looseness=-1

\begin{lemma}\label{lem:information_ratio_monotonicity}
  Let ${\spS \subseteq \spA}$.
  Then $\psi_{\spA}$ of \itl is submodular.
\end{lemma}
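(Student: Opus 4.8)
The plan is to reduce the submodularity of $\psi_{\spA}$ to the monotonicity of a \emph{scalar} posterior-variance functional, which is where the hypothesis $\spS \subseteq \spA$ does all the work. By the marginal-gain lemma above, $\Delta_{\spA}(\vx \mid X) = \Ism{\vfsub{\spA}}{y_{\vx}}[\vysub{X}]$, so submodularity is exactly the statement that the marginal gain is non-increasing in the conditioning set: for every $\vx \in \spS$ and every $X' \subseteq X \subseteq \spS$, I must show $\Ism{\vfsub{\spA}}{y_{\vx}}[\vysub{X'}] \geq \Ism{\vfsub{\spA}}{y_{\vx}}[\vysub{X}]$. First I would record this equivalence and fix arbitrary $X' \subseteq X$.

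The key step is to collapse the target-space information gain to a single coordinate. Since $\vx \in \spS \subseteq \spA$, the variable $f_{\vx}$ is one of the coordinates of $\vfsub{\spA}$, and because the observation noise is independent we have the conditional independence $\vfsub{\spA} \perp y_{\vx} \mid f_{\vx}$, which persists after additionally conditioning on any $\vysub{X}$. Exactly as in the undirected analysis of \cref{sec:proofs:undirected_itl}, the chain rule of information together with this conditional independence yields $\Ism{\vfsub{\spA}}{y_{\vx}}[\vysub{X}] = \Ism{f_{\vx}}{y_{\vx}}[\vysub{X}]$. Thus it suffices to prove that $\Ism{f_{\vx}}{y_{\vx}}[\vysub{X}]$ is non-increasing in $X$.

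Finally I would invoke the Gaussian closed form. Writing $\sigma_X^2(\vx) \defeq \Var{f_{\vx} \mid \vysub{X}}$, under \cref{asm:bayesian_prior,asm:bayesian_noise} we have
\begin{align*}
  \Ism{f_{\vx}}{y_{\vx}}[\vysub{X}] = \frac{1}{2} \log\parentheses*{1 + \frac{\sigma_X^2(\vx)}{\rho^2(\vx)}},
\end{align*}
and since $v \mapsto \frac{1}{2}\log(1 + v/\rho^2(\vx))$ is strictly increasing, the claim reduces to the monotonicity $\sigma_{X'}^2(\vx) \geq \sigma_X^2(\vx)$ for $X' \subseteq X$. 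This ``information never hurts'' property of the posterior variance follows from the law of total variance: because Gaussian posterior variances do not depend on the realized observations, $\E{\sigma_X^2(\vx) \mid \vysub{X'}} = \sigma_X^2(\vx)$, so $\sigma_{X'}^2(\vx) = \sigma_X^2(\vx) + \Var{\E{f_{\vx} \mid \vysub{X}} \mid \vysub{X'}} \geq \sigma_X^2(\vx)$. Chaining the three steps yields submodularity.

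I expect the only genuinely delicate point to be the collapse in the second paragraph: it is precisely the hypothesis $\spS \subseteq \spA$ that forces the marginal gain to depend on $\vfsub{\spA}$ only through the single coordinate $f_{\vx}$, and hence through a posterior variance to which the monotonicity argument applies. Without this hypothesis the marginal gain can \emph{increase} with more data (the synergies of \cref{sec:synergies}), so the argument must foreground exactly where $\vx \in \spA$ is used; the remaining variance-monotonicity and log-monotonicity steps are routine.
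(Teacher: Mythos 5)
Your proof is correct, but it takes a genuinely different route from the paper's. You collapse the target-space gain to the scalar quantity $\Ism{f_{\vx}}{y_{\vx}}[\vysub{X}]$ --- the same reduction the paper performs for the undirected case in \cref{sec:proofs:undirected_itl} --- and then finish with the Gaussian closed form $\frac{1}{2}\log\left(1 + \sigma_X^2(\vx)/\rho^2(\vx)\right)$ and the monotonicity of Gaussian posterior variance via the law of total variance. The paper's own proof of \cref{lem:information_ratio_monotonicity} never leaves information theory: writing $\bar{X} = X \setminus X'$, it expands $\Ism{y_{\vx}}{\vfsub{\spA}}[\vysub{X}]$ by the chain rule, invokes the conditional independence $y_{\vx} \perp \vysub{X} \mid \vfsub{\spA}$ (this is where $\spS \subseteq \spA$ enters there: every \emph{observed} point, not just the candidate, must have its function value among the coordinates of $\vfsub{\spA}$), and concludes from non-negativity of conditional mutual information --- Gaussianity is never used. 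Consequently the paper's argument is distribution-free, valid for any stochastic process with independent observation noise, whereas yours genuinely needs \cref{asm:bayesian_prior,asm:bayesian_noise} both for the closed form and for the fact that posterior variances do not depend on realized observations (the step that makes your law-of-total-variance argument work). Within the GP setting in which the lemma is invoked this loss of generality is immaterial, and your route buys something the paper's does not: the collapse step uses only $\vx \in \spA$, not $X \subseteq \spA$, so your argument actually shows the marginal gain is non-increasing over arbitrary conditioning sets $X \subseteq \spX$ rather than only $X \subseteq \spS$, and it makes the connection to uncertainty sampling and the ``information never hurts'' principle explicit.
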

\begin{proof}
  Fix any $\vx \in \spS$ and $X' \subseteq X \subseteq \spS$.
  Let $\bar{X} \defeq X \setminus X'$.
  By the definition of conditional information gain, we have \begin{align*}
    \Delta_{\spA}(\vx \mid X) = \Ism{y_{\vx}}{\vfsub{\spA} \mid \vysub{X}} = \Ism{y_{\vx}}{\vfsub{\spA}, \vysub{X'} \mid \vysub{\bar{X}}} - \Ism{y_{\vx}}{\vysub{X'} \mid \vysub{\bar{X}}}.
  \end{align*}
  Since for any $\vx \in \spS$ and $X \subseteq \spS$, ${y_{\vx} \perp \vysub{X} \mid \vfsub{\spA}}$, this simplifies to \begin{align*}
    \Ism{y_{\vx}}{\vfsub{\spA} \mid \vysub{X}} = \Ism{y_{\vx}}{\vfsub{\spA} \mid \vysub{\bar{X}}} - \Ism{y_{\vx}}{\vysub{X'} \mid \vysub{\bar{X}}}.
  \end{align*}
  It then follows from $\Ism{y_{\vx}}{\vysub{X'} \mid \vysub{\bar{X}}} \geq 0$ that \begin{align*}
    \Delta_{\spA}(\vx \mid X) = \Ism{y_{\vx}}{\vfsub{\spA} \mid \vysub{X}} \leq \Ism{y_{\vx}}{\vfsub{\spA} \mid \vysub{\bar{X}}} = \Delta_{\spA}(\vx \mid X').
  \end{align*}
\end{proof}

This implies that $\alpha_{\spA,\spS}(n) \leq 1$ for any $n$ and $\kappabar(X \mid D) \geq 1$ for any $X, D \subseteq \spS$ when $\spS \subseteq \spA$.\looseness=-1

\subsubsection{The Submodularity Ratio}

Building upon the theory of maximizing non-negative monotone submodular functions \citep{nemhauser1978analysis,krause2014submodular}, \cite{das2018approximate} define the following notion of ``approximate'' submodularity:\looseness=-1

\begin{definition}[Submodularity ratio]
  The \emph{submodularity ratio} of $\psi_{\spA}$ up to cardinality $n \geq 1$ is \begin{align}
    \kappa_{\spA}(n) \defeq \min_{\substack{D \subseteq \vx_{1:n} \\ X \subseteq \spS : |X| \leq n \\ D \cap X = \emptyset}} \kappabar(X \mid D), \label{eq:submodularity_ratio}
  \end{align} where they define $\frac{0}{0} \equiv 1$.
  $\psi_{\spA}$ is said to be \emph{$\kappa$-weakly submodular} for some $\kappa > 0$ if ${\inf_{n \in \Nat} \kappa_{\spA}(n) \geq \kappa}$.
\end{definition}

As a special case of Theorem 6 from \cite{das2018approximate}, applying that $\psi_{\spA}$ is non-negative and monotone, we obtain the following result.\looseness=-1

\begin{theorem}[\cite{das2018approximate}]\label{thm:approx_design_guarantee}
  For any $n \geq 1$, if \itl or \vtl selected $\vx_{1:n}$, respectively, then \begin{align*}
    \psi_{\spA}(\vx_{1:n}) \geq (1-e^{-\kappa_{\spA}(n)}) \max_{\substack{X \subseteq \spS \\ \abs{X} \leq n}} \psi_{\spA}(X).
    \vspace{-0.2cm}
  \end{align*}
\end{theorem}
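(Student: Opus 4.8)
The plan is to reproduce the classical greedy analysis of \cite{nemhauser1978analysis} while replacing its submodularity inequality with the submodularity ratio $\kappa_{\spA}(n)$. Since the statement is a specialization of Theorem~6 of \cite{das2018approximate}, the only facts I would invoke are that $\psi_{\spA}$ is non-negative and monotone --- both just established --- and, as an immediate consequence, that $\psi_{\spA}(\emptyset) = 0$. Rather than cite the black-box theorem, I would carry out the short per-step contraction directly, since it is self-contained in this setting.

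Writing $S_i \defeq \vx_{1:i}$ for the greedy solution after $i$ rounds, $g_i \defeq \psi_{\spA}(S_i)$, and fixing an optimal set $\opt{X} \in \argmax_{X \subseteq \spS, \abs{X} \leq n} \psi_{\spA}(X)$ with value $\mathrm{OPT}$, the goal of the main step is the per-round inequality $g_{i+1} - g_i \geq \tfrac{\kappa_{\spA}(n)}{n}(\mathrm{OPT} - g_i)$. To derive it, I would set $R_i \defeq \opt{X} \setminus S_i$ and use monotonicity to obtain $\Delta_{\spA}(R_i \mid S_i) = \psi_{\spA}(S_i \cup \opt{X}) - g_i \geq \mathrm{OPT} - g_i$. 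Applying the definition of the submodularity ratio with $X = R_i$, $D = S_i$ then gives $\sum_{\vx \in R_i} \Delta_{\spA}(\vx \mid S_i) \geq \kappa_{\spA}(n)\,\Delta_{\spA}(R_i \mid S_i) \geq \kappa_{\spA}(n)(\mathrm{OPT} - g_i)$. Since $\abs{R_i} \leq n$, some element of $R_i$ attains at least the average summand, and the greedy rule --- which maximizes the marginal gain over all of $\spS \supseteq R_i$ --- picks a point at least as good, which gives the inequality.

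I would then unroll this recursion: with $\delta_i \defeq \mathrm{OPT} - g_i$ it reads $\delta_{i+1} \leq (1 - \kappa_{\spA}(n)/n)\,\delta_i$, so $\delta_n \leq (1 - \kappa_{\spA}(n)/n)^n\,\delta_0$; using $\delta_0 = \mathrm{OPT}$ and the elementary bound $(1 - x/n)^n \leq e^{-x}$ yields $g_n \geq (1 - e^{-\kappa_{\spA}(n)})\,\mathrm{OPT}$, as claimed. The one subtlety I expect to need care is the disjointness constraint $D \cap X = \emptyset$ baked into the definition of $\kappa_{\spA}(n)$ in \cref{eq:submodularity_ratio}: this is exactly why one must pass from $\opt{X}$ to the residual $R_i = \opt{X} \setminus S_i$ before applying the ratio, rather than using $\opt{X}$ directly. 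I would also check the remaining admissibility conditions of that minimization --- namely $S_i \subseteq \vx_{1:n}$ and $R_i \subseteq \spS$ with $\abs{R_i} \leq n$ --- which hold because both the greedy iterates and the optimum lie in $\spS$ and $i \leq n$.
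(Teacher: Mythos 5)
Your proof is correct, but it takes a genuinely different route from the paper: the paper never re-derives the greedy contraction at all. It verifies the hypotheses --- non-negativity and monotonicity of $\psi_{\spA}$, and the fact that \itl/\vtl greedily maximize the marginal gain $\Delta_{\spA}(\cdot \mid \vx_{1:i})$, all established in \cref{sec:proofs:non_adaptive_data_selection} --- and then invokes Theorem~6 of \cite{das2018approximate} as a black box. You instead reconstruct that theorem's proof: the residual set $R_i = \opt{X} \setminus S_i$, the application of the ratio to the pair $(X, D) = (R_i, S_i)$ (correctly forced by the disjointness constraint in \cref{eq:submodularity_ratio}), the averaging step over $\abs{R_i} \leq n$, and the unrolling via $(1 - \kappa_{\spA}(n)/n)^n \leq e^{-\kappa_{\spA}(n)}$ with $\psi_{\spA}(\emptyset) = 0$. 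Your admissibility checks ($S_i \subseteq \vx_{1:n}$, $R_i \subseteq \spS$, $\abs{R_i} \leq n$) are exactly the right ones, and they make visible why the paper's definition of $\kappa_{\spA}(n)$ may restrict $D$ to subsets of the greedy trajectory: that is all the contraction argument ever consults. Two micro-points if you write this up: in the degenerate case $\Delta_{\spA}(R_i \mid S_i) = 0$, monotonicity gives $\mathrm{OPT} \leq g_i$ and the per-step inequality is trivial, consistent with the convention $\frac{0}{0} \equiv 1$; and the unrolling multiplies repeatedly by $(1 - \kappa_{\spA}(n)/n)$, which is non-negative because singleton sets $X$ are admissible in \cref{eq:submodularity_ratio} and force $\kappa_{\spA}(n) \leq 1$. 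As for what each approach buys: the paper's citation is shorter and delegates correctness to a published result, while your derivation is self-contained, exposes where each hypothesis enters, and confirms that the slightly specialized ratio defined in this paper indeed suffices for the Das--Kempe guarantee.
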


If $\psi_{\spA}$ is submodular, it is implied that $\kappa_{\spA}(n) \geq 1$ for all $n \geq 1$ in which case \cref{thm:approx_design_guarantee} recovers \cref{thm:design_guarantee}.\looseness=-1

\subsection{Convergence of Marginal Gain}\label{sec:proofs:objective_convergence}

Our following analysis allows for changing target spaces $\spA_n$ and sample spaces $\spS_n$ (cf. \cref{sec:safe_bo}), and to this end, we redefine $\Gamma_n \defeq \max_{\vx \in \spS_n} \Delta_{\spA_n}(\vx \mid \vx_{1:n})$.
The following theorems show that the marginal gains of \itl and \vtl converge to zero, and will serve as the main tool for establishing \cref{thm:variance_convergence,thm:width_convergence}.
We will abbreviate $\alpha_{\spA,\spS}(n)$ by $\alpha_n$.

\begin{theorem}[Convergence of Marginal Gain for \itl]\label{thm:objective_convergence_itl}
  Assume that \cref{asm:bayesian_prior,asm:bayesian_noise} are satisfied.
  Fix any integers ${n_1 > n_0 \geq 0}$, ${\Delta = n_1 - n_0 + 1}$ such that for all ${i \in \{n_0, \dots, n_1 - 1\}}$, ${\spA_{i+1} \subseteq \spA_i}$ and ${\spS \defeq \spS_{i+1} = \spS_i}$.
  Further, assume $\abs{\spA_{n_0}} < \infty$.
  Then, if the sequence $\{\vx_{i+1}\}_{i = n_0}^{n_1}$ was generated by \itl, \begin{align}
    \Gamma_{n_1} \leq \alpha_{n_1} \frac{\gamma_{\Delta}}{\Delta}. \label{eq:objective_convergence_appendix}
  \end{align}
  Moreover, if $n_0 = 0$, \begin{align}
    \Gamma_{n_1} \leq \alpha_{n_1} \frac{\gamma_{\spA_0, \spS}(\Delta)}{\Delta}.
  \end{align}
\end{theorem}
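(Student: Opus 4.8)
The plan is to show that the realized marginal gains over the window $\{n_0,\dots,n_1\}$ telescope into a single information gain that is controlled by the capacity $\gamma$, and then to convert the resulting ``average'' bound into a bound on the last gain $\Gamma_{n_1}$ via the task complexity. Note that \cref{asm:submodularity} is \emph{not} assumed here; the factor $\alpha_{n_1}$ is precisely what absorbs the possible synergies, so no diminishing-returns property may be used.

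First I would record that, since \itl selects the maximizer of the marginal gain and $\spS_i = \spS$, we have $\Gamma_i = \I{\vfsub{\spA_i}}{y_{\vx_{i+1}}}[\vysub{\vx_{1:i}}]$ for every $i \in \{n_0,\dots,n_1\}$. Because $\spA_i \subseteq \spA_{n_0}$, the vector $\vfsub{\spA_i}$ is a subvector of $\vfsub{\spA_{n_0}}$, so $y_{\vx_{i+1}} - \vfsub{\spA_{n_0}} - \vfsub{\spA_i}$ is a Markov chain and the data-processing inequality (applied conditionally on $\vysub{\vx_{1:i}}$) gives
\begin{align*}
  \Gamma_i \leq \I{\vfsub{\spA_{n_0}}}{y_{\vx_{i+1}}}[\vysub{\vx_{1:i}}].
\end{align*}
Summing over $i = n_0,\dots,n_1$ and applying the chain rule of mutual information with the \emph{fixed} target $\spA_{n_0}$ collapses the right-hand side into a single term,
\begin{align*}
  \sum_{i=n_0}^{n_1} \Gamma_i \leq \I{\vfsub{\spA_{n_0}}}{\vysub{X}}[\vysub{\vx_{1:n_0}}], \qquad X \defeq \{\vx_{n_0+1},\dots,\vx_{n_1+1}\} \subseteq \spS,
\end{align*}
where $\abs{X} = \Delta$.

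The crux is to bound this conditional information gain by the (unconditional) capacity. When $n_0 = 0$ the conditioning set is empty and $\spA_{n_0} = \spA_0$, so the right-hand side is at most $\gamma_{\spA_0,\spS}(\Delta)$ directly from the definition of the information capacity, yielding the refined bound. For general $n_0$ I would first relax the target to the whole domain, $\I{\vfsub{\spA_{n_0}}}{\vysub{X}}[\vysub{\vx_{1:n_0}}] \leq \I{\vfsub{\spX}}{\vysub{X}}[\vysub{\vx_{1:n_0}}] = \I{\vfsub{X}}{\vysub{X}}[\vysub{\vx_{1:n_0}}]$, the last equality because $\vysub{X}$ carries information about $\vfsub{\spX}$ only through $\vfsub{X}$. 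The remaining term is an information gain in the posterior GP, namely $\frac{1}{2}\log\det{\mI + \inv{\mPsub{X}}\Var{\vfsub{X}}[\vysub{\vx_{1:n_0}}]}$; since the posterior covariance satisfies $\Var{\vfsub{X}}[\vysub{\vx_{1:n_0}}] \preceq \Var{\vfsub{X}} = \mKsub{XX}$ and $M \mapsto \log\det{\mI + \inv{\mPsub{X}} M}$ is monotone in the positive-semidefinite order, this is at most $\frac{1}{2}\log\det{\mI + \inv{\mPsub{X}}\mKsub{XX}} \leq \gamma_{\Delta}$. Gaussianity is essential here: because submodularity is not assumed I cannot argue elementwise, and must instead control the whole batch $X$ through PSD-monotonicity of the posterior covariance.

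Finally, since the $\Delta$ nonnegative gains $\Gamma_{n_0},\dots,\Gamma_{n_1}$ sum to at most $\gamma_{\Delta}$ (resp.\ $\gamma_{\spA_0,\spS}(\Delta)$), the smallest gain in the window satisfies $\min_i \Gamma_i \leq \gamma_{\Delta}/\Delta$. Assuming $\Gamma_i > 0$ throughout (as in \cref{defn:task_complexity}), the task complexity controls the ratio of the current gain $\Gamma_{n_1}$ to any earlier marginal gain by $\alpha_{n_1}$, so in particular $\Gamma_{n_1} \leq \alpha_{n_1} \min_i \Gamma_i \leq \alpha_{n_1}\gamma_{\Delta}/\Delta$; the identical argument with $\gamma_{\spA_0,\spS}(\Delta)$ gives the refined bound. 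The two steps I expect to require the most care are (i) threading the shrinking target space through the telescoping so that the chain rule applies with a single fixed target, and (ii) the conditional-to-unconditional capacity bound above; the task-complexity step is then essentially bookkeeping once the averaging inequality is in place.
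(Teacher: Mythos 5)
Your proof is correct and follows essentially the same route as the paper's: \itl's greedy choice realizes $\Gamma_i$, monotonicity with the nested targets fixes the target at $\spA_{n_0}$, the chain rule collapses the sum into a single information gain bounded by the capacity, and the task complexity converts the resulting average bound into a bound on $\Gamma_{n_1}$. The only cosmetic differences are that you apply the task-complexity step at the end (via the minimum gain over the window) rather than at the start, and you justify the conditional-to-unconditional capacity step via Gaussian PSD-monotonicity of the log-determinant, whereas the paper uses the conditional independencies $\vysub{X} \perp \vfsub{\spA_{n_0}} \mid \vfsub{X}$ and $\vysub{X} \perp \spD_{n_0} \mid \vfsub{X}$ together with the data-processing inequality.
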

\begin{proof}
  We have \begin{align*}
    \Gamma_{n_1} &= \frac{1}{\Delta} \sum_{i=n_0}^{n_1} \Gamma_{n_1} \\
    \eleq{1} \frac{\alpha_{n_1}}{\Delta} \sum_{i=n_0}^{n_1} \Gamma_i \\
    &= \frac{\alpha_{n_1}}{\Delta} \sum_{i=n_0}^{n_1} \max_{\vx \in \spS} \Ism{\vfsub{\spA_{i}}}{y_{\vx}}[\vy_{1:i}] \\
    \eeq{2} \frac{\alpha_{n_1}}{\Delta} \sum_{i=n_0}^{n_1} \Ism{\vfsub{\spA_{i}}}{y_{\vx_{i+1}}}[\spD_{i}] \\
    \eleq{3} \frac{\alpha_{n_1}}{\Delta} \sum_{i=n_0}^{n_1} \Ism{\vfsub{\spA_{n_0}}}{y_{\vx_{i+1}}}[\spD_{i}] \\
    \eeq{4} \frac{\alpha_{n_1}}{\Delta} \sum_{i=n_0}^{n_1} \Ism{\vfsub{\spA_{n_0}}}{y_{\vx_{i+1}}}[\vy_{\vx_{n_0+1:i}}, \spD_{n_0}] \\
    \eeq{5} \frac{\alpha_{n_1}}{\Delta} \Ism{\vfsub{\spA_{n_0}}}{\vy_{\vx_{n_0+1:n_1+1}}}[\spD_{n_0}] \\
    &\leq \frac{\alpha_{n_1}}{\Delta} \max_{\substack{X \subseteq \spS \\ |X| = \Delta}} \Ism{\vfsub{\spA_{n_0}}}{\vysub{X}}[\spD_{n_0}] \\
    \eleq{6} \frac{\alpha_{n_1}}{\Delta} \max_{\substack{X \subseteq \spS \\ |X| = \Delta}} \Ism{\vfsub{X}}{\vysub{X}}[\spD_{n_0}] \\
    \eleq{7} \frac{\alpha_{n_1}}{\Delta} \max_{\substack{X \subseteq \spS \\ |X| = \Delta}} \Ism{\vfsub{X}}{\vysub{X}} \\
    &= \alpha_{n_1} \frac{\gamma_\Delta}{\Delta}
  \end{align*} where \e{1} follows from the definition of the task complexity $\alpha_{n_1}$ (cf.~\cref{defn:task_complexity}); \e{2} uses the objective of \itl and that the posterior variance of Gaussians is independent of the realization and only depends on the \emph{location} of observations; \e{3} uses ${\spA_{i+1} \subseteq \spA_i}$ and monotonicity of information gain; \e{4} uses that the posterior variance of Gaussians is independent of the realization and only depends on the \emph{location} of observations; \e{5} uses the chain rule of information gain; \e{6} uses ${\vysub{X} \perp \vfsub{\spA_{n_0}} \mid \vfsub{X}}$ and the data processing inequality.
  The conditional independence follows from the assumption that the observation noise is independent.
  Similarly, $\vysub{X} \perp \spD_{n_0} \mid \vfsub{X}$ which implies \e{7}.

  If $n_0 = 0$, then the bound before line \e{6} simplifies to $\alpha_{n_1} \gamma_{\spA_0,\spS}(\Delta) / \Delta$.
\end{proof}

The result for \vtl is stated, for simplicity, only for the case where the target space and sample space are fixed.\looseness=-1

\begin{theorem}[Convergence of Marginal Gain for \vtl]\label{thm:objective_convergence_vtl}
  Assume that \cref{asm:bayesian_prior,asm:bayesian_noise} are satisfied.
  Then for any $n \geq 1$, if the sequence $\{\vx_i\}_{i=1}^n$ is generated by \vtl, \begin{align}
  \Gamma_{n-1} &\leq \frac{2 \sigma^2 \alpha_n}{n} \sum_{\vxp \in \spA} \gamma_{\{\vxp\},\spS}(n).
\end{align}
\end{theorem}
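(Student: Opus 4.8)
The plan is to mirror the proof of \cref{thm:objective_convergence_itl} for \itl, the essential new ingredient being that the \vtl marginal gain decomposes additively over the target points~$\vxp \in \spA$ and that each per-point variance reduction can be controlled by a conditional information gain about the scalar $f_{\vxp}$. First I would unfold the task complexity (cf.~\cref{defn:task_complexity}): since $\alpha_n = \max_{i \in \{0,\dots,n-1\}} \Gamma_{n-1}/\Gamma_i$ implies $\Gamma_{n-1} \leq \alpha_n \Gamma_i$ for every such $i$, averaging the $n$ copies gives $\Gamma_{n-1} \leq \frac{\alpha_n}{n}\sum_{i=0}^{n-1} \Gamma_i$. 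Because \vtl greedily selects the maximizer, and because in the Gaussian setting posterior variances depend only on the \emph{locations} of observations (not their realizations), each term is deterministic and equals
\begin{align*}
  \Gamma_i = \Delta_{\spA}(\vx_{i+1} \mid \vx_{1:i}) = \sum_{\vxp \in \spA} \brackets*{\Var{f_{\vxp} \mid \spD_i} - \Var{f_{\vxp} \mid \spD_i, y_{\vx_{i+1}}}}.
\end{align*}

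The crux, and the step I expect to be the main obstacle, is a per-point inequality converting variance reduction into information gain with the correct constant. For a scalar Gaussian $f_{\vxp}$ one has the closed form $\I{f_{\vxp}}{y_{\vx}}[\vysub{X}] = \frac{1}{2}\log\parentheses*{\Var{f_{\vxp} \mid \vysub{X}} / \Var{f_{\vxp} \mid \vysub{X}, y_{\vx}}}$. Writing $a = \Var{f_{\vxp} \mid \vysub{X}}$ and $b = \Var{f_{\vxp} \mid \vysub{X}, y_{\vx}} \leq a$, the elementary bound $\log t \leq t-1$ applied at $t = b/a$ yields $1 - b/a \leq \log(a/b)$, whence $a - b = a(1 - b/a) \leq a\log(a/b) \leq 2\sigma^2\,\I{f_{\vxp}}{y_{\vx}}[\vysub{X}]$, using that the posterior variance never exceeds $\sigma^2 = \max_{\vx} \sigma_0^2(\vx)$. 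Substituting this into the display above gives $\Gamma_i \leq 2\sigma^2 \sum_{\vxp \in \spA} \I{f_{\vxp}}{y_{\vx_{i+1}}}[\spD_i]$.

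It then remains to sum over~$i$, swap the order of summation, and telescope. For each fixed target $\vxp$, the chain rule of information gain collapses the inner sum,
\begin{align*}
  \sum_{i=0}^{n-1} \I{f_{\vxp}}{y_{\vx_{i+1}}}[\vysub{\vx_{1:i}}] = \I{f_{\vxp}}{\vysub{\vx_{1:n}}} \leq \max_{\substack{X \subseteq \spS \\ \abs{X} \leq n}} \I{f_{\vxp}}{\vysub{X}} = \gamma_{\{\vxp\},\spS}(n),
\end{align*}
where the last inequality is just the definition of the single-point information capacity. Combining the three displays yields $\Gamma_{n-1} \leq \frac{\alpha_n}{n}\cdot 2\sigma^2 \sum_{\vxp \in \spA} \gamma_{\{\vxp\},\spS}(n)$, which is the claimed bound. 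The only subtlety to verify carefully is the deterministic (realization-independent) nature of all posterior variances and information gains in the Gaussian case, since this is what legitimizes both the greedy identification $\Gamma_i = \Delta_{\spA}(\vx_{i+1} \mid \vx_{1:i})$ and the clean telescoping of the conditional information gains.
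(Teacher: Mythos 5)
Your proposal is correct and follows essentially the same route as the paper's proof: unfold the task complexity to average over rounds, use the greedy choice and realization-independence of Gaussian posterior variances to identify $\Gamma_i$ with the realized trace decrement, convert each per-point variance reduction into $2\sigma^2$ times a conditional information gain (the paper does this via \cref{lem:difference_bound_by_log}, you via the equivalent elementary bound $\log t \leq t-1$), and telescope with the chain rule to reach $\gamma_{\{\vxp\},\spS}(n)$. The only cosmetic difference is that you prove the difference-versus-log inequality inline rather than citing the paper's lemma.
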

We remark that $\sum_{\vxp \in \spA} \gamma_{\{\vxp\},\spS}(n) \leq \abs{\spA} \gamma_{\spA,\spS}(n)$.\looseness=-1
\begin{proof}
  We have \begin{align*}
      \Gamma_{n-1} &= \frac{1}{n} \sum_{i=0}^{n-1} \Gamma_{n-1} \\
      \eleq{1} \frac{\alpha_n}{n} \sum_{i=0}^{n-1} \Gamma_i \\
      &= \begin{multlined}[t]
          \frac{\alpha_n}{n} \sum_{i=0}^{n-1} \Big[ \tr{\Var{\vfsub{\spA} \mid \vy_{1:i}}} - \min_{\vx \in \spS} \tr{\Var{\vfsub{\spA} \mid \vy_{1:i}, y_{\vx}}} \Big]
      \end{multlined} \\
      \eeq{2} \frac{\alpha_n}{n} \sum_{i=0}^{n-1} \Big[ \tr{\Var{\vfsub{\spA} \mid \spD_{i}}} - \tr{\Var{\vfsub{\spA} \mid \spD_{i+1}}} \Big] \\
      \eleq{3} \frac{\sigma^2 \alpha_n}{n} \sum_{\vxp \in \spA} \sum_{i=0}^{n-1} \log\parentheses*{\frac{\Var{f_{\vxp} \mid \spD_{n}}}{\Var{f_{\vxp} \mid \spD_{n+1}}}} \\
      &= \frac{2 \sigma^2 \alpha_n}{n} \sum_{\vxp \in \spA} \sum_{i=0}^{n-1} \I{f_{\vxp}}{y_{\vx_{n+1}}}[\spD_{n}] \\
      \eeq{4} \frac{2 \sigma^2 \alpha_n}{n} \sum_{\vxp \in \spA} \sum_{i=0}^{n-1} \I{f_{\vxp}}{y_{\vx_{n+1}}}[\vysub{\vx_{1:n}}] \\
      \eeq{5} \frac{2 \sigma^2 \alpha_n}{n} \sum_{\vxp \in \spA} \I{f_{\vxp}}{\vysub{\vx_{1:n}}} \\
      &\leq \frac{2 \sigma^2 \alpha_n}{n} \sum_{\vxp \in \spA} \max_{\substack{X \subseteq \spS \\ \abs{X} = n}} \I{f_{\vxp}}{\vysub{X}} \\
      &= \frac{2 \sigma^2 \alpha_n}{n} \sum_{\vxp \in \spA} \gamma_{\{\vxp\},\spS}(n)
  \end{align*} where \e{1} follows from the definition of the task complexity $\alpha_{n_1}$ (cf.~\cref{defn:task_complexity}); \e{2} follows from the \vtl decision rule and that the posterior variance of Gaussians is independent of the realization and only depends on the \emph{location} of observations; \e{3} follows from \cref{lem:difference_bound_by_log} and monotonicity of variance; \e{4} uses that the posterior variance of Gaussians is independent of the realization and only depends on the \emph{location} of observations; and \e{5} uses the chain rule of mutual information.
  The remainder of the proof is analogous to the proof of \cref{thm:objective_convergence_itl} (cf.~\cref{sec:proofs:objective_convergence}).
\end{proof}

\paragraph{Keeping track of the task complexity online}

In general, the task complexity $\alpha_{n}$ may be larger than one in the ``directed'' setting (i.e., when $\spS \not\subseteq \spA$).
However, note that $\alpha_n$ can easily be evaluated online by keeping track of the smallest $\Gamma_i$ during previous rounds $i$.

\subsection{Proof of \cref{thm:variance_convergence}}\label{sec:proofs:variance_convergence}

We will now prove \cref{thm:variance_convergence}.
We first prove the convergence of marginal variance within $\spS$ for \itl, before proving the convergence outside $\spS$ in \cref{sec:proofs:variance_convergence:outside}.

\begin{lemma}[Uniform convergence of marginal variance within $\spS$ for \itl]\label{lem:convergence_within_S}
  Assume that \cref{asm:bayesian_prior,asm:bayesian_noise} are satisfied.
  For any $n \geq 0$ and $\vx \in \spA \cap \spS$, \begin{align}
    \sigma_{n}^2(\vx) \leq 2 \tilde{\sigma}^2 \cdot \Gamma_n.
  \end{align}
\end{lemma}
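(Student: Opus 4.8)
The plan is to exploit that a point $\vx \in \spA \cap \spS$ is simultaneously a valid \emph{sample} point and a \emph{target} point, so that the marginal gain of observing at $\vx$ controls both $\Gamma_n$ (from above) and the local posterior variance $\sigma_n^2(\vx)$ (from below). First, since $\vx \in \spS$, the definition $\Gamma_n = \max_{\vx' \in \spS} \Delta_{\spA}(\vx' \mid \vx_{1:n})$ together with the identity $\Delta_{\spA}(\vx \mid \vx_{1:n}) = \Ism{\vfsub{\spA}}{y_{\vx}}[\spD_n]$ for \itl gives $\Gamma_n \geq \Ism{\vfsub{\spA}}{y_{\vx}}[\spD_n]$. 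Next, since $\vx \in \spA$, the scalar $f_{\vx}$ is a coordinate of the vector $\vfsub{\spA}$; by the chain rule of information gain and non-negativity of the conditional information gain (``information never hurts''), $\Ism{\vfsub{\spA}}{y_{\vx}}[\spD_n] \geq \Ism{f_{\vx}}{y_{\vx}}[\spD_n]$. Under \cref{asm:bayesian_prior,asm:bayesian_noise} the pair $(f_{\vx}, y_{\vx})$ is jointly Gaussian given $\spD_n$, and as computed in \cref{sec:proofs:undirected_itl} this one-dimensional information gain equals $\frac{1}{2}\log(1 + \sigma_n^2(\vx)/\rho^2(\vx))$. Chaining these yields $\Gamma_n \geq \frac{1}{2}\log(1 + \sigma_n^2(\vx)/\rho^2(\vx))$.

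The remaining and only nontrivial step is to invert this logarithmic bound into the advertised linear bound on $\sigma_n^2(\vx)$. I would use the elementary fact that $s \mapsto \log(1+s)/s$ is decreasing on $(0,\infty)$, so that for $0 \leq u \leq v$ one has $u \leq \frac{v}{\log(1+v)}\log(1+u)$. Applying this with $u = \sigma_n^2(\vx)/\rho^2(\vx)$ and $v = \sigma_0^2(\vx)/\rho^2(\vx) \geq u$ (using $\sigma_n^2(\vx) \leq \sigma_0^2(\vx)$ by monotonicity of the posterior variance) gives $\sigma_n^2(\vx) \leq \frac{\sigma_0^2(\vx)}{\log(1 + \sigma_0^2(\vx)/\rho^2(\vx))}\log(1 + \sigma_n^2(\vx)/\rho^2(\vx))$. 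A short calculation — comparing $\frac{a}{a+1}$ with $\log(1+a)$ for $a = \sigma_0^2(\vx)/\rho^2(\vx) \geq 0$, which follows by noting both vanish at $a = 0$ and the derivative $1/(1+a)$ of the latter dominates the derivative $1/(1+a)^2$ of the former — shows the leading coefficient is at most $\sigma_0^2(\vx) + \rho^2(\vx) \leq \tilde{\sigma}^2$. Combining with the lower bound on $\Gamma_n$ then yields $\sigma_n^2(\vx) \leq \tilde{\sigma}^2 \log(1 + \sigma_n^2(\vx)/\rho^2(\vx)) = 2\tilde{\sigma}^2 \cdot \tfrac{1}{2}\log(1 + \sigma_n^2(\vx)/\rho^2(\vx)) \leq 2\tilde{\sigma}^2 \Gamma_n$, as claimed.

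The main obstacle is this log-to-linear inversion and the verification that the constant collapses to exactly $2\tilde{\sigma}^2$; this is precisely where the definition $\tilde{\sigma}^2 = \max_{\vx}(\sigma_0^2(\vx) + \rho^2(\vx))$ enters, and the inequality is plausibly packaged as one of the helper facts in \cref{sec:proofs:useful_facts}. Everything else — the reduction from the joint target vector $\vfsub{\spA}$ to the single coordinate $f_{\vx}$ via monotonicity, and the Gaussian closed form for the scalar information gain — is routine given the stated assumptions.
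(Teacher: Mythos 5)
Your proof is correct and takes essentially the same route as the paper's: the identical chain $\sigma_n^2(\vx) \leq 2\tilde{\sigma}^2\,\Ism{f_{\vx}}{y_{\vx}}[\spD_n] \leq 2\tilde{\sigma}^2\,\Ism{\vfsub{\spA}}{y_{\vx}}[\spD_n] \leq 2\tilde{\sigma}^2\,\Gamma_n$, using $\vx \in \spA$ for monotonicity of information gain and $\vx \in \spS$ for the maximization. The only difference is cosmetic: where the paper obtains the key inequality $\sigma_n^2(\vx) \leq \tilde{\sigma}^2 \log\left(1 + \sigma_n^2(\vx)/\rho^2(\vx)\right)$ from its mean-value-theorem helper (\cref{lem:difference_bound_by_log}) applied with $b = \sigma_n^2(\vx) + \rho^2(\vx)$, $a = \rho^2(\vx)$, $M = \tilde{\sigma}^2$, you re-derive the same inequality from elementary monotonicity of $\log(1+s)/s$ and the bound $\tfrac{a}{1+a} \leq \log(1+a)$, with a harmless detour through $\sigma_0^2(\vx)$.
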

\begin{proof}
  We have \begin{align*}
    \sigma_{n}^2(\vx) &= \Var{f_{\vx} \mid \spD_n} - \underbrace{\Varsm{f_{\vx} \mid f_{\vx}, \spD_n}}_{0} \\
    \eeq{1} \begin{multlined}[t]
      \Var{y_{\vx} \mid \spD_n} - \rho^2(\vx) - (\Varsm{y_{\vx} \mid f_{\vx}, \spD_n} - \rho^2(\vx))
    \end{multlined} \\
    &= \Var{y_{\vx} \mid \spD_n} - \Varsm{y_{\vx} \mid f_{\vx}, \spD_n} \\
    \eleq{2} \tilde{\sigma}^2 \log\parentheses*{\frac{\Var{y_{\vx} \mid \spD_n}}{\Varsm{y_{\vx} \mid f_{\vx}, \spD_n}}} \\
    &= 2 \tilde{\sigma}^2 \cdot \Ism{f_{\vx}}{y_{\vx}}[\spD_n] \\
    \eleq{3} 2 \tilde{\sigma}^2 \cdot \Ism{\vfsub{\spA}}{y_{\vx}}[\spD_n] \\
    \eleq{4} 2 \tilde{\sigma}^2 \cdot \max_{\vxp \in \spS} \Ism{\vfsub{\spA}}{y_{\vxp}}[\spD_n] \\
    \eeq{5} 2 \tilde{\sigma}^2 \cdot \max_{\vxp \in \spS} \Ism{\vfsub{\spA}}{y_{\vxp}}[\vy_{1:n}] \\
    &= 2 \tilde{\sigma}^2 \cdot \Gamma_n
  \end{align*} where \e{1} follows from the noise assumption (cf.~\cref{asm:bayesian_noise}); \e{2} follows from \cref{lem:difference_bound_by_log} and using monotonicity of variance; \e{3} follows from $\vx \in \spA$ and monotonicity of information gain; \e{4} follows from ${\vx \in \spS}$; and \e{5} uses that the posterior variance of Gaussians is independent of the realization and only depends on the \emph{location} of observations.
\end{proof}

\subsubsection{Convergence outside $\spS$ for \itl}\label{sec:proofs:variance_convergence:outside}

We will now show convergence of marginal variance to the irreducible uncertainty for points outside the sample space.

Our proof roughly proceeds as follows:
We construct an ``approximate Markov boundary'' of $\vx$ in $\spS$, and show (1) that the size of this Markov boundary is independent of $n$, and (2) that a small uncertainty reduction within the Markov boundary implies that the marginal variances at the Markov boundary and(!) $\vx$ are small.

\begin{definition}[Approximate Markov boundary]\label{defn:approx_markov_boundary}
  For any $\epsilon > 0$, $n \geq 0$, and $\vx \in \spX$, we denote by $B_{n,\epsilon}(\vx)$ the smallest (multi-)subset of $\spS$ such that \begin{align}
    \Varsm{f_{\vx} \mid \spD_n, \vysub{B_{n,\epsilon}(\vx)}} &\leq \eta_{\spS}^2(\vx) + \epsilon. \label{eq:approx_markov_boundary}
  \end{align}
  We call $B_{n,\epsilon}(\vx)$ an \emph{$\epsilon$-approximate Markov boundary} of $\vx$ in~$\spS$.
\end{definition}

\Cref{eq:approx_markov_boundary} is akin to the notion of the smallest Markov blanket in $\spS$ of some $\vx \in \spX$ (called a \emph{Markov boundary}) which is the smallest set $\spB \subseteq \spS$ such that $f_{\vx} \perp \vfsub{\spS} \mid \vfsub{\spB}$.

\begin{lemma}[Existence of an approximate Markov boundary]\label{lem:approx_markov_boundary}
  For any $\epsilon > 0$, let $k$ be the smallest integer satisfying \begin{align}
    \frac{\gamma_k}{k} \leq \frac{\epsilon \lambda_{\min}(\mKsub{\spS\spS})}{2 \abs{\spS} \sigma^2 \tilde{\sigma}^2}. \label{eq:markov_boundary_size_condition}
  \end{align}
  Then, for any $n \geq 0$ and $\vx \in \spX$, there exists an $\epsilon$-approximate Markov boundary $B_{n,\epsilon}(\vx)$ of $\vx$ in $\spS$ with size at most $k$.
\end{lemma}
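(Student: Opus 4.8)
The plan is to exhibit a single set $B \subseteq \spS$ with $\abs{B} \le k$ that drives the posterior variance of $f_\vx$ close to the irreducible uncertainty \emph{uniformly} over $\vx \in \spX$, and then invoke monotonicity of the posterior variance to discard the dependence on $\spD_n$. The starting point is a law-of-total-variance decomposition. Since $B \subseteq \spS$, the noisy observations $\vysub{B}$ are a deterministic function of $\vfsub{\spS}$ plus independent noise, so $f_\vx \perp \vysub{B} \mid \vfsub{\spS}$; conditioning on $\vfsub{\spS}$ therefore yields $\Var{f_\vx \mid \vfsub{\spS}, \vysub{B}} = \eta^2_{\spS}(\vx)$ and $\E{f_\vx \mid \vfsub{\spS}, \vysub{B}} = \mu(\vx) + \transpose{\va}(\vfsub{\spS} - \vmusub{\spS})$ with $\transpose{\va} = \mKsub{\vx\spS}\inv{\mKsub{\spS\spS}}$. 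The law of total variance then gives the exact identity
\begin{align*}
  \Var{f_\vx \mid \vysub{B}} = \eta^2_{\spS}(\vx) + \transpose{\va}\,\Var{\vfsub{\spS} \mid \vysub{B}}\,\va,
\end{align*}
which isolates the reducible ``extrapolation'' term as a quadratic form in the in-sample posterior covariance.

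Next I would bound this quadratic form. Using $\Var{\vfsub{\spS}\mid\vysub{B}} \preceq \lambda_{\max}(\Var{\vfsub{\spS}\mid\vysub{B}})\,\mI \preceq \tr{\Var{\vfsub{\spS}\mid\vysub{B}}}\,\mI$ for PSD matrices, the error is at most $\norm{\va}_2^2\,\tr{\Var{\vfsub{\spS}\mid\vysub{B}}}$. The coefficient is uniformly controlled: since $\mKsub{\spS\spS}^{-2} \preceq \lambda_{\min}(\mKsub{\spS\spS})^{-1}\inv{\mKsub{\spS\spS}}$, one gets $\norm{\va}_2^2 = \mKsub{\vx\spS}\mKsub{\spS\spS}^{-2}\mKsub{\spS\vx} \le \lambda_{\min}(\mKsub{\spS\spS})^{-1}\bigl(\sigma_0^2(\vx) - \eta^2_{\spS}(\vx)\bigr) \le \sigma^2/\lambda_{\min}(\mKsub{\spS\spS})$, independently of $\vx$. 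It thus remains to find $B$ with $\abs{B}\le k$ making the total in-sample variance $\tr{\Var{\vfsub{\spS}\mid\vysub{B}}}$ at most $2\abs{\spS}\tilde\sigma^2\gamma_k/k$, after which the hypothesis on $k$ forces the whole reducible term below $\epsilon$.

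For the in-sample variance I would run plain uncertainty sampling within $\spS$ from the prior, selecting $\vx_{i+1}' = \argmax_{\vx\in\spS}\sigma_{(i)}^2(\vx)$ where $\sigma_{(i)}^2$ is the posterior variance given $\vysub{\vx_{1:i}'}$, and set $B = \vx_{1:k}'$. Monotonicity of the posterior variance makes $\max_{\vx\in\spS}\sigma_{(i)}^2(\vx)$ non-increasing in $i$, so $\max_{\vx\in\spS}\sigma_{(k)}^2(\vx) \le \frac{1}{k}\sum_{i=0}^{k-1}\sigma_{(i)}^2(\vx_{i+1}')$. Applying \cref{lem:difference_bound_by_log} with $a = \sigma_{(i)}^2(\vx_{i+1}')+\rho^2(\vx_{i+1}') \le \tilde\sigma^2$ and $b = \rho^2(\vx_{i+1}')$ bounds each summand by $2\tilde\sigma^2\,\Ism{f_{\vx_{i+1}'}}{y_{\vx_{i+1}'}}[\vysub{\vx_{1:i}'}]$; upgrading $f_{\vx_{i+1}'}$ to $\vfsub{\spS}$ via the data-processing inequality and telescoping through the chain rule gives $\sum_i \le \Ism{\vfsub{\spS}}{\vysub{B}} \le \gamma_{\spS,\spS}(k) \le \gamma_k$. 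Hence $\tr{\Var{\vfsub{\spS}\mid\vysub{B}}} \le \abs{\spS}\max_{\vx\in\spS}\sigma_{(k)}^2(\vx) \le 2\abs{\spS}\tilde\sigma^2\gamma_k/k$, as required. Finally, because adding the data $\spD_n$ can only decrease posterior variance, $\Var{f_\vx\mid\spD_n,\vysub{B}} \le \Var{f_\vx\mid\vysub{B}} \le \eta^2_{\spS}(\vx)+\epsilon$, so the minimal set $B_{n,\epsilon}(\vx)$ from \cref{defn:approx_markov_boundary} has size at most $k$.

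I expect the main obstacle to be precisely the $n$-independence of $k$: conditioning the uncertainty-sampling argument on $\spD_n$ would produce a capacity term like $\gamma_{n+k}$ that grows with $n$. The crucial move that resolves this is to construct $B$ purely from the prior and to reinstate $\spD_n$ ``for free'' using monotonicity of the GP posterior variance; this also has the pleasant side effect that one fixed $B$ works uniformly over all $\vx\in\spX$ and all $n$. A secondary point requiring care is the uniform eigenvalue bound on $\norm{\va}_2^2$, which is what ties the extrapolation error to the well-conditioned quantity $\lambda_{\min}(\mKsub{\spS\spS})$ appearing in the hypothesis.
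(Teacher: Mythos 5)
Your proposal is correct and takes essentially the same route as the paper: the paper also constructs $B$ by greedy prior-based ``undirected'' \itl (uncertainty sampling) within $\spS$, bounds its in-sample posterior variances by $2\tilde\sigma^2\gamma_k/k$ via the same log-ratio/chain-rule/information-capacity argument, transfers this to arbitrary $\vx \in \spX$ through the identical law-of-total-variance decomposition with the linear conditional mean and the $\sigma^2/\lambda_{\min}(\mKsub{\spS\spS})$ coefficient bound, and reinstates $\spD_n$ by monotonicity of variance. The only cosmetic deviations are that the paper compares $\Var{\vfsub{\spS} \mid \vysub{B}}$ to $\abs{\spS}$ times its diagonal rather than to its trace, and that your $a$/$b$ labels in the invocation of \cref{lem:difference_bound_by_log} are swapped (harmlessly, since the intended bound $\sigma_{(i)}^2 \leq 2\tilde\sigma^2\,\Ism{f}{y}$ is the right one).
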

\Cref{lem:approx_markov_boundary} shows that for any $\epsilon > 0$ there exists a universal constant $b_\epsilon$ (with respect to $n$ and $\vx$) such that \begin{align}
  \abs{B_{n,\epsilon}(\vx)} \leq b_\epsilon \qquad \forall n \geq 0, \vx \in \spX.
\end{align}
We defer the proof of \cref{lem:approx_markov_boundary} to \cref{sec:missing_proofs:existence_of_an_approximate_markov_boundary} where we also provide an algorithm to compute $B_{n,\epsilon}(\vx)$.

\begin{lemma}\label{lem:convergence_helper1}
  For any $\epsilon > 0$, $n \geq 0$, and $\vx \in \spX$, \begin{align}
    \sigma_{n}^2(\vx) \leq 2 \sigma^2 \cdot \Ism{f_{\vx}}{\vysub{B_{n,\epsilon}(\vx)}}[\spD_n] + \eta_{\spS}^2(\vx) + \epsilon
  \end{align} where $B_{n,\epsilon}(\vx)$ is an $\epsilon$-approximate Markov boundary of $\vx$ in $\spS$.
\end{lemma}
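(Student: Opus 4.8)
The plan is to split the posterior variance of the latent value $f_\vx$ into the part that the approximate Markov boundary already controls and a residual variance reduction, and then to convert the latter into the information gain $\Ism{f_\vx}{\vysub{B}}[\spD_n]$. Writing $B \defeq B_{n,\epsilon}(\vx)$ for brevity, I would first decompose
\begin{align*}
  \sigma_n^2(\vx) = \Var{f_\vx \mid \spD_n} = \underbrace{\brackets*{\Var{f_\vx \mid \spD_n} - \Var{f_\vx \mid \spD_n, \vysub{B}}}}_{\text{(I): residual reduction}} + \underbrace{\Var{f_\vx \mid \spD_n, \vysub{B}}}_{\text{(II): boundary variance}}.
\end{align*}
Term (II) is bounded directly by $\eta_{\spS}^2(\vx) + \epsilon$ via the defining property of the $\epsilon$-approximate Markov boundary (\cref{eq:approx_markov_boundary}), so all the remaining work lies in controlling~(I).

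For (I), I would invoke \cref{lem:difference_bound_by_log} with $a = \Var{f_\vx \mid \spD_n}$ and $b = \Var{f_\vx \mid \spD_n, \vysub{B}}$. Monotonicity of the posterior variance under conditioning gives $0 < b \leq a \leq \sigma_0^2(\vx) \leq \sigma^2$, so that lemma yields $a - b \leq \sigma^2 \log(a/b)$, i.e.
\begin{align*}
  \Var{f_\vx \mid \spD_n} - \Var{f_\vx \mid \spD_n, \vysub{B}} \leq \sigma^2 \log\parentheses*{\frac{\Var{f_\vx \mid \spD_n}}{\Var{f_\vx \mid \spD_n, \vysub{B}}}}.
\end{align*}
It is important that the bounding constant here is the prior variance $\sigma^2$ rather than $\tilde{\sigma}^2$, because (in contrast to \cref{lem:convergence_within_S}) we are reducing the variance of the \emph{latent} $f_\vx$, which the boundary observations inform noiselessly through the process, rather than the variance of a noisy observation $y_\vx$.

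Finally I would rewrite the log-ratio as a conditional information gain. Since $f$ is a GP (\cref{asm:bayesian_prior}) and conditioning preserves Gaussianity, both $f_\vx \mid \spD_n$ and $f_\vx \mid \spD_n, \vysub{B}$ are univariate Gaussians whose (realization-independent) variances appear above, so the closed-form Gaussian entropy gives
\begin{align*}
  \Ism{f_\vx}{\vysub{B}}[\spD_n] = \Hsm{f_\vx \mid \spD_n} - \Hsm{f_\vx \mid \spD_n, \vysub{B}} = \frac{1}{2} \log\parentheses*{\frac{\Var{f_\vx \mid \spD_n}}{\Var{f_\vx \mid \spD_n, \vysub{B}}}}.
\end{align*}
Substituting this identity into the bound on (I) yields $(\mathrm{I}) \leq 2\sigma^2 \, \Ism{f_\vx}{\vysub{B}}[\spD_n]$, and adding the bound on (II) produces the claim. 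The argument is a short chain of inequalities rather than a delicate construction; I expect no substantive obstacle beyond the one bookkeeping point flagged above, namely keeping the correct constant $\sigma^2$ in \cref{lem:difference_bound_by_log} consistent with the fact that the conditioning variable is the latent $f_\vx$.
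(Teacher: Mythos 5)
Your proposal is correct and follows essentially the same route as the paper's proof: both reduce to the bound $\sigma_n^2(\vx) \leq \Var{f_{\vx} \mid \spD_n} - \Varsm{f_{\vx} \mid \vysub{B_{n,\epsilon}(\vx)}, \spD_n} + \eta_{\spS}^2(\vx) + \epsilon$ via the defining property of the approximate Markov boundary, then apply \cref{lem:difference_bound_by_log} with constant $\sigma^2$ and the Gaussian identity $\Ism{f_{\vx}}{\vysub{B_{n,\epsilon}(\vx)}}[\spD_n] = \tfrac{1}{2}\log\bigl(\Var{f_{\vx} \mid \spD_n}/\Varsm{f_{\vx} \mid \vysub{B_{n,\epsilon}(\vx)}, \spD_n}\bigr)$. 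Your flagged bookkeeping point (using $\sigma^2$ rather than $\tilde{\sigma}^2$ because the latent $f_{\vx}$, not a noisy observation, is being conditioned) matches the paper exactly.
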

\begin{proof}
  We have \begin{align*}
    \sigma_{n}^2(\vx) &= \begin{multlined}[t]
      \Var{f_{\vx} \mid \spD_n} - \eta_{\spS}^2(\vx) + \eta_{\spS}^2(\vx)
    \end{multlined} \\
    \eleq{1} \begin{multlined}[t]
      \Var{f_{\vx} \mid \spD_n} - \Varsm{f_{\vx} \mid \vysub{B_{n,\epsilon}(\vx)}, \spD_n} + \eta_{\spS}^2(\vx) + \epsilon
    \end{multlined} \\
    \eleq{2} \begin{multlined}[t]
      \sigma^2 \log\parentheses*{\frac{\Var{f_{\vx} \mid \spD_n}}{\Varsm{f_{\vx} \mid \vysub{B_{n,\epsilon}(\vx)}, \spD_n}}} + \eta_{\spS}^2(\vx) + \epsilon
    \end{multlined} \\
    &= 2 \sigma^2 \cdot \Ism{f_{\vx}}{\vysub{B_{n,\epsilon}(\vx)}}[\spD_n] + \eta_{\spS}^2(\vx) + \epsilon
  \end{align*} where \e{1} follows from the defining property of an $\epsilon$-approximate Markov boundary (cf.~\cref{eq:approx_markov_boundary}); and \e{2} follows from \cref{lem:difference_bound_by_log} and using monotonicity of variance.
\end{proof}

\begin{lemma}\label{lem:convergence_helper2}
  For any $\epsilon > 0, n \geq 0$, and $\vx \in \spA$, \begin{align}
    \Ism{f_{\vx}}{\vysub{B_{n,\epsilon}(\vx)}}[\spD_n] \leq \frac{b_{\epsilon}}{\kappabar_{n}(B_{n,\epsilon}(\vx))} \Gamma_n
  \end{align} where $B_{n,\epsilon}(\vx)$ is an $\epsilon$-approximate Markov boundary of $\vx$ in $\spS$, $\abs{B_{n,\epsilon}(\vx)} \leq b_\epsilon$, and where $\kappabar_{n}(\cdot) \defeq \kappabar(\cdot \mid \vx_{1:n})$ denotes the information ratio from \cref{eq:information_ratio}.
\end{lemma}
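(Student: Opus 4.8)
The plan is to unfold the definition of the information ratio $\kappabar_n$ and to leverage the hypothesis $\vx \in \spA$. First I would observe that, since $\vx \in \spA$, the marginal $f_{\vx}$ is a single coordinate of the vector $\vfsub{\spA}$. Hence, by the chain rule of information gain together with non-negativity of conditional mutual information,
\begin{align*}
  \Ism{f_{\vx}}{\vysub{B}}[\spD_n] \leq \Ism{\vfsub{\spA}}{\vysub{B}}[\spD_n] = \Delta_{\spA}(B \mid \vx_{1:n}),
\end{align*}
where I abbreviate $B \defeq B_{n,\epsilon}(\vx)$ and the equality is just the definition of the \itl marginal gain (recall that $\spD_n$ records observations at the locations $\vx_{1:n}$). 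This is the only step that genuinely uses $\vx \in \spA$, and it is exactly what lets me pass from the single-coordinate information gain on the left-hand side of the claim to the set-valued marginal gain appearing in the information ratio.

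Next I would rewrite $\Delta_{\spA}(B \mid \vx_{1:n})$ using the definition of the information ratio from \cref{eq:information_ratio} with conditioning set $D = \vx_{1:n}$:
\begin{align*}
  \Delta_{\spA}(B \mid \vx_{1:n}) = \frac{1}{\kappabar_n(B)} \sum_{\vxp \in B} \Delta_{\spA}(\vxp \mid \vx_{1:n}).
\end{align*}
Each summand is a single-point marginal gain of a point $\vxp \in B \subseteq \spS$, so by the very definition ${\Gamma_n = \max_{\vxp \in \spS} \Delta_{\spA}(\vxp \mid \vx_{1:n})}$ it is bounded above by $\Gamma_n$. Combining this with the cardinality bound $\abs{B} \leq b_\epsilon$ from \cref{lem:approx_markov_boundary} yields $\sum_{\vxp \in B} \Delta_{\spA}(\vxp \mid \vx_{1:n}) \leq b_\epsilon \Gamma_n$, and dividing by $\kappabar_n(B)$ gives the claim after chaining with the first display.

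There is essentially no hard analytical step here; the bound is a direct consequence of how $\kappabar_n$ and $\Gamma_n$ are defined, so the ``main obstacle'' is really just bookkeeping rather than any inequality. The one point requiring a little care is the degenerate case $\Delta_{\spA}(B \mid \vx_{1:n}) = 0$, in which $\kappabar_n(B)$ is set to $1$ by convention and the left-hand side $\Ism{f_{\vx}}{\vysub{B}}[\spD_n]$ is likewise forced to $0$ by the monotonicity inequality above, so the bound holds trivially. I would also note that $B$ may be a multiset, but this causes no difficulty: the cardinality $\abs{B}$ is counted with multiplicity, repeated locations correspond to independent observations, and both the information-ratio identity and the per-summand bound remain valid verbatim.
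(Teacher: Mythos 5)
Your proposal is correct and follows essentially the same route as the paper's proof: bound $\Ism{f_{\vx}}{\vysub{B}}[\spD_n]$ by $\Ism{\vfsub{\spA}}{\vysub{B}}[\spD_n]$ via monotonicity (using $\vx \in \spA$), unfold the information ratio to convert the set gain into a sum of single-point gains, and bound each summand by $\Gamma_n$ using $B \subseteq \spS$ and $\abs{B} \leq b_\epsilon$ (the paper writes this as $b_\epsilon$ times a max over $B$, then over $\spS$, which is the same bookkeeping). Your extra remarks on the degenerate case $\kappabar_n(B)=1$ and on multisets are sound additions, not deviations.
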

We remark that $\kappabar_{n}(\cdot) > 0$ as is shown in \cref{lem:information_ratio_lower_bound}, and hence, the right-hand side of the inequality is well-defined.
\begin{proof}
  We use the abbreviated notation ${B = B_{n,\epsilon}(\vx)}$.
  We have \begin{align*}
    \Ism{f_{\vx}}{\vysub{B}}[\spD_n] \eleq{1} \Ism{\vfsub{\spA}}{\vysub{B}}[\spD_n] \\
    \eleq{2} \frac{1}{\kappabar_{n,b_\epsilon}} \sum_{\tilde{\vx} \in B} \Ism{\vfsub{\spA}}{y_{\tilde{\vx}}}[\spD_n] \\
    \eleq{3} \frac{b_{\epsilon}}{\kappabar_{n,b_\epsilon}} \max_{\tilde{\vx} \in B} \Ism{\vfsub{\spA}}{y_{\tilde{\vx}}}[\spD_n] \\
    \eleq{4} \frac{b_{\epsilon}}{\kappabar_{n,b_\epsilon}} \max_{\tilde{\vx} \in \spS} \Ism{\vfsub{\spA}}{y_{\tilde{\vx}}}[\spD_n] \\
    \eeq{5} \frac{b_{\epsilon}}{\kappabar_{n,b_\epsilon}} \max_{\tilde{\vx} \in \spS} \Ism{\vfsub{\spA}}{y_{\tilde{\vx}}}[\vy_{1:n}] \\
    &= \frac{b_{\epsilon}}{\kappabar_{n,b_\epsilon}} \Gamma_n
  \end{align*} where \e{1} follows from monotonicity of mutual information; \e{2} follows from the definition of the information ratio $\kappabar_{n,b_\epsilon}$ (cf.~\cref{eq:information_ratio}); \e{3} follows from $b \leq b_\epsilon$; \e{4} follows from $B \subseteq \spS$; and \e{5} uses that the posterior variance of Gaussians is independent of the realization and only depends on the \emph{location} of observations.
\end{proof}

\begin{proof}[Proof of \cref{thm:variance_convergence} for \itl]
  The case where $\vx \in \spA \cap \spS$ is shown by \cref{lem:convergence_within_S} with $C = 2 \tilde{\sigma}^2$.

  To prove the more general result, fix any ${\vx \in \spA}$ and ${\epsilon > 0}$.
  By \cref{lem:approx_markov_boundary}, there exists an $\epsilon$-approximate Markov boundary $B_{n,\epsilon}(\vx)$ of $\vx$ in $\spS$ such that $\abs{B_{n,\epsilon}(\vx)} \leq b_\epsilon$.
  We have \begin{align*}
    \sigma_{n}^2(\vx) \eleq{1} 2 \sigma^2 \cdot \Ism{f_{\vx}}{\vysub{B_{n,\epsilon}(\vx)}}[\spD_{n}] + \eta_{\spS}^2(\vx) + \epsilon \\
    \eleq{2} \frac{2 \sigma^2 b_{\epsilon}}{\kappabar_{n}(B_{n,\epsilon}(\vx))} \Gamma_n + \eta_{\spS}^2(\vx) + \epsilon
  \end{align*} where \e{1} follows from \cref{lem:convergence_helper1}; and \e{2} follows from \cref{lem:convergence_helper2}.

  Let $\smash{\epsilon = c \frac{\gamma_{\sqrt{n}}}{\sqrt{n}}}$ with ${c = 2 \abs{\spS} \sigma^2 \tilde{\sigma}^2 / \lambda_{\min}(\mKsub{\spS \spS})}$.
  Then, by \cref{eq:markov_boundary_size_condition}, $b_\epsilon$ can be bounded for instance by $\sqrt{n}$.
  Together with \cref{thm:objective_convergence_itl} this implies for \itl that \begin{align*}
    \sigma_{n}^2(\vx) &\leq \eta_{\spS}^2(\vx) + 2 \sigma^2 \sqrt{n} \, \Gamma_n + c \gamma_{\sqrt{n}} / \sqrt{n} \\
    &\leq \eta_{\spS}^2(\vx) + c' \gamma_n / \sqrt{n}
  \end{align*} for a constant $c'$, e.g., $c' = 2 \sigma^2 + c$.
\end{proof}

\subsubsection{Convergence outside $\spS$ for \vtl}\label{sec:proofs:variance_convergence:outside_vtl}

\begin{proof}[Proof of \cref{thm:variance_convergence} for \vtl]
  Analogously to \cref{lem:convergence_helper1}, we have \begin{align*}
      \sigma_{n}^2(\vx) &= \begin{multlined}[t]
        \Var{f_{\vx} \mid \spD_n} - \eta_{\spS}^2(\vx) + \eta_{\spS}^2(\vx)
      \end{multlined} \\
      \eleq{1} \begin{multlined}[t]
        \Var{f_{\vx} \mid \spD_n} - \Varsm{f_{\vx} \mid \vysub{B_{n,\epsilon}(\vx)}, \spD_n} + \eta_{\spS}^2(\vx) + \epsilon
      \end{multlined}
  \end{align*} where \e{1} follows from the defining property of an $\epsilon$-approximate Markov boundary (cf.~\cref{eq:approx_markov_boundary}).
  Further, we have \begin{align*}
      &\Var{f_{\vx} \mid \spD_n} - \Varsm{f_{\vx} \mid \vysub{B_{n,\epsilon}(\vx)}, \spD_n} \\
      \eleq{1} \sum_{\tilde{\vx} \in B_{n,\epsilon}(\vx)} \parentheses*{\Var{f_{\vx} \mid \spD_n} - \Varsm{f_{\vx} \mid y_{\tilde{\vx}}, \spD_n}} \\
      \eleq{2} \sum_{\tilde{\vx} \in B_{n,\epsilon}(\vx)} \parentheses*{\tr{\Var{\vfsub{\spA} \mid \vy_{1:n}}} - \tr{\Varsm{\vfsub{\spA} \mid y_{\tilde{\vx}}, \vy_{1:n}}}} \\
      \eleq{3} b_\epsilon \Gamma_n
  \end{align*} where \e{1} follows from the submodularity of $\psi_{\spA}$; \e{2} uses that the posterior variance of Gaussians is independent of the realization and only depends on the \emph{location} of observations; and \e{3} follows from the definition of $\Gamma_n$ and \cref{lem:approx_markov_boundary}.

  The remainder of the proof is analogous to the result for \itl, using \cref{thm:objective_convergence_vtl} to bound $\Gamma_n$.
\end{proof}

\subsubsection{Existence of an Approximate Markov Boundary}\label{sec:missing_proofs:existence_of_an_approximate_markov_boundary}

We now derive \cref{lem:approx_markov_boundary} which shows the existence of an approximate Markov boundary of~$\vx$ in~$\spS$.

\begin{lemma}\label{lem:learn_markov_boundary}
  For any ${S \subseteq \spS}$ and $k \geq 0$, there exists ${B \subseteq S}$ with ${|B| = k}$ such that for all ${\vxp \in S}$, \begin{align}
    \Var{f_{\vxp} \mid \vysub{B}} \leq 2 \tilde{\sigma}^2 \frac{\gamma_k}{k}.
  \end{align}
\end{lemma}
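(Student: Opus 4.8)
The plan is to take $B$ to be the set produced by running greedy uncertainty sampling inside $S$, i.e.\ setting $\vx_{i+1} = \argmax_{\vx \in S} \sigma_i^2(\vx)$ for $i = 0, \dots, k-1$ and letting $B = \{\vx_1, \dots, \vx_k\}$. This is exactly \itl applied with $\spA = \spS = S$ (cf.\ \cref{sec:proofs:undirected_itl}), so the whole argument stays within the ``undirected'' regime. It rests on two elementary observations. First, since the GP posterior variance at a fixed point is non-increasing as we condition on more data, for every $i \in \{0, \dots, k-1\}$ we have $\max_{\vxp \in S} \sigma_k^2(\vxp) \leq \max_{\vxp \in S} \sigma_i^2(\vxp) = \sigma_i^2(\vx_{i+1})$, the last equality being the defining property of uncertainty sampling. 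Summing over $i$ gives $k \cdot \max_{\vxp \in S} \sigma_k^2(\vxp) \leq \sum_{i=0}^{k-1} \sigma_i^2(\vx_{i+1})$, so it suffices to bound the total selected variance.

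Second, I would bound each term $\sigma_i^2(\vx_{i+1})$ by a single-point information gain exactly as in \cref{lem:convergence_within_S}: writing $\sigma_i^2(\vx_{i+1}) = \Var{y_{\vx_{i+1}} \mid \spD_i} - \Varsm{y_{\vx_{i+1}} \mid f_{\vx_{i+1}}, \spD_i}$ and applying \cref{lem:difference_bound_by_log} together with $\Var{y_{\vx_{i+1}} \mid \spD_i} \leq \tilde\sigma^2$ yields $\sigma_i^2(\vx_{i+1}) \leq 2 \tilde\sigma^2 \, \Ism{f_{\vx_{i+1}}}{y_{\vx_{i+1}}}[\spD_i]$. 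Because $\vx_{i+1} \in S$ and $\vfsub{S} \perp y_{\vx_{i+1}} \mid f_{\vx_{i+1}}$, the undirected identity of \cref{sec:proofs:undirected_itl} turns this into $\sigma_i^2(\vx_{i+1}) \leq 2 \tilde\sigma^2 \, \Ism{\vfsub{S}}{y_{\vx_{i+1}}}[\spD_i]$, and the chain rule of information gain telescopes the sum:
\begin{align*}
  \sum_{i=0}^{k-1} \Ism{\vfsub{S}}{y_{\vx_{i+1}}}[\spD_i] = \Ism{\vfsub{S}}{\vysub{B}} \leq \gamma_{S,S}(k) \leq \gamma_k,
\end{align*}
where the last step uses monotonicity of information gain in the target ($\vfsub{S} \subseteq \vfsub{\spX}$) to pass from $\gamma_{S,S}(k)$ to the global capacity $\gamma_k$. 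Combining the two observations gives $k \cdot \max_{\vxp \in S} \sigma_k^2(\vxp) \leq 2 \tilde\sigma^2 \gamma_k$, and dividing by $k$ yields $\Var{f_{\vxp} \mid \vysub{B}} = \sigma_k^2(\vxp) \leq 2 \tilde\sigma^2 \gamma_k / k$ for every $\vxp \in S$, which is the claim.

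The steps above are routine given the earlier results, so the only genuine care is in the telescoping identity and its capacity bound: I must verify that the per-step single-point gains $\Ism{f_{\vx_{i+1}}}{y_{\vx_{i+1}}}[\spD_i]$ can legitimately be summed into a joint gain $\Ism{\vfsub{S}}{\vysub{B}}$ about the \emph{whole} target $\vfsub{S}$ (this is where the undirected reduction and the independence $\vfsub{S} \perp y_{\vx_{i+1}} \mid f_{\vx_{i+1}}$ are essential, since the raw sum does not telescope because the target variable $f_{\vx_{i+1}}$ changes each term), and that this joint gain is controlled by $\gamma_k$ rather than by some $k$-dependent quantity. The edge case $k = 0$ is vacuous since the stated bound is then infinite. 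Alternatively, one could obtain essentially the same conclusion by invoking \cref{lem:convergence_within_S,thm:objective_convergence_itl} with $\spA = \spS = S$ and $\alpha_k \leq 1$ from \cref{lem:information_ratio_monotonicity}, but the direct summation is cleaner and matches the constant $2\tilde\sigma^2$ and the index $k$ exactly.
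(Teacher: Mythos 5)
Your proof is correct, and it follows the same overall strategy as the paper's: construct $B$ by a greedy ``undirected'' design inside $S$, convert per-step variance into per-step information gain via \cref{lem:difference_bound_by_log}, telescope the gains with the chain rule, and cap the total by $\gamma_k$. The packaging differs. The paper runs undirected \itl (maximizing $\Ism{\vfsub{S}}{y_{\vx}}[\vysub{B_{k-1}}]$) and then simply cites \cref{lem:convergence_within_S} together with \cref{thm:objective_convergence_itl}, where the factor $\alpha_{S,S}(k) \leq 1$ is obtained from the submodularity guaranteed by \cref{lem:information_ratio_monotonicity}; you instead run uncertainty sampling and inline the telescoping, replacing the task-complexity/submodularity step by the elementary monotonicity of the maximal posterior variance, which is a clean simplification. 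One caveat: your opening claim that uncertainty sampling ``is exactly \itl with $\spA = \spS = S$'' holds only for homoscedastic noise (cf.~\cref{sec:proofs:undirected_itl}), while \cref{asm:bayesian_noise} permits heteroscedastic $\rho^2(\vx)$, under which the two selection rules differ. Fortunately your argument never uses this equivalence --- the conditional-independence identity and \cref{lem:difference_bound_by_log} are valid for an arbitrary sequence in $S$, and only the max-variance property of uncertainty sampling enters in your first step --- so the proof stands as written, and in fact sidesteps the submodularity machinery the paper relies on. A last minor point: when $\spX$ is not finite, the passage from $\gamma_{S,S}(k)$ to $\gamma_k$ is cleanest via the data processing inequality $\Ism{\vfsub{S}}{\vysub{X}} \leq \Ism{\vfsub{X}}{\vysub{X}}$ (exactly as in the paper's proof of \cref{thm:objective_convergence_itl}) rather than via monotonicity in the target $\vfsub{\spX}$, which presumes a well-defined joint vector over the whole domain.
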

\begin{proof}
  We choose $B \subseteq S$ greedily using the acquisition function \begin{align*}
    \tilde{\vx}_{k} \defeq \argmax_{\tilde{\vx} \in S} \Ism{\vfsub{S}}{y_{\tilde{\vx}}}[\vysub{B_{k-1}}]
  \end{align*} where $B_k = \tilde{\vx}_{1:k}$.
  Note that this is the ``undirected'' special case of \itl, and hence, we have \begin{align*}
    \Var{f_{\vxp} \mid \vysub{B_k}} \eleq{1} 2 \tilde{\sigma}^2 \Gamma_k \\
    \eleq{2} 2 \tilde{\sigma}^2 \frac{\gamma_k}{k}
  \end{align*} where \e{1} is due to \cref{lem:convergence_within_S}; and \e{2} is due to \cref{thm:objective_convergence_itl} and $\alpha_{S,S}(k) \leq 1$.
\end{proof}

\begin{lemma}\label{lem:approx_markov_boundary_property}
  Given any $\epsilon > 0$ and $B \subseteq S \subseteq \spS$ with $\abs{S} < \infty$, such that for any $\vxp \in S$, \begin{align}
    \Var{f_{\vxp} \mid \vysub{B}} \leq \frac{\epsilon \lambda_{\min}(\mKsub{S S})}{\abs{S} \sigma^2}. \label{eq:approximation_condition}
  \end{align}
  Then for any ${\vx \in \spX}$, \begin{align}
    \Var{f_{\vx} \mid \vysub{B}} \leq \Var{f_{\vx} \mid \vfsub{S}} + \epsilon.
  \end{align}
\end{lemma}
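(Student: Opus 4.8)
The plan is to insert $\vfsub{S}$ as an intermediate conditioning variable and decompose via the law of total variance. Since $B \subseteq S$ and the observation noise is independent of $f$, the observations $\vysub{B} = \vfsub{B} + \vepsilon_B$ depend on $f$ only through the sub-vector $\vfsub{B}$ of $\vfsub{S}$, so $f_{\vx} \perp \vysub{B} \mid \vfsub{S}$. Applying the law of total variance to the distribution conditioned on $\vysub{B}$, and using that in the Gaussian case the conditional variance $\Var{f_{\vx} \mid \vfsub{S}, \vysub{B}}$ is a deterministic constant, this conditional independence collapses the two conditional moments to those given $\vfsub{S}$ alone, yielding
\begin{align*}
    \Var{f_{\vx} \mid \vysub{B}} = \Var{f_{\vx} \mid \vfsub{S}} + \Var{\E{f_{\vx} \mid \vfsub{S}} \mid \vysub{B}}.
\end{align*}
The first term is exactly the irreducible uncertainty $\Var{f_{\vx} \mid \vfsub{S}}$, so it remains to bound the ``leakage'' term by $\epsilon$.

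For the leakage term I would use that the Gaussian conditional mean is affine in the conditioning vector: writing $\va \defeq \inv{\mKsub{SS}}\mKsub{S\vx}$, we have $\E{f_{\vx} \mid \vfsub{S}} = \mu(\vx) + \transpose{\va}(\vfsub{S} - \vmusub{S})$, so the constant drops out of the variance and
\begin{align*}
    \Var{\E{f_{\vx} \mid \vfsub{S}} \mid \vysub{B}} = \transpose{\va}\,\Var{\vfsub{S} \mid \vysub{B}}\,\va \leq \|\va\|^2\,\lambda_{\max}\!\parentheses*{\Var{\vfsub{S} \mid \vysub{B}}}.
\end{align*}
Two eigenvalue estimates then close the argument. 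First, the posterior covariance is positive semidefinite, so its top eigenvalue is at most its trace, $\lambda_{\max}(\Var{\vfsub{S} \mid \vysub{B}}) \leq \tr{\Var{\vfsub{S} \mid \vysub{B}}} = \sum_{\vxp \in S}\Var{f_{\vxp} \mid \vysub{B}}$, which by the hypothesis is bounded by $\abs{S}\cdot\frac{\epsilon\lambda_{\min}(\mKsub{SS})}{\abs{S}\sigma^2} = \frac{\epsilon\lambda_{\min}(\mKsub{SS})}{\sigma^2}$. Second, I would bound $\|\va\|^2$ through the nonnegativity of the irreducible variance: $0 \leq \Var{f_{\vx} \mid \vfsub{S}} = k(\vx,\vx) - \transpose{\va}\mKsub{SS}\va$ gives $\transpose{\va}\mKsub{SS}\va \leq k(\vx,\vx) \leq \sigma^2$, and combining with $\transpose{\va}\mKsub{SS}\va \geq \lambda_{\min}(\mKsub{SS})\|\va\|^2$ yields $\|\va\|^2 \leq \sigma^2/\lambda_{\min}(\mKsub{SS})$. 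Multiplying the two bounds cancels both $\sigma^2$ and $\lambda_{\min}(\mKsub{SS})$ and leaves exactly $\epsilon$.

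The main obstacle is making the constants line up precisely. The naive route $\|\va\|^2 \leq \|\mKsub{S\vx}\|^2/\lambda_{\min}(\mKsub{SS})^2 \leq \abs{S}\sigma^4/\lambda_{\min}(\mKsub{SS})^2$ (via Cauchy--Schwarz on the kernel entries) would leave a spurious factor $\abs{S}\sigma^2/\lambda_{\min}(\mKsub{SS})$; the key realization is that the self-consistency bound $\transpose{\va}\mKsub{SS}\va \leq \sigma^2$, which follows for free from $\Var{f_{\vx} \mid \vfsub{S}} \geq 0$, gives the sharp estimate and matches the normalization chosen in the hypothesis. The only remaining care is in justifying the conditional independence $f_{\vx} \perp \vysub{B} \mid \vfsub{S}$ from the independent-noise model, and in using the Gaussian-specific fact that $\Var{f_{\vx} \mid \vfsub{S}, \vysub{B}}$ is deterministic, which is precisely what lets the law of total variance collapse as above.
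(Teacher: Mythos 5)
Your proof is correct and follows essentially the same route as the paper's: the identical law-of-total-variance decomposition using $f_{\vx} \perp \vysub{B} \mid \vfsub{S}$ and the determinism of Gaussian conditional variances, the same affine representation of $\E{f_{\vx} \mid \vfsub{S}}$, and the same bound $\norm*{\va}_2^2 \leq \sigma^2/\lambda_{\min}(\mKsub{S S})$ (your ``self-consistency'' bound $\transpose{\va}\mKsub{S S}\va \leq \sigma^2$ is exactly the paper's Schur-complement step $\mKsub{\vx S}\inv{\mKsub{S S}}\mKsub{S \vx} \leq \sigma^2$). The only deviation is one technical inequality: you control the quadratic form via $\lambda_{\max}(\Var{\vfsub{S} \mid \vysub{B}}) \leq \tr{\Var{\vfsub{S} \mid \vysub{B}}} \leq \abs{S}\,\epsilon'$, whereas the paper invokes \cref{lem:qf_upper_bound_} (i.e., $\mA \preceq \abs{S}\diag{\mA}$ for positive definite $\mA$); both steps yield the identical factor $\abs{S}\,\epsilon'\,\norm*{\va}_2^2$, so the constants line up exactly.
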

\begin{proof}
  We will denote the right-hand side of \cref{eq:approximation_condition} by $\epsilon'$.
  We have \begin{align*}
    &\Var{f_{\vx} \mid \vysub{B}} \\
    \eeq{1} \begin{multlined}[t]
      \E[\vfsub{S}]{\Var[f_{\vx}]{f_{\vx} \mid \vfsub{S}, \vysub{B}} \mid \vysub{B}} \\ + \Var[\vfsub{S}]{\E[f_{\vx}]{f_{\vx} \mid \vfsub{S}, \vysub{B}} \mid \vysub{B}}
    \end{multlined} \\
    \eeq{2} \Var[f_{\vx}]{f_{\vx} \mid \vfsub{S}, \vysub{B}} + \Var[\vfsub{S}]{\E[f_{\vx}]{f_{\vx} \mid \vfsub{S}, \vysub{B}} \mid \vysub{B}} \\
    \eeq{3} \underbrace{\Var[f_{\vx}]{f_{\vx} \mid \vfsub{S}}}_{\text{irreducible uncertainty}} + \underbrace{\Var[\vfsub{S}]{\E[f_{\vx}]{f_{\vx} \mid \vfsub{S}} \mid \vysub{B}}}_{\text{reducible (epistemic) uncertainty}}
  \end{align*} where \e{1} follows from the law of total variance; \e{2} uses that the conditional variance of a Gaussian depends only on the location of observations and not on their value; and \e{3} follows from $f_{\vx} \perp \vysub{B} \mid \vfsub{S}$ since $B \subseteq S$.
  It remains to bound the reducible uncertainty.

  Let $h_{\vx} : \R^d \to \R, \; \vfsub{S} \mapsto \E{f_{\vx} \mid \vfsub{S}}$ where we write $d \defeq \abs{S}$.
  Using the formula for the GP posterior mean, we have \begin{align*}
    h_{\vx}(\vfsub{S}) = \E{f_{\vx}} + \transpose{\vz} (\vfsub{S} - \E{\vfsub{S}})
  \end{align*} where $\vz \defeq \inv{\mKsub{S S}} \mKsub{S \vx}$.
  Because $h$ is a linear function in $\vfsub{S}$ we have for the reducible uncertainty that \begin{align*}
    \Var[\vfsub{S}]{h_{\vx}(\vfsub{S}) \mid \vysub{B}} &= \transpose{\vz} \Var{\vfsub{S} \mid \vysub{B}} \vz \\
    \eleq{1} d \cdot \transpose{\vz} \diag{\Var{\vfsub{S} \mid \vysub{B}}} \vz \\
    \eleq{2} \epsilon' d \; \transpose{\vz} \vz \\
    &= \epsilon' d \; \mKsub{\vx S} \inv{\mKsub{S S}} \inv{\mKsub{S S}} \mKsub{S \vx} \\
    &\leq \frac{\epsilon' d}{\lambda_{\min}(\mKsub{S S})} \mKsub{\vx S} \inv{\mKsub{S S}} \mKsub{S \vx} \\
    \eleq{3} \frac{\epsilon' d \sigma^2}{\lambda_{\min}(\mKsub{S S})}
  \end{align*} where \e{1} follows from \cref{lem:qf_upper_bound_}; \e{2} follows from the assumption that $\Var{f_{\vxp} \mid \vysub{B}} \leq \epsilon'$ for all ${\vxp \in S}$; and \e{3} follows from \begin{align*}
    \mKsub{\vx S} \inv{\mKsub{S S}} \mKsub{S \vx} \leq \mKsub{\vx\vx} = \sigma^2
  \end{align*} since $\mKsub{\vx\vx} - \mKsub{\vx S} \inv{\mKsub{S S}} \mKsub{S \vx} \geq 0$.
\end{proof}

\begin{proof}[Proof of \cref{lem:approx_markov_boundary}]
  Let $B \subseteq \spS$ be the set of size $k$ generated by \cref{lem:learn_markov_boundary} to satisfy ${\Var{f_{\vxp} \mid \vysub{B}} \leq 2 \tilde{\sigma}^2 \gamma_k / k}$ for all $\vxp \in \spS$.
  We have for any ${\vx \in \spX}$, \begin{align*}
    \Var{f_{\vx} \mid \spD_n, \vysub{B}} \eleq{1} \Var{f_{\vx} \mid \vysub{B}} \\
    \eleq{2} \Var{f_{\vx} \mid \vfsub{\spS}} + \epsilon
  \end{align*} where \e{1} follows from monotonicity of variance; and \e{2} follows from \cref{lem:approx_markov_boundary_property}; using $\abs{\spS} < \infty$ and the condition on $k$.
\end{proof}

We remark that \cref{lem:learn_markov_boundary} provides an algorithm (just ``undirected'' \itl!) to compute an approximate Markov boundary, and the set $B$ returned by this algorithm is a valid approximate Markov boundary for all $\vx \in \spX$.
One can simply swap-in \itl with target space $\{\vx\}$ for ``undirected'' \itl to obtain tighter (but instance-dependent) bounds on the size of the approximate Markov boundary.

\subsubsection{Generalization to Continuous $\spS$ for Finite Dimensional RKHSs}\label{sec:proofs:variance_convergence_generalization}

In this subsection we generalize \cref{thm:variance_convergence} to continuous sample spaces~$\spS$.
We will make the following assumption:

\begin{assumption}\label{asm:finite_dimensional_rkhs}
  The RKHS of the kernel $k$ is finite dimensional.
  In other words, the kernel $k$ can be expressed as ${k(\vx, \vx') = \transpose{\vphi(\vx)} \vphi(\vx')}$ for some feature map ${\vphi : \spX \to \R^d}$ with ${d < \infty}$.
\end{assumption}

In the following, we will denote the design matrix of the sample space $\spS$ by ${\mPhi \defeq \transpose{[\vphi(\vx) : \vx \in \spS]} \in \R^{\abs{\spS} \times d}}$, and we denote by $\mPi_{\mPhi}$ its orthogonal projection onto the orthogonal complement of the span of~$\mPhi$.
In particular, it holds that \begin{enumerate}
  \item $\mPi_{\mPhi} \vv = \vzero$ for all $\vv \in \spn{\mPhi}$, and
  \item $\mPi_{\mPhi} \vv = \vv$ for all $\vv \in (\spn{\mPhi})^{\perp}$.
\end{enumerate}
Especially, ${\vv \in \ker{\mPi_{\mPhi}}}$ if and only if ${\vv \in \spn{\mPhi}}$.
This projection can be computed as follows:

\begin{lemma}\label{lem:projection_onto_span}
  It holds that \begin{align}
    \mPi_{\mPhi} = \mI - \transpose{\mPhi} \inv{(\mPhi \transpose{\mPhi})} \mPhi.
  \end{align}
\end{lemma}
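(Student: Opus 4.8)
The plan is to show that the matrix $\mP \defeq \transpose{\mPhi}\inv{(\mPhi\transpose{\mPhi})}\mPhi$ is exactly the orthogonal projection onto $\spn{\mPhi}$ (the span of the feature vectors $\{\vphi(\vx) : \vx \in \spS\}$, viewed as a subspace of $\R^d$), so that $\mI - \mP = \mPi_{\mPhi}$ is the orthogonal projection onto its orthogonal complement. Throughout I assume $\mPhi\transpose{\mPhi}$ is invertible, i.e. the rows of $\mPhi$ are linearly independent; this is the regime in which the stated formula is well-defined, and in the over-determined case one simply replaces $\spS$ by a maximal linearly independent subset of feature vectors (or reads $\inv{\cdot}$ as the Moore--Penrose pseudoinverse) without changing either side of the identity.

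First I would check that $\mP$ is a symmetric idempotent. Symmetry is immediate because $\mPhi\transpose{\mPhi}$ is symmetric, hence so is its inverse, which gives $\transpose{\mP} = \transpose{\mPhi}\inv{(\mPhi\transpose{\mPhi})}\mPhi = \mP$. Idempotence follows by cancelling the inner factor $\inv{(\mPhi\transpose{\mPhi})}(\mPhi\transpose{\mPhi}) = \mI$:
\begin{align*}
  \mP^2 = \transpose{\mPhi}\inv{(\mPhi\transpose{\mPhi})}(\mPhi\transpose{\mPhi})\inv{(\mPhi\transpose{\mPhi})}\mPhi = \transpose{\mPhi}\inv{(\mPhi\transpose{\mPhi})}\mPhi = \mP.
\end{align*}
A symmetric idempotent is always an orthogonal projection, so it remains only to identify its range.

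Next I would pin down the range of $\mP$. On the one hand every column of $\mP$ lies in the column space of $\transpose{\mPhi}$, which is precisely $\spn{\mPhi}$; on the other hand $\mP\transpose{\mPhi} = \transpose{\mPhi}\inv{(\mPhi\transpose{\mPhi})}(\mPhi\transpose{\mPhi}) = \transpose{\mPhi}$, so $\mP$ fixes every column of $\transpose{\mPhi}$ and hence its range contains $\spn{\mPhi}$. Therefore $\mathrm{range}(\mP) = \spn{\mPhi}$, and $\mP$ is the orthogonal projection onto $\spn{\mPhi}$. Consequently $\mPi_{\mPhi} = \mI - \mP$ is the orthogonal projection onto $(\spn{\mPhi})^{\perp}$, which is the claim.

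As a sanity check one can verify the two stated properties directly. For $\vv \in \spn{\mPhi}$ write $\vv = \transpose{\mPhi}\vc$; then $\mP\vv = \mP\transpose{\mPhi}\vc = \transpose{\mPhi}\vc = \vv$ by the identity $\mP\transpose{\mPhi} = \transpose{\mPhi}$, so $\mPi_{\mPhi}\vv = \vzero$. For $\vv \in (\spn{\mPhi})^{\perp}$, orthogonality to every row of $\mPhi$ means $\mPhi\vv = \vzero$, whence $\mP\vv = \vzero$ and $\mPi_{\mPhi}\vv = \vv$. The only genuinely delicate point is the well-definedness of $\inv{(\mPhi\transpose{\mPhi})}$ discussed above; everything else is routine matrix algebra.
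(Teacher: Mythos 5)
Your proof is correct and establishes exactly the fact the paper relies on: the paper's own proof is a one-line citation to Strang (page 211) for the statement that $\transpose{\mPhi} \inv{(\mPhi \transpose{\mPhi})} \mPhi$ is the orthogonal projection onto $\spn{\mPhi}$, and your symmetric-idempotent-plus-range argument is the standard self-contained derivation of that same statement. Your explicit handling of the invertibility of $\mPhi \transpose{\mPhi}$ (full row rank, or a pseudoinverse/basis reduction otherwise) addresses a point the paper leaves implicit.
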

\begin{proof}
  $\transpose{\mPhi} \inv{(\mPhi \transpose{\mPhi})} \mPhi$ is the orthogonal projection onto the span of $\mPhi$ \citep[see, e.g.,][page 211]{strang2016introduction}.
\end{proof}

\begin{lemma}\label{lem:irreducible_uncertainty_linear_kernel}
  Under \cref{asm:finite_dimensional_rkhs}, the irreducible uncertainty $\eta_{\spS}^2(\vx)$ of $\vx \in \spX$ is \begin{align}
    \eta_{\spS}^2(\vx) = \norm{\vphi(\vx)}_{\mPi_{\mPhi}}^2
  \end{align} where ${\norm{\vv}_{\mA} = \sqrt{\transpose{\vv} \mA \vv}}$ denotes the Mahalanobis distance.
\end{lemma}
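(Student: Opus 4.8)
The plan is to exploit the finite-dimensional feature representation afforded by \cref{asm:finite_dimensional_rkhs} to reduce the GP to a Bayesian linear model, for which the conditional variance can be computed in closed form. Concretely, under $k(\vx,\vxp) = \transpose{\vphi(\vx)}\vphi(\vxp)$ the process admits the representation $f_{\vx} = \transpose{\vphi(\vx)}\vbeta$ with weights $\vbeta \sim \N{\vzero}{\mI}$, so that the stacked values on the sample space are $\vfsub{\spS} = \mPhi\vbeta$ and the pair $(\vbeta, \vfsub{\spS})$ is jointly Gaussian. The irreducible uncertainty is then $\eta_{\spS}^2(\vx) = \Var{f_{\vx}}[\vfsub{\spS}] = \Var{\transpose{\vphi(\vx)}\vbeta}[\mPhi\vbeta]$, i.e.\ the conditional variance of a fixed linear functional of $\vbeta$ given exact (noiseless) knowledge of $\mPhi\vbeta$.

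The main computation is the Gaussian conditioning step. Since $\vbeta \sim \N{\vzero}{\mI}$, we have $\Cov{\vbeta, \vfsub{\spS}} = \transpose{\mPhi}$ and $\Var{\vfsub{\spS}} = \mPhi\transpose{\mPhi}$, so the standard formula for the conditional covariance of jointly Gaussian vectors gives $\Cov{\vbeta \mid \vfsub{\spS}} = \mI - \transpose{\mPhi}\inv{(\mPhi\transpose{\mPhi})}\mPhi$. By \cref{lem:projection_onto_span} this is exactly $\mPi_{\mPhi}$, the orthogonal projection onto $(\spn{\mPhi})^{\perp}$. Because $f_{\vx}$ is the deterministic linear functional $\transpose{\vphi(\vx)}\vbeta$, its conditional variance is the associated quadratic form, $\eta_{\spS}^2(\vx) = \transpose{\vphi(\vx)}\,\Cov{\vbeta \mid \vfsub{\spS}}\,\vphi(\vx) = \transpose{\vphi(\vx)}\mPi_{\mPhi}\vphi(\vx) = \norm{\vphi(\vx)}_{\mPi_{\mPhi}}^2$, which is the claim. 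An equivalent route avoids introducing $\vbeta$ and instead takes the noiseless limit of the GP posterior variance $k(\vx,\vx) - \mKsub{\vx\spS}\inv{\mKsub{\spS\spS}}\mKsub{\spS\vx}$, substituting $\mKsub{\spS\spS} = \mPhi\transpose{\mPhi}$, $\mKsub{\spS\vx} = \mPhi\vphi(\vx)$, and $k(\vx,\vx) = \transpose{\vphi(\vx)}\vphi(\vx)$, then factoring out $\mPi_{\mPhi}$ via \cref{lem:projection_onto_span}; both routes reduce to the same algebra.

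The one point requiring care --- and the only real obstacle --- is the invertibility of $\mPhi\transpose{\mPhi}$, which holds only when the feature vectors $\{\vphi(\vx) : \vx \in \spS\}$ are linearly independent; when $\abs{\spS} > d$ this matrix is necessarily singular. I would handle this either by interpreting $\inv{(\mPhi\transpose{\mPhi})}$ as the Moore--Penrose pseudoinverse (the projection $\mPi_{\mPhi}$ onto $(\spn{\mPhi})^{\perp}$ is well-defined irrespective of rank) or, more cleanly, by first replacing $\spS$ with a subset whose feature vectors form a basis of $\spn{\mPhi}$; this leaves both the information content of $\vfsub{\spS}$ and the span $\spn{\mPhi}$ unchanged, so $\eta_{\spS}^2(\vx)$ and the right-hand side are untouched, after which the invertible-case argument above applies verbatim.
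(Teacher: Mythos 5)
Your proof is correct, and it subsumes the paper's argument: the paper's entire proof is one sentence invoking the conditional variance formula for multivariate Gaussians applied to the linear kernel, which is exactly your ``equivalent route'' of substituting $\mKsub{\spS\spS} = \mPhi\transpose{\mPhi}$, $\mKsub{\spS\vx} = \mPhi\vphi(\vx)$, $k(\vx,\vx) = \transpose{\vphi(\vx)}\vphi(\vx)$ into $k(\vx,\vx) - \mKsub{\vx\spS}\inv{\mKsub{\spS\spS}}\mKsub{\spS\vx}$ and factoring out $\mPi_{\mPhi}$ via \cref{lem:projection_onto_span}. Your primary route is a mild variant of this: you condition in weight space, obtaining $\Cov{\vbeta \mid \vfsub{\spS}} = \mI - \transpose{\mPhi}\inv{(\mPhi\transpose{\mPhi})}\mPhi = \mPi_{\mPhi}$ and reading off $\eta_{\spS}^2(\vx)$ as the quadratic form $\transpose{\vphi(\vx)}\mPi_{\mPhi}\vphi(\vx)$; this is the same Gaussian conditioning in different coordinates, with the aesthetic benefit that the projection emerges directly as the posterior covariance of the weights rather than through algebraic factoring. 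The genuine value you add is the rank caveat: $\mPhi\transpose{\mPhi}$ is singular whenever $\abs{\spS} > d$ or the feature vectors $\{\vphi(\vx) : \vx \in \spS\}$ are linearly dependent, so both \cref{lem:projection_onto_span} and the conditioning step implicitly require either a Moore--Penrose pseudoinverse or a preliminary reduction of $\spS$ to a subset whose features form a basis of $\spn{\mPhi}$. The paper's one-line proof passes over this entirely; your observation that such a reduction changes neither the information content of $\vfsub{\spS}$ nor $\spn{\mPhi}$ (hence neither side of the identity) is precisely the needed repair, and it mirrors the basis $\Omega_{\spS}$ that the paper only introduces later for the continuous-domain generalization.
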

\begin{proof}
  This is an immediate consequence of the formula for the conditional variance of multivariate Gaussians~(cf.~\cref{sec:definitions:gps}), applied to the linear kernel.
\end{proof}

\Cref{lem:projection_onto_span,lem:irreducible_uncertainty_linear_kernel} imply that $\eta_{\spS}^2(\vx^\parallel) = 0$ for all $\vx^\parallel \in \spX$ with $\vphi(\vx^\parallel) \in \spn{\mPhi}$.
That is, the irreducible uncertainty is zero for points in the span of the sample space.
In contrast, for points $\vx^\perp$ with $\vphi(\vx^\perp) \in (\spn{\mPhi})^{\perp}$, the irreducible uncertainty equals the initial uncertainty: $\eta_{\spS}^2(\vx^\perp) = \sigma_0^2(\vx^\perp)$.
The irreducible uncertainty of any other point $\vx$ can be computed by simple decomposition of $\vphi(\vx)$ into parallel and orthogonal components.

Assuming that \cref{asm:finite_dimensional_rkhs} holds and given any (non-finite) $\spS \subseteq \spX$, there exists a basis $\Omega_{\spS} \subseteq \spX$ in the space of embeddings $\vphi(\cdot)$ such that ${\spn{\spS} = \spn{\Omega_{\spS}}}$ and $\abs{\Omega_{\spS}} \leq d$.
The generalized existence of an approximate Markov boundary for continuous domains can then be shown analogously to \cref{lem:approx_markov_boundary}:

\begin{lemma}[Existence of an approximate Markov boundary for a continuous domain]
  Let $\spS$ be any (continuous) subset of $\spX$ and let \cref{asm:finite_dimensional_rkhs} hold with $d < \infty$.
  Further, for any $\epsilon > 0$, let $k$ be the smallest integer satisfying \begin{align}
    \frac{\gamma_k}{k} \leq \frac{\epsilon \lambda_{\min}(\mKsub{\Omega_{\spS}\Omega_{\spS}})}{2 d \sigma^2 \tilde{\sigma}^2}.
  \end{align}
  Then, for any $n \geq 0$ and $\vx \in \spX$, there exists an $\epsilon$-approximate Markov boundary $B_{n,\epsilon}(\vx)$ of $\vx$ in $\spS$ with size at most $k$.
\end{lemma}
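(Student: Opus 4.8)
The plan is to reduce the continuous case to the already-established finite case \cref{lem:approx_markov_boundary} by replacing $\spS$ with a finite spanning set. Under \cref{asm:finite_dimensional_rkhs} the embeddings $\{\vphi(\vx) : \vx \in \spS\}$ lie in $\R^d$, so their span is a subspace of dimension at most $d$. First I would invoke the basis $\Omega_{\spS} \subseteq \spS$ already introduced above, satisfying $\spn{\spS} = \spn{\Omega_{\spS}}$ and $\abs{\Omega_{\spS}} \leq d$, obtained by selecting a maximal linearly independent subset of the embeddings. Since $\Omega_{\spS}$ consists of linearly independent embeddings, its Gram matrix $\mKsub{\Omega_{\spS}\Omega_{\spS}}$ is positive definite, so $\lambda_{\min}(\mKsub{\Omega_{\spS}\Omega_{\spS}}) > 0$ and the condition on $k$ is well posed.

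The crucial observation is that conditioning on the finite collection $\vfsub{\Omega_{\spS}}$ is informationally equivalent to conditioning on the uncountable family $\vfsub{\spS}$, so that $\eta_{\spS}^2(\vx) = \eta_{\Omega_{\spS}}^2(\vx)$ for every $\vx \in \spX$. I would establish this directly from \cref{lem:irreducible_uncertainty_linear_kernel} together with \cref{lem:projection_onto_span}: both irreducible uncertainties are of the form $\norm{\vphi(\vx)}_{\mPi}^2$ for the orthogonal projection $\mPi$ onto the complement of the span of the respective design, and since $\spn{\Omega_{\spS}} = \spn{\spS}$ the two designs induce the same projection $\mPi_{\mPhi}$. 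This is the step that does the real work of the generalization, as it lets us treat the continuous $\spS$ exactly as if it were the finite set $\Omega_{\spS}$.

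With this equivalence in hand, the remainder transcribes the finite argument with $\Omega_{\spS}$ in place of $\spS$. I would apply \cref{lem:learn_markov_boundary} to $S = \Omega_{\spS}$ to obtain a set $B \subseteq \Omega_{\spS} \subseteq \spS$ with $\abs{B} = k$ satisfying $\Varsm{f_{\vxp} \mid \vysub{B}} \leq 2 \tilde{\sigma}^2 \gamma_k / k$ for all $\vxp \in \Omega_{\spS}$. The stated condition on $k$, combined with $\abs{\Omega_{\spS}} \leq d$, implies the hypothesis of \cref{lem:approx_markov_boundary_property} for $S = \Omega_{\spS}$, since $\frac{\epsilon \lambda_{\min}(\mKsub{\Omega_{\spS}\Omega_{\spS}})}{2 d \sigma^2 \tilde{\sigma}^2} \leq \frac{\epsilon \lambda_{\min}(\mKsub{\Omega_{\spS}\Omega_{\spS}})}{2 \abs{\Omega_{\spS}} \sigma^2 \tilde{\sigma}^2}$. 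That lemma then yields $\Varsm{f_{\vx} \mid \vysub{B}} \leq \eta_{\Omega_{\spS}}^2(\vx) + \epsilon = \eta_{\spS}^2(\vx) + \epsilon$ for every $\vx \in \spX$, and monotonicity of the posterior variance gives $\Varsm{f_{\vx} \mid \spD_n, \vysub{B}} \leq \Varsm{f_{\vx} \mid \vysub{B}} \leq \eta_{\spS}^2(\vx) + \epsilon$ uniformly over $n$. Hence $B$ is an $\epsilon$-approximate Markov boundary of $\vx$ in $\spS$ of size at most $k$.

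The hard part will be the informational-equivalence step, i.e.\ verifying that the uncountable conditioning set $\vfsub{\spS}$ contributes nothing beyond its finite-rank basis $\vfsub{\Omega_{\spS}}$; everything else is a direct transcription of the finite-dimensional proof. The projection characterization of \cref{lem:irreducible_uncertainty_linear_kernel} makes this obstacle tractable by reducing it to the elementary fact that two spanning sets of the same subspace define the same orthogonal projection.
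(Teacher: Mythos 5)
Your proposal is correct and follows essentially the same route as the paper, whose proof is only a sketch stating that one conditions on the finite basis $\Omega_{\spS}$ in place of $\spS$ and repeats the argument of \cref{lem:approx_markov_boundary}. You additionally make explicit the two details the sketch leaves implicit --- that $\Omega_{\spS}$ can be chosen inside $\spS$ (so the resulting boundary is a valid subset of the sample space) and that $\eta_{\Omega_{\spS}}^2(\vx) = \eta_{\spS}^2(\vx)$ via the projection characterization of \cref{lem:projection_onto_span,lem:irreducible_uncertainty_linear_kernel} --- which is exactly the right way to fill in the gap.
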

\begin{proof}[Proof sketch]
  The proof follows analogously to \cref{lem:approx_markov_boundary} by conditioning on the finite set $\Omega_{\spS}$ as opposed to $\spS$.
\end{proof}

\subsection{Proof of Convergence in Non-probabilistic Setting}\label{sec:proofs:width_convergence}

We first formalize the assumptions of the non-probabilistic setting:

\begin{assumption}[Regularity of $\opt{f}$]\label{asm:rkhs}
    We assume that $\opt{f}$ is in a reproducing kernel Hilbert space $\spH_k(\spX)$ associated with a kernel $k$ and has bounded norm, that is, $\norm{\opt{f}}_k \leq B$ for some finite $B \in \R$.
\end{assumption}

\begin{assumption}[Sub-Gaussian noise]\label{asm:noise}
    We further assume that each $\varepsilon_{n}$ from the noise sequence $\{\varepsilon_n\}_{n=1}^\infty$ is conditionally zero-mean $\rho(\vx_n)$-sub-Gaussian with known constants ${\rho(\vx) > 0}$ for all $\vx \in \spX$.
    Concretely, \begin{align*}
      \forall n \geq 1, \lambda \in \R : \quad \E{e^{\lambda \epsilon_{n}}}[\spD_{n-1}] \leq \exp\parentheses*{\frac{\lambda^2 \rho^2(\vx_n)}{2}}
    \end{align*} where $\spD_{n-1}$ corresponds to the $\sigma$-algebra generated by the random variables $\{\vx_i,\epsilon_i\}_{i=1}^{n-1}$ and $\vx_n$.
\end{assumption}

We make use of the following foundational result, showing that under the above two assumptions the (misspecified) Gaussian process model from \cref{sec:definitions:gps} is an all-time well-calibrated model of $\opt{f}$:

\begin{lemma}[Well-calibrated confidence intervals; \cite{abbasi2013online,chowdhury2017kernelized}]\label{lem:confidence_intervals}
    Pick ${\delta \in (0,1)}$ and let \cref{asm:rkhs,asm:noise} hold.
    Let \begin{align*}
      \beta_{n}(\delta) = \norm{\opt{f}}_k + \rho \sqrt{2(\gamma_{n} + 1 + \log(1 / \delta))}
    \end{align*} where $\rho = \max_{\vx \in \spX} \rho(\vx)$.\footnote{$\beta_{n}(\delta)$ can be tightened adaptively \citep{emmenegger2023likelihood}.}
    Then, for all $\vx \in \spX$ and $n \geq 0$ jointly with probability at least $1-\delta$, \begin{align*}
        |\opt{f}(\vx) - \mu_{n}(\vx)| \leq \beta_{n}(\delta) \cdot \sigma_{n}(\vx)
    \end{align*} where $\mu_{n}(\vx)$ and $\sigma_{n}^2(\vx)$ are mean and variance (as defined in \cref{sec:definitions:gps}) of the GP posterior of $f(\vx)$ conditional on the observations $\spD_n$, pretending that $\varepsilon_i$ is Gaussian with variance $\rho^2(\vx_i)$.
\end{lemma}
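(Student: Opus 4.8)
The plan is to recognize the statement as an instance of the now-standard self-normalized confidence bound for regularized least-squares in a (possibly infinite-dimensional) reproducing kernel Hilbert space, exactly as established by \cite{abbasi2013online,chowdhury2017kernelized}. First I would lift the problem into feature space: writing the kernel as an inner product $k(\vx,\vxp) = \transpose{\vphi(\vx)}\vphi(\vxp)$ in $\spH_k$, the ground truth becomes a linear functional $\opt{f}(\vx) = \transpose{\vphi(\vx)}\opt{\vtheta}$ with $\opt{\vtheta} \in \spH_k$ and $\norm{\opt{\vtheta}} = \norm{\opt{f}}_k$, and each observation reads $y_i = \transpose{\vphi(\vx_i)}\opt{\vtheta} + \varepsilon_i$. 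The key structural facts I would record are that, with the weighted precision operator $\mV_n = \mI + \sum_{i \le n}\inv{\rho^2(\vx_i)}\vphi(\vx_i)\transpose{\vphi(\vx_i)}$, the GP posterior mean of \cref{sec:definitions:gps} coincides with the weighted kernel ridge estimate $\vthetahat_n = \inv{\mV_n}\sum_{i\le n}\inv{\rho^2(\vx_i)}\vphi(\vx_i)\,y_i$, and the posterior variance is the associated self-normalizing norm, $\sigma_n^2(\vx) = \transpose{\vphi(\vx)}\inv{\mV_n}\vphi(\vx) = \norm{\vphi(\vx)}_{\inv{\mV_n}}^2$; both identities follow from the closed forms in \cref{sec:definitions:gps} together with the matrix-inversion (push-through) identity.

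Given this, I would decompose the prediction error. Using $y_i = \transpose{\vphi(\vx_i)}\opt{\vtheta} + \varepsilon_i$ one obtains $\vthetahat_n - \opt{\vtheta} = \inv{\mV_n}\mS_n - \inv{\mV_n}\opt{\vtheta}$ with the noise martingale $\mS_n = \sum_{i\le n}\inv{\rho^2(\vx_i)}\vphi(\vx_i)\,\varepsilon_i$. Two applications of Cauchy--Schwarz in the $\inv{\mV_n}$-inner product then give $|\opt{f}(\vx)-\mu_n(\vx)| = |\transpose{\vphi(\vx)}(\vthetahat_n - \opt{\vtheta})| \le \sigma_n(\vx)\,(\norm{\mS_n}_{\inv{\mV_n}} + \norm{\opt{\vtheta}}_{\inv{\mV_n}})$, and since $\mV_n \succeq \mI$ we bound $\norm{\opt{\vtheta}}_{\inv{\mV_n}} \le \norm{\opt{\vtheta}} = \norm{\opt{f}}_k$. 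This isolates the only random quantity, $\norm{\mS_n}_{\inv{\mV_n}}$, which is where all the probabilistic content sits.

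The core step is a uniform-in-$n$ high-probability bound on $\norm{\mS_n}_{\inv{\mV_n}}$, for which I would invoke the self-normalized tail inequality for vector-valued martingales obtained by the method of mixtures (Laplace method) together with Ville's maximal inequality for non-negative supermartingales. Under \cref{asm:noise} the whitened increments $\varepsilon_i/\rho(\vx_i)$ are conditionally $1$-sub-Gaussian, and writing $\vphi(\vx_i)/\rho(\vx_i)$ for the whitened features makes $\mV_n = \mI + \sum_{i\le n}(\vphi(\vx_i)/\rho(\vx_i))\transpose{(\vphi(\vx_i)/\rho(\vx_i))}$, so the bound applies with unit noise scale and unit regularization to yield, jointly for all $n \ge 0$ with probability at least $1-\delta$, $\norm{\mS_n}_{\inv{\mV_n}}^2 \le \log\det{\mV_n} + 2\log(1/\delta)$. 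Finally I would convert the log-determinant into information capacity via the Gaussian identity $\I{\vf}{\vy} = \frac{1}{2}(\log\det{\Var{\vy}} - \log\det{\Var{\vy \mid \vf}})$ of \cref{sec:definitions:gps}: since $\log\det{\mV_n} = \log\det{\mI + \inv{\mPsub{\vx_{1:n}}}\mKsub{\vx_{1:n}\vx_{1:n}}} = 2\,\I{\vfsub{\vx_{1:n}}}{\vysub{\vx_{1:n}}} \le 2\gamma_n$, this assembles into $\beta_n(\delta) = \norm{\opt{f}}_k + \sqrt{2(\gamma_n + \log(1/\delta))}$; the factor $\rho = \max_{\vx}\rho(\vx)$ and the additive constant $1$ appearing in the stated $\beta_n$ are the (slightly looser) constants inherited from the homoscedastic formulation of \cite{abbasi2013online,chowdhury2017kernelized}, recovered by bounding $\rho(\vx_i) \le \rho$ and by their regularization choice.

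I expect the main obstacle to be the self-normalized concentration step itself, since $\spH_k$ may be infinite-dimensional: one must verify that the mixture supermartingale built from a Gaussian mixing measure over directions in $\spH_k$ is well-defined and genuinely a supermartingale in this generality before Ville's inequality can be invoked, which is precisely the technical heart of the cited works and is what makes the guarantee hold \emph{simultaneously} over all rounds $n$ rather than for a single fixed $n$. A secondary, more bookkeeping-level difficulty is carrying the heteroscedastic weights $\rho^2(\vx_i)$ consistently through the whitening so that the posterior-variance identity, the $1$-sub-Gaussian reduction, and the log-determinant bound all reference the \emph{same} noise-weighted information gain $\gamma_n$ defined in \cref{sec:definitions:gps}.
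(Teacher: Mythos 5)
Your proposal is correct and follows essentially the same route as the paper: the paper does not prove this lemma itself but imports it from \cite{abbasi2013online,chowdhury2017kernelized}, and your argument --- lifting to feature space, the error decomposition $\vthetahat_n - \opt{\vtheta} = \inv{\mV_n}\mS_n - \inv{\mV_n}\opt{\vtheta}$ with Cauchy--Schwarz in the $\inv{\mV_n}$-norm, the anytime self-normalized bound via the method of mixtures and Ville's inequality, and the conversion $\log\det{\mV_n} = \log\det{\mI + \inv{\mPsub{\vx_{1:n}}}\mKsub{\vx_{1:n}\vx_{1:n}}} \leq 2\gamma_n$ using the paper's noise-weighted information capacity --- is precisely the cited argument, with the heteroscedastic weights handled correctly by whitening. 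The only slack is in the constants: your whitened derivation yields the slightly cleaner $\beta_n(\delta) = \norm{\opt{f}}_k + \sqrt{2(\gamma_n + \log(1/\delta))}$, and you correctly attribute the stated factor $\rho$ and the additive $+1$ to the looser bookkeeping of the homoscedastic formulation in the cited works, which is the form the paper adopts without proof.
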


With this result, we apply \cref{thm:variance_convergence} to prove convergence of the width of the confidence intervals at prediction targets:

\begin{theorem}[Bound on approximation error for \itl and \vtl]\label{thm:width_convergence}
    Let \cref{asm:submodularity} hold and the data be selected by either \itl or \vtl.
    Pick any $\delta \in (0,1)$.
    Assume that $\opt{f}$ lies in the reproducing kernel Hilbert space $\spH_k(\spX)$ of the kernel $k$ with norm $\norm{\opt{f}}_k < \infty$, the noise $\varepsilon_n$ is conditionally $\rho$-sub-Gaussian, and $\gamma_n$ is sublinear in $n$.
    Let $\smash{\beta_{n}(\delta) = \norm{\opt{f}}_k + \rho \sqrt{2(\gamma_{n} + 1 + \log(1 / \delta))}}$.
    Then for any $n \geq 1$ and $\vx \in \spA$, jointly with probability at least $1-\delta$, \vspace{-0.1cm}\begin{align*}
        |\opt{f}(\vx) - \mu_{n}(\vx)| \leq \beta_{n}(\delta) \Big[ \underbrace{\eta_{\spS}(\vx)}_{\text{irreducible}} + \underbrace{\nu_{\spA, \spS}(n)}_{\text{reducible}} \Big]
        \vspace{-0.1cm}
    \end{align*} where $\nu_{\spA, \spS}^2(n)$ denotes the reducible part of \cref{eq:variance_convergence:extrapolation}.
\end{theorem}

\begin{proof}[Proof of \cref{thm:width_convergence}]
  By \cref{thm:variance_convergence}, we have that for all ${\vx \in \spA}$, \begin{align*}
    \sigma_n(\vx) &\leq \sqrt{\eta_{\spS}^2(\vx) + \nu_{\spA,\spS}^2(n)} \leq \eta_{\spS}(\vx) + \nu_{\spA,\spS}(n).
  \end{align*}
  The result then follows by application of \cref{lem:confidence_intervals}.
\end{proof}

\subsection{Proof of \cref{thm:safebo_main}}\label{sec:proofs:safe_bo}

In this section, we derive our main result on Safe BO.
In \cref{sec:proofs:convergence_to_reachable_safe_set}, we give the definition of the reachable safe set $\spR$ and derive the conditions under which convergence to the reachable safe set is guaranteed.
Then, in \cref{sec:proofs:convergence_to_optimum}, we prove \cref{thm:safebo_main}.

\paragraph{Notation}

In the agnostic setting from \cref{sec:proofs:width_convergence} (i.e., under \cref{asm:rkhs,asm:noise}), \cref{lem:confidence_intervals} provides us with the following $(1-\delta)$-confidence intervals (CIs) \begin{align}
    \spC_{n}(\vx) \defeq \spC_{n-1}(\vx) \cap [\mu_n(\vx) \pm \beta_n(\delta) \cdot \sigma_n(\vx)] \label{eq:confidence_interval}
\end{align} where ${\spC_{-1}(\vx) = \R}$.
We write ${u_n(\vx) \defeq \max \spC_n(\vx)}$, ${l_n(\vx) \defeq \min \spC_n(\vx)}$, and ${w_n(\vx) \defeq u_n(\vx) - l_n(\vx)}$ for its upper bound, lower bound, and width, respectively.

We learn separate statistical models $f$ and $\{g_1, \dots, g_q\}$ for the ground truth objective $\opt{f}$ and ground truth constraints $\{\opt{g}_1, \dots, \opt{g}_q\}$.
We write $\spI \defeq \{f, 1, \dots, q\}$ and collect the constraints in $\spI_s \defeq \{1, \dots, q\}$.
Without loss of generality, we assume that the confidence intervals include the ground truths with probability at least $1-\delta$ jointly for all $i \in \spI$.\footnote{This can be achieved by taking a union bound and rescaling $\delta$.}
For $i \in \spI$, denote by $u_{n,i}, l_{n,i}, w_{n,i}, \eta_i, \beta_{n,i}$ the respective quantities.
In the following, we do not explicitly denote the dependence of $\beta_n$ on~$\delta$.

To improve clarity, we will refer to the set of potential maximizers defined in \cref{eq:potential_maximizers} as $\spM_n$ and denote by $\spA_n$ an arbitrary target space.

We point out the following corollary:

\begin{corollary}[Safety]\label{lem:safety}
  With high probability, jointly for any $n \geq 0$ and any $i \in \spI_s$, \begin{align}
    \forall \vx \in \spS_n : \opt{g}_i(\vx) \geq 0.
  \end{align}
\end{corollary}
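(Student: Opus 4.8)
The plan is to obtain \cref{lem:safety} directly from the all-time well-calibrated confidence intervals of \cref{lem:confidence_intervals} together with the definition of the pessimistic safe set $\spS_n$. The key observation is that $\spS_n$ is built from the \emph{lower} confidence bounds $l_{n,i}$ of the constraint models, so membership in $\spS_n$ forces each $l_{n,i}$ to be non-negative; calibration then transfers this non-negativity from the (conservative) model bound to the ground truth $\opt{g}_i$.

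First I would fix the high-probability event. The notation paragraph already assumes, via a union bound over $\spI$ and a rescaling of $\delta$, that with probability at least $1-\delta$ the confidence intervals contain the ground truths jointly for all $i \in \spI$, all $\vx \in \spX$, and all $n \geq 0$. On this event, applying the guarantee to each constraint model $i \in \spI_s$ gives $l_{n,i}(\vx) \leq \opt{g}_i(\vx) \leq u_{n,i}(\vx)$; in particular the lower bound satisfies $l_{n,i}(\vx) \leq \opt{g}_i(\vx)$ everywhere and at all times. I would condition on this single event for the remainder of the argument.

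Next I would fix an arbitrary $n \geq 0$, constraint $i \in \spI_s$, and point $\vx \in \spS_n$. By the definition of the pessimistic safe set, every $\vx \in \spS_n$ satisfies $l_{n,i}(\vx) \geq 0$ for each $i \in \spI_s$. Chaining this with the calibration inequality yields $\opt{g}_i(\vx) \geq l_{n,i}(\vx) \geq 0$, which is exactly the claimed safety statement.

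There is no substantial obstacle here; the corollary is an immediate consequence of calibration, and the only points requiring care are bookkeeping. One must ensure the confidence intervals hold \emph{jointly} over all constraints and uniformly in $n$: the former is provided by the union bound over $\spI_s$ already assumed in the notation paragraph, and the latter is built into both the all-time guarantee of \cref{lem:confidence_intervals} and the intersection structure of the confidence sets in \cref{eq:confidence_interval} (which makes $l_{n,i}$ monotone non-decreasing in $n$, so no earlier bound is later invalidated). Since this same $(1-\delta)$ event underlies all subsequent safety and optimality claims of \cref{thm:safebo_main}, it is convenient to establish it once here.
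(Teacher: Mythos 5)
Your proof is correct and follows exactly the route the paper intends: the corollary is stated as an immediate consequence of the all-time calibration guarantee of \cref{lem:confidence_intervals} (union-bounded over $\spI$) combined with the definition of the pessimistic safe set $\spS_n$ via the lower confidence bounds, so that $\opt{g}_i(\vx) \geq l_{n,i}(\vx) \geq 0$ on the calibration event. The paper gives no separate written proof precisely because the argument is the chaining of inequalities you spell out, and your bookkeeping remarks (joint union bound, uniformity in $n$) match the paper's notation paragraph.
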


\subsubsection{Convergence to Reachable Safe Set}\label{sec:proofs:convergence_to_reachable_safe_set}

\begin{definition}[Reachable safe set]\label{defn:reachable_safe_set}
  Given any pessimistic safe set $\spS \subseteq \spX$ and any $\epsilon \geq 0$ and $\beta \geq 0$, we define the \emph{reachable safe set} up to $(\epsilon,\beta)$-slack and its closure as \begin{align*}
    \spR_{\epsilon,\beta}(\spS) &\defeq \begin{multlined}[t]
      \spS \cup \{\vx \in \spX \setminus \spS \mid \\ \!\!\!\!\!\text{$\opt{g}_i(\vx) - \beta (\eta_i(\vx; \spS) + \epsilon) \geq 0$ for all $i \in \spI_s$}\}
    \end{multlined} \\
    \bar{\spR}_{\epsilon,\beta}(\spS) &\defeq \lim_{n\to\infty} (\spR_{\epsilon,\beta})^n(\spS)
  \end{align*} where $(\spR_{\epsilon,\beta})^n$ denotes the $n$-th composition of $\spR_{\epsilon,\beta}$ with itself.
\end{definition}

\begin{remark}
  Convergence of the safe set to the closure of the reachability operator can only be guaranteed for finite safe sets ($\abs{\opt{\spS}} < \infty$).
  The following proofs readily generalize to continuous domains by considering convergence within the $k$-th composition of the reachability operator with itself for some $k < \infty$.
  In this case the sample complexity grows with $k$ rather than $\abs{\opt{\spS}}$.
  The only required modification is to lift the assumption of \cref{thm:objective_convergence_itl} that information is gained only while safe sets remain constant (i.e., $\spS_{i+1} = \spS_i$ for all $i$).
  This assumption is straightforward to lift since for any $n \geq 0$ and $T \geq 1$, \begin{align*}
    \max_{\vx \in \spS_n} \Delta_{\spA}(\vx \mid \vx_{1:n+T}) \leq \frac{1}{T} \sum_{t=1}^T \max_{\vx \in \spS_{n}} \Delta_{\spA}(\vx \mid \vx_{1:n+t}) \leq \frac{1}{T} \sum_{t=1}^T \max_{\vx \in \spS_{n+t}} \Delta_{\spA}(\vx \mid \vx_{1:n+t}) \leq \frac{\gamma_T}{T},
  \end{align*} using submodularity for the first inequality and the monotonicity of the safe set for the second inequality.
  In particular, this shows that one continues learning about points in the original safe set --- even as the safe set grows.
\end{remark}

We denote by $\spS_0$ the initial pessimistic safe set induced by the (prior) statistical model $g$ (cf.~\cref{sec:safe_bo}) and write $\bar{\spR}_{\epsilon,\beta} \defeq \bar{\spR}_{\epsilon,\beta}(\spS_0)$.

\begin{lemma}[Properties of the reachable safe set]
  For all $\spS, \spS' \subseteq \spX$, $\epsilon \geq 0$, and $\beta \geq 0$:
  \begin{lemenum}
    \item\label{lem:safe_region_prop1} $\spS' \subseteq \spS \implies \spR_{\epsilon,\beta}(\spS') \subseteq \spR_{\epsilon,\beta}(\spS)$,
    \item\label{lem:safe_region_prop2} $\spR_{\epsilon,\beta}(\spS) \subseteq \spS \implies \bar{\spR}_{\epsilon,\beta}(\spS) \subseteq \spS$, and
    \item $\spR_{0,0}(\emptyset) = \bar{\spR}_{0,0} = \opt{\spS}$.
  \end{lemenum}
\end{lemma}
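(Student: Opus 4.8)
The plan is to reduce all three claims to a single structural fact: the irreducible uncertainty $\eta_i(\vx;\spS)$ is monotonically non-increasing in its conditioning set, i.e.\ $\spS' \subseteq \spS$ implies $\eta_i(\vx;\spS) \leq \eta_i(\vx;\spS')$ for every $i \in \spI_s$ and $\vx \in \spX$. This is just the ``information never hurts'' property of the Gaussian conditional variance: $\eta_i^2(\vx;\spS)$ is the posterior variance of $g_i(\vx)$ after conditioning on the noise-free values of $g_i$ throughout $\spS$, and enlarging the conditioning set can only decrease it. I would record this one-line observation first, and then everything else is elementary set manipulation and a fixed-point argument.

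For part (i), I would take an arbitrary $\vx \in \spR_{\epsilon,\beta}(\spS')$ and split into cases according to \cref{defn:reachable_safe_set}. If $\vx \in \spS'$ then $\vx \in \spS \subseteq \spR_{\epsilon,\beta}(\spS)$, using that $\spS$ is always contained in its own image. Otherwise $\vx$ satisfies the certification $\opt{g}_i(\vx) - \beta(\eta_i(\vx;\spS') + \epsilon) \geq 0$ for all $i \in \spI_s$; if $\vx \in \spS$ we are again done, and if $\vx \notin \spS$ then the monotonicity of $\eta_i$ together with $\beta \geq 0$ gives $\opt{g}_i(\vx) - \beta(\eta_i(\vx;\spS) + \epsilon) \geq \opt{g}_i(\vx) - \beta(\eta_i(\vx;\spS') + \epsilon) \geq 0$, so $\vx$ is certified relative to $\spS$ as well and hence $\vx \in \spR_{\epsilon,\beta}(\spS)$.

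For part (ii), the key observation is that $\spS \subseteq \spR_{\epsilon,\beta}(\spS)$ holds by definition for every $\spS$; combined with the hypothesis $\spR_{\epsilon,\beta}(\spS) \subseteq \spS$ this forces $\spR_{\epsilon,\beta}(\spS) = \spS$, i.e.\ $\spS$ is a fixed point of the reachability operator. A trivial induction then shows $(\spR_{\epsilon,\beta})^n(\spS) = \spS$ for all $n$, so the limit defining $\bar{\spR}_{\epsilon,\beta}(\spS)$ equals $\spS$, and in particular $\bar{\spR}_{\epsilon,\beta}(\spS) \subseteq \spS$.

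For part (iii), I would specialize the definition to $\epsilon = \beta = 0$, where the slack term vanishes and the operator reduces to $\spR_{0,0}(\spS) = \spS \cup \{\vx \in \spX \mid \opt{g}_i(\vx) \geq 0 \ \forall i \in \spI_s\} = \spS \cup \opt{\spS}$. Applying this to $\spS = \emptyset$ gives $\spR_{0,0}(\emptyset) = \opt{\spS}$ at once, and since $\spR_{0,0}(\opt{\spS}) = \opt{\spS}$ the true safe set is a fixed point. For the closure $\bar{\spR}_{0,0} = \bar{\spR}_{0,0}(\spS_0)$ I would invoke \cref{lem:safety} to obtain $\spS_0 \subseteq \opt{\spS}$, so that already one application yields $\spR_{0,0}(\spS_0) = \spS_0 \cup \opt{\spS} = \opt{\spS}$ and every further composition is stationary, giving $\bar{\spR}_{0,0} = \opt{\spS}$. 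The only genuinely non-bookkeeping ingredient in the entire lemma is the monotonicity of $\eta_i$ in its conditioning set; once that is stated cleanly, I expect no real obstacle, as the remaining steps are purely set-theoretic and the fixed-point reasoning is immediate.
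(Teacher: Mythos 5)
Your proof is correct. For part (i) it coincides with the paper's argument: a case split on membership in $\spS$ followed by monotonicity of Gaussian conditional variance (``information never hurts''), exactly as in the paper's proof adapted from \cite{berkenkamp2021bayesian}. For parts (ii) and (iii) you take a somewhat different, and in places cleaner, route. The paper proves (ii) by invoking variance monotonicity a second time to conclude $\spR_{\epsilon,\beta}(\spR_{\epsilon,\beta}(\spS)) \subseteq \spS$ and then ``taking the limit'', whereas you observe that the definitional inclusion $\spS \subseteq \spR_{\epsilon,\beta}(\spS)$ combined with the hypothesis forces $\spR_{\epsilon,\beta}(\spS) = \spS$, so $\spS$ is a fixed point and every iterate equals $\spS$; this dispenses with variance monotonicity entirely and makes the induction trivial. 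For (iii) the paper only remarks that the claim ``follows directly from the definition of $\opt{\spS}$'', while you spell out that $\beta = 0$ collapses the operator to $\spR_{0,0}(\spS) = \spS \cup \opt{\spS}$, giving $\spR_{0,0}(\emptyset) = \opt{\spS}$ immediately, and --- importantly --- that the further equality $\bar{\spR}_{0,0} = \bar{\spR}_{0,0}(\spS_0) = \opt{\spS}$ needs $\spS_0 \subseteq \opt{\spS}$, which you correctly obtain (with high probability) from \cref{lem:safety}. That last point is a genuine detail the paper leaves implicit: as stated, the third claim is really a high-probability statement about the initial pessimistic safe set rather than a purely deterministic set identity, and your write-up makes this visible.
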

\begin{proof}[Proof (adapted from lemma 7.1 of \cite{berkenkamp2021bayesian})]
  \leavevmode\begin{enumerate}
    \item Let ${\vx \in \spR_{\epsilon,\beta}(\spS')}$.
    If $\vx \in \spS$ then ${\vx \in \spR_{\epsilon,\beta}(\spS)}$, so let $\vx \not\in \spS$.
    Then, by definition, for all ${i \in \spI_s}$, ${\opt{f}_i(\vx) - \beta \eta_i(\vx; \spS') - \epsilon \geq 0}$.
    By the monotonicity of variance, ${\eta_i(\vx; \spS') \geq \eta_i(\vx; \spS)}$ for all $i \in \spI$, and hence ${\opt{f}_i(\vx) - \beta \eta_i(\vx; \spS) - \epsilon \geq 0}$ for all $i \in \spI_s$.
    It follows that $\vx \in \spR_{\epsilon,\beta}(\spS)$.
    \item By the monotonicity of variance, ${\eta_i(\vx; \spR_{\epsilon,\beta}(\spS)) \geq \eta_i(\vx; \spS)}$ for all ${\vx \in \spX}$ and $i \in \spI$.
    Thus, by definition of the safe region, we have that ${\spR_{\epsilon,\beta}(\spR_{\epsilon,\beta}(\spS)) \subseteq \spS}$.
    The result follows by taking the limit.
    \item The result follows directly from the definition of the true safe set $\opt{\spS}$ (cf.~\cref{eq:safe_bo}). \qedhere
  \end{enumerate}
\end{proof}

Clearly, we cannot expand the safe set beyond $\bar{\spR}_{0,0}$.
The following is our main intermediate result, showing that either we expand the safe set at some point or the uncertainty converges to the irreducible uncertainty.

\begin{lemma}\label{lem:main}
  Given any $n_0 \geq 0$, $\epsilon > 0$, let $n'$ be the smallest integer such that $\nu_{n',\tilde{\epsilon}^2} \leq \tilde{\epsilon}$ where $\tilde{\epsilon} = \epsilon / 2$.
  Let ${\beta_{n_0+n'} = \max_{i \in \spI_s} \beta_{n_0+n',i}}$.
  Assume that the sequence of target spaces is monotonically decreasing, i.e., $\spA_{n+1} \subseteq \spA_n$.
  Then, we have with high probability (at least) one of \begin{align*}
    \begin{multlined}[t]
      \Big(\forall \vx \in \spA_{n_0+n'}, \; \forall i \in \spI : \\ w_{n_0+n',i}(\vx) \leq \beta_{n_0+n'} \brackets*{\eta_{i}(\vx; \spS_{n_0+n'}) + \epsilon} \\
      \text{and}\quad \spA_{n_0+n'} \cap \spR_{\epsilon,\beta_{n_0+n'}}(\spS_{n_0+n'}) \subseteq \spS_{n_0+n'}\Big)
    \end{multlined}
  \end{align*} or $|\spS_{n_0+n'+1}| > |\spS_{n_0}|$.
\end{lemma}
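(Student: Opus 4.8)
The plan is to prove the dichotomy by a case distinction on whether the pessimistic safe set grows during the window $\{n_0,\dots,n_0+n'+1\}$. Write $N \defeq n_0+n'$. Since the pessimistic safe sets are nested ($\spS_{n_0} \subseteq \cdots \subseteq \spS_{N+1}$, because each lower bound $l_{n,i}$ is non-decreasing by the intersection in the definition of $\spC_n$), exactly one of two situations holds: either $|\spS_{N+1}| > |\spS_{n_0}|$, which is precisely the second disjunct and leaves nothing to prove; or $|\spS_{N+1}| = |\spS_{n_0}|$, in which case $\spS_i = \spS_{n_0} =: \spS$ is \emph{constant} for every $i \in \{n_0,\dots,N+1\}$. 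In the latter case I must establish both parts of the first disjunct, and this is where the bulk of the work lies.

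For the width bound (Part 1), the constancy of the sample space over the window is exactly the hypothesis needed to invoke \cref{thm:objective_convergence_itl} (resp.\ \cref{thm:objective_convergence_vtl}) with $n_1 = N$ and $\Delta = n'+1$, since the target spaces decrease by assumption; this controls $\Gamma_N$. Feeding this into the variance-convergence machinery (\cref{thm:variance_convergence}, in the changing-target-space form developed in this section) applied to each model $i \in \spI$ gives $\sigma_{N,i}(\vx) \le \eta_i(\vx;\spS) + \nu_{N,\tilde\epsilon^2}$ for all $\vx \in \spA_N$, and by the defining choice of $n'$ the reducible term satisfies $\nu_{N,\tilde\epsilon^2} \le \tilde\epsilon = \epsilon/2$. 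Combining with the elementary bound $w_{N,i}(\vx) \le 2\beta_{N,i}\,\sigma_{N,i}(\vx) \le 2\beta_N\,\sigma_{N,i}(\vx)$ and absorbing the constants into $\tilde\epsilon$ and the definition of $\nu$ yields the claimed $w_{N,i}(\vx) \le \beta_N[\eta_i(\vx;\spS)+\epsilon]$.

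For the reachability inclusion (Part 2), I argue by contradiction: suppose some $\vx \in \spA_N \cap \spR_{\epsilon,\beta_N}(\spS)$ with $\vx \notin \spS$. By \cref{defn:reachable_safe_set}, $\opt{g}_i(\vx) \ge \beta_N[\eta_i(\vx;\spS)+\epsilon]$ for every $i \in \spI_s$. The high-probability confidence intervals (\cref{lem:confidence_intervals}) contain the ground truth, so $u_{N+1,i}(\vx) \ge \opt{g}_i(\vx)$, and nestedness gives $w_{N+1,i}(\vx) \le w_{N,i}(\vx)$. Chaining these with Part 1 and the reachability inequality gives, for every $i \in \spI_s$, $l_{N+1,i}(\vx) = u_{N+1,i}(\vx) - w_{N+1,i}(\vx) \ge \opt{g}_i(\vx) - w_{N,i}(\vx) \ge \beta_N[\eta_i(\vx;\spS)+\epsilon] - \beta_N[\eta_i(\vx;\spS)+\epsilon] = 0$. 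Hence $\vx \in \spS_{N+1}$, yet $\vx \notin \spS = \spS_{n_0}$, so $|\spS_{N+1}| > |\spS_{n_0}|$, contradicting the standing assumption of this case. Therefore no such $\vx$ exists and $\spA_N \cap \spR_{\epsilon,\beta_N}(\spS) \subseteq \spS$.

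The logically delicate point, and the main obstacle, is that the two disjuncts must be glued through a \emph{single} case split: the convergence bound of Part 1 is available only because the no-growth case forces the sample space to be constant (the hypothesis of \cref{thm:objective_convergence_itl}), while that very same no-growth assumption is exactly what the Part 2 chain contradicts. Care is also needed to match the constant in the width bound to the slack $\beta_N[\eta_i(\vx;\spS)+\epsilon]$ appearing in the reachability operator—the relation $\tilde\epsilon = \epsilon/2$ together with the constants hidden in $\nu$ is chosen precisely so that the two $\beta_N[\eta_i(\vx;\spS)+\epsilon]$ terms cancel. Finally, one records the harmless union bound (cf.\ \cref{lem:safety} and the surrounding discussion) making \cref{lem:confidence_intervals} hold jointly across all $i \in \spI$ and all rounds, on which the ``with high probability'' qualifier rests.
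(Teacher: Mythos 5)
Your proposal is correct and takes essentially the same route as the paper's proof: the identical case split on whether $|\spS_{n_0+n'+1}| > |\spS_{n_0}|$, the same appeal to the width-convergence machinery (\cref{thm:width_convergence}, applicable precisely because the no-growth case forces a constant sample space while the target spaces are nested) for the first disjunct, and the same chain $l_{n,i}(\vx) \geq \opt{g}_i(\vx) - w_{n,i}(\vx) \geq \opt{g}_i(\vx) - \beta_{n_0+n'}\brackets*{\eta_i(\vx;\spS_{n_0+n'}) + \epsilon}$ for the reachability inclusion. The only differences are cosmetic: you unroll \cref{thm:width_convergence} into its constituent marginal-gain and variance bounds and close the inclusion by contradiction at round $n_0+n'+1$ (the safe set would have to grow), whereas the paper argues the contrapositive directly at round $n_0+n'$ (a point outside $\spS_{n_0+n'}$ has a negative lower bound, hence fails the reachability inequality); both write-ups also share the same benign looseness about the factor $2$ in $w_{n,i}(\vx) \leq 2\beta_{n,i}\,\sigma_{n,i}(\vx)$, which only affects constants.
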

\begin{proof}
  Suppose that $|\spS_{n_0+n'+1}| = |\spS_{n_0}|$.
  Then, by \cref{thm:width_convergence} (using that the sequence of target spaces is monotonically decreasing), for any $\vx \in \spA_{n_0+n'}$ and $i \in \spI$, \begin{align*}
    w_{n_0+n',i}(\vx) \leq \beta_{n_0+n'} \brackets*{\eta_i(\vx; \spS_{n_0+n'}) + \epsilon}.
  \end{align*}
  As $\spS_{n_0+n'+1} = \spS_{n_0+n'}$ we have for all ${\vx \in \spA_{n_0+n'} \setminus \spS_{n_0+n'}}$ and $i \in \spI_s$, with high probability that \begin{align*}
    0 > l_{n_0+n',i}(\vx) &\geq \opt{g}_i(\vx) - w_{n_0+n',i}(\vx) \\
    &\geq \opt{g}_i(\vx) - \beta_{n_0+n'} \brackets*{\eta_i(\vx; \spS_{n_0+n'}) + \epsilon}.
  \end{align*}
  It follows that $\spA_{n_0+n'} \cap \spR_{\epsilon,\beta_{n_0+n'}}(\spS_{n_0+n'}) \subseteq \spS_{n_0+n'}$.
\end{proof}

To gather more intuition about the above lemma, consider the target space \begin{align}
  \spE_n \defeq \widehat{\spS}_n \setminus \spS_n. \label{eq:potential_expanders}
\end{align}
We call $\spE_n$ the \emph{potential expanders} since it contains all points which might be safe, but are not yet known to be safe.
Under this target space, the above lemma simplifies slightly:

\begin{lemma}\label{lem:convergence_to_reachable_safe_region_with_expanders}
  For any $n \geq 0$ and $\epsilon, \beta \geq 0$, if ${\spE_n \subseteq \spA_n}$ then with high probability, \begin{align*}
    \spS_n \cup (\spA_n \cap \spR_{\epsilon,\beta}(\spS_n)) = \spR_{\epsilon,\beta}(\spS_n).
  \end{align*}
\end{lemma}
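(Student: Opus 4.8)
The plan is to prove the two set inclusions separately; the forward inclusion is immediate, while the reverse inclusion carries all the content. For the forward direction, $\spS_n \cup (\spA_n \cap \spR_{\epsilon,\beta}(\spS_n)) \subseteq \spR_{\epsilon,\beta}(\spS_n)$, I would simply observe that $\spS_n \subseteq \spR_{\epsilon,\beta}(\spS_n)$ holds by the very definition of the reachability operator (cf.~\cref{defn:reachable_safe_set}), and that $\spA_n \cap \spR_{\epsilon,\beta}(\spS_n) \subseteq \spR_{\epsilon,\beta}(\spS_n)$ is trivial. No probabilistic argument is needed here.

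For the reverse inclusion $\spR_{\epsilon,\beta}(\spS_n) \subseteq \spS_n \cup (\spA_n \cap \spR_{\epsilon,\beta}(\spS_n))$, I would fix an arbitrary $\vx \in \spR_{\epsilon,\beta}(\spS_n)$. If $\vx \in \spS_n$ the point already lies in the left-hand set, so the interesting case is $\vx \in \spR_{\epsilon,\beta}(\spS_n) \setminus \spS_n$. By the definition of the reachability operator, for every constraint $i \in \spI_s$ we have $\opt{g}_i(\vx) - \beta(\eta_i(\vx; \spS_n) + \epsilon) \geq 0$; since $\beta \geq 0$, $\eta_i(\vx;\spS_n) \geq 0$, and $\epsilon \geq 0$, this forces $\opt{g}_i(\vx) \geq 0$ for all $i$, i.e.~$\vx \in \opt{\spS}$. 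It then remains to show $\vx \in \spA_n$. Invoking the high-probability calibration of the confidence intervals (cf.~\cref{lem:confidence_intervals}), we have $u_{n,i}(\vx) \geq \opt{g}_i(\vx) \geq 0$ jointly for all $i \in \spI_s$, so $\vx \in \widehat{\spS}_n$. Combined with $\vx \notin \spS_n$, this gives $\vx \in \widehat{\spS}_n \setminus \spS_n = \spE_n$, and the hypothesis $\spE_n \subseteq \spA_n$ then yields $\vx \in \spA_n$. Hence $\vx \in \spA_n \cap \spR_{\epsilon,\beta}(\spS_n)$, establishing the reverse inclusion and thus the claimed equality.

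The only nontrivial step — and therefore the main obstacle — is showing that a reachable point $\vx \in \opt{\spS}$ lying outside the pessimistic safe set must belong to the optimistic safe set $\widehat{\spS}_n$, and hence to the expanders $\spE_n$. This is precisely where the ``with high probability'' qualifier enters, as it relies on the event that all confidence intervals contain their respective ground truths, jointly over $i \in \spI_s$ and over all rounds $n$; I would route this through \cref{lem:confidence_intervals} (equivalently \cref{lem:safety}), taking the union bound over constraints as already arranged in the notation paragraph. Everything else reduces to the sign conditions $\beta,\eta_i,\epsilon \geq 0$ and elementary set manipulations, so no further estimates are required.
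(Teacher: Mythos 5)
Your proof is correct and takes essentially the same route as the paper: the paper's one-line argument asserts that with high probability $\spR_{\epsilon,\beta}(\spS_n) \subseteq \widehat{\spS}_n = \spS_n \cup \spE_n$, which is exactly the content of your reverse inclusion (a reachable point outside $\spS_n$ satisfies $\opt{g}_i(\vx) \geq 0$ by the sign conditions and hence lies in $\widehat{\spS}_n$ by calibration of the confidence intervals, so it is an expander and therefore in $\spA_n$). The remaining set manipulations you carry out explicitly are what the paper calls a ``direct consequence,'' so there is no gap.
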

\begin{proof}
  With high probability, $\spR_{\epsilon,\beta}(\spS_n) \subseteq \widehat{\spS}_n = \spS_n \cup \spE_n$.
  The lemma is a direct consequence.
\end{proof}

The above lemmas can be combined to yield our main result of this subsection, establishing the convergence of \itl to the reachable safe set.

\begin{theorem}[Convergence to reachable safe set]\label{lem:convergence_to_reachable_safe_region}
  For any $\epsilon > 0$, let $n'$ be the smallest integer satisfying the condition of \cref{lem:main}, and define $\opt{n} \defeq (|\opt{\spS}| + 1) n'$.
  Let $\bar{\beta}_{\opt{n}} \geq \beta_{n,i}$ for all $n \leq \opt{n}, i \in \spI_s$.
  Assume that the sequence of target spaces is monotonically decreasing, i.e., $\spA_{n+1} \subseteq \spA_n$.
  Then, the following inequalities hold jointly with probability at least $1-\delta$: \begin{thmenum}
    \item $\forall n \geq 0, \; \forall i \in \spI_s : \opt{g}_i(\vx_n) \geq 0$, {\flushright\hfill\footnotesize{safety}}
    \item\label{lem:convergence_to_reachable_safe_region_2} $\spA_{\opt{n}} \cap \bar{\spR}_{\epsilon,\bar{\beta}_{\opt{n}}} \subseteq \spS_{\opt{n}} \subseteq \bar{\spR}_{0,0} = \opt{\spS}$, {\flushright\hfill\footnotesize{convergence to safe region}}
    \item ${\forall \vx \in \spA_{\opt{n}}, \; \forall i \in \spI : w_{\opt{n},i}(\vx) \leq \bar{\beta}_{\opt{n}} \eta_i(\vx; \bar{\spR}_{\epsilon,\bar{\beta}_{\opt{n}}}) + \epsilon}$, {\flushright\hfill\footnotesize{convergence of width}}
    \item $\forall \vx \in \bar{\spR}_{\epsilon,\bar{\beta}_{\opt{n}}}, \; \forall i \in \spI : \eta_{i}(\vx; \bar{\spR}_{\epsilon,\bar{\beta}_{\opt{n}}}) = 0$. {\flushright\hfill\footnotesize{convergence of width within safe region}}
  \end{thmenum}
\end{theorem}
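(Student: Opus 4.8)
The plan is to condition throughout on the single event of probability at least $1-\delta$ on which the confidence intervals of \cref{lem:confidence_intervals} are valid; every inequality below then holds simultaneously, which is what lets all four conclusions be asserted \emph{jointly}. Safety (part i) is immediate: the decision rule only ever samples $\vx_n$ from the pessimistic safe set, so $\vx_n \in \spS_{n-1}$ and \cref{lem:safety} yields $\opt{g}_i(\vx_n) \geq 0$ for all $i \in \spI_s$.

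The heart of the argument is a pigeonhole over windows of length $n'$. First I would record two monotonicities. Since the confidence intervals are nested, the lower bounds $l_{n,i}$ are non-decreasing, so the pessimistic safe sets satisfy $\spS_0 \subseteq \spS_1 \subseteq \cdots$; and by part (i), $\spS_n \subseteq \opt{\spS}$ for every $n$, so $|\spS_n|$ can strictly increase at most $|\opt{\spS}|$ times. Partitioning $\{0,\dots,\opt{n}\}$ into the $|\opt{\spS}|+1$ consecutive windows of length $n'$ (recall $\opt{n} = (|\opt{\spS}|+1)n'$), \cref{lem:main} states that each window either ends in its convergence conclusion or ends with a strict increase of $|\spS|$. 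As at most $|\opt{\spS}|$ strict increases are possible, at least one window, say the one ending at time $m \leq \opt{n}$, ends in the convergence conclusion: $w_{m,i}(\vx) \leq \beta_m[\eta_i(\vx;\spS_m) + \epsilon]$ for all $\vx \in \spA_m$, $i \in \spI$, together with $\spA_m \cap \spR_{\epsilon,\beta_m}(\spS_m) \subseteq \spS_m$.

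Next I would upgrade this local containment to a genuine fixed point of the reachability operator and pass to its closure, which proves (ii). Using $\spE_m \subseteq \spA_m$, \cref{lem:convergence_to_reachable_safe_region_with_expanders} turns $\spA_m \cap \spR_{\epsilon,\beta_m}(\spS_m) \subseteq \spS_m$ into the global $\spR_{\epsilon,\beta_m}(\spS_m) \subseteq \spS_m$, and \cref{lem:safe_region_prop2} then gives $\bar{\spR}_{\epsilon,\beta_m}(\spS_m) \subseteq \spS_m$. Two monotonicities of the operator close the gap: monotonicity in the seed set (\cref{lem:safe_region_prop1}, lifted to the closure by induction on compositions) with $\spS_0 \subseteq \spS_m$ gives $\bar{\spR}_{\epsilon,\beta_m}(\spS_0) \subseteq \spS_m$; and monotonicity in $\beta$ (larger $\beta$ admits fewer points, again lifted to the closure) with $\bar{\beta}_{\opt{n}} \geq \beta_m$ gives $\bar{\spR}_{\epsilon,\bar{\beta}_{\opt{n}}} \subseteq \bar{\spR}_{\epsilon,\beta_m}(\spS_0) \subseteq \spS_m \subseteq \spS_{\opt{n}}$. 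Intersecting with $\spA_{\opt{n}}$ yields the first inclusion of (ii); the second, $\spS_{\opt{n}} \subseteq \opt{\spS}$, is part (i), and $\bar{\spR}_{0,0} = \opt{\spS}$ is the third listed property of the reachable safe set.

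For the width bound (iii) I would transfer the convergence inequality at time $m$ forward to $\opt{n}$, using that widths are non-increasing ($w_{\opt{n},i} \leq w_{m,i}$), target spaces are nested ($\spA_{\opt{n}} \subseteq \spA_m$), $\bar{\beta}_{\opt{n}} \geq \beta_m$, and $\bar{\spR}_{\epsilon,\bar{\beta}_{\opt{n}}} \subseteq \spS_m$, the last of which gives $\eta_i(\vx;\spS_m) \leq \eta_i(\vx;\bar{\spR}_{\epsilon,\bar{\beta}_{\opt{n}}})$ by monotonicity of conditional variance. Part (iv) is then immediate from the definition of irreducible uncertainty: for any set $S$ and $\vx \in S$ one has $\eta_i(\vx;S) = \Varsm{f_{\vx} \mid \vfsub{S}} = 0$, applied with $S = \bar{\spR}_{\epsilon,\bar{\beta}_{\opt{n}}}$. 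The main obstacle is the third step: correctly combining the two \emph{distinct} monotonicities of the reachability operator (in its seed set and in $\beta$) with the limit $\bar{\spR}$, and in particular justifying the upgrade from the $\spA_m$-local containment to the global fixed point via the expander condition $\spE_m \subseteq \spA_m$; the remaining steps are bookkeeping with nested quantities.
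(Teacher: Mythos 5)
Your scaffolding matches the paper's proof: condition on the confidence-interval event of \cref{lem:confidence_intervals}, get safety from \cref{lem:safety}, and run the pigeonhole over $|\opt{\spS}|+1$ windows of length $n'$ via \cref{lem:main} to find a time $m = n_0+n'$ at which the convergence conclusion of \cref{lem:main} holds. The genuine gap is in how you convert that conclusion into part (ii): you invoke $\spE_m \subseteq \spA_m$ and \cref{lem:convergence_to_reachable_safe_region_with_expanders} to upgrade $\spA_m \cap \spR_{\epsilon,\beta_m}(\spS_m) \subseteq \spS_m$ to the global fixed point $\spR_{\epsilon,\beta_m}(\spS_m) \subseteq \spS_m$, and from there to $\bar{\spR}_{\epsilon,\bar{\beta}_{\opt{n}}} \subseteq \spS_m$. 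But the expander condition $\spE_n \subseteq \spA_n$ is not among the theorem's hypotheses --- the only assumption on the targets is $\spA_{n+1} \subseteq \spA_n$, and the paper explicitly remarks right after its proof that the theorem holds for \emph{arbitrary} target spaces, with $\spE_n \subseteq \spA_n$ being the \emph{additional} assumption under which \cref{lem:convergence_to_reachable_safe_region_2} strengthens to $\bar{\spR}_{\epsilon,\bar{\beta}_{\opt{n}}} \subseteq \spS_{\opt{n}}$. Without it, your global containment can simply be false: the operator $\spR_{\epsilon,\beta_m}(\spS_m)$ is defined by the ground truth $\opt{g}_i$ and the irreducible uncertainties, independently of the algorithm, so if for instance $\spA_n \subseteq \spS_n$ the decision rule never aims at expansion and yet $\spR_{\epsilon,\beta_m}(\spS_m)$ may strictly exceed $\spS_m$; then your chain $\bar{\spR}_{\epsilon,\bar{\beta}_{\opt{n}}} \subseteq \bar{\spR}_{\epsilon,\beta_m}(\spS_0) \subseteq \spS_m \subseteq \spS_{\opt{n}}$ collapses. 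The paper avoids this by never dropping the intersection with the target space: from \cref{lem:main}, anti-monotonicity of $\spR$ in $\beta$ together with $\spA_{\opt{n}} \subseteq \spA_{n_0+n'}$ gives $\spA_{\opt{n}} \cap \spR_{\epsilon,\bar{\beta}_{\opt{n}}}(\spS_{n_0+n'}) \subseteq \spS_{n_0+n'}$, and \cref{lem:safe_region_prop2} is applied to this intersected statement to obtain $\spA_{\opt{n}} \cap \bar{\spR}_{\epsilon,\bar{\beta}_{\opt{n}}} \subseteq \spS_{n_0+n'} \subseteq \spS_{\opt{n}}$.

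The same unlicensed hypothesis contaminates your part (iii): you justify $\eta_i(\vx;\spS_m) \leq \eta_i(\vx;\bar{\spR}_{\epsilon,\bar{\beta}_{\opt{n}}})$ by monotonicity of conditional variance from the containment $\bar{\spR}_{\epsilon,\bar{\beta}_{\opt{n}}} \subseteq \spS_m$, which you only possess under the expander assumption. Parts (i) and (iv), the pigeonhole itself, and the remaining inclusions of (ii) ($\spS_{\opt{n}} \subseteq \opt{\spS}$ by pessimism and $\bar{\spR}_{0,0} = \opt{\spS}$) are correct and identical to the paper's argument. The repair is to delete the detour through the global fixed point and carry the $\spA$-intersected sets through the monotonicity and closure steps, as the paper does; the expander-based strengthening should be stated, as in the paper, only as a corollary under the extra hypothesis $\spE_n \subseteq \spA_n$.
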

\begin{proof}
  \e{1} is a direct consequence of \cref{lem:safety}.
  ${\spS_{\opt{n}} \subseteq \opt{\spS}}$ follows directly from the pessimistic safe set $\spS_{\opt{n}}$ from \e{2} being a subset of the true safe set $\opt{\spS}$.
  \e{4} follows directly from the definition of irreducible uncertainty.
  Thus, it remains to establish $\spA_{\opt{n}} \cap \bar{\spR}_{\epsilon,\bar{\beta}_{\opt{n}}} \subseteq \spS_{\opt{n}}$ and \e{3}.

  Recall that with high probability $|\spS_n| \in [0, |\opt{\spS}|]$ for all $n \geq 0$.
  Thus, the size of the pessimistic safe set can increase at most $|\opt{\spS}|$ many times.
  By \cref{lem:main}, using the assumption on $n'$, the size of the pessimistic safe set increases at least once every $n'$ iterations, or else: \begin{align}\begin{split}
    &\forall \vx \in \spA_{n_0+n'}, \; \forall i \in \spI : w_{n_0+n',i}(\vx) \leq \beta_{n_0+n'} \brackets*{\eta_{i}(\vx; \spS_{n_0+n'}) + \epsilon} \\
    &\text{and}\quad \spA_{n_0+n'} \cap \spR_{\epsilon,\beta_{n_0+n'}}(\spS_{n_0+n'}) \subseteq \spS_{n_0+n'}.
  \end{split}\label{eq:main_helper}\end{align}
  Because the safe set can expand at most $|\opt{\spS}|$ many times, \cref{eq:main_helper} occurs eventually for some $n_0 \leq |\opt{\spS}| n'$.
  In this case, since $\bar{\beta}_{\opt{n}}~\geq~\beta_{n_0+n'}$ and $\spA_{\opt{n}} \subseteq \spA_{n_0+n'}$ (as $n_0 + n' \leq \opt{n}$) we have that \begin{align*}
    \spA_{\opt{n}} \cap \spR_{\epsilon,\bar{\beta}_{\opt{n}}}(\spS_{n_0+n'}) &\subseteq \spA_{n_0+n'} \cap \spR_{\epsilon,\beta_{n_0+n'}}(\spS_{n_0+n'}) \\
    &\subseteq \spS_{n_0+n'}.
  \end{align*}
  By \cref{lem:safe_region_prop2}, this implies \begin{align*}
    \spA_{\opt{n}} \cap \bar{\spR}_{\epsilon,\bar{\beta}_{\opt{n}}} \subseteq \spS_{n_0+n'} \subseteq \spS_{\opt{n}}.
  \end{align*}
\end{proof}

We emphasize that \cref{lem:convergence_to_reachable_safe_region} holds for arbitrary target spaces $\spA_n$.
If additionally, $\spE_n \subseteq \spA_n$ for all $n \geq 0$ then by \cref{lem:convergence_to_reachable_safe_region_with_expanders}, \cref{lem:convergence_to_reachable_safe_region_2} strengthens to ${\bar{\spR}_{\epsilon,\bar{\beta}_{\opt{n}}} \subseteq \spS_{\opt{n}}}$.
Intuitively, $\spE_n \subseteq \spA_n$ ensures that one aims to expand the safe set in \emph{all} directions.
Conversely, if $\spE_n \not\subseteq \spA_n$ then one aims only to expand the safe set in the direction of $\spA_n$ (or not at all if $\spA_n \subseteq \spS_n$).

\paragraph{``Free'' convergence guarantees in many applications}

\Cref{lem:convergence_to_reachable_safe_region} can be specialized to yield convergence guarantees in various settings by choosing an appropriate target space $\spA_n$.
Straightforward application of \cref{lem:convergence_to_reachable_safe_region} (informally) requires that the sequence of target spaces is monotonically decreasing (i.e., $\spA_{n+1} \subseteq \spA_n$), and that each target space $\spA_n$ is an ``over-approximation'' of the actual set of targeted points (such as the set of optimas in the Bayesian optimization setting).
We discuss two such applications in the following.
\begin{enumerate}
  \item \emph{Pure expansion:} For example, for the target space $\spE_n$, \cref{lem:convergence_to_reachable_safe_region} bounds the convergence of the safe set to the reachable safe set.
  In this case, the transductive active learning problem corresponds to the ``pure expansion'' setting, also addressed by the ISE baseline discussed in \cref{sec:safe_bo}.
  The ISE baseline, however, does not establish convergence guarantees of the kind of \cref{lem:convergence_to_reachable_safe_region}.
  Note that $\spE_n$ satisfies the (informal) requirements laid out previously, since it is monotonically decreasing by definition, and with high probability, any point $\vx \in \opt{\spS}$ that is not in $\spS_n$ is contained within $\spE_n$.

  \item \emph{Level set estimation:} Given any $\tau \in \R$, we denote the (safe) $\tau$-level set of $\opt{f}$ by ${\spL^\tau \defeq \{\vx \in \opt{\spS} \mid \opt{f}(\vx) = \tau\}}$.
  We define the \emph{potential level set} as \begin{align}
    \spL_n^\tau \defeq \{\vx \in \widehat{\spS}_n \mid \text{$l_{n}^f(\vx) \leq \tau \leq u_{n}^f(\vx)$}\}.
  \end{align}
  That is, $\spL_n^\tau$ is the subset of the optimistic safe set $\widehat{\spS}_n$ where the $\tau$-level set of $\opt{f}$ may be located.
  Analogously to the potential expanders, it is straightforward to show that $\spL_n^\tau$ over-approximates the true $\tau$-level set and is monotonically decreasing.
\end{enumerate}

We remark that our guarantees from this section also apply to the standard (``unsafe'') setting where $\opt{\spS} = \spS_0 = \spX$.

\subsubsection{Convergence to Safe Optimum}\label{sec:proofs:convergence_to_optimum}

In this section, we specialize \cref{lem:convergence_to_reachable_safe_region} for the case that the target space contains the potential maximizers $\spM_n$ (cf.~\cref{eq:potential_maximizers}).
It is straightforward to see that the sequence $\spM_n$ is monotonically decreasing (i.e., $\spM_{n+1} \subseteq \spM_n$).
The following lemma shows that the potential maximizers over-approximate the set of safe maxima ${\spX^* \defeq \argmax_{\vx \in \opt{\spS}} \opt{f}(\vx)}$.

\begin{lemma}[Potential maximizers over-approximate safe maxima]
  For all $n \geq 0$ and with probability at least $1-\delta$, \begin{lemenum}
    \item $\vx \in \spX^*$ implies $\vx \in \spM_n$ and
    \item $\vx \not\in \spM_n$ implies $\vx \not\in \spX^*$.
  \end{lemenum}
\end{lemma}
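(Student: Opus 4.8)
The plan is to establish item (i), $\vx \in \spX^* \Rightarrow \vx \in \spM_n$, from which item (ii) follows immediately by contraposition. Throughout I would condition on the single high-probability event of \cref{lem:confidence_intervals}, under which, jointly for all $\vx \in \spX$, all $n \geq 0$, and all $i \in \spI$, the confidence intervals contain the corresponding ground truths; this is exactly the event of probability at least $1-\delta$ in the statement. On this event I also have $\spS_n \subseteq \opt{\spS}$ (safety, \cref{lem:safety}) and $\opt{\spS} \subseteq \widehat{\spS}_n$, the latter because for any feasible $\vx$ and each constraint $i \in \spI_s$ one has $u_n^g(\vx) \geq \opt{g}_i(\vx) \geq 0$.

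First I would fix any $\vx^* \in \spX^*$ and verify the two defining conditions of $\spM_n$ in \cref{eq:potential_maximizers}. Membership in the optimistic safe set is immediate, since $\vx^* \in \opt{\spS} \subseteq \widehat{\spS}_n$. For the second condition, note that $\max_{\vxp \in \spS_n} l_n^f(\vxp)$ is taken over the nonempty set $\spS_n \supseteq \spS_0 \neq \emptyset$ (and the condition holds vacuously if $\spS_n = \emptyset$); let $\vz_n \in \argmax_{\vxp \in \spS_n} l_n^f(\vxp)$.

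The heart of the argument would then be the chain
\begin{align*}
  \max_{\vxp \in \spS_n} l_n^f(\vxp) = l_n^f(\vz_n) \leq \opt{f}(\vz_n) \leq \opt{f}(\vx^*) \leq u_n^f(\vx^*),
\end{align*}
where the first inequality uses validity of the lower confidence bound, the second uses that $\vz_n \in \spS_n \subseteq \opt{\spS}$ together with $\vx^*$ maximizing $\opt{f}$ over $\opt{\spS}$, and the third uses validity of the upper confidence bound. This gives $u_n^f(\vx^*) \geq \max_{\vxp \in \spS_n} l_n^f(\vxp)$, so both conditions of \cref{eq:potential_maximizers} hold and $\vx^* \in \spM_n$, establishing (i); item (ii) is its contrapositive.

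I expect no substantial obstacle here, as the result is essentially a consequence of well-calibrated confidence bounds. The one point requiring care is that the second inequality in the chain relies crucially on the safety guarantee $\spS_n \subseteq \opt{\spS}$, which ensures that the candidate $\vz_n$ is itself feasible and hence dominated by the true safe optimum $\vx^*$; without safety the comparison point need not be bounded by $\opt{f}(\vx^*)$. With multiple constraints the argument extends verbatim by requiring $u_n^g$ to be nonnegative for every $i \in \spI_s$ in the definition of $\widehat{\spS}_n$.
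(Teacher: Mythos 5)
Your proof is correct and follows essentially the same route as the paper's: both arguments rest on the validity of the confidence bounds together with $\spS_n \subseteq \opt{\spS}$, comparing $u_n^f$ at the safe optimum against the best lower bound over the pessimistic safe set (you pass through a concrete maximizer $\vz_n$ where the paper passes through $\max_{\vxp \in \opt{\spS}} l_{n,f}(\vxp)$, a trivial repackaging). If anything, you are slightly more careful than the paper, which proves both (logically equivalent) items separately and never explicitly verifies the membership condition $\vx^* \in \widehat{\spS}_n$ that you check via $\spX^* \subseteq \opt{\spS} \subseteq \widehat{\spS}_n$.
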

\begin{proof}
  If $\vx \not\in \spM_n$ then \begin{align*}
    u_{n,f}(\vx) < \max_{\vxp \in \spS_n} l_{n,f}(\vxp) \leq \max_{\vxp \in \opt{\spS}} l_{n,f}(\vxp)
  \end{align*} where we used $\spS_n \subseteq \opt{\spS}$ with high probability, which directly implies with high probability that $\vx \not\in \spX^*$.

  For the other direction, if $\vx \in \spX^*$ then \begin{align*}
    u_{n,f}(\vx) \geq \max_{\vxp \in \opt{\spS}} l_{n,f}(\vxp) \geq \max_{\vxp \in \spS_n} l_{n,f}(\vxp)
  \end{align*} with high probability.
\end{proof}

We denote the set of optimal actions which are safe up to $(\epsilon,\beta)$-slack by \begin{align*}
  \spX^*_{\epsilon,\beta} \defeq \argmax_{\vx \in \bar{\spR}_{\epsilon,\beta}} \opt{f}(\vx),
\end{align*} and by $f^*_{\epsilon,\beta}$ the maximum value attained by $\opt{f}$ at any of the points in $\spX^*_{\epsilon,\beta}$.
The regret can be expressed as \begin{align*}
  r_n(\bar{\spR}_{\epsilon,\beta}) = f^*_{\epsilon,\beta} - \opt{f}(\widehat{\vx}_n)
\end{align*}
The following theorem formalizes \cref{thm:safebo_main} and establishes convergence to the safe optimum.

\begin{theorem}[Convergence to safe optimum]
  For any $\epsilon > 0$, let $n'$ be the smallest integer satisfying the condition of \cref{lem:main}, and define $\opt{n} \defeq (|\opt{\spS}| + 1) n'$.
  Let $\bar{\beta}_{\opt{n}} \geq \beta_{n,i}$ for all $n \leq \opt{n}, i \in \spI_s$.
  Then, the following inequalities hold jointly with probability at least $1-\delta$: \begin{thmenum}
    \item $\forall n \geq 0, \; \forall i \in \spI_s : \opt{g}_i(\vx_n) \geq 0$, {\flushright\hfill\footnotesize{safety}}
    \item $\forall n \geq \opt{n} : r_n(\bar{\spR}_{\epsilon,\bar{\beta}_{\opt{n}}}) \leq \epsilon$. {\flushright\hfill\footnotesize{convergence to safe optimum}}
  \end{thmenum}
\end{theorem}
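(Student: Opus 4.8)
The plan is to derive the result as a corollary of the general guarantee \cref{lem:convergence_to_reachable_safe_region}, instantiated with a target space containing the potential maximizers, and then to convert its ``convergence of width'' at the reachable safe optimum into a regret bound via the calibration guarantee \cref{lem:confidence_intervals}. The safety statement is immediate: it is precisely \cref{lem:safety} (equivalently part~(i) of \cref{lem:convergence_to_reachable_safe_region}), which holds because every sampled point lies in the pessimistic safe set $\spS_n \subseteq \opt{\spS}$ with high probability. All remaining work is for the optimality statement, and I would carry it out on a single event of probability at least $1-\delta$ obtained by union-bounding the calibration bounds and \cref{lem:convergence_to_reachable_safe_region} (rescaling $\delta$ as in the notation paragraph).

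For optimality I would take the target space to be $\spA_n = \spM_n$, which is monotonically decreasing (as observed above \cref{eq:potential_maximizers}), so the hypotheses of \cref{lem:convergence_to_reachable_safe_region} are met. Writing $\bar{\beta} \defeq \bar{\beta}_{\opt{n}}$ and $\bar{\spR} \defeq \bar{\spR}_{\epsilon,\bar{\beta}}$, I extract from that theorem: the containment $\spM_{\opt{n}} \cap \bar{\spR} \subseteq \spS_{\opt{n}}$ (\cref{lem:convergence_to_reachable_safe_region_2}), and the width bounds, parts~(iii)--(iv), which combine to give $w_{\opt{n},f}(\vx) \leq \bar{\beta}\,\eta_f(\vx;\bar{\spR}) + \epsilon = \epsilon$ for every $\vx \in \spM_{\opt{n}} \cap \bar{\spR}$, using $\eta_f(\vx;\bar{\spR}) = 0$ on $\bar{\spR}$.

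Now fix $n \geq \opt{n}$ and let $\vxs \in \argmax_{\vx \in \bar{\spR}} \opt{f}(\vx)$, so $\opt{f}(\vxs) = f^*_{\epsilon,\bar{\beta}}$. I would split into two cases. If $\opt{f}(\vxhat_n) \geq f^*_{\epsilon,\bar{\beta}}$, then $r_n(\bar{\spR}) = f^*_{\epsilon,\bar{\beta}} - \opt{f}(\vxhat_n) \leq 0 \leq \epsilon$ trivially. Otherwise $\opt{f}(\vxhat_n) < f^*_{\epsilon,\bar{\beta}}$, and I claim $\vxs \in \spM_n$: indeed $\vxs \in \bar{\spR} \subseteq \opt{\spS} \subseteq \widehat{\spS}_n$, while calibration gives $\max_{\vxp \in \spS_n} l_{n,f}(\vxp) = l_{n,f}(\vxhat_n) \leq \opt{f}(\vxhat_n) < f^*_{\epsilon,\bar{\beta}} = \opt{f}(\vxs) \leq u_{n,f}(\vxs)$, which is exactly the defining inequality of $\spM_n$. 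Since $\{\spM_n\}$ is monotonically decreasing and $n \geq \opt{n}$, this yields $\vxs \in \spM_n \subseteq \spM_{\opt{n}}$, hence $\vxs \in \spM_{\opt{n}} \cap \bar{\spR} \subseteq \spS_{\opt{n}} \subseteq \spS_n$ by the containment above and monotonicity of the safe set. The same membership $\vxs \in \spM_{\opt{n}} \cap \bar{\spR}$ gives $w_{\opt{n},f}(\vxs) \leq \epsilon$, and since the confidence intervals are nested (\cref{eq:confidence_interval}) their widths are non-increasing, so $w_{n,f}(\vxs) \leq \epsilon$. Chaining the calibration bounds, $f^*_{\epsilon,\bar{\beta}} = \opt{f}(\vxs) \leq u_{n,f}(\vxs) \leq l_{n,f}(\vxs) + \epsilon \leq \max_{\vxp \in \spS_n} l_{n,f}(\vxp) + \epsilon = l_{n,f}(\vxhat_n) + \epsilon \leq \opt{f}(\vxhat_n) + \epsilon$, where the middle step uses $\vxs \in \spS_n$; rearranging gives $r_n(\bar{\spR}) \leq \epsilon$.

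The main obstacle is establishing that the reachable safe optimum $\vxs$ actually enters the pessimistic safe set $\spS_{\opt{n}}$: \cref{lem:convergence_to_reachable_safe_region_2} only certifies points of $\bar{\spR}$ that also lie in the target space $\spM_{\opt{n}}$, and $\vxs$ — defined purely through $\opt{f}$ and the slacked reachability operator — is not a priori a potential maximizer, nor is the algorithm's incumbent $\vxhat_n$ guaranteed to lie in $\bar{\spR}$. The case split is what resolves this: in the only nontrivial case the strict sub-optimality of the incumbent forces $\vxs$ to satisfy the potential-maximizer inequality, and the monotonicity of $\{\spM_n\}$ transports that membership back to time $\opt{n}$ so that the containment and width bounds at $\opt{n}$ become applicable. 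The remaining care is purely bookkeeping: ensuring calibration, the containment, and the width bound all hold on one $1-\delta$ event, and using monotonicity of both widths and safe sets to pass from the fixed horizon $\opt{n}$ to arbitrary $n \geq \opt{n}$.
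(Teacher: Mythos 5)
Your proof is correct and takes essentially the same route as the paper's: both instantiate \cref{lem:convergence_to_reachable_safe_region} with the potential maximizers $\spM_n$ as target space, use its containment part to place the reachable safe optimum $\vxs$ in the pessimistic safe set and its width-convergence parts to get $w_{\opt{n},f}(\vxs) \leq \epsilon$, and then chain the calibration bounds into the regret bound. Your explicit case split on $\opt{f}(\vxhat_n) < f^*_{\epsilon,\bar{\beta}_{\opt{n}}}$ is exactly the paper's w.l.o.g.\ assumption that $\vxs \in \spM_{\opt{n}}$ (handled there in a footnote), and your per-$n$ argument is a slightly more careful rendering of the paper's closing appeal to monotonicity of the confidence bounds when passing from $\opt{n}$ to all $n \geq \opt{n}$.
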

\begin{proof}

  Fix any $\vx^* \in \spX^*_{\epsilon,\bar{\beta}_{\opt{n}}} \subseteq \bar{\spR}_{\epsilon,\bar{\beta}_{\opt{n}}}$.
  Assume w.l.o.g. that ${\vx^* \in \spM_{\opt{n}}}$.\footnote{Otherwise, with high probability, ${\opt{f}(\widehat{\vx}_{\opt{n}}) > f^*_{\epsilon,\bar{\beta}_{\opt{n}}}}$.}
  Then, with high probability, \begin{align*}
    f^*_{\epsilon,\bar{\beta}_{\opt{n}}} = \opt{f}(\vx^*) &\leq u_{\opt{n},f}(\vx^*) \\
    &= l_{\opt{n},f}(\vx^*) + w_{\opt{n},f}(\vx^*) \\
    \eleq{1} l_{\opt{n},f}(\widehat{\vx}_{\opt{n}}) + w_{\opt{n},f}(\vx^*) \\
    &\leq \opt{f}(\widehat{\vx}_{\opt{n}}) + w_{\opt{n},f}(\vx^*) \\
    \eleq{2} \opt{f}(\widehat{\vx}_{\opt{n}}) + \epsilon
  \end{align*} where \e{1} follows from the definition of $\widehat{\vx}_n$; and \e{2} follows from \cref{lem:convergence_to_reachable_safe_region} and noting that $\vx^* \in \spM_{\opt{n}} \cap \bar{\spR}_{\epsilon,\bar{\beta}_{\opt{n}}}$.

  We have shown that ${\opt{f}(\widehat{\vx}_{\opt{n}}) \geq f^*_{\epsilon,\bar{\beta}_{\opt{n}}} - \epsilon}$, which implies $r_{\opt{n}}(\bar{\spR}_{\epsilon,\bar{\beta}_{\opt{n}}}) \leq \epsilon$.
  Since the upper- and lower-confidence bounds are monotonically decreasing / increasing, respectively, we have that for all $n \geq \opt{n}$, $r_n(\bar{\spR}_{\epsilon,\bar{\beta}_{\opt{n}}}) \leq \epsilon$.
\end{proof}

\subsection{Useful Facts and Inequalities}\label{sec:proofs:useful_facts}

We denote by $\preceq$ the Loewner partial ordering of symmetric matrices.

\begin{lemma}\label{lem:qf_upper_bound_}
  Let $\mA \in \R^{n \times n}$ be a positive definite matrix with diagonal $\mD$.
  Then, $\mA \preceq n \mD$.
\end{lemma}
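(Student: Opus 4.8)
The plan is to establish the Loewner inequality $\mA \preceq n \mD$ directly from its definition, namely by showing that $\transpose{\vv} \mA \vv \leq n \, \transpose{\vv} \mD \vv$ holds for every $\vv \in \R^n$. Here $\mD = \diag{a_{11}, \dots, a_{nn}}$ collects the diagonal entries $a_{ii}$ of $\mA$, each of which is positive since $\mA$ is positive definite. The one structural fact I would invoke is that positive definiteness forces the entrywise bound $|a_{ij}| \leq \sqrt{a_{ii} a_{jj}}$ for all $i, j$, with equality when $i = j$.

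First I would introduce the auxiliary quantities $w_i \defeq \sqrt{a_{ii}} \, |v_i|$, so that $\transpose{\vv} \mD \vv = \sum_i a_{ii} v_i^2 = \sum_i w_i^2$. Bounding the quadratic form entrywise and then factoring the resulting double sum gives
\begin{align*}
  \transpose{\vv} \mA \vv = \sum_{i,j} a_{ij} v_i v_j \leq \sum_{i,j} \sqrt{a_{ii} a_{jj}} \, |v_i| \, |v_j| = \parentheses*{\sum_i w_i}^2.
\end{align*}
I would then apply the Cauchy--Schwarz inequality to the vector $(w_1, \dots, w_n)$ paired with the all-ones vector, yielding $\parentheses*{\sum_i w_i}^2 \leq n \sum_i w_i^2 = n \, \transpose{\vv} \mD \vv$. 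Chaining these two bounds establishes $\transpose{\vv} \mA \vv \leq n \, \transpose{\vv} \mD \vv$ for arbitrary $\vv$, which is exactly the claim $\mA \preceq n \mD$.

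I do not anticipate any real obstacle, as each step is an elementary and standard inequality. The only detail warranting care is the entrywise bound $|a_{ij}| \leq \sqrt{a_{ii} a_{jj}}$; this follows because the $2 \times 2$ principal submatrix of $\mA$ indexed by $i$ and $j$ inherits positive semidefiniteness, so that its determinant $a_{ii} a_{jj} - a_{ij}^2$ is nonnegative. I note in passing that positive semidefiniteness of $\mA$ already suffices for the inequality itself; the full positive definiteness assumed in the statement is only needed to ensure $\mD \succ 0$.
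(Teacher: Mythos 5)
Your proof is correct, and it takes a genuinely different route from the paper's. The paper argues via congruence: it writes $\mA = \msqrt{\mD}\,\mQ\,\msqrt{\mD}$ with $\mQ = \mD^{-\nicefrac{1}{2}} \mA \mD^{-\nicefrac{1}{2}}$, notes that $\mQ$ is positive definite with unit diagonal so that $\sum_i \lambda_i(\mQ) = \tr{\mQ} = n$, concludes that every eigenvalue of $\mQ$ is at most $n$, hence $n\mI - \mQ \succeq \mzero$, and transfers this back through the congruence $n\mD - \mA = \mD^{\nicefrac{1}{2}}(n\mI - \mQ)\mD^{\nicefrac{1}{2}}$. You instead bound the quadratic form directly: the entrywise estimate $|a_{ij}| \leq \sqrt{a_{ii}a_{jj}}$ (from $2\times 2$ principal minors) gives $\transpose{\vv}\mA\vv \leq \parentheses*{\sum_i w_i}^2$ with $w_i = \sqrt{a_{ii}}\,|v_i|$, and Cauchy--Schwarz against the all-ones vector finishes. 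The trade-offs: your argument is more elementary (no spectral theory, no matrix square roots) and, as you observe, works verbatim under mere positive semidefiniteness, where the paper's proof would need care since $\mD^{-\nicefrac{1}{2}}$ may not exist when a diagonal entry vanishes; the paper's argument is shorter once one is comfortable with congruence preserving the Loewner order, and it isolates the structural reason for the factor $n$ --- a positive definite matrix with unit diagonal has trace $n$, so no eigenvalue can exceed $n$ --- which also makes clear when the bound is nearly tight (one dominant eigenvalue, e.g.\ a nearly rank-one $\mA$).
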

\begin{proof}
  Equivalently, one can show $n \mD - \mA \succeq \mzero$.
  We write $\mA \eqdef \msqrt{\mD}\mQ\msqrt{\mD}$, and thus, $\mQ = \mD^{-\nicefrac{1}{2}} \mA \mD^{-\nicefrac{1}{2}}$ is a positive definite symmetric matrix with all diagonal elements equal to $1$.
  It remains to show that \begin{align*}
    n \mD - \mA = \mD^{\nicefrac{1}{2}} (n \mI - \mQ) \mD^{\nicefrac{1}{2}} \succeq \mzero.
  \end{align*}
  Note that $\sum_{i=1}^n \lambda_i(\mQ) = \tr{\mQ} = n$, and hence, all eigenvalues of $\mQ$ belong to $(0,n)$.
\end{proof}

\begin{lemma}\label{lem:difference_bound_by_log}
  If $a, b \in (0,M]$ for some $M > 0$ and $b \geq a$ then \begin{align}
    b - a \leq M \cdot \log\parentheses*{\frac{b}{a}}.
  \end{align}
  If additionally, $a \geq M'$ for some $M' > 0$ then \begin{align}
    b - a \geq M' \cdot \log\parentheses*{\frac{b}{a}}.
  \end{align}
\end{lemma}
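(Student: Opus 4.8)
The plan is to reduce both inequalities to the two elementary bounds on the logarithm, $1 - 1/x \le \log x \le x - 1$, valid for all $x > 0$. I write $x = b/a$, which is well-defined and satisfies $x \ge 1$ because $0 < a \le b$; the boundary case $a = b$ is trivial since then both sides of each inequality vanish. The two auxiliary bounds on $\log$ follow from concavity of the logarithm, or equivalently from the observation that $x \mapsto x - 1 - \log x$ attains its global minimum value $0$ at $x = 1$. The entire argument is then a matter of multiplying these bounds by the appropriate constant and using the hypotheses $a \le M$ and $a \ge M'$ to control the resulting prefactors.

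For the first inequality, I would apply the lower bound $\log x \ge 1 - 1/x$ at $x = b/a$, obtaining $\log(b/a) \ge 1 - a/b = (b-a)/b$. Multiplying through by $M$ gives $M \log(b/a) \ge \frac{M}{b}(b-a)$. Since $b \le M$ yields $M/b \ge 1$, and since $b - a \ge 0$, the right-hand side is at least $b - a$, which is exactly the desired bound $b - a \le M \log(b/a)$.

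For the second inequality, I would instead use the upper bound $\log x \le x - 1$ at $x = b/a$, giving $\log(b/a) \le b/a - 1 = (b-a)/a$. Multiplying by $M'$ yields $M' \log(b/a) \le \frac{M'}{a}(b-a)$. Now the additional hypothesis $a \ge M'$ gives $M'/a \le 1$, and again $b - a \ge 0$, so the right-hand side is at most $b - a$, establishing $b - a \ge M' \log(b/a)$.

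There is essentially no serious obstacle here; this is a routine elementary estimate. The only points requiring care are that $a, b > 0$ ensures $b/a$ is positive so the logarithm inequalities apply, and that the ordering $b \ge a$ (hence $b - a \ge 0$) is precisely what lets me preserve the direction of the inequality when I rescale by the factors $M/b \ge 1$ and $M'/a \le 1$. Both steps are symmetric in structure, differing only in which of the two sandwiching bounds on $\log$ is invoked and which of the constraints $a \le M$ or $M' \le a$ is used.
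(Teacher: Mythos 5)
Your proof is correct. It takes a mildly different route from the paper's: the paper applies the mean value theorem to $f(x) = \log x$ on $(a,b)$ to obtain the single identity $b - a = c \cdot \log(b/a)$ for some $c \in (a,b)$, and then both inequalities drop out at once by bounding $c < b \leq M$ from above and $c > a \geq M'$ from below. You instead invoke the two standard sandwich bounds $1 - 1/x \leq \log x \leq x - 1$ at $x = b/a$, using one bound per inequality and then rescaling by $M/b \geq 1$ and $M'/a \leq 1$ respectively. The approaches are close in spirit — both amount to controlling the slope of $\log$ on $[a,b]$ — but yours is slightly more elementary (the sandwich bounds follow from convexity alone, no mean value theorem needed) and, unlike the paper's argument, it explicitly covers the degenerate case $a = b$, where MVT is vacuous and the paper's strict inequalities silently become equalities; the paper's version, in exchange, produces both bounds from a single identity and yields strict inequalities whenever $b > a$.
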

\begin{proof}
  Let $f(x) \defeq \log x$.
  By the mean value theorem, there exists $c \in (a,b)$ such that \begin{align*}
    \frac{1}{c} = f'(c) = \frac{f(b) - f(a)}{b - a} = \frac{\log b - \log a}{b - a} = \frac{\log(\frac{b}{a})}{b - a}.
  \end{align*}
  Thus, \begin{align*}
    b - a = c \cdot \log\parentheses*{\frac{b}{a}} < M \cdot \log\parentheses*{\frac{b}{a}}.
  \end{align*}
  Under the additional condition that $a \geq M'$, we obtain \begin{align*}
    b - a = c \cdot \log\parentheses*{\frac{b}{a}} > M' \cdot \log\parentheses*{\frac{b}{a}}.
  \end{align*}
\end{proof}

\section{Interpretations \& Approximations of Principle (\pref)}\label{sec:interpretations_approximations}

We give a brief overview of interpretations and approximations of \itl, as well as alternative decision rules adhering to the fundamental principle (\pref).\looseness=-1

The discussed interpretations of (\pref) differ mainly in how they quantify the ``uncertainty'' about~$\spA$.
In the GP setting, this ``uncertainty'' is captured by the covariance matrix~$\mSigma$ of~$\vfsub{\spA}$, and we consider two main ways of ``scalarizing''~$\mSigma$: \begin{enumerate}
    \item the total (marginal) variance $\tr{\mSigma}$, and
    \item the ``generalized variance'' $\det{\mSigma}$.
\end{enumerate}

The generalized variance --- which was originally suggested by \cite{wilks1932certain} as a generalization of variance to multiple dimensions --- takes into account correlations.
In contrast, the total variance discards all correlations between points in~$\spA$.\looseness=-1

All discussed decision rules following principle (\pref) (i.e., \itl, \vtl, \mmitl) differ only in their weighting of the points in~$\spA$, and they coincide when $\abs{\spA} = 1$.\looseness=-1

\subsection{Interpretations of \itl}

We briefly discuss three interpretations of \itl.

\paragraph{Minimizing generalized variance}

In the GP setting, \itl can be equivalently characterized as minimizing generalized posterior variance: \begin{align}
    \vx_n &= \argmax_{\vx \in \spS} \I{\vfsub{\spA}}{y_{\vx}}[\spD_n] \nonumber \\
    &= \argmax_{\vx \in \spS} \frac{1}{2} \log\parentheses*{\frac{\det{\Var{\vfsub{\spA} \mid \spD_{n-1}}}}{\det{\Var{\vfsub{\spA} \mid \spD_{n-1}, y_{\vx}}}}} \nonumber \\
    &= \argmin_{\vx \in \spS} \det{\Var{\vfsub{\spA} \mid \spD_{n-1}, y_{\vx}}}. \label{eq:itl_as_min_gen_var}
\end{align}

\paragraph{Maximizing relevance and minimizing redundancy}

An alternative interpretation of \itl is \begin{align}
    \I{\vfsub{\spA}}{y_{\vx}}[\spD_n] = \underbrace{\I{\vfsub{\spA}}{y_{\vx}}}_{\text{relevance}} - \underbrace{\I{\vfsub{\spA}}{y_{\vx} ; \spD_n}}_{\text{redundancy}}
\end{align} where $\I{\vfsub{\spA}}{y_{\vx} ; \spD_n} = \I{\vfsub{\spA}}{y_{\vx}} - \I{\vfsub{\spA}}{y_{\vx}}[\spD_n]$ denotes the \emph{multivariate information gain} (cf.~\cref{sec:definitions}).
In this way, \itl can be seen as maximizing observation relevance while minimizing observation redundancy.
This interpretation is common in the literature on feature selection \citep{peng2005feature,vergara2014review,beraha2019feature}.\looseness=-1

\paragraph{Steepest descent in measure spaces}

\itl can be seen as performing steepest descent in the space of probability measures over $\vfsub{\spA}$, with the KL divergence as metric: \begin{align*}
    \I{\vfsub{\spA}}{y_{\vx}}[\spD_{n}] = \E[y_{\vx}]{\KL{p(\vfsub{\spA} \mid \spD_{n}, y_{\vx})}{p(\vfsub{\spA} \mid \spD_{n})}}.
\end{align*}
That is, \itl finds the observation yielding the ``largest update'' to the current density.\looseness=-1

\subsection{Interpretations of \vtl}\label{sec:interpretations_approximations:vtl}

Quantifying the uncertainty about $\vfsub{\spA}$ by the marginal variance of points in $\spA$ rather than entropy (or generalized variance), the principle (\pref) leads to \vtl.
Note that if $\abs{\spA} = 1$, then \vtl is equivalent to \itl.
Unlike the similar, but more technical, \textsc{TruVar} algorithm proposed by \cite{bogunovic2016truncated}, \vtl does not require truncated variances, and hence, \vtl can be applied to constrained settings (where $\spA \not\subseteq \spS$) as well.\looseness=-1

\paragraph{Relationship to \itl}

Note that the \itl criterion in the GP setting can be expressed as \begin{align}
    \vx_n = \argmin_{\vx \in \spS} \tr{\!\log \Var{\vfsub{\spA} \mid \spD_{n-1}, y_{\vx}}} \label{eq:tr_interpr_of_itl}
\end{align} where for a positive semi-definite matrix $\mA$ with spectral decomposition $\mA = \mV \mLambda \transpose{\mV}$ we write $\log \mA = \mV \log \mLambda \transpose{\mV}$ for the logarithmic matrix function.
To derive \cref{eq:tr_interpr_of_itl} we use that $\log \det{\mA} = \sum_i \log \lambda_i(\mA) = \tr{\!\log \mA}$.
Hence, \itl and \vtl are identical up to a different weighting of the eigenvalues of the posterior covariance matrix.\looseness=-1

\paragraph{Minimizing a bound to the approximation error}

\cite{chowdhury2017kernelized} (page 19) bound the approximation error $\abs{\opt{f}(\vx) - \mu_n(\vx)}$ by \begin{align*}
    \underbrace{\abs{\vk_t(\vx)^\top \inv{(\mK_t + \mP_t)} \vvarepsilon_{1:t}}}_{\text{variance}} + \underbrace{\abs{\opt{f}(\vx) - \vk_t(\vx)^\top \inv{(\mK_t + \mP_t)} \vf_{1:t}}}_{\text{bias}}
\end{align*} where $\vk_t(\vx) \defeq \mKsub{\vx \vx_{1:t}}$, $\mK_t \defeq \mKsub{\vx_{1:t} \vx_{1:t}}$, and $\mP_t \defeq \mPsub{\vx_{1:t}}$.
Similar to a standard bias-variance decomposition, the first term measures variance and the second term measures bias.
Following \cref{lem:confidence_intervals}, \vtl can be seen as greedily minimizing this bound to the approximation error (i.e., both bias and variance).

\paragraph{Maximizing correlation to prediction targets weighted by their variance}

It can be shown (see the proof below) that the \vtl decision rule is equivalent to \begin{align}
    \vx_n = \argmax_{\vx \in \spS} \sum_{\vxp \in \spA} \Var{f_{\vxp} \mid \spD_{n-1}} \cdot \Cor{f_{\vxp}, y_{\vx} \mid \spD_{n-1}}^2. \label{eq:vtl_vs_ctl}
\end{align}
That is, \vtl maximizes the squared correlation between the next observation and the prediction targets, weighted by their variance.
Intuitively, prediction targets are weighted by their variance since more can be learned about a prediction target with higher variance.
This is precisely what leads to the ``diverse'' sample selection, and is akin to ``uncertainty sampling'' among the prediction targets and then selecting the observation which is most correlated with the selected prediction target.\looseness=-1

\begin{proof}
    Starting with the \vtl objective, we have \begin{align*}
        \argmin_{\vx \in \spS} \sum_{\vxp \in \spA} \Var{f_{\vxp} \mid \spD_n, y_{\vx}} &= \argmin_{\vx \in \spS} \sum_{\vxp \in \spA} \parentheses*{\Var{f_{\vxp} \mid \spD_n} - \frac{\Cov{f_{\vxp}, y_{\vx} \mid \spD_n}^2}{\Var{y_{\vx} \mid \spD_n}}} \\
        &= \argmax_{\vx \in \spS} \sum_{\vxp \in \spA} \frac{\Var{f_{\vxp} \mid \spD_n} \cdot \Cov{f_{\vxp}, y_{\vx} \mid \spD_n}^2}{\Var{f_{\vxp} \mid \spD_n} \cdot \Var{y_{\vx} \mid \spD_n}} + \const \\
        &= \argmax_{\vx \in \spS} \sum_{\vxp \in \spA} \Var{f_{\vxp} \mid \spD_n} \cdot \Cor{f_{\vxp}, y_{\vx} \mid \spD_n}^2 + \const.
    \end{align*}
\end{proof}

\subsection{Mean Marginal \itl}\label{sec:interpretations_approximations:mm_itl}

\cite{mackay1992information} previously proposed ``mean-marginal'' \itl (\mmitl) in the setting where ${\spS = \spX}$, which selects \begin{align}
    \vx_n &= \argmax_{\vx \in \spS} \sum_{\vxp \in \spA} \I{f_{\vxp}}{y_{\vx}}[\spD_{n-1}]
    \intertext{and which simplifies in the GP setting to}
    \vx_n &= \argmax_{\vx \in \spS} \frac{1}{2} \sum_{\vxp \in \spA} \log\parentheses*{\frac{\Var{f_{\vxp} \mid \spD_{n-1}}}{\Var{f_{\vxp} \mid \spD_{n-1}, y_{\vx}}}} \nonumber \\
    &= \argmin_{\vx \in \spS} \sum_{\vxp \in \spA} \log \Var{f_{\vxp} \mid \spD_{n-1}, y_{\vx}} \nonumber \\
    &= \argmin_{\vx \in \spS} \tr{\!\log \diag{\Var{\vfsub{\spA} \mid \spD_{n-1}, y_{\vx}}}}. \label{eq:mm_itl_and_vtl_comp}
    \intertext{Analogously to the derivation of \cref{eq:tr_interpr_of_itl}, this can also be expressed as}
    \vx_n &= \argmin_{\vx \in \spS} \det{\diag{\Var{\vfsub{\spA} \mid \spD_{n-1}, y_{\vx}}}}.
\end{align}

Effectively, \mmitl ignores the mutual interaction between points in $\spA$.
As can be seen from \cref{eq:mm_itl_and_vtl_comp} and as is also mentioned by \cite{mackay1992information}, \mmitl is equivalent to \vtl up to a different weighting of the points in $\spA$: instead of minimizing the average posterior variance (as in \vtl), \mmitl minimizes the average posterior log-variance.
Under the lens of principle (\pref), this can be seen as minimizing the average marginal entropy of predictions within the target space: \begin{align*}
    \vx_n &= \argmin_{\vx \in \spS} \sum_{\vxp \in \spA} \H{f_{\vxp}}[\spD_{n-1}, y_{\vx}].
\end{align*}

We remark that \mmitl is a special case of EPIG \citep[][Appendix E.2]{smith2023prediction}.\looseness=-1

\paragraph{Not a generalization of uncertainty sampling}

Unlike \itl, \mmitl is not a generalization of uncertainty sampling.
The reason is precisely that \mmitl ignores the mutual interaction between points in $\spA$.
Consider the example where ${\spX = \spS = \spA = \{1, \dots, 10\}}$ where $\vfsub{1:9}$ are highly correlated while $f_{10}$ is mostly independent of the other points.
Visually, imagine a smooth function (i.e., under a Gaussian kernel) with points $1$ through $9$ close to each other and point $10$ far away.
Further, suppose that point $10$ has a slightly larger marginal variance than the others.
Then, \mmitl would select one of the points $1:9$ since this leads to the largest reduction in the marginal (log-)variances (i.e., to a small posterior ``uncertainty'').\footnote{This is because the observation reduces uncertainty not just about the observed point itself.}
In contrast, \itl selects the point with the largest prior marginal variance (cf. \cref{sec:proofs:undirected_itl}), point $10$, since this leads to the largest reduction in entropy.\footnote{Because points $\vfsub{1:9}$ are highly correlated, $\H{\vfsub{1:9}}$ is already ``small''.}\looseness=-1

\paragraph{Similarity to \vtl}

Observe that the following decision rule is equivalent to \vtl: \begin{align*}
    \vx_n = \argmax_{\vx \in \spS} \tr{\Var{\vfsub{\spA} \mid \spD_{n-1}}} - \tr{\Varsm{\vfsub{\spA} \mid \spD_{n-1}, y_{\vx}}}.
\end{align*}
By \cref{lem:difference_bound_by_log}, for any $\vx \in \spS$, this objective value can be tightly lower- and upper-bounded (up to constant-factors) by \looseness=-1 \begin{align}
    &\sum_{\vxp \in \spA} \log\parentheses*{\frac{\Var{f_{\vxp} \mid \spD_{n-1}}}{\Var{f_{\vxp} \mid \spD_{n-1}, y_{\vx}}}} \nonumber \\
    &= 2 \sum_{\vxp \in \spA} \I{f_{\vxp}}{y_{\vx}}[\spD_{n-1}] \nonumber \tag{see \mmitl} \\
    \eeq{1} - \sum_{\vxp \in \spA} \log\parentheses*{1 - \Cor{f_{\vxp}, y_{\vx} \mid \spD_{n-1}}^2} \label{eq:mmitl_vs_ctl}
\end{align} where \e{1} is detailed in example 8.5.1 of \cite{cover1999elements}.
Thus, \vtl and \mmitl are closely related.

\paragraph{Experiments}

In our experiments with Gaussian processes from \cref{fig:gps:directed_advantage,fig:gps:appendix}, we observe that \mmitl performs similarly to \vtl and \ctl.\looseness=-1

\paragraph{Convergence of uncertainty}

We derive a convergence guarantee for \mmitl which is analogous to the guarantees for \itl from \cref{thm:objective_convergence_itl} and for \vtl from \cref{thm:objective_convergence_vtl}.
We will assume for simplicity that $\Gamma_n$ is monotonically decreasing in $n$ (i.e., $\alpha_n \leq 1$).\looseness=-1

\begin{theorem}[Convergence of uncertainty reduction of \mmitl]\label{lem:mm_itl_reduction_in_uncertainty}
    Assume that \cref{asm:bayesian_prior,asm:bayesian_noise} are satisfied.
    Then for any $n \geq 1$, if ${\Gamma_{0} \geq \cdots \geq \Gamma_{n-1}}$ and the sequence $\{\vx_i\}_{i=1}^n$ is generated by \mmitl, then \begin{align}
    \Gamma_{n-1} &\leq \frac{1}{n} \sum_{\vxp \in \spA} \gamma_n(\{\vxp\}; \spS).
  \end{align}
\end{theorem}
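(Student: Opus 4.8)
The plan is to mirror the proof of \cref{thm:objective_convergence_vtl}, specializing the argument to the marginal gain of \mmitl. The \mmitl decision rule maximizes $\sum_{\vxp \in \spA} \I{f_{\vxp}}{y_{\vx}}[\spD_{n-1}]$, which corresponds to the non-adaptive objective $\psi_{\spA}(X) = \sum_{\vxp \in \spA} \I{f_{\vxp}}{\vysub{X}}$ with marginal gain $\Delta_{\spA}(\vx \mid X) = \sum_{\vxp \in \spA} \I{f_{\vxp}}{y_{\vx}}[\vysub{X}]$. Hence the first step is to record that $\Gamma_i = \max_{\vx \in \spS} \sum_{\vxp \in \spA} \I{f_{\vxp}}{y_{\vx}}[\spD_i] = \sum_{\vxp \in \spA} \I{f_{\vxp}}{y_{\vx_{i+1}}}[\spD_i]$, where the second equality uses the \mmitl selection of $\vx_{i+1}$ together with the fact that the posterior variance of a Gaussian depends only on the \emph{location} of the observations (so that conditioning on $\spD_i$ may be replaced by conditioning on $\vysub{\vx_{1:i}}$).

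First I would invoke the monotonicity hypothesis $\Gamma_0 \geq \cdots \geq \Gamma_{n-1}$ (equivalently $\alpha_n \leq 1$) to form the telescoping average
\begin{align*}
    \Gamma_{n-1} = \frac{1}{n} \sum_{i=0}^{n-1} \Gamma_{n-1} \leq \frac{1}{n} \sum_{i=0}^{n-1} \Gamma_i = \frac{1}{n} \sum_{i=0}^{n-1} \sum_{\vxp \in \spA} \I{f_{\vxp}}{y_{\vx_{i+1}}}[\vysub{\vx_{1:i}}].
\end{align*}
Next I would exchange the order of summation and apply the chain rule of mutual information separately for each target $\vxp \in \spA$,
\begin{align*}
    \sum_{i=0}^{n-1} \I{f_{\vxp}}{y_{\vx_{i+1}}}[\vysub{\vx_{1:i}}] = \I{f_{\vxp}}{\vysub{\vx_{1:n}}}.
\end{align*}
Finally, bounding each term by the information capacity via $\I{f_{\vxp}}{\vysub{\vx_{1:n}}} \leq \max_{X \subseteq \spS,\, \abs{X} = n} \I{f_{\vxp}}{\vysub{X}} = \gamma_n(\{\vxp\}; \spS)$, which holds because $\vx_{1:n} \subseteq \spS$, would assemble the claimed bound $\Gamma_{n-1} \leq \frac{1}{n} \sum_{\vxp \in \spA} \gamma_n(\{\vxp\}; \spS)$.

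The argument is essentially routine once the marginal gain of \mmitl is identified, and it is in fact cleaner than the \vtl case: because \mmitl aggregates information gains directly rather than variance reductions, no comparison of variance differences to log-ratios (and hence no $2\sigma^2$ factor from \cref{lem:difference_bound_by_log}) is needed. The one point requiring care is that the per-target chain rule is applied to the single sequence $\vx_{1:n}$ actually selected by \mmitl, even though \mmitl optimizes the \emph{sum} $\sum_{\vxp \in \spA}$ and not each marginal information gain individually; this causes no difficulty because the telescoping identity holds for any fixed sequence of sample locations, and the $\gamma_n(\{\vxp\}; \spS)$ bound only requires $\vx_{1:n} \subseteq \spS$. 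The monotonicity hypothesis $\Gamma_0 \geq \cdots \geq \Gamma_{n-1}$ is precisely what lets us replace the task-complexity factor $\alpha_n$ by $1$ in the averaging step, exactly as in the opening of \cref{thm:objective_convergence_vtl}.
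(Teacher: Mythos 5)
Your proof is correct and follows essentially the same route as the paper's: average $\Gamma_{n-1}$ over rounds using the monotonicity hypothesis, substitute the \mmitl decision rule, swap $\spD_i$ for conditioning on observation locations via the Gaussian realization-independence property, telescope per target with the chain rule of mutual information, and bound by $\gamma_n(\{\vxp\}; \spS)$. Your closing remark that no \cref{lem:difference_bound_by_log}-style variance-to-log comparison (and hence no $2\sigma^2$ factor) is needed, unlike for \vtl, is also accurate.
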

\begin{proof}
    We have \begin{align*}
        \Gamma_{n-1} &= \frac{1}{n} \sum_{i=0}^{n-1} \Gamma_{n-1} \\
        \eleq{1} \frac{1}{n} \sum_{i=0}^{n-1} \Gamma_i \\
        &= \frac{1}{n} \sum_{i=0}^{n-1} \max_{\vx \in \spS} \sum_{\vxp \in \spA} \Ism{f_{\vxp}}{y_{\vx}}[\spD_{n}] \\
        \eeq{2} \frac{1}{n} \sum_{i=0}^{n-1} \sum_{\vxp \in \spA} \Ism{f_{\vxp}}{y_{\vx_{n+1}}}[\spD_{n}] \\
        \eeq{3} \frac{1}{n} \sum_{\vxp \in \spA} \sum_{i=0}^{n-1} \Ism{f_{\vxp}}{y_{\vx_{n+1}}}[\vysub{\vx_{1:n}}] \\
        \eeq{4} \frac{1}{n} \sum_{\vxp \in \spA} \I{f_{\vxp}}{\vysub{\vx_{1:n}}} \\
        &\leq \frac{1}{n} \sum_{\vxp \in \spA} \max_{\substack{X \subseteq \spS \\ \abs{X} = n}} \I{f_{\vxp}}{\vysub{X}} \\
        &= \frac{1}{n} \sum_{\vxp \in \spA} \gamma_n(\{\vxp\}; \spS)
    \end{align*} where \e{1} follows by assumption; \e{2} follows from the \mmitl decision rule; \e{3} uses that the posterior variance of Gaussians is independent of the realization and only depends on the \emph{location} of observations; and \e{4} uses the chain rule of mutual information.
    The remainder of the proof is analogous to the proof of \cref{thm:objective_convergence_itl} (cf.~\cref{sec:proofs:objective_convergence}).
\end{proof}

Noting that \begin{align*}
    \I{f_{\vxp}}{y_{\vx}}[\spD_{n-1}] \leq \sum_{\vxp \in \spA} \Ism{f_{\vxp}}{y_{\vx}}[\spD_{n-1}]
\end{align*} for any $n \geq 1$, $\vx \in \spX$, and $\vxp \in \spA$, \cref{thm:variance_convergence} can be readily rederived for \mmitl (cf. \cref{lem:convergence_within_S,lem:convergence_helper2}).
Hence, the posterior marginal variances of \mmitl can be bounded uniformly in terms of $\Gamma_n$ analogously to \itl.\looseness=-1

\subsection{Correlation-based Transductive Learning}\label{sec:interpretations_approximations:ctl}

We will briefly look at the \ctl (\emph{\textbf{C}orrelation-based \textbf{TL}}) decision rule \begin{align}
    \vx_{n} = \argmax_{\vx \in \spS} \sum_{\vxp \in \spA} \Cor{f_{\vx},f_{\vxp} \mid \spD_{n-1}}
\end{align} which permits no interpretation under principle (\pref).
However, if all correlations are non-negative (such as for the standard Gaussian and Matérn kernels), \ctl is closely related to \itl, \vtl, and \mmitl~(cf.~\cref{eq:vtl_vs_ctl,eq:mmitl_vs_ctl}).
In this case, if $\abs{\spA} = 1$, then all decision rules coincide.\looseness=-1

If, on the other hand, correlations may be negative then there is a crucial difference between \ctl and the decision rules motivated from principle (\pref).
Namely, decision rules following (\pref) exhibit a preference for points with high \emph{absolute} correlation to prediction targets as opposed to \ctl which prefers points with high \emph{positive} correlation.
This stems from the intuitive fact that points with a strong negative correlation are equally informative as points with a strong positive correlation.
Nevertheless, we observe in our experiments that (even for a linear kernel which does not ensure non-negative correlations) points selected by \itl and \vtl are typically positively correlated with prediction targets.\looseness=-1

\subsection{Summary}

We have seen that \itl, \vtl, and \mmitl can be seen as different interpretations of the same fundamental principle (\pref), with the approximations \ctl.
If ${\abs{\spA} = 1}$ and correlations are non-negative, then all four decision rules are equivalent.
\ctl prefers points with high positive correlation whereas the other decision rules prefer points with high absolute correlation.
\itl is the only decision rule that takes into account the mutual dependence between points in $\spA$, and \vtl and \mmitl differ only in their weighting of the posterior marginal variances of points in $\spA$.\looseness=-1

\section{Stochastic Target Spaces}\label{sec:generalizations:roi}

When the target space $\spA$ is large, it may be computationally infeasible to compute the exact objective.
A natural approach to address this issue is to approximate the target space by a smaller set of size $K$.\looseness=-1

\paragraph{Discretizing the target space}

One possibility is to discretize the target space~$\spA$.
Compact target spaces can be addressed, e.g., via discretization arguments which are common in the Bayesian optimization literature (see, e.g., appendix C.1 of \cite{srinivas2009gaussian}).
That is, if the target space can be covered approximately using a finite (possibly large) set of points, the guarantees of \cref{thm:variance_convergence} extend directly.
This, however, can be impractical when the required size of discretization for sufficiently small approximation error is large.
In the following, we briefly discuss a natural alternative approach based on sampling points from $\spA$.

\paragraph{Target distributions}

Let $\spA \subseteq \spX$ be a (possibly continuous) target space, and let $\spPA$ be a probability distribution supported on $\spA$.
In iteration $n$, a subset $A_n$ of $K$ points is sampled independently from $\spA$ according to the distribution~$\spPA$ and the objective is computed on this subset.
Formally, this amounts to a single-sample Monte Carlo approximation of \begin{align}
  \vx_n \in \argmax_{\vx \in \spS} \E[A \iid \spPA]{\I{\vfsub{A}}{y_{\vx}}[\spD_{n-1}]}.
\end{align}

The convergence guarantees from \cref{sec:proofs} can be generalized to the setting of stochastic target spaces by estimating how often points ``near'' a specified prediction target $\vx \in \spA$ are sampled.

\begin{definition}[$\gamma$-ball at $\vx$]
  Given $\vx \in \spA$ and any $\gamma \geq 0$, we call the set \begin{align*}
    B_\gamma(\vx) \defeq \{\vxp \in \spX \mid \norm{\vx - \vxp} \leq \gamma\}
  \end{align*} the $\gamma$-ball at $\vx$.
  Further, we call $\spPA(B_\gamma(\vx))$ the weight of that ball.\looseness=-1
\end{definition}

\begin{proposition}[sketch]
  Given any $n \geq 1, K \geq 1, \gamma > 0$, and ${\vx \in \spA}$, suppose that $B_\gamma(\vx)$ has weight $p > 0$.
  Assume that the \itl objective is $L_I$-Lipschitz continuous.
  Then, with probability at least ${1-\exp(-(1-p) n / (8 K))}$, \begin{align*}
      \sigma_n^2(\vx) \lesssim \eta_{\spS}^2(\vx) + C L_I \gamma \frac{\gamma_{k(n)}}{\sqrt{k(n)}}
    \end{align*} where $k(n) \defeq K p n / 2$.
\end{proposition}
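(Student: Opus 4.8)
The plan is to reduce the claim to the deterministic analysis behind \cref{thm:variance_convergence}, with two modifications: the effective horizon becomes the (random) number of times the vicinity of $\vx$ is targeted, and a Lipschitz argument converts ``information about a sampled point near $\vx$'' into ``information about $\vx$.'' First I would run the Markov-boundary machinery of \cref{sec:proofs:variance_convergence} with the singleton target $\{\vx\}$ in place of $\spA$: by \cref{lem:approx_markov_boundary} there is an $\epsilon$-approximate Markov boundary $B_{n,\epsilon}(\vx)$ of size at most $b_\epsilon$, and the single-target versions of \cref{lem:convergence_helper1,lem:convergence_helper2} give
\begin{align*}
  \sigma_n^2(\vx) \leq \eta_{\spS}^2(\vx) + \epsilon + \frac{2\sigma^2 b_\epsilon}{\kappabar_n(B_{n,\epsilon}(\vx))}\,\Gamma_n^{\vx}, \qquad \Gamma_n^{\vx} \defeq \max_{\vxp \in \spS} \Ism{f_{\vx}}{y_{\vxp}}[\spD_n],
\end{align*}
where $\kappabar_n(\cdot) > 0$ by \cref{lem:information_ratio_lower_bound}. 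This isolates the only quantity that must be controlled in the stochastic setting, namely the localized marginal gain $\Gamma_n^{\vx}$; everything else is identical to the proof of \cref{thm:variance_convergence}.

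Second, I would quantify how often the vicinity of $\vx$ is sampled. Each of the $Kn$ points drawn over the first $n$ rounds lands in $B_\gamma(\vx)$ independently with probability $p$, so the number of in-ball draws is $\mathrm{Binomial}(Kn,p)$ with mean $Kpn$; a multiplicative Chernoff bound on this count then yields the stated high-probability guarantee that at least $k(n) = Kpn/2$ of these draws lie in $B_\gamma(\vx)$. On this event there are at least $k(n)$ rounds $i$ in which the sampled target set $A_i$ contains some point $\vxp_i \in B_\gamma(\vx)$.

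Third --- and this is the crux --- I would transfer the learning driven by the random target sets to learning about $\vx$. On a ``good'' round ($\vxp_i \in A_i \cap B_\gamma(\vx)$), monotonicity of mutual information in its first argument gives $\max_{\vxp}\Ism{f_{A_i}}{y_{\vxp}}[\spD_i] \geq \max_{\vxp}\Ism{f_{\vxp_i}}{y_{\vxp}}[\spD_i]$, and $L_I$-Lipschitz continuity of the \itl objective in the target location replaces $\vxp_i$ by $\vx$ at a cost controlled by $\gamma$. Averaging the achieved objectives over the $k(n)$ good rounds and telescoping via the chain rule as in \cref{thm:objective_convergence_itl} --- now with effective horizon $k(n)$ --- yields a bound of the form $\Gamma_n^{\vx} \lesssim L_I \gamma\, \gamma_{k(n)}/k(n)$. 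Balancing the Markov-boundary slack against the boundary size exactly as in the proof of \cref{thm:variance_convergence} (taking $\epsilon \propto \gamma_{\sqrt{k(n)}}/\sqrt{k(n)}$ so that $b_\epsilon \lesssim \sqrt{k(n)}$) then converts $b_\epsilon \Gamma_n^{\vx}$ into the stated rate $C L_I \gamma\, \gamma_{k(n)}/\sqrt{k(n)}$.

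The main obstacle is precisely this transfer step: because the target sets $A_i$ are random and vary across rounds, the clean telescoping of \cref{thm:objective_convergence_itl} --- which relies on the algorithm exactly maximizing the gain for a \emph{fixed}, monotonically shrinking target --- does not apply verbatim. The delicate point is that $\vx_{i+1}$ maximizes the gain about the whole sampled set $A_i$ rather than about $\vx$, so the Lipschitz comparison and the restriction to good rounds must be combined with the single-sample Monte Carlo nature of the objective; making the ``effective horizon $k(n)$'' argument rigorous (e.g. by restricting the averaging to the good-round subsequence and dominating the vicinity target by $A_i$ through monotonicity) is where the real work lies.
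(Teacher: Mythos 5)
Your proposal takes essentially the same route as the paper's proof sketch: a Chernoff bound guaranteeing roughly $k(n)=Kpn/2$ rounds whose sampled target set intersects $B_\gamma(\vx)$, a Lipschitz swap of the in-ball point for $\vx$ itself (the paper forms $A' = (A\setminus\{\vx_\gamma\})\cup\{\vx\}$ and uses $\Ism{\vfsub{A'}}{y_{\vx}}[\spD_{n-1}] \leq L_I\gamma\,\Ism{\vfsub{A}}{y_{\vx}}[\spD_{n-1}]$), followed by a reduction to the deterministic analysis of \cref{thm:variance_convergence}. The only substantive difference is that the paper counts good \emph{rounds} directly, via indicators $X_i = \Ind{B_\gamma(\vx)\cap A_i \neq \emptyset}$ with mean $1-(1-p)^K \approx Kp$, whereas you count in-ball \emph{draws} via $\mathrm{Binomial}(Kn,p)$ --- the round count is what the effective-horizon argument actually needs (several draws can land in the same round), though at the level of this sketch both yield $k(n)$.
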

\begin{proof}[Proof sketch]
  Let $Y_i \sim \mathrm{Binom}(K, p)$ denote the random variable counting the number of occurrences of a point from $B_\gamma(\vx)$ in $A_i$.
  Moreover, we write ${X_i \defeq \Ind{B_\gamma(\vx) \cap A_i \neq \emptyset}}$.
  Note that \begin{align*}
    \nu \defeq \E*{X_i} &= \Pr{B_\gamma(\vx) \cap A_i \neq \emptyset} = 1 - \Pr{Y_i = 0} = 1 - (1-p)^K \approx K p
  \end{align*} where the approximation stems from a first-order truncation of the Bernoulli series.
  Let $X \defeq \sum_{i=1}^n X_i$ with $\E*{X} = n \nu \approx K p n$.

  Using the assumed Lipschitz-continuity of the objective, we know that ${\Ism{\vfsub{A'}}{y_{\vx}}[\spD_{n-1}] \leq L_I \gamma \Ism{\vfsub{A}}{y_{\vx}}[\spD_{n-1}]}$ where ${A' \defeq (A \setminus \{\vx_\gamma\}) \cup \{\vx\}}$ and $\vx_\gamma$ is the point from the $\gamma$-ball at $\vx$.
  The bound then follows analogously to \cref{thm:variance_convergence}.

  Finally, by Chernoff's bound, at least $K p n / 2$ iterations contain a point from $B_\gamma(\vx)$ with probability at least $1 - \exp(-K p n / 8)$.
\end{proof}

This strategy can also be used to generalize the \vtl, \ctl, and \mmitl objectives to stochastic target spaces.

\section{Closed-form Decision Rules}\label{sec:closed_form_decision_rules}

Below, we list the closed-form expressions for the \itl and \vtl objectives.
In the following, $k_{n}$ denotes the kernel conditional on $\spD_n$.

\paragraph{\itl}

\begin{align}
    \Ism{\vfsub{\spA}}{y_{\vx}}[\spD_{n-1}] &= \frac{1}{2} \log\parentheses*{\frac{\Var{y_{\vx} \mid \spD_{n-1}}}{\Var{y_{\vx} \mid \vfsub{\spA}, \spD_{n-1}}}} \label{eq:objective_gp_setting} \\
    &= \frac{1}{2} \log\parentheses*{\frac{k_{n-1}(\vx, \vx) + \rho^2}{\hat{k}_{n-1}(\vx, \vx) + \rho^2}} \nonumber
\end{align} where $\hat{k}_n(\vx, \vx) = k_n(\vx, \vx) - \vk_n(\vx, \spA) \inv{\mK_n(\spA, \spA)} \vk_n(\spA, \vx)$.

\paragraph{\vtl}

\begin{align*}
    \tr{\Var{\vfsub{\spA} \mid \spD_{n-1}, y_{\vx}}} &= \sum_{\vxp \in \spA} \parentheses*{k_{n-1}(\vxp, \vxp) - \frac{k_{n-1}(\vx, \vxp)^2}{k_{n-1}(\vx, \vx) + \rho^2}}.
\end{align*}

\section{Computational Complexity}\label{sec:computational_complexity}

Evaluating the acquisition function of \itl in round $n$ requires computing for each $\vx \in \spS$, \begin{align*}
  &\Ism{\vfsub{\spA}}{y_{\vx}}[\spD_n] \\
  &= \frac{1}{2} \log\parentheses*{\frac{\det{\Var{\vfsub{\spA} \mid \spD_n}}}{\det{\Var{\vfsub{\spA} \mid y_{\vx}, \spD_n}}}} &&\text{(forward)} \\
  &= \frac{1}{2} \log\parentheses*{\frac{\Var{y_{\vx} \mid \spD_n}}{\Var{y_{\vx} \mid \vfsub{\spA}, \spD_n}}} &&\text{(backward)}.
\end{align*}
Let $|\spS| = m$ and $|\spA| = k$.
Then, the forward method has complexity $\BigO{m \cdot k^3}$.
For the backward method, observe that the variances are scalar and the covariance matrix $\Var{\vfsub{\spA} \mid \spD_n}$ only has to be inverted once for all points $\vx$. Thus, the backward method has complexity $\BigO{k^3 + m}$.\looseness=-1

When the size $m$ of $\spS$ is relatively small (and hence, all points in $\spS$ can be considered during each iteration of the algorithm), GP inference corresponds simply to computing conditional distributions of a multivariate Gaussian.
The performance can therefore be improved by keeping track of the full posterior distribution over $\vfsub{\spS}$ of size $\BigO{m^2}$ and conditioning on the latest observation during each iteration of the algorithm.
In this case, after each observation the posterior can be updated at a cost of $\BigO{m^2}$ which does not grow with the time $n$, unlike classical GP inference.\looseness=-1

Overall, when $m$ is small, the computational complexity of \itl is $\BigO{k^3 + m^2}$.
When $m$ is large (or possibly infinite) and a subset of $\tilde{m}$ points is considered in a given iteration, the computational complexity of \itl is $\BigO{k^3 + \tilde{m} \cdot n^3}$, neglecting the complexity of selecting the $\tilde{m}$ candidate points.
In the latter case, the computational cost of \itl is dominated by the cost of GP inference.\looseness=-1

\cite{khanna2017scalable} discuss distributed and stochastic approximations of greedy algorithms to (weakly) submodular problems that are also applicable to \itl.\looseness=-1

\section{Additional GP Experiments \& Details}\label{sec:gps_appendix}

We use homoscedastic Gaussian noise with standard deviation $\rho = 0.1$ and a discretization of ${\spX = [-3,3]^2}$ of size $2\,500$.
Uncertainty bands correspond to one standard error over $10$ random seeds.\looseness=-1

\paragraph{Additional experiments}

\begin{figure*}[t]
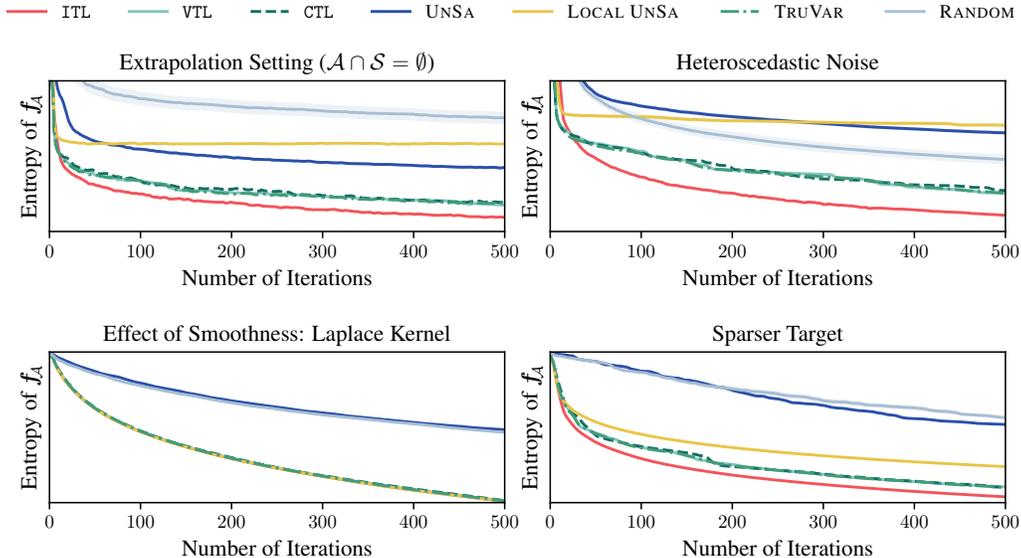

    \incplt[\textwidth]{gps_appendix}
    \vspace{-0.5cm}
    \caption{Additional GP experiments}
    \label{fig:gps:appendix}
\end{figure*}

\Cref{fig:gps:appendix} includes the following additional experiments: \begin{enumerate}
  \item \emph{Extrapolation Setting ($\spA \cap \spS = \emptyset$)}: Right experiment from \cref{fig:gps:directed_advantage} under the Gaussian kernel. \itl has a similar advantage as in the setting shown in \cref{fig:gps:directed_advantage2}.

  \item \emph{Heteroscedastic Noise}: Left experiment from \cref{fig:gps:directed_advantage} under the Gaussian kernel with heteroscedastic Gaussian noise \begin{align*}
    \rho(\vx) = \begin{cases}
      1 & \text{if $\vx \in [-\frac{1}{2}, \frac{1}{2}]^2$} \\
      0.1 & \text{otherwise}
    \end{cases}.
  \end{align*}
  If observation noise is heteroscedastic, in considering \emph{posterior} rather than \emph{prior} uncertainty, \itl avoids points with high aleatoric uncertainty, which accelerates learning.

  \item \emph{Effect of Smoothness}: Experiment from \cref{fig:gps:directed_advantage2} under the Laplace kernel.
  All algorithms except for US and \textsc{Random} perform equally well.
  This validates our claims from \cref{sec:gps}: in the extreme non-smooth case of a Laplace kernel and $\spA \subseteq \spS$, points outside $\spA$ do not provide any additional information, and \itl and ``local'' \textsc{UnSa} coincide.

  \item \emph{Sparser Target}: Experiment from \cref{fig:gps:directed_advantage2} under the Gaussian kernel, but with domain extended to ${\spX = [-10, 10]^2}$.
\end{enumerate}

\paragraph{Hyperparameters of \textsc{TruVar}}

As suggested by \cite{bogunovic2016truncated}, we use $\smash{\tilde{\eta}^2_{(1)} = 1}$, $r = 0.1$, and $\delta = 0$ (even though the theory only holds for $\delta > 0$).
The \textsc{TruVar} baseline only applies when $\spA \subseteq \spS$ (cf. \cref{sec:related_work}).

\paragraph{Smoothing to reduce numerical noise}

Applied running average with window $5$ to entropy curves of \cref{fig:gps:directed_advantage,fig:gps:appendix} to smoothen out numerical noise.

\section{Alternative Settings for Active Fine-Tuning}\label{sec:nns_fine_tuning_settings}

In our main experiments, we consider the setting $\spA \cap \spS = \emptyset$, i.e., the prediction targets cannot be used for fine-tuning since their labels are not known.
This setting is particularly relevant for practical applications where the model is fine-tuned dynamically at test time to each prediction target (or a small set of prediction target).
Put differently, in this ``transductive'' setting, extrapolation to new prediction targets happens at \emph{test-time} with knowledge of the prediction target(s).
This is in contrast to a more traditional ``inductive'' setting, where extrapolation happens at \emph{train-time} without knowledge of the concrete prediction targets, but under the assumption of samples from (or knowledge of) the target distribution.
In the following, we briefly survey two settings motivated from an ``inductive'' perspective.\looseness=-1

\subsection{Prediction Targets are Contained in Sample Space: $\spA \subseteq \spS$}

If labels can be obtained cheaply, one can also fine-tune on the prediction targets directly, i.e., $\spA \subseteq \spS$.
Note, however, that the set $\spA$ is still assumed to be small (e.g., $|\spA| = 100$ in the CIFAR-100 experiment).
We perform an experiment in this setting and report the results in \cref{fig:nns_a_sub_s}.
The experiment shows that --- similarly to the GP experiment from \cref{fig:gps:directed_advantage} --- there can be \emph{additional value} in fine-tuning the model on relevant data selected from $\spS$ beyond simply fine-tuning the model on $\spA$.\looseness=-1

\begin{figure*}[]
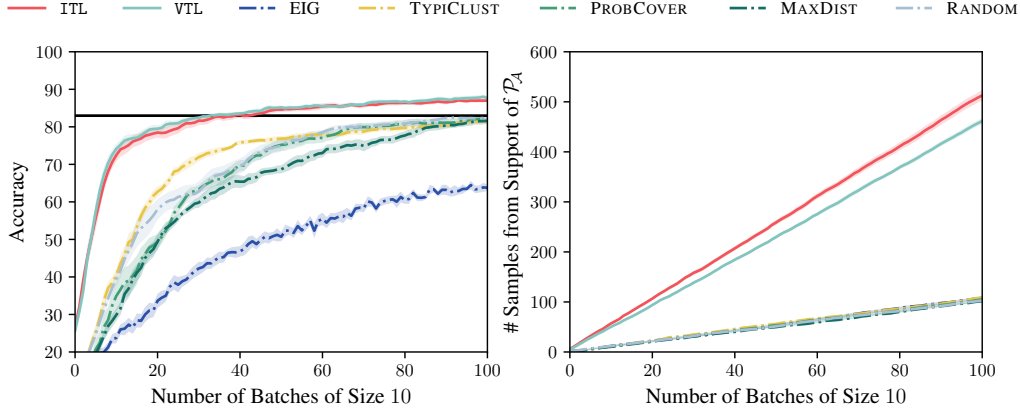

  \incplt[\textwidth]{nns_a_sub_s}
  \vspace{-0.5cm}
  \caption{Evaluation of CIFAR-100 experiment in the setting $\spA \subseteq \spS$, i.e., one can also sample from the $100$ prediction targets $\spA$. The solid black line denotes the performance of the model fine-tuned on all of $\spA$. This experiment shows that there is \emph{additional value} in fine-tuning the model on relevant data from $\spS$ beyond simply fine-tuning the model on $\spA$. The baselines are summarized in \cref{sec:nns_appendix:undirected}}
  \label{fig:nns_a_sub_s}
\end{figure*}

\subsection{Active Domain Adaptation}

Active DA \citep{rai2010domain,saha2011active,berlind2015active} studies the problem of selecting the most informative samples from a (large) target domain $\spA$, given a model trained on a source domain $\spS$.
This problem can be cast as an instance of transductive active learning with target space $\spA$ and sample space $\spS' = \spS \cup \spA$ where the model is already conditioned on all of $\spS$.
This is slightly different from the setting considered in \cref{sec:nns} where $\spA$ is small and not necessarily part of the sample space.
We hypothesize that \itl behaves similarly to recent work on active DA \citep{su2020active,prabhu2021active,fu2021transferable}: querying informative and diverse samples from~$\spA$ that are dissimilar to~$\spS$. Evaluating \itl and \vtl empirically in this setting is a promising direction for future work.\looseness=-1

\section{Additional NN Experiments \& Details}\label{sec:nns_appendix}

We outline the active fine-tuning of NNs in \cref{alg:itl_nns}.\looseness=-1

\begin{algorithm}[htb]
  \caption{Active Fine-Tuning of NNs}
  \label{alg:itl_nns}
\begin{algorithmic}
  \STATE {\bfseries Given:} initialized or pre-trained model $f$, \emph{small} sample $A \sim \spPA$
  \STATE initialize dataset $\spD = \emptyset$
  \STATE \textbf{repeat}
  \STATE\hspace{1em} sample $S \sim \spPS$
  \STATE\hspace{1em} subsample target space $A' \uar A$
  \STATE\hspace{1em} initialize batch $B = \emptyset$
  \STATE\hspace{1em} compute kernel matrix $\mK$ over domain $[S, A']$
  \STATE\hspace{1em} \textbf{repeat $b$ times}
  \STATE\hspace{2em} compute acquisition function w.r.t. $A'$, based on $\mK$
  \STATE\hspace{2em} add maximizer $\vx \in S$ of acquisition function to $B$
  \STATE\hspace{2em} update conditional kernel matrix $\mK$
  \STATE\hspace{1em} obtain labels for $B$ and add to dataset $\spD$
  \STATE\hspace{1em} update $f$ using data $\spD$
\end{algorithmic}
\end{algorithm}

In \cref{sec:nns_appendix:details}, we detail metrics and hyperparameters.
We describe in \cref{sec:nns_appendx:embeddings,sec:nns_appendx:uncertainty_quantification} how to compute the (initial) conditional kernel matrix $\mK$, and in \cref{sec:nns_appendx:batch_selection} how to update this matrix $\mK$ to obtain conditional embeddings for batch selection.\looseness=-1

In \cref{sec:nns_appendix:undirected}, we show that \itl and \ctl significantly outperform a wide selection of commonly used heuristics.
In \cref{sec:nns_appendix:additional_experiments,sec:nns_appendix:ablation_sigma}, we conduct additional experiments and ablations.\looseness=-1

\cite{hubotter2024active} discusses additional motivation and related work that has previously studied active fine-tuning, but which has largely focused on the training algorithm rather than data selection.\looseness=-1

\subsection{Experiment Details}\label{sec:nns_appendix:details}

\begin{table}[t]
    \caption{Hyperparameter summary of NN experiments. (*) we train until convergence on oracle validation accuracy.}
    \label{table:nn_hyperparams}
    \vskip 0.15in
    \begin{center}
    \begin{tabular}{@{}lll@{}}
      \toprule
      & MNIST & CIFAR-100 \\
      \midrule
      $\rho$ & $0.01$ & $1$ \\
      $M$ & $30$ & $100$ \\
      $m$ & $3$ & $10$ \\
      $k$ & $1\,000$ & $1\,000$ \\
      batch size $b$ & $1$ & $10$ \\
      \# of epochs & (*) & $5$ \\
      learning rate & $0.001$ & $0.001$ \\
      \bottomrule
    \end{tabular}
    \end{center}
    \vskip -0.1in
\end{table}

We evaluate the accuracy with respect to $\spPA$ using a Monte Carlo approximation with out-of-sample data: \begin{align*}
  \text{accuracy}(\vthetahat) \approx \E*[(\vx, y) \sim \spPA]{\Ind{y = \argmax_i f_i(\vx; \vthetahat)}}.
\end{align*}

We provide an overview of the hyperparameters used in our NN experiments in \cref{table:nn_hyperparams}.
The effect of noise standard deviation $\rho$ is small for all tested $\rho \in [1,100]$ (cf.~ablation study in \cref{table:nn_sigma_ablation}).\footnote{We use a larger noise standard deviation $\rho$ in CIFAR-100 to stabilize the numerics of batch selection via conditional embeddings (cf. \cref{table:nn_sigma_ablation}).}
$M$ denotes the size of the sample ${A \sim \spPA}$.
In each iteration, we select the target space $\spA \gets A'$ as a random subset of $m$ points from $A$.\footnote{This appears to improve the training, likely because it prevents overfitting to peculiarities in the finite sample~$A$~(cf. \cref{fig:nns_subsampled_target_frac}).}
We provide an ablation over $m$ in \cref{sec:nns_appendix:additional_experiments}.\looseness=-1

During each iteration, we select the batch $B$ according to the decision rule from a random sample from $\spPS$ of size $k$.\footnote{In large-scale problems, the work of \cite{coleman2022similarity} suggests to use an (approximate) nearest neighbor search to select the (large) candidate set rather than sampling u.a.r. from $\spPS$. This can be a viable alternative to simply increasing $k$ and suggests future work.}\looseness=-1

Since we train the MNIST model from scratch, we train from random initialization until convergence on oracle validation accuracy.\footnote{That is, to stop training as soon as accuracy on a validation set from $\spPA$ decreases in an epoch.}
We do this to stabilize the learning curves, and provide the least biased (due to the training algorithm) results.
For CIFAR-100, we train for $5$ epochs (starting from the previous iterations' model) which we found to be sufficient to obtain good performance.\looseness=-1

We use the ADAM optimizer \citep{kingma2014adam}.
In our CIFAR-100 experiments, we use a pre-trained EfficientNet-B0 \citep{tan2019efficientnet}, and fine-tune the final and penultimate layers.
We freeze earlier layers to prevent overfitting to the ``few-shot'' training data.\looseness=-1

To prevent numerical inaccuracies when computing the \itl objective, we optimize \begin{align}
  \Ism{\vysub{\spA}}{y_{\vx}}[\spD_{n-1}] = \frac{1}{2} \log\parentheses*{\frac{\Var{y_{\vx}}[\spD_{n-1}]}{\Var{y_{\vx}}[\vysub{\spA}, \spD_{n-1}]}}
\end{align} instead of \cref{eq:objective_gp_setting}, which amounts to adding $\rho^2$ to the diagonal of the covariance matrix before inversion.
This appears to improve numerical stability, especially when using gradient embeddings.\footnote{In our experiments, we observe that the effect of various choices of $\rho$ on this slight adaptation of the \itl decision rule has negligible impact on performance. The more prominent effect of $\rho$ appears to arise from the batch selection via conditional embeddings (cf. \cref{table:nn_sigma_ablation}).}\looseness=-1

In our experiments, we use last-layer neural tangent embeddings\footnote{We observe essentially the same performance with loss gradient embeddings, cf. \cref{sec:nns_appendx:embeddings}.} and ${\mSigma = \mI}$ to evaluate \itl and \vtl, and select inputs for labeling and training~$f$.
Notably, we use this linear Gaussian approximation of $f$ only to guide the active data selection and not for inference.\looseness -1

\subsection{Embeddings and Kernels}\label{sec:nns_appendx:embeddings}

Using a neural network to parameterize~$f$, we evaluate the canonical approximations of~$f$ by a stochastic process in the following.\looseness=-1

An embedding~$\vphi(\vx)$ is a latent representation of an input~$\vx$.
Collecting the embeddings as rows in the design matrix~$\mPhi$ of a set of inputs~$X$, one can approximate the network by the linear function~${\vfsub{X} = \mPhi \vbeta}$ with weights~$\vbeta$.
Approximating the weights by ${\vbeta \sim \N{\vmu}{\mSigma}}$ implies that ${\vfsub{X} \sim \N{\mPhi\vmu}{\mPhi \mSigma \transpose{\mPhi}}}$.
The covariance matrix~${\mKsub{XX} = \mPhi \mSigma \transpose{\mPhi}}$ can be succinctly represented in terms of its associated kernel~${k(\vx, \vxp) = \transpose{\vphi(\vx)} \mSigma \vphi(\vxp)}$.
Here, \begin{itemize}[noitemsep]
  \item $\vphi(\vx)$ is the latent representation of $\vx$, and
  \item $\mSigma$ captures the dependencies in the latent space.
\end{itemize}

While any choice of embedding~$\vphi$ is possible, the following are common choices: \begin{enumerate}
  \item \emph{Last-Layer}: A common choice for $\vphi(\vx)$ is the representation of $\vx$ from the penultimate layer of the neural network \citep{holzmuller2023framework}.
  Interpreting the early layers as a feature encoder, this uses the low-dimensional feature map akin to random feature methods \citep{rahimi2007random}.\looseness=-1

  \item \emph{Output Gradients (eNTK)}: Another common choice is $\vphi(\vx) = \grad_\vtheta \vf(\vx; \vtheta)$ where $\vtheta$ are the network parameters \citep{holzmuller2023framework}.
  Its associated kernel is known as the \emph{empirical neural tangent kernel} (eNTK) and the posterior mean of this GP approximates ultra-wide NNs trained with gradient descent \citep{jacot2018neural,arora2019exact,lee2019wide,khan2019approximate,he2020bayesian,malladi2023kernel}.
  \cite{kassraie2022neural} derive bounds of $\gamma_n$ under this kernel.
  If $\vtheta$ is restricted to the weights of the final linear layer, then this embedding is simply the last-layer embedding.\looseness=-1

  \item \emph{Loss Gradients}: Another possible choice is \begin{align*}
    \vphi(\vx) = \left. \grad_{\vtheta} \ell(\vf(\vx; \vtheta), \hat{y}(\vx)) \right\rvert_{\vtheta = \vthetahat}
  \end{align*} where $\ell$ is a loss function, $\hat{y}(\vx)$ is the predicted label, and $\vthetahat$ are the current parameter estimates \citep{ash2020deep}.\looseness=-1

  \item \emph{Outputs (eNNGP)}: Another possible choice is $\vphi(\vx) = \vf(\vx)$, i.e., the output of the network.
  Its associated kernel is known as the \emph{empirical neural network Gaussian process} (eNNGP) kernel \citep{lee2017deep}.\looseness=-1

  \item \emph{Predictive} \citep{kirsch2023black}: Given a Bayesian neural network \citep{blundell2015weight} or probabilistic (deep) ensemble \citep{lakshminarayanan2017simple}, which induce samples $\vtheta_1, \dots, \vtheta_K \sim p(\vtheta)$ from the distribution over network parameters, one can approximate the predictive covariance $k(\vx, \vxp) = \Cov[\vtheta]{f(\vx; \vtheta), f(\vxp; \vtheta)}$.
  This kernel measures proximity in the prediction space rather than parameter space and as such does not require gradient information.
  The corresponding feature map is $\smash{\vphi(\vx) = \frac{1}{\sqrt{K}} \transpose{[\bar{f}(\vx; \vtheta_1) \; \cdots \; \bar{f}(\vx; \vtheta_K)]}}$ where $\smash{\bar{f}(\vx; \vtheta_k) \defeq f(\vx; \vtheta_k) - \frac{1}{K} \sum_{l=1}^K f(\vx; \vtheta_l)}$.\looseness=-1
\end{enumerate}

In the additional experiments from this appendix we use last-layer embeddings unless noted otherwise.
We compare the performance of last-layer and the loss gradient embedding \begin{align}
  \vphi(\vx) = \left. \grad_{\vthetap} \ell_{\mathrm{CE}}(\vf(\vx; \vtheta), \hat{y}(\vx)) \right\rvert_{\vtheta = \vthetahat} \label{eq:loss_gradient_embedding}
\end{align} where $\vthetap$ are the parameters of the final output layer, $\smash{\vthetahat}$ are the current parameter estimates, $\smash{\hat{y}(\vx) = \argmax_i f_i(\vx; \vthetahat)}$ are the associated predicted labels, and $\ell_{\mathrm{CE}}$ denotes the cross-entropy loss.
This gradient embedding captures the potential update direction upon observing a new point \citep{ash2020deep}.
Moreover, \cite{ash2020deep} show that for most neural networks, the norm of these gradient embeddings are a conservative lower bound to the norm assumed by taking any other proxy label $\hat{y}(\vx)$.
In \cref{fig:nns_embeddings}, we observe only negligible differences in performance between this and the last-layer embedding.\looseness=-1

\begin{figure*}[]
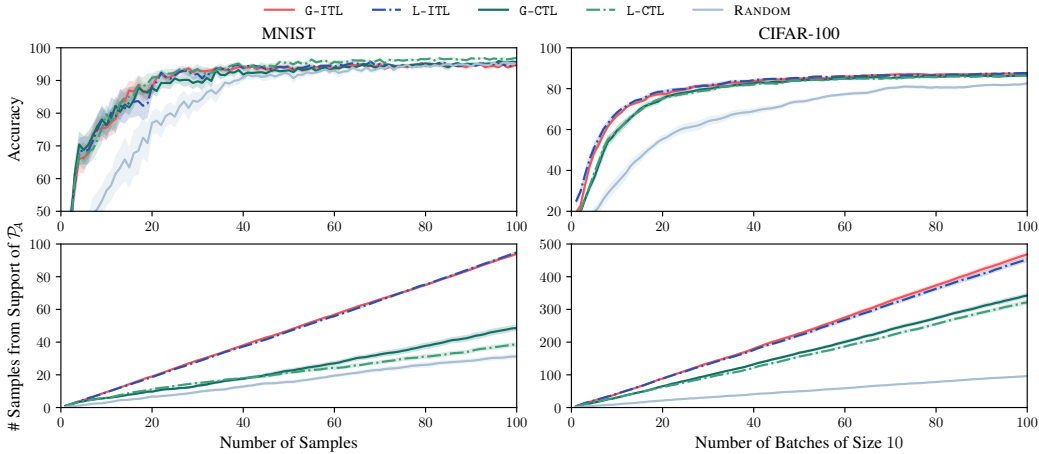

  \incplt[\textwidth]{nns_embeddings}
  \vspace{-0.5cm}
  \caption{Comparison of loss gradient (``G-'') and last-layer embeddings (``L-'').}
  \label{fig:nns_embeddings}
\end{figure*}

\subsection{Towards Uncertainty Quantification in Latent Space}\label{sec:nns_appendx:uncertainty_quantification}

A straightforward and common approximation of the uncertainty about NN weights is given by $\mSigma = \mI$, and we use this approximation throughout our experiments.\looseness=-1

The poor performance of \textsc{UnSa} (cf. \cref{sec:nns_appendix:undirected}) with this approximation suggests that with more sophisticated approximations, the performance of \itl, \vtl, and \ctl can be further improved.
Further research is needed to study the effect of more sophisticated approximations of ``uncertainty'' in the latent space.
For example, with parameter gradient embeddings, the latent space is the network parameter space where various approximations of $\mSigma$ based on Laplace approximation \citep{daxberger2021laplace,antoran2022adapting}, variational inference \citep{blundell2015weight}, or Markov chain Monte Carlo \citep{maddox2019simple} have been studied.
We also evaluate Laplace approximation (LA, \cite{daxberger2021laplace})
for estimating $\mSigma$ but see no improvement (cf. \cref{fig:nns_uncertainty_quantification}).
Nevertheless, we believe that uncertainty quantification is a promising direction for future work, with the potential to improve performance of \itl and its variations substantially.\looseness=-1

\begin{figure*}[]
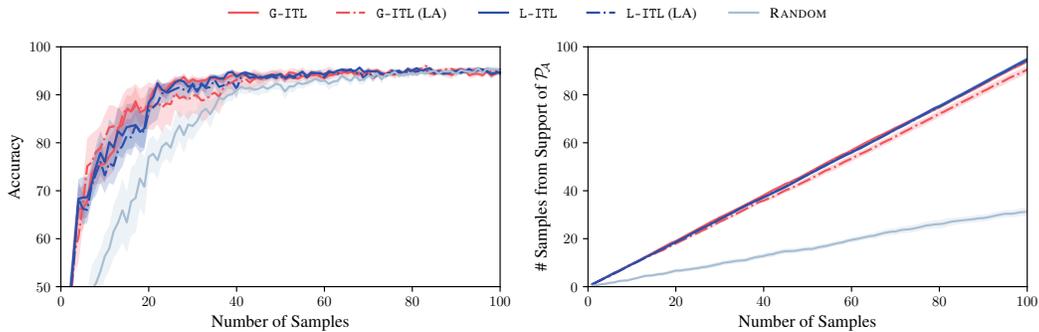

  \incplt[\textwidth]{nns_uncertainty_quantification}
  \vspace{-0.5cm}
  \caption{Uncertainty quantification (i.e., estimation of $\mSigma$) via a Laplace approximation (LA, \cite{daxberger2021laplace}) over last-layer weights using a Kronecker factored log-likelihood Hessian approximation \citep{martens2015optimizing} and the loss gradient embeddings from \cref{eq:loss_gradient_embedding}. The results are shown for the MNIST experiment. We do not observe a performance improvement beyond the trivial approximation $\mSigma = \mI$.}
  \label{fig:nns_uncertainty_quantification}
\end{figure*}

\subsection{Batch Selection via Conditional Embeddings}\label{sec:nns_appendx:batch_selection}

We will refer to the greedy decision rule from \cref{eq:batch_selection} as \textsc{BaCE}, short for \emph{\textbf{Ba}tch selection via \textbf{C}onditional \textbf{E}mbeddings}.
\textsc{BaCE} can be implemented efficiently using the Gaussian approximation of $\vfsub{X}$ from \cref{sec:nns_appendx:embeddings} by iteratively conditioning on the previously selected points $\vx_{n,1:i-1}$, and updating the kernel matrix $\mKsub{XX}$ using the closed-form formula for the variance of conditional Gaussians: \begin{align}
  \mKsub{XX} \gets \mKsub{XX} - \frac{1}{\mKsub{\vx_j \vx_j} + \rho^2} \mKsub{X \vx_j} \mKsub{\vx_j X}
\end{align} where $j$ denotes the index of the selected $\vx_{n,i}$ within $X$ and $\rho^2$ is the noise variance.
Note that $\mKsub{\vx_j \vx_j}$ is a scalar and $\mKsub{X \vx_j}$ is a row vector, and hence, this iterative update can be implemented efficiently.\looseness=-1

We remark that \cref{eq:batch_selection} is a natural extension of previous non-adaptive active learning methods, which typically maximize some notion of ``distance'' between points in the batch, to the ``directed'' setting \citep{ash2020deep,zanette2021design,holzmuller2023framework,pacchiano2024experiment}.
\textsc{BaCE} simultaneously maximizes ``distance'' between points in a batch and minimizes ``distance'' to points in~$\spA$.\looseness=-1

\paragraph{The sample efficiency of \textsc{BaCE}}

$B_n$, and therefore also the greedily constructed $B'_n$ (which gives a constant-factor approximation with respect to the objective), yields diverse batches by design.
In \cref{fig:nns_batch_selection}, we compare \textsc{BaCE} to selecting the top-$b$ points according to the decision rule (which does \emph{not} yield diverse batches).
We observe a significant improvement in accuracy and data retrieval when using \textsc{BaCE}.
We expect the gap between both approaches to widen further with larger batch sizes.\looseness=-1

\begin{figure*}[]
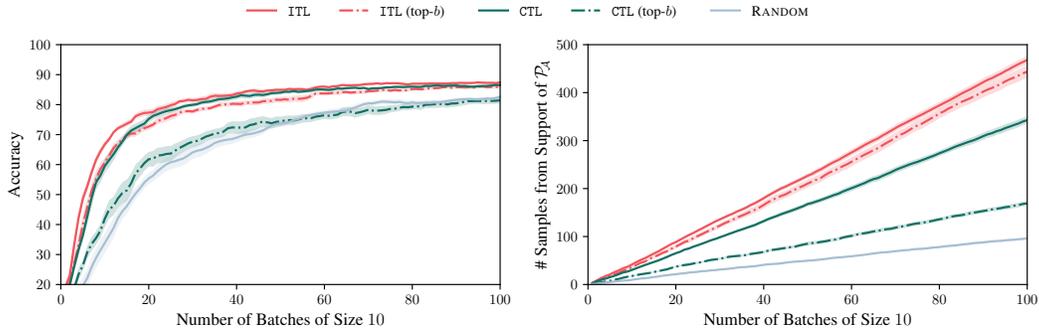

  \incplt[\textwidth]{nns_batch_selection}
  \vspace{-0.5cm}
  \caption{Advantage of batch selection via conditional embeddings over top-$b$ selection in the CIFAR-100 experiment.}
  \label{fig:nns_batch_selection}
\end{figure*}

\paragraph{Computational complexity of \textsc{BaCE}}

As derived in \cref{sec:computational_complexity}, a single batch selection step of \textsc{BaCE} has complexity $\BigO{b (k^3 + m^2)}$ where $b$ is the size of the batch, $k = \abs{\spA}$ is the size of the target space, and $m = \abs{\spS}$ is the size of the candidate set.
In the case of large $m$, an alternative implementation whose runtime does not depend on $m$ is described in \cref{sec:computational_complexity}.\looseness=-1

\subsection{Baselines}\label{sec:nns_appendix:undirected}

In \cref{fig:nns_more_baselines}, we compare against additional baselines: \begin{itemize}
  \item Both \textsc{TypiClust}~\citep{hacohen2022active} and \textsc{ProbCover}~\citep{yehuda2022active} are recent methods to select points that ``cover'' the data distribution well.
  To maintain comparability between algorithms, we use the same embeddings as for ITL which are re-computed before every new batch selection.
  ITL significantly outperforms \textsc{TypiClust} \& \textsc{ProbCover}, which only attempt to cover $\spS$ well without taking $\spA$ into account (i.e., are ``undirected'').

  \item \cite{mehta2022information} introduced EIG for training neural classification models, which uses the same decision rule as \itl, but approximates the conditional entropy based on the networks' softmax output rather than using a GP approximation.
  We approximate the conditional entropy using a single gradient step of the hallucinated updates on the parameters of the final layer, as mentioned by \cite{mehta2022information}.
  We observe that EIG is not competitive for batch-wise selection (CIFAR-100) since it does not encourage batch diversity.
  Moreover, we observe that EIG is orders of magnitude slower than \itl (since it has to compute $|\spS| \cdot C$ individual gradient steps where $C$ is the number of classes).
  We note that since our datasets are balanced, the AEIG algorithm from \cite{mehta2022information} coincides with EIG.
\end{itemize}
Since, EIG does not have an open-source implementation, we implemented it ourselves following \cite{mehta2022information}.
For \textsc{TypiClust} \& \textsc{ProbCover}, we use the author's implementation.
In the figure, we show that \itl \& \vtl substantially outperform all baselines.\looseness=-1

\begin{figure*}[]
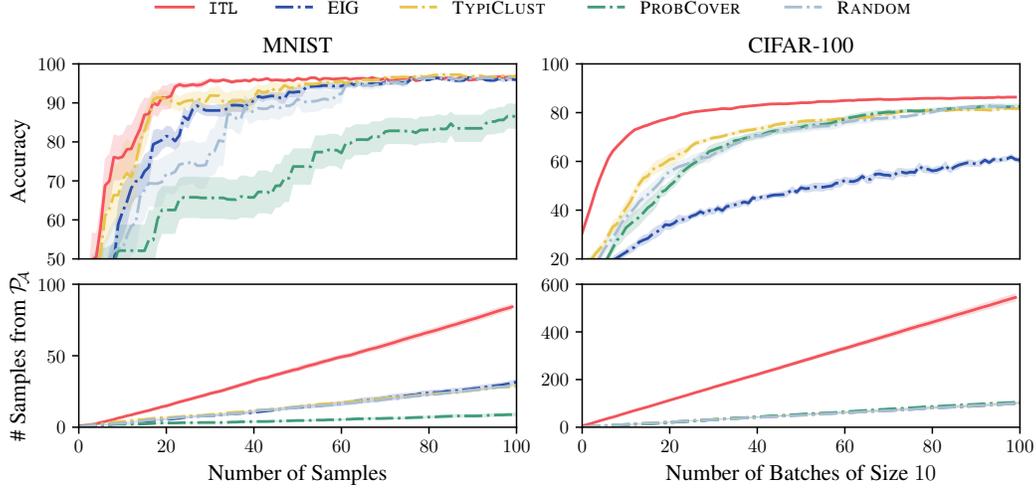

  \incplt[\textwidth]{nns_more_baselines}
  \vspace{-0.5cm}
  \caption{Comparison to baselines for the experiment of \cref{fig:nns}.}
  \label{fig:nns_more_baselines}
\end{figure*}

In the following, we briefly describe other commonly used ``undirected'' decision rules.\looseness=-1

Denote the softmax distribution over labels $i$ at inputs $\vx$ by \begin{align*}
  p_i(\vx; \vthetahat) \propto \exp(f_i(\vx; \vthetahat)).
\end{align*}
The following heuristics computed based on the softmax distribution aim to quantify the ``uncertainty'' about a particular input $\vx$: \begin{itemize}
  \item \textsc{MaxEntropy} \citep{settles2008analysis}: \begin{align*}
    \vx_{n} = \argmax_{\vx \in \spS} \Hsm{p(\vx; \vthetahat_{n-1})}.
  \end{align*}

  \item \textsc{MaxMargin} \citep{scheffer2001active,settles2008analysis}: \begin{align*}
    \vx_{n} = \argmin_{\vx \in \spS} p_1(\vx; \vthetahat_{n-1}) - p_2(\vx; \vthetahat_{n-1})
  \end{align*} where $p_1$ and $p_2$ are the two largest class probabilities.

  \item \textsc{LeastConfidence} \citep{lewis1994sequential,settles2008analysis,hendrycks2017baseline,tamkin2022active}: \begin{align*}
    \vx_{n} = \argmin_{\vx \in \spS} p_1(\vx; \vthetahat_{n-1})
  \end{align*} where $p_1$ is the largest class probability.
\end{itemize}

An alternative class of decision rules aims to select diverse batches by maximizing the distances between points.
Embeddings $\vphi(\vx)$ induce the (Euclidean) embedding distance \begin{align*}
  d_{\vphi}(\vx, \vxp) \defeq \norm{\vphi(\vx) - \vphi(\vxp)}_2.
\end{align*}
Similarly, a kernel $k$ induces the kernel distance \begin{align*}
  d_k(\vx, \vxp) \defeq \sqrt{k(\vx, \vx) + k(\vxp, \vxp) - 2 k(\vx, \vxp)}.
\end{align*}
It is straightforward to see that if $k(\vx, \vxp) = \transpose{\vphi(\vx)} \vphi(\vxp)$, then embedding and kernel distances coincide, i.e., $d_{\vphi}(\vx, \vxp) = d_k(\vx, \vxp)$.\looseness=-1

\begin{itemize}
  \item \textsc{MaxDist} \citep{holzmuller2023framework,yu2010passive,sener2017active,geifman2017deep} constructs the batch by choosing the point with the maximum distance to the nearest previously selected point: \begin{align*}
    \vx_n = \argmax_{\vx \in \spS} \min_{i < n} d(\vx, \vx_i)
  \end{align*}

  \item Similarly, \textsc{k-means++} \citep{holzmuller2023framework} selects the batch via \textsc{k-means++} seeding \citep{arthur2007k,ostrovsky2013effectiveness}.
  That is, the first centroid $\vx_1$ is chosen uniformly at random and the subsequent centroids are chosen with a probability proportional to the square of the distance to the nearest previously selected centroid: \begin{align*}
    \Pr{\vx_n = \vx} \propto \min_{i < n} d(\vx, \vx_i)^2.
  \end{align*}
  When using the loss gradient embeddings from \cref{eq:loss_gradient_embedding}, this decision rule is known as \textsc{BADGE} \citep{ash2020deep}.
\end{itemize}

Finally, we summarize common kernel-based decision rules.
\begin{itemize}
  \item \textsc{Undirected \itl} chooses \begin{align*}
    \vx_n &= \argmax_{\vx \in \spS} \I{\vfsub{\spS}}{y_{\vx}}[\spD_{n-1}] \\
    &= \argmax_{\vx \in \spS} \I{f_{\vx}}{y_{\vx}}[\spD_{n-1}].
  \end{align*}
  This can be shown to be equivalent to \textsc{MaxDet} \citep{holzmuller2023framework} which selects \begin{align*}
    \vx_n = \argmax_{\vx \in \spS} \det{\mKsub{\vx} + \sigma^2 \mI}
  \end{align*} where $\mKsub{\vx}$ denotes the kernel matrix over $\vx_{1:n-1} \cup \{\vx\}$, conditioned on the prior observations $\spD_{n-1}$.

  \item \textsc{UnSa} \citep{lewis1994heterogeneous} which with embeddings $\vphi_{n-1}$ after round $n-1$ corresponds to: \begin{align*}
    \vx_{n} = \argmax_{\vx \in \spS} \sigma_{n-1}^2(\vx) = \argmax_{\vx \in \spS} \norm{\vphi_{n-1}(\vx)}_2^2.
  \end{align*}
  With batch size $b = 1$, \textsc{UnSa} coincides with \textsc{Undirected \itl}.
  When evaluated with gradient embeddings, this acquisition function is similar to previously used ``embedding length'' or ``gradient length'' heuristics \citep{settles2008analysis}.

  \item \textsc{Undirected \vtl} \citep{cohn1993neural} is the special case of \vtl without specified prediction targets (i.e., $\spA = \spS$).
  In the literature, this decision rule is also known as \textsc{Bait} \citep{holzmuller2023framework,ash2021gone}.
\end{itemize}

We compare to the abovementioned decision rules and summarize the results in \cref{fig:nns_undirected}.
We observe that most ``undirected'' decision rules perform worse (and often significantly so) than \textsc{Random}.
This is likely due to frequently selecting points from the support of $\spPS$ which are not in the support of $\spPA$ since the points are ``adversarial examples'' that the model $\vthetahat$ is not trained to perform well on.
In the case of MNIST, the poor performance can also partially be attributed to the well-known ``cold-start problem'' \citep{gao2020consistency}.\looseness=-1

\begin{figure*}[]
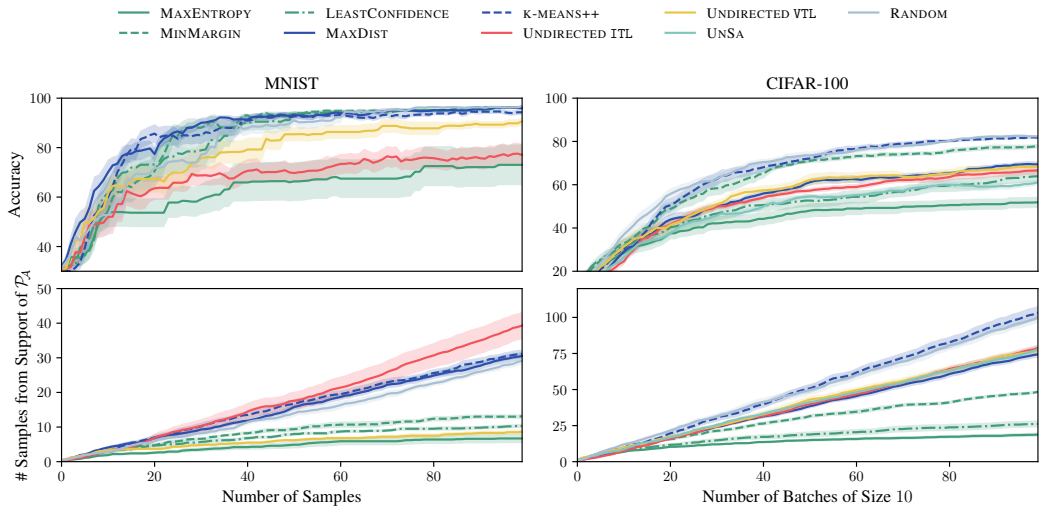

  \incplt[\textwidth]{nns_undirected}
  \vspace{-0.5cm}
  \caption{Comparison of ``undirected'' baselines for the experiment of \cref{fig:nns}. In the MNIST experiment, \textsc{UnSa} and \textsc{Undirected \itl} coincide, and we therefore only plot the latter.}
  \label{fig:nns_undirected}
\end{figure*}

In \cref{fig:nns}, we also compare to the following ``directed'' decision rules: \begin{itemize}
  \item \textsc{CosineSimilarity} \citep{settles2008analysis} selects $\vx_{n} = \argmax_{\vx \in \spS} \angle_{\vphi_{n-1}}(\vx,\spA)$ where \begin{align*}
    \angle_{\vphi}(\vx,\spA) \defeq \frac{1}{\abs{\spA}} \sum_{\vxp \in \spA} \frac{\transpose{\vphi(\vx)} \vphi(\vxp)}{\norm{\vphi(\vx)}_2 \norm{\vphi(\vxp)}_2}.
  \end{align*}

  \item \textsc{InformationDensity} \citep{settles2008analysis} is defined as the multiplicative combination of \textsc{MaxEntropy} and \textsc{CosineSimilarity}: \begin{align*}
    \vx_{n} = \argmax_{\vx \in \spS} \Hsm{p(\vx; \vthetahat_{n-1})} \cdot \parentheses*{\angle_{\vphi_{n-1}}(\vx, \spA)}^\beta
  \end{align*} where $\beta > 0$ controls the relative importance of both terms.
  We set $\beta = 1$ in our experiments.
\end{itemize}

\subsection{Additional experiments}\label{sec:nns_appendix:additional_experiments}

\begin{figure*}[]
  \incplt[\textwidth]{nns_imbalanced_train}
  \vspace{-0.5cm}
  \caption{Imbalanced $\spPS$ experiment.}
  \label{fig:nns_imbalanced_train}
\end{figure*}

\begin{figure*}[]
  \incplt[\textwidth]{nns_imbalanced_test}
  \vspace{-0.5cm}
  \caption{Imbalanced $A \sim \spPA$ experiment.}
  \label{fig:nns_imbalanced_test}
\end{figure*}

We conduct the following additional experiments: \begin{enumerate}
  \item \emph{Imbalanced $\spPS$} (\cref{fig:nns_imbalanced_train}): We artificially remove $80\%$ of the support of $\spPA$ from $\spPS$.
  For example, in case of MNIST, we remove $80\%$ of the images with labels $3$, $6$, and $9$ from $\spPS$.
  This makes the learning task more difficult, as $\spPA$ is less represented in $\spPS$, meaning that the ``targets'' are more sparse.
  The trend of \itl outperforming \ctl which outperforms \textsc{Random} is even more pronounced in this setting.

  \item \emph{Imbalanced $A \sim \spPA$} (\cref{fig:nns_imbalanced_test}): We artificially remove $50\%$ of part of the support of $\spPA$ while generating $A \sim \spPA$ to evaluate the robustness of \itl and \ctl in presence of an imbalanced target space $\spA$.
  Concretely, in case of MNIST, we remove $50\%$ of the images with labels $3$ and $6$ from $A$.
  In case of CIFAR-100, we remove $50\%$ of the images with labels $\{0, \dots, 4\}$ from $A$.
  We still observe the same trends as in the other experiments.

  \item \emph{\vtl \& choice of $k$} (\cref{fig:nns_vtl}): We observe that \vtl performs almost as well as \itl.
  Additionally, we evaluate the effect of the number of points $k$ at which the decision rule is evaluated.
  Not surprisingly, we observe that the performance of \itl, \vtl, and \ctl improves with larger $k$.

  \item \emph{Choice of $m$} (\cref{fig:nns_subsampled_target_frac}): Next, we evaluate the choice of $m$, i.e., the size of the target space $\spA$ relative to the number $M$ of candidate points $A \sim \spPA$.
  We write $p = m / M$.
  We generally observe that a larger $p$ leads to better performance (with $p=1$ being the best choice).
  However, it appears that a smaller $p$ can be beneficial with respect to accuracy when a large number of batches are selected.
  We believe that this may be because a smaller $p$ improves the diversity between selected batches.

  \item \emph{Choice of $M$} (\cref{fig:nns_n_init}): Finally, we evaluate the choice of $M$, i.e., the size of $A \sim \spPA$.
  Not surprisingly, we observe that the performance of \itl improves with larger $M$.
\end{enumerate}

\begin{figure*}[]
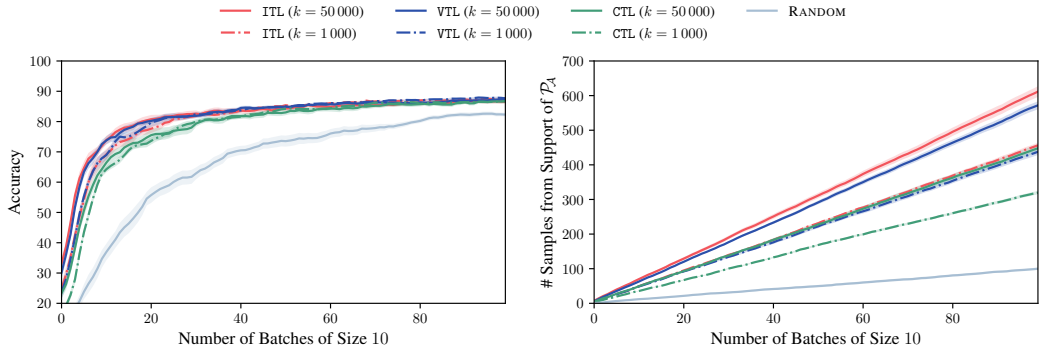

  \incplt[\textwidth]{nns_vtl}
  \vspace{-0.5cm}
  \caption{Performance of \vtl \& choice of $k$ in the CIFAR-100 experiment.}
  \label{fig:nns_vtl}
\end{figure*}

\begin{figure*}[]
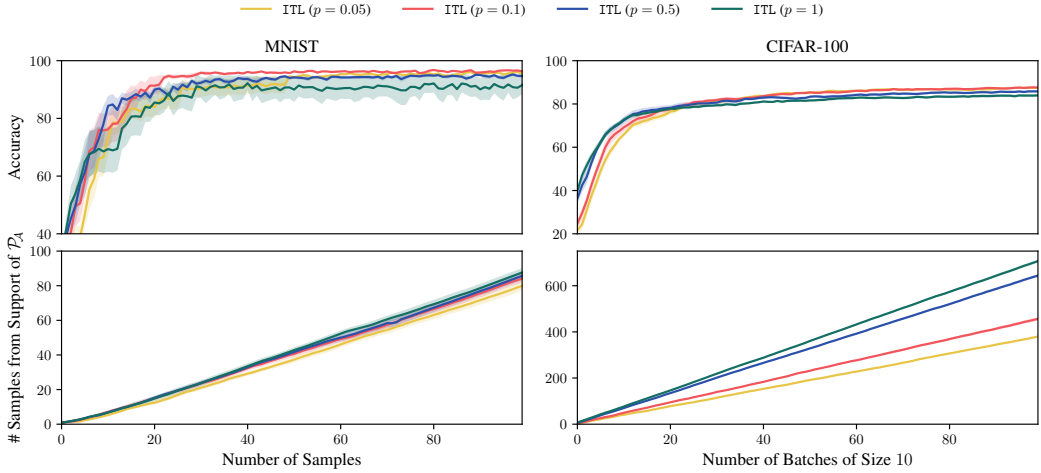

  \incplt[\textwidth]{nns_subsampled_target_frac}
  \vspace{-0.5cm}
  \caption{Evaluation of the choice of $m$ relative to the size $M$ of $A \sim \spPA$. Here, $p = m / M$.}
  \label{fig:nns_subsampled_target_frac}
\end{figure*}

\begin{figure*}[]
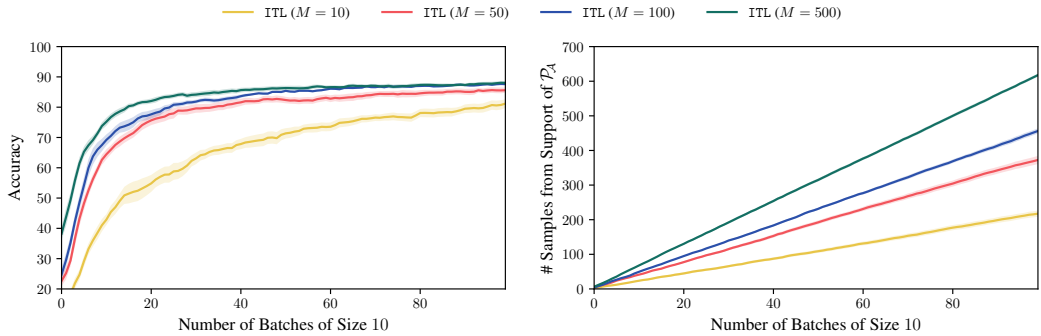

  \incplt[\textwidth]{nns_n_init}
  \vspace{-0.5cm}
  \caption{Evaluation of the choice of $M$, i.e., the size of $A \sim \spPA$, in the CIFAR-100 experiment.}
  \label{fig:nns_n_init}
\end{figure*}

\subsection{Ablation study of noise standard deviation $\rho$}\label{sec:nns_appendix:ablation_sigma}

In \cref{table:nn_sigma_ablation}, we evaluate the CIFAR-100 experiment with different noise standard deviations $\rho$.
We observe that the performance of batch selection via conditional embeddings drops (mostly for the less numerically stable gradient embeddings) if $\rho$ is too small, since this leads to numerical inaccuracies when computing the conditional embeddings.
Apart from this, the effect of $\rho$ is negligible.\looseness=-1

\begin{table*}[]
  \caption{Ablation study of noise standard deviation $\rho$ in the CIFAR-100 experiment. We list the accuracy after $100$ rounds per decision rule, with its standard error over $10$ random seeds. ``(top-$b$)'' denotes variants where batches are selected by taking the top-$b$ points according to the decision rule rather than using batch selection via conditional embeddings. Shown in \textbf{bold} are the best performing decision rules, and shown in \textit{italics} are results due to numerical instability.}
  \label{table:nn_sigma_ablation}
  \vskip 0.15in
  \begin{center}
  \begin{tabular}{@{}lllll@{}}
    \toprule
    $\rho$ & $0.0001$ & $0.01$ & $1$ & $100$ \\
    \midrule
    \gitl & $\mathit{78.26\pm1.40}$ & $\mathit{79.12\pm1.19}$ & $\mathbf{87.16\pm0.29}$ & $\mathbf{87.18\pm0.28}$ \\
    \litl & $\mathbf{87.52\pm0.48}$ & $\mathbf{87.52\pm0.41}$ & $\mathbf{87.53\pm0.35}$ & $86.47\pm0.22$ \\
    \gctl & $\mathit{58.68\pm2.11}$ & $\mathit{81.44\pm1.04}$ & $86.52\pm0.44$ & $\mathbf{86.92\pm0.56}$ \\
    \lctl & $\mathbf{86.40\pm0.71}$ & $\mathbf{86.38\pm0.75}$ & $86.00\pm0.69$ & $84.78\pm0.39$ \\
    \gitl (top-$b$) & $85.84\pm0.54$ & $85.92\pm0.52$ & $85.84\pm0.54$ & $85.55\pm0.46$ \\
    \litl (top-$b$) & $85.44\pm0.58$ & $85.46\pm0.54$ & $85.44\pm0.59$ & $85.29\pm0.36$ \\
    \gctl (top-$b$) & $82.27\pm0.67$ & $82.27\pm0.67$ & $82.27\pm0.67$ & $82.27\pm0.67$ \\
    \lctl (top-$b$) & $80.73\pm0.68$ & $80.73\pm0.68$ & $80.73\pm0.68$ & $80.73\pm0.68$ \\
    \textsc{BADGE} & $83.24\pm0.60$ & $83.24\pm0.60$ & $83.24\pm0.60$ & $83.24\pm0.60$ \\
    \textsc{InformationDensity} & $79.24\pm0.51$ & $79.24\pm0.51$ & $79.24\pm0.51$ & $79.24\pm0.51$ \\
    \textsc{Random} & $82.49\pm0.66$ & $82.49\pm0.66$ & $82.49\pm0.66$ & $82.49\pm0.66$ \\
    \bottomrule
  \end{tabular}
  \end{center}
  \vskip -0.1in
\end{table*}

\clearpage\section{Additional Safe BO Experiments \& Details}\label{sec:safe_bo_appendix}

In \cref{sec:safe_bo_appendix:thompson_sampling}, we discuss the use of stochastic target spaces in the safe BO setting.
We provide a comprehensive overview of prior works in \cref{sec:safe_bo_appendix:comparison} and an additional experiment highlighting that \itl, unlike \textsc{SafeOpt}, is able to ``jump past local barriers'' in \cref{sec:safe_bo_appendix:jumping_past_local_barriers}.
In \cref{sec:safe_bo_appendix:details}, we provide details on the experiments from \cref{fig:safe_bo}.\looseness=-1

\subsection{A More Exploitative Stochastic Target Space}\label{sec:safe_bo_appendix:thompson_sampling}

Alternatively to the target space~$\spA_n$ which comprises all potentially optimal points, we evaluate the stochastic target space \begin{align}
    {\spPA}_n(\cdot) = \Prsm{\argmax_{\vx \in \spX : g(\vx) \geq 0} f(\vx) = \cdot \mid \spD_n} \label{eq:safe_bo_stochastic_target_space}
\end{align} which effectively weights points in $\spA_n$ according to how likely they are to be the safe optimum, and is therefore more exploitative than the uniformly-weighted target space discussed so far.
Samples from ${\spPA}_n$ can be obtained efficiently via Thompson sampling~\citep{thompson1933likelihood,russo2018tutorial}.
Observe that ${\spPA}_n$ is supported precisely on the set of potential maximizers $\spA_n$.
We provide a formal analysis of stochastic target spaces in \cref{sec:generalizations:roi}.
Whether transductive active learning with $\spA_n$ or ${\spPA}_n$ performs better is task-dependent, as we will see in the following.\looseness=-1

Note that performing \itl with this target space is analogous to output-space entropy search \citep{wang2017max}.
Samples from ${\spPA}_n$ can be obtained via Thompson sampling \citep{thompson1933likelihood,russo2018tutorial}.
That is, in iteration $n+1$, we sample $K \in \Nat$ independent functions $\smash{f^{(j)} \sim f \mid \spD_n}$ from the posterior distribution and select $K$ points $\smash{\vx^{(1)}, \dots, \vx^{(K)}}$ which are a safe maximum of $\smash{f^{(1)}, \dots, f^{(K)}}$, respectively.\looseness=-1

\paragraph{Experiments}

In \cref{fig:safe_bo_ts}, we contrast the performance of \itl with ${\spPA}_n$ to the performance of \itl with the exact target space $\spA_n$.
We observe that their relative performance is instance dependent: in tasks that require more difficult expansion, \itl with $\spA_n$ converges faster, whereas in simpler tasks (such as the 2d experiment), \itl with ${\spPA}_n$ converges faster.
We compare against the \textsc{GoOSE} algorithm \citep{turchetta2019safe} which is a heuristic extension of \textsc{SafeOpt} that explores more greedily in directions of (assumed) high reward (cf. \cref{sec:safe_bo_appendix:comparison:goose}).
\textsc{GoOSE} suffers from the same limitations as \textsc{SafeOpt}, which were highlighted in \cref{sec:safe_bo}, and additionally is limited by its heuristic approach to expansion which fails in the 1d task and safe controller tuning task.
Analogously to our experiments with \textsc{SafeOpt}, we also compare against \textsc{Oracle GoOSE} which has oracle knowledge of the true Lipschitz constants.\looseness=-1

\begin{figure*}[t]
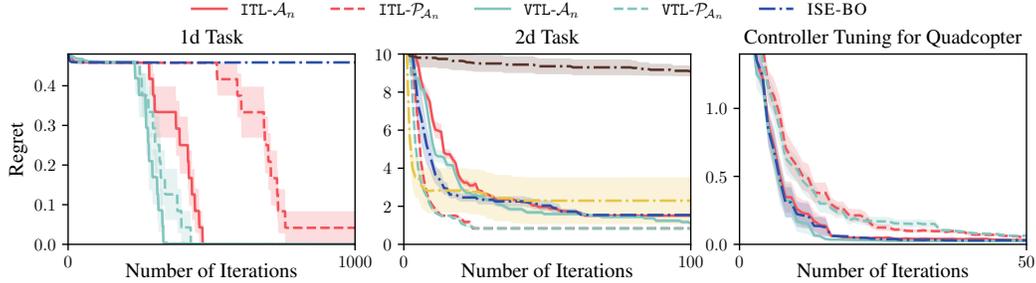

    \incplt[\textwidth]{safe_bo_ts}
    \vspace{-0.5cm}
    \caption{We perform the tasks of \cref{fig:safe_bo} using Thompson sampling to evaluate the stochastic target space ${\spPA}_n$. We additionally compare to \textsc{GoOSE} (cf. \cref{sec:safe_bo_appendix:comparison:goose}) and \textsc{ISE-BO} (cf. \cref{sec:safe_bo_appendix:comparison:ise}).}
    \label{fig:safe_bo_ts}
\end{figure*}

The different behaviors of \itl with $\spA_n$ and ${\spPA}_n$, respectively, as well as \textsc{SafeOpt} and \textsc{GoOSE} are illustrated in \cref{fig:safe_bo_2d_islands_samples}.
We observe that \itl with $\spA_n$ and \textsc{SafeOpt} expand the safe set more ``uniformly'' since the set of potential maximizers encircles the true safe set.\footnote{This is because typically, there will always remain points in $\widehat{\spS}_n \setminus \spS_n$ of which the safety cannot be fully determined, and since, they cannot be observed, it can also not be ruled out that they have high objective value.}
Intuitively, this is because the set of potential maximizers \emph{conservatively} captures migh points might be safe and optimal.
In contrast, \itl with ${\spPA}_n$ and \textsc{GoOSE} focus exploration and expansion in those regions where the objective is likely to be high.\looseness=-1

\begin{figure}[H]
  \centering
  \includesvg[width=\columnwidth]{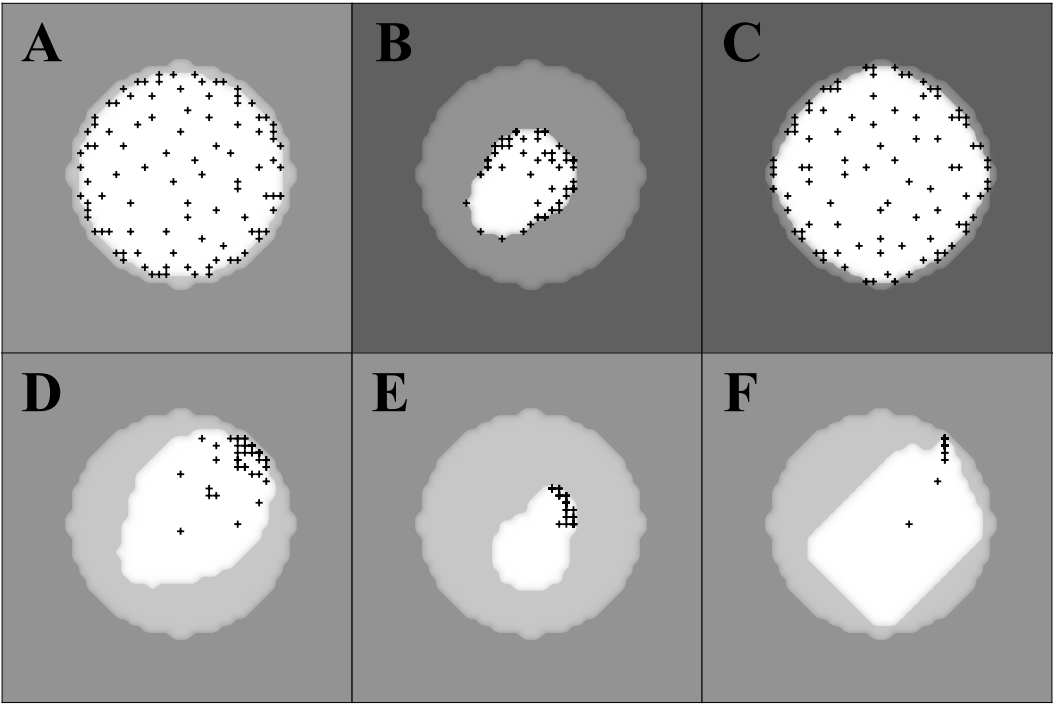}
  \caption{The first $100$ samples of \textbf{(A)} \itl with $\spA_n$, \textbf{(B)} \textsc{SafeOpt}, \textbf{(C)} \textsc{Oracle SafeOpt}, \textbf{(D)} \itl with ${\spPA}_n$, \textbf{(E)} \textsc{GoOSE}, \textbf{(F)} \textsc{Oracle GoOSE}. The white region denotes the pessimistic safe set $\spS_{100}$, the light gray region denotes the true safe set $\opt{\spS}$ (i.e., the ``island''), and the darker gray regions denotes unsafe points (i.e., the ``ocean'').}
  \label{fig:safe_bo_2d_islands_samples}
\end{figure}

\subsection{Detailed Comparison with Prior Works}\label{sec:safe_bo_appendix:comparison}

The most widely used method for Safe BO is \textsc{SafeOpt}~\citep{sui2015safe,berkenkamp2021bayesian} which keeps track of separate candidate sets for expansion and exploration and uses \textsc{UnSa} to pick one of the candidates in each round.
Treating expansion and exploration separately, sampling is directed towards expansion in \emph{all} directions --- even those that are known to be suboptimal.
The safe set is expanded based on a Lipschitz constant of $\opt{g}$, which is assumed to be known.
In most real-world settings, this constant is unknown and has to be estimated using the GP.
This estimate is generally conservative and results in suboptimal performance.
To this end, \cite{berkenkamp2016safe} proposed \textsc{Heuristic SafeOpt} which relies solely on the confidence intervals of~$g$ to expand the safe set, but lacks convergence guarantees.
More recently, \cite{bottero2022information} proposed ISE which queries parameters from $\spS_n$ that yield the most ``information'' about the safety of another parameter in $\spX$.
Hence, ISE focuses solely on the expansion of the safe set $\spS_n$ and does not take into account the objective $f$.
In practice, this can lead to significantly worse performance on the simplest of problems~(cf.~\cref{fig:safe_bo}).
In contrast, \itl balances expansion of and exploration within the safe set.
Furthermore, ISE does not have known convergence guarantees of the kind of \cref{thm:safebo_main}.
In parallel independent work, \cite{bottero2024information} proposed a combination of ISE and max-value entropy search \citep{wang2017max} for which they derive a similar guarantee to \cref{thm:safebo_main}.\footnote{We provide an empirical evaluation in \cref{sec:safe_bo_appendix:comparison:ise}.}
Similar to \textsc{SafeOpt}, their method aims to expand the safe set in all directions including those that are known to be suboptimal.
In contrast, \itl directs expansion only towards potentially optimal regions.\looseness=-1

In the 1d task and quadcopter experiment (cf. \cref{fig:safe_bo}), we observe that \textsc{SafeOpt} and even \textsc{Oracle SafeOpt} converge significantly slower than \itl to the safe optima.
We believe this is due to their conservative Lipschitz-continuity/global smoothness-based expansion, as opposed to \itl's expansion, which adapts to the local smoothness of the constraints.
\textsc{Heuristic SafeOpt}, which does not rely on the Lipschitz constant for expansion, does not efficiently expand the safe set due to its heuristic that only considers single-step expansion. This is especially the case for the 1d task. %
Furthermore, in the 2d task, we notice the suboptimality of ISE since it does not take into account the objective, and purely aims to expand the safe set.
\itl, on the other hand, balances expansion and exploration.\looseness=-1

\subsubsection{\textsc{SafeOpt}}

\textsc{SafeOpt} \citep{sui2015safe,berkenkamp2021bayesian} is a well-known algorithm for Safe BO.\looseness=-1

\paragraph{Lipschitz-based expansion}

\textsc{SafeOpt} expands the set of known-to-be safe points by assuming knowledge of an upper bound $L_i$ to the Lipschitz constant of the unknown constraints $\opt{g}_i$.\footnote{Recall that due to the assumption that $\norm{\opt{g}_i}_k < \infty$, $\opt{g}_i$ is indeed Lipschitz continuous.}
In each iteration, the (pessimistic) safe set $\spS_n$ is updated to include all points which can be reached safely (with respect to the Lipschitz continuity) from a known-to-be-safe point $\vx \in \spS_n$.
Formally, \begin{align}
  \spS_{n}^{\textsc{SafeOpt}} \defeq \begin{multlined}[t]
    \bigcup_{\vx \in \spS_{n-1}^{\textsc{SafeOpt}}} \{\vxp \in \spX \mid \\ \text{$l_{n,i}(\vx) - L_i \norm*{\vx - \vxp}_2 \geq 0$ for all $i \in \spI_s$}\}.
  \end{multlined} \label{eq:safeopt_safe_set}
\end{align}
The expansion of the safe set is illustrated in \cref{fig:safe_opt_expanders}.\looseness=-1

We remark two main limitations of this approach.
First, the Lipschitz constant is an additional safety critical hyperparameter of the algorithm, which is typically not known.
The RKHS assumption (cf. \cref{asm:rkhs}) induces an assumption on the Lipschitz continuity, however, the worst-case a-priori Lipschitz constant is typically very large, and prohibitive for expansion.
Second, the Lipschitz constant is global property of the unknown function, meaning that it does not adapt to the local smoothness.
For example, a constraint may be ``flat'' in one direction (permitting straightforward expansion) and ``steep'' in another direction (requiring slow expansion).
Furthermore, the Lipschitz constant is constant over time, whereas \itl is able to adapt to the local smoothness and reduce the (induced) Lipschitz constant over time.\looseness=-1

\paragraph{Undirected expansion}

\textsc{SafeOpt} addresses the trade-off between expansion and exploration by focusing learning on two different sets.
First, the set of \emph{maximizers} \begin{align*}
  \spM_n^{\textsc{SafeOpt}} \defeq \begin{multlined}[t]
    \{\vx \in \spS_n^{\textsc{SafeOpt}} \mid \\ u_{n,f}(\vx) \geq \max_{\vxp \in \spS_n^{\textsc{SafeOpt}}} l_{n,f}(\vx)\}
  \end{multlined}
\end{align*} which contains all \emph{known-to-be-safe} points which are potentially optimal.
Note that if $\spS_n^{\textsc{SafeOpt}} = \spS_n$ then $\spM_n^{\textsc{SafeOpt}} \subseteq \spA_n$ since $\spA_n$ contains points which are potentially optimal and potentially safe \emph{but possibly unsafe}.\looseness=-1

To facilitate expansion, for each point $\vx \in \spS_n$, the algorithm considers a set of \emph{expanding points} \begin{align*}
  \spF_n^{\textsc{SafeOpt}}(\vx) \defeq \begin{multlined}[t]
    \{\vxp \in \spX \setminus \spS_n^{\textsc{SafeOpt}} \mid \\ \hspace{-0.5cm}\text{$u_{n,i}(\vx) - L_i \norm*{\vx - \vxp}_2 \geq 0$ for all $i \in \spI_s$}\}
  \end{multlined}
\end{align*}
A point is expanding if it is unsafe initially and can be (optimistically) deduced as safe by observing $\vx$.
The set of \emph{expanders} corresponds to all known-to-be-safe points which optimistically lead to expansion of the safe set: \begin{align*}
  \spG_n^{\textsc{SafeOpt}} \defeq \{\vx \in \spS_n \mid |\spF_n(\vx)| > 0\}.
\end{align*}
That is, an expander is a safe point $\vx$ which is ``close'' to at least one expanding point $\vxp$.
Observe that here, we start with a safe $\vx$ and then find a close and potentially safe $\vxp$ using the Lipschitz-property of the constraint function.
Thus, the set of expanding points is inherently limited by the assumed Lipschitzness (cf. \cref{fig:safe_opt_expanders}), and generally a subset of the potential expanders $\spE_n$ (cf. \cref{eq:potential_expanders}):\looseness=-1

\begin{lemma}\label{lem:safeopt_expanding_points}
  For any $n \geq 0$, if $\spS_n^{\textsc{SafeOpt}} = \spS_n$ then \begin{align*}
    \bigcup_{\vx \in \spS_n} \spF_n^{\textsc{SafeOpt}}(\vx) \subseteq \spE_n.
  \end{align*}
\end{lemma}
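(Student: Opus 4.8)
The plan is to prove the inclusion pointwise. I would fix an arbitrary $\vxp$ in the left-hand side; then by definition there is a witness $\vx \in \spS_n$ with $\vxp \in \spF_n^{\textsc{SafeOpt}}(\vx)$, so that the expanding condition $u_{n,i}(\vx) - L_i \norm{\vx - \vxp}_2 \geq 0$ holds for every $i \in \spI_s$. Recalling that $\spE_n = \widehat{\spS}_n \setminus \spS_n$, it suffices to establish the two membership statements $\vxp \notin \spS_n$ and $\vxp \in \widehat{\spS}_n$ separately.

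The first statement is immediate: by definition $\spF_n^{\textsc{SafeOpt}}(\vx) \subseteq \spX \setminus \spS_n^{\textsc{SafeOpt}}$, and the hypothesis $\spS_n^{\textsc{SafeOpt}} = \spS_n$ then gives $\vxp \notin \spS_n$ at once. For the second statement, since $\widehat{\spS}_n = \{\vz \mid u_{n,i}(\vz) \geq 0 \ \text{for all}\ i \in \spI_s\}$, I must show $u_{n,i}(\vxp) \geq 0$ for every $i \in \spI_s$. The key step is the inequality $u_{n,i}(\vxp) \geq u_{n,i}(\vx) - L_i \norm{\vx - \vxp}_2$; combined with the expanding condition this yields $u_{n,i}(\vxp) \geq 0$ directly. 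This inequality is precisely the statement that the optimistic upper bound $u_{n,i}$ is $L_i$-Lipschitz continuous, i.e. $\abs{u_{n,i}(\vx) - u_{n,i}(\vxp)} \leq L_i \norm{\vx - \vxp}_2$, which I would invoke using that $L_i$ is a valid Lipschitz constant of $\opt{g}_i$ (guaranteed by $\norm{\opt{g}_i}_k < \infty$) and that the SafeOpt confidence bounds are constructed to be consistent with this constant.

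The main obstacle is precisely justifying the Lipschitz property of $u_{n,i}$ itself, rather than of the ground truth $\opt{g}_i$. A naive attempt using only the high-probability containment $\opt{g}_i(\vz) \in [l_{n,i}(\vz), u_{n,i}(\vz)]$ does \emph{not} work: Lipschitz continuity of $\opt{g}_i$ gives only $\opt{g}_i(\vxp) \geq \opt{g}_i(\vx) - L_i \norm{\vx-\vxp}_2 \geq l_{n,i}(\vx) - L_i \norm{\vx-\vxp}_2$, which bounds $u_{n,i}(\vxp)$ below by the \emph{lower} bound at $\vx$, whereas the expanding condition controls only the \emph{upper} bound $u_{n,i}(\vx)$; since $u_{n,i}(\vx) \geq l_{n,i}(\vx)$, this bound points the wrong way and can be vacuous.

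I would therefore instead rely on the Lipschitz-refined form of the optimistic bound that SafeOpt uses once a valid constant $L_i$ is available, namely $u_{n,i}(\vxp) = \min_{\vz \in \spX} \brackets{\mu_{n,i}(\vz) + \beta_n \sigma_{n,i}(\vz) + L_i \norm{\vz - \vxp}_2}$. This refined bound is a valid upper confidence bound for $\opt{g}_i$ (taking $\vz = \vxp$ recovers the raw bound, and $L_i$-Lipschitzness of $\opt{g}_i$ shows it dominates $\opt{g}_i$ everywhere), and it is $L_i$-Lipschitz by construction, which is exactly the property needed to close the argument. With this, the displayed inclusion $\bigcup_{\vx \in \spS_n} \spF_n^{\textsc{SafeOpt}}(\vx) \subseteq \spE_n$ follows, and I would remark that the inclusion is typically strict, since $\spE_n$ additionally contains optimistically-safe points not reachable within one Lipschitz step from $\spS_n$.
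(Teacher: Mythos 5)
Your overall skeleton (fix $\vxp$, split membership in $\spE_n$ into $\vxp \notin \spS_n$ and $\vxp \in \widehat{\spS}_n$) matches the paper, and the first half is handled identically. But for the second half the paper's proof is a one-liner: it rewrites $\spE_n = \{\vz \in \spX \setminus \spS_n \mid u_{n,i}(\vz) \geq 0\}$ and then asserts the result ``follows directly by observing that $L_i \norm*{\vx - \vxp}_2 \geq 0$'' --- i.e., it treats dropping the nonnegative Lipschitz term in the expanding condition $u_{n,i}(\vx) - L_i\norm*{\vx - \vxp}_2 \geq 0$ as if it produced a lower bound on $u_{n,i}(\vxp)$, when literally it only yields $u_{n,i}(\vx) \geq 0$. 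That step is valid exactly when $u_{n,i}(\vxp) \geq u_{n,i}(\vx) - L_i\norm*{\vx - \vxp}_2$, which is the UCB-Lipschitz transfer you isolate as the crux. Your ``naive attempt'' paragraph is also correct: validity of the confidence bounds for an $L_i$-Lipschitz $\opt{g}_i$ only gives $u_{n,i}(\vxp) \geq \opt{g}_i(\vxp) \geq l_{n,i}(\vx) - L_i\norm*{\vx - \vxp}_2$, which points the wrong way. So you have correctly identified the exact step that the paper's proof leaves implicit; in that sense your route is more careful than the paper's.

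The gap is that your repair proves a modified lemma rather than the stated one. In the paper, $u_{n,i}$ is the raw intersected GP bound of \cref{eq:confidence_interval}, and this \emph{same} function appears in the definition of $\spF_n^{\textsc{SafeOpt}}(\vx)$ (this is also how \textsc{SafeOpt} defines expanders) and in $\widehat{\spS}_n$; nothing forces it to be $L_i$-Lipschitz. If you substitute the refined bound $\tilde{u}_{n,i}(\cdot) = \min_{\vz}\,[\,u_{n,i}(\vz) + L_i\norm*{\vz - \cdot}_2\,]$ only where you need Lipschitzness, the chain breaks at its first link: the expanding condition controls $u_{n,i}(\vx)$, and since $\tilde{u}_{n,i}(\vx) \leq u_{n,i}(\vx)$ you cannot conclude $\tilde{u}_{n,i}(\vx) \geq L_i\norm*{\vx - \vxp}_2$, which is the input the Lipschitz property of $\tilde{u}_{n,i}$ requires. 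Your argument therefore only closes if the refined bound is used inside the expanding condition as well, i.e., for a modified \textsc{SafeOpt}. The modification is not cosmetic: with raw bounds the inclusion can genuinely fail. Nothing in the lemma's hypotheses rules out a state with $\spC_n(\vx) = [0, L_i d + 1]$ and $\spC_n(\vxp) = [-2\epsilon, -\epsilon/2]$, where $d = \norm*{\vx - \vxp}_2$ and $\epsilon \in (0, L_i d)$; this is consistent with an $L_i$-Lipschitz $\opt{g}_i$ inside both intervals (e.g., $\opt{g}_i(\vx) = 0$, $\opt{g}_i(\vxp) = -\epsilon$) and with $\spS_n^{\textsc{SafeOpt}} = \spS_n = \{\vx\}$, yet $\vxp \in \spF_n^{\textsc{SafeOpt}}(\vx)$ while $u_{n,i}(\vxp) < 0$, so $\vxp \notin \spE_n$. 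The honest conclusion --- which your final paragraph is reaching for --- is that the lemma needs Lipschitz-consistency of the confidence bounds as an explicit hypothesis (or $u_{n,i}$ redefined as the refined bound throughout the \textsc{SafeOpt} comparison); that is precisely the assumption the paper's one-line proof uses silently.
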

\begin{proof}
  Without loss of generality, we consider the case where $\spI_s = \{i\}$.
  We have \begin{align*}
    \spE_n = \widehat{\spS}_n \setminus \spS_n = \{\vx \in \spX \setminus \spS_n \mid u_{n,i}(\vx) \geq 0\}.
  \end{align*}
  The result follows directly by observing that ${L_i \norm*{\vx - \vxp}_2 \geq 0}$.
\end{proof}

\begin{figure}[t]
  \centering
  \includegraphics[width=0.6\columnwidth]{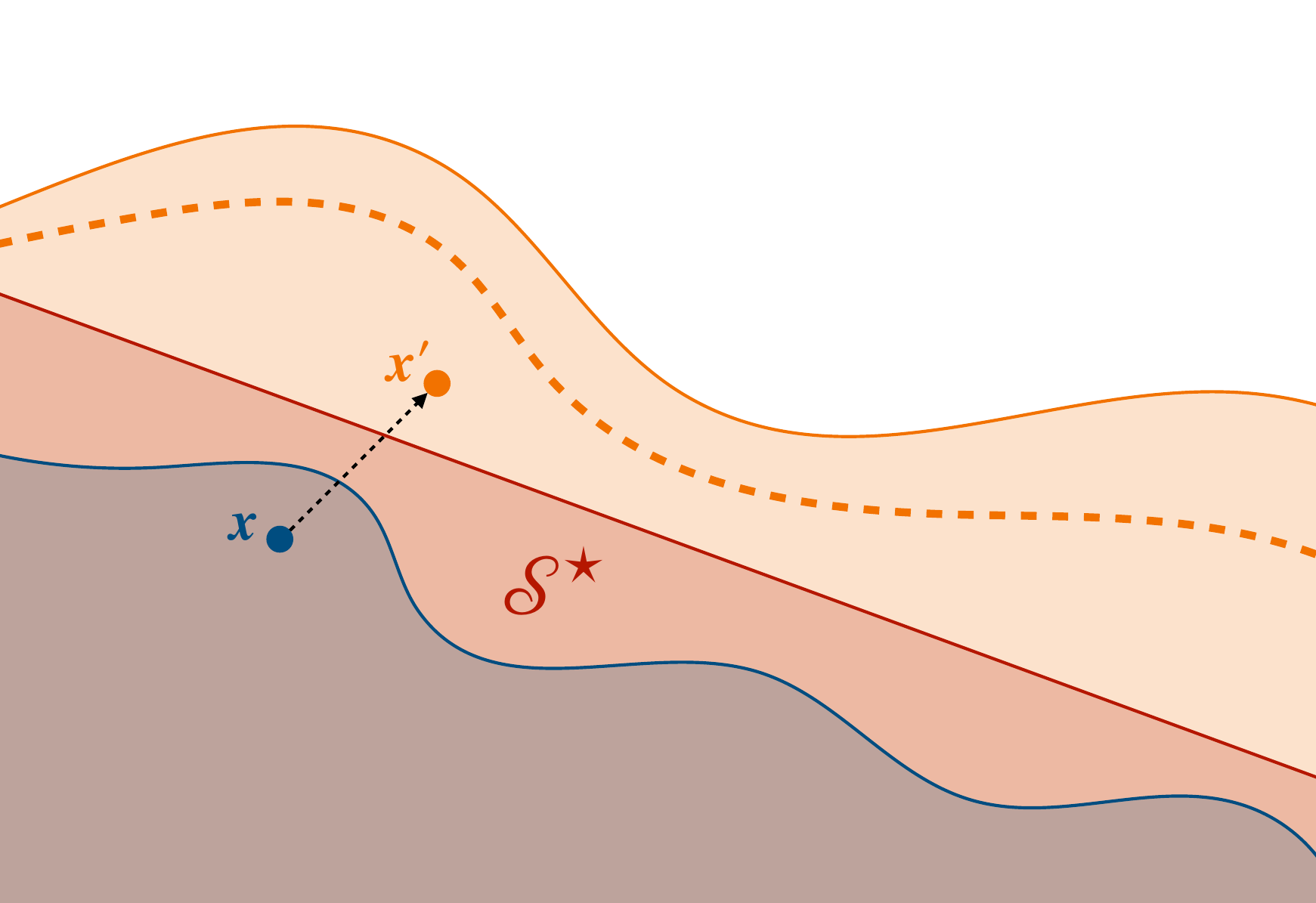}
  \caption{Illustration of the expansion of the safe set à la \textsc{SafeOpt}.
  Here, the blue region denotes the pessimistic safe set $\spS$, the red region denotes the true safe set $\opt{\spS}$, and the orange region denotes the optimistic safe set $\widehat{\spS}$.
  Whereas \itl learns about the point $\vxp$ \emph{directly}, \textsc{SafeOpt} expands the safe set using the reduction of uncertainty at $\vx$, and then extrapolating using the Lipschitz constant (cf. \cref{eq:safeopt_safe_set}).
  The dashed orange line denotes the expanding points of \textsc{SafeOpt} which under-approximate the optimistic safe set of \itl (cf. \cref{lem:safeopt_expanding_points}).
  Thus, \itl may even learn about points in $\widehat{\spS}$ which are ``out of reach'' for \textsc{SafeOpt}.}
  \label{fig:safe_opt_expanders}
\end{figure}

\textsc{SafeOpt} then selects $\vx_{n+1}$ according to uncertainty sampling \emph{within} the maximizers and expanders: $\spM_n^{\textsc{SafeOpt}}~\cup~\spG_n^{\textsc{SafeOpt}}$.
We remark that due to the separate handling of expansion and exploration, \textsc{SafeOpt} expands the safe set in \emph{all} directions --- even those that are known to be suboptimal.
In contrast, \itl only expands the safe set in directions that are potentially optimal by balancing expansion and exploration through the single set of potential maximizers $\spA_n$.\looseness=-1

\paragraph{Based on uncertainty sampling}

As mentioned in the previous paragraph, \textsc{SafeOpt} selects as next point the maximizer/expander with the largest prior uncertainty.\footnote{The use of uncertainty sampling for safe sequential decision-making goes back to \cite{schreiter2015safe} and \cite{sui2015safe}.}
In contrast, \itl selects the point within $\spS_n$ which minimizes the posterior uncertainty within $\spA_n$.
Note that the two approaches are not identical as typically $\spM_n^{\textsc{SafeOpt}}~\cup~\spG_n^{\textsc{SafeOpt}} \subset \spS_n^{\textsc{SafeOpt}}$ and $\spA_n \not\supseteq \spS_n$.\looseness=-1

We show empirically in \cref{sec:gps} that depending on the kernel choice (i.e., the smoothness assumptions), uncertainty sampling within a given target space neglects higher-order information that can be attained by sampling outside the set.
This can be seen even more clearly when considering linear functions, in which case points outside the maximizers and expanders can be equally informative as points inside.\looseness=-1

Finally, note that the set of expanders is constructed ``greedily'', i.e., only considering \emph{single-step} expansion.
This is necessitated as the inference of safety is based on single reference points.
Instead, \itl directly quantifies the information gained towards the points of interest without considering intermediate reference points.\looseness=-1

\paragraph{Requires homoscedastic noise}

\textsc{SafeOpt} imposes a homoscedasticity assumption on the noise which is an artifact of the analysis of uncertainty sampling.
It is well known that in the presence of heteroscedastic noise, one has to distinguish epistemic and aleatoric uncertainty.
Uncertainty sampling fails because it may continuously sample a high variance point where the variance is dominated by aleatoric uncertainty, potentially missing out on reducing epistemic uncertainty at points with small aleatoric uncertainty.
In contrast, maximizing mutual information naturally takes into account the two sources of uncertainty, preferring those points where epistemic uncertainty is large and aleatoric uncertainty is small (cf. \cref{sec:proofs:undirected_itl}).\looseness=-1

\paragraph{Suboptimal reachable safe set}

\cite{sui2015safe} and \cite{berkenkamp2021bayesian} show that \textsc{SafeOpt} converges to the optimum within the closure $\bar{\spR}_\epsilon^{\textsc{SafeOpt}}(\spS_0)$ of \begin{align*}
  \spR_\epsilon^{\textsc{SafeOpt}}(\spS) \defeq \begin{multlined}[t]
    \spS \cup \{\vx \in \spX \mid \text{$\exists \vxp \in \spS$ such that} \\ \hspace{-1.5cm}\text{$\opt{f}_i(\vxp) - (L_i \norm*{\vx - \vxp}_2 + \epsilon) \geq 0$ for all $i \in \spI_s$}\}.
  \end{multlined}
\end{align*}
Note that analogously to the expansion of the safe set, the ``expansion'' of the reachable safe set is based on ``inferring safety'' through a reference point in $\spS$ and using Lipschitz continuity.
This is opposed to the reachable safe set of \itl~(cf. \cref{defn:reachable_safe_set}).

We remark that under the additional assumption that a Lipschitz constant is known, \itl can easily be extended to expand its safe set based on the kernel \emph{and} the Lipschitz constant, resulting in a strictly larger reachable safe set than \textsc{SafeOpt}.
We leave the concrete formalization of this extension to future work.
Moreover, we do not evaluate this extension in our experiments, as we observe that even without the additional assumption of a Lipschitz constant, \itl outperforms \textsc{SafeOpt} in practice.\looseness=-1

\subsubsection{\textsc{Heuristic SafeOpt}}

\cite{berkenkamp2016safe} also implement a heuristic variant of \textsc{SafeOpt} which does not assume a known Lipschitz constant.
This heuristic variant uses the same (pessimistic) safe sets $\spS_n$ as \itl.
The set of maximizers is identical to \textsc{SafeOpt}.
As expanders, the heuristic variant considers all safe points $\vx \in \spS_n$ that if $\vx$ were to be observed next with value $\vu_n(\vx)$ lead to $|\spS_{n+1}| > |\spS_n|$.
We refer to this set as $\spG_n^{\textsc{H-SafeOpt}}$.
The next point is then selected by uncertainty sampling within $\spM_n^{\textsc{SafeOpt}} \cup \spG_n^{\textsc{H-SafeOpt}}$.\looseness=-1

The heuristic variant shares some properties with \textsc{SafeOpt}, such that it is based on uncertainty sampling, not adapting to heteroscedastic noise, and separate notions of maximizers and expanders (leading to an ``undirected'' expansion of the safe set).
Note that there are no known convergence guarantees for heuristic \textsc{SafeOpt}.
Importantly, note that similar to \textsc{SafeOpt} the set of expanders is constructed ``greedily'', and in particular, does only take into account \emph{single-step} expansion.
In contrast, an objective such as \itl which quantifies the ``information gained towards expansion'' also actively seeks out \emph{multi-step} expansion.\looseness=-1

\subsubsection{\textsc{GoOSE}}\label{sec:safe_bo_appendix:comparison:goose}

To address the ``undirected'' expansion of \textsc{SafeOpt} discussed in the previous section, \cite{turchetta2019safe} proposed \emph{goal-oriented safe exploration} (\textsc{GoOSE}).
\textsc{GoOSE} extends any unsafe BO algorithm (which we subsequently call an oracle) to the safe setting.
In our experiments, we evaluate \textsc{GoOSE-UCB} which uses UCB as oracle and which is also the variant studied by \cite{turchetta2019safe}.
In the following, we assume for ease of notation that $\spI_s = \{c\}$.\looseness=-1

Given the oracle proposal $\opt{\vx}$, \textsc{GoOSE} first determines whether $\opt{\vx}$ is safe.
If $\opt{\vx}$ is safe, $\opt{\vx}$ is queried next.
Otherwise, \textsc{GoOSE} first learns about the safety of $\opt{\vx}$ by querying ``expansionist'' points until the oracle's proposal is determined to be either safe or unsafe.\looseness=-1

\textsc{GoOSE} expands the safe set identically to \textsc{SafeOpt} according to \cref{eq:safeopt_safe_set}.
In the context of \textsc{GoOSE}, $\spS_n^{\textsc{SafeOpt}}$ is called the \emph{pessimistic safe set}.
To determine that a point cannot be deduced as safe, \textsc{GoOSE} also keeps track of a Lipschitz-based \emph{optimistic safe set}: \begin{align*}
  \widehat{\spS}_{n,\epsilon}^{\textsc{GoOSE}} \defeq \begin{multlined}[t]
    \bigcup_{\vx \in \spS_{n-1}^{\textsc{SafeOpt}}} \{\vxp \in \spX \mid \\ \text{$u_{n,c}(\vx) - L_c \norm*{\vx - \vxp}_2 - \epsilon \geq 0$}\}.
  \end{multlined}
\end{align*}
We summarize the algorithm in \cref{alg:goose} where we denote by $\spO(\spX)$ the oracle proposal over the domain $\spX$.\looseness=-1

\begin{algorithm}[htb]
  \caption{\textsc{GoOSE}}
  \label{alg:goose}
\begin{algorithmic}
  \STATE {\bfseries Given:} Lipschitz constant~$L_c$, prior model~$\{f, g_c\}$, oracle~$\spO$, and precision~$\epsilon$
  \STATE Set initial safe set $\spS_0^{\textsc{SafeOpt}}$ based on prior
  \STATE $\widehat{\spS}_{n,\epsilon}^{\textsc{GoOSE}} \gets \spX$
  \STATE $n \gets 0$
  \FOR{$k$ from $1$ to $\infty$}
    \STATE $\opt{\vx}_k \gets \spO(\widehat{\spS}_{n,\epsilon}^{\textsc{GoOSE}})$
    \WHILE{$\opt{\vx}_k \not\in \spS_{n}^{\textsc{SafeOpt}}$}
      \STATE Observe ``expansionist'' point $\vx_{n+1}$, set $n \gets n + 1$, and update model and safe sets
    \ENDWHILE
    \STATE Observe $\opt{\vx}_k$, set $n \gets n + 1$, and update model and safe sets
  \ENDFOR
\end{algorithmic}
\end{algorithm}

It remains to discuss the heuristic used to select the ``expansionist'' points.
\textsc{GoOSE} considers all points $\vx \in \spS_n^{\textsc{SafeOpt}}$ with confidence bands of size larger than the accuracy $\epsilon$, i.e., \begin{align*}
  \spW_{n,\epsilon}^{\textsc{GoOSE}} \defeq \{\vx \in \spS_{n}^{\textsc{SafeOpt}} \mid u_{n,c}(\vx) - l_{n,c}(\vx) > \epsilon\}.
\end{align*}
Which of the points in this set is evaluated depends on a set of learning targets $\spA_{n,\epsilon}^{\textsc{GoOSE}} \defeq \widehat{\spS}_{n,\epsilon}^{\textsc{GoOSE}} \setminus \spS_{n}^{\textsc{SafeOpt}}$ akin to the ``potential expanders''~$\spE_n$ (cf. \cref{eq:potential_expanders}), to each of which we assign a priority $h(\vx)$.
When $h(\vx)$ is large, this indicates that the algorithm is prioritizing to determine whether $\vx$ is safe.
We use as heuristic the negative $\ell_1$-distance between $\vx$ and $\opt{\vx}$.
\textsc{GoOSE} then considers the set of \emph{potential immediate expanders} \begin{align*}
  \spG_{n,\epsilon}^{\textsc{GoOSE}}(\alpha) \defeq \begin{multlined}[t]
    \{\vx \in \spW_{n,\epsilon}^{\textsc{GoOSE}} \mid \text{$\exists \vxp \in \spA_{n,\epsilon}^{\textsc{GoOSE}}$ with} \\ \hspace{-1.3cm}\text{priority $\alpha$ such that $u_{n,c}(\vx) - L_c \norm*{\vx - \vxp}_2 \geq 0$}\}.
  \end{multlined}
\end{align*}
The ``expansionist'' point selected by \textsc{GoOSE} is then any point in $\spG_{n,\epsilon}^{\textsc{GoOSE}}(\opt{\alpha})$ where $\opt{\alpha}$ denotes the largest priority such that $|\spG_{n,\epsilon}^{\textsc{GoOSE}}(\opt{\alpha})| > 0$.\looseness=-1

We observe empirically that the sample complexity of \textsc{GoOSE} is not always better than that of \textsc{SafeOpt}.
Notably, the expansion of the safe set is based on a ``greedy'' heuristic.
Moreover, determining whether a single oracle proposal $\opt{\vx}$ is safe may take significant time.
Consider the (realistic) example where the prior is uniform, and UCB proposes a point which is far away from the safe set and suboptimal.
\textsc{GoOSE} will typically attempt to derive the safety of the proposed point until the uncertainty at \emph{all} points within $\spS_0^{\textsc{SafeOpt}}$ is reduced to $\epsilon$.\footnote{This is because the proposed point typically remains in the optimistic safe set when it is sufficiently far away from the pessimistic safe set.}
Thus, \textsc{GoOSE} can ``waste'' a significant number of samples, aiming to expand the safe set towards a known-to-be suboptimal point.
In larger state spaces, due to the greedy nature of the expansion strategy, this can lead to \textsc{GoOSE} being effectively stuck at a suboptimal point for a significant number of rounds.\looseness=-1

\subsubsection{ISE and ISE-BO}\label{sec:safe_bo_appendix:comparison:ise}

Recently, \cite{bottero2022information} proposed an information-theoretic approach to efficiently expand the safe set which they call \emph{information-theoretic safe exploration} (ISE).
Specifically, they choose the next action $\vx_{n}$ by approximating \begin{align*}
  \argmax_{\vx \in \spS_{n-1}} \underbrace{\max_{\vxp \in \spX} \I{\Ind{g_{\vxp} \geq 0}}{y_{\vx}}[\spD_{n-1}]}_{\alpha^{\mathrm{ISE}}(\vx)}. \tag{ISE}
\end{align*}
In a parallel independent work, \cite{bottero2024information} extended ISE to the Safe BO problem where they propose to choose $\vx_n$ according to \begin{align*}
  \argmax_{\vx \in \spS_{n-1}} \max \{\alpha^{\mathrm{ISE}}(\vx), \alpha^{\mathrm{MES}}(\vx)\} \tag{ISE-BO}
\end{align*} where $\alpha^{\mathrm{MES}}$ denotes the acquisition function of max-value entropy search \citep{wang2017max}.
Similarly to \textsc{SafeOpt}, ISE-BO treats expansion and exploration separately, which leads to ``undirected'' expansion of the safe set.
That is, the safe set is expanded in all directions, even those that are known to be suboptimal.
In contrast, \itl balances expansion and exploration through the single set of potential maximizers $\spA_n$.
With a stochastic target space, \itl generalizes max-value entropy search (cf. \cref{sec:safe_bo_appendix:thompson_sampling}).\looseness=-1

We evaluate ISE-BO in \cref{fig:safe_bo_ts} and observe that it does not outperform \itl and \vtl in any of the tasks, while performing poorly in the 1d~task and suboptimally in the 2d~task.\looseness=-1

\subsection{Jumping Past Local Barriers}\label{sec:safe_bo_appendix:jumping_past_local_barriers}

In this additional experiment we demonstrate that \itl is able to extrapolate safety beyond local unsafe ``barriers'', which is a fundamental limitation of Lipschitz-based methods such as \textsc{SafeOpt}.
We consider the ground truth function and prior statistical model shown in \cref{fig:boundary_exploration:jumping_past_local_barriers:1d:prior}.
Note that initially, there are three disjoint safe ``regions'' known to the algorithm corresponding to two of the three safe ``bumps'' of the ground truth function.
In this experiment, the main challenge is to ``jump past'' the local barrier separating the leftmost and initially unknown safe ``bump''.\looseness=-1

\begin{figure}[t]
  \centering
  \includesvg[width=0.4\columnwidth]{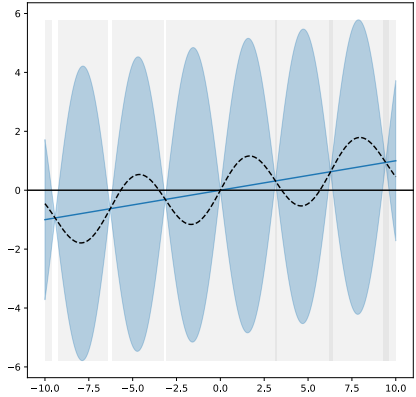}
  \caption{The ground truth $\opt{f}$ is shown as the dashed black line. The solid black line denotes the constraint boundary. The GP prior is given by a linear kernel with $\sin$-transform and mean $0.1 x$. The light gray region denotes the initial optimistic safe set $\widehat{\spS}_0$ and the dark gray region denotes the initial pessimistic safe set $\spS_0$.}
  \label{fig:boundary_exploration:jumping_past_local_barriers:1d:prior}
\end{figure}

\begin{figure}[t]
  \centering
  \includesvg[width=0.8\columnwidth]{figures/safe_bo/jumping_past_local_barriers/JPLB_samples}
  \caption{First $100$ samples of \itl using the potential expanders $\spE_n$ (cf. \cref{eq:potential_expanders}) as target space (left) and \textsc{SafeOpt} sampling only from the set of expanders $\spG_n^{\textsc{SafeOpt}}$ (right).}
  \label{fig:boundary_exploration:jumping_past_local_barriers:1d:samples}
\end{figure}

\Cref{fig:boundary_exploration:jumping_past_local_barriers:1d:samples} shows the sampled points during the first $100$ iterations of \textsc{SafeOpt} and \itl.
Clearly, \textsc{SafeOpt} does not discover the third safe ``bump'' while \itl does.
Indeed, it is a fundamental limitation of Lipschitz-based methods that they can never ``jump past local barriers'', even if the oracle Lipschitz constant were to be known and tight (i.e., locally accurate) around the barrier.
This is because Lipschitz-based methods expand to the point $\vx$ based on a reference point $\vxp$, and by definition, if $\vx$ is added to the safe set so are all points on the line segment between $\vx$ and $\vxp$.
Hence, if there is a single point on this line segment which is unsafe (i.e., a ``barrier''), the algorithm will \emph{never} expand past it.
This limitation does not exist for kernel-based algorithms as expansion occurs in function space.\looseness=-1

Moreover, note that for a non-stationary kernel such as in this example, \itl samples the ``closest points'' in function space rather than Euclidean space.
We observe that \textsc{SafeOpt} still samples ``locally at the boundary'' whereas \itl samples the most informative point which in this case is the local maximum of the sinusoidal function.
In other words, \itl adapts to the geometry of the function.
This generally leads us to believe that \itl is more capable to exploit (non-stationary) prior knowledge than distance-based methods such as \textsc{SafeOpt}.\looseness=-1

\subsection{Experiment Details}\label{sec:safe_bo_appendix:details}

\subsubsection{Synthetic Experiments}

\paragraph{1d task}

\begin{figure}[t]
  \centering
  \includesvg[width=0.4\columnwidth]{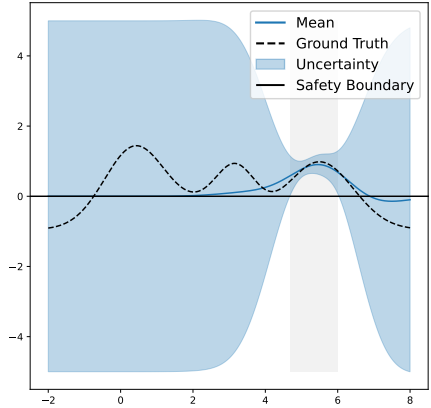}
  \caption{Ground truth and prior well-calibrated model in 1d synthetic experiment. The function serves simultaneously as objective and as constraint. The light gray region denotes the initial safe set~$\spS_0$.}
  \label{fig:safe_bo_1d_hard}
\end{figure}

\Cref{fig:safe_bo_1d_hard} shows the objective and constraint function, as well as the prior.
We discretize using $500$ points.
The main difficulty in this experiment lies in sufficiently expanding the safe set to discover the global maximum.
\Cref{fig:safe_bo_safe_set_size} plots the size of the safe set $\spS_n$ for the compared algorithms, which in this experiment matches the achieved regret closely.\looseness=-1

\begin{figure}[t]
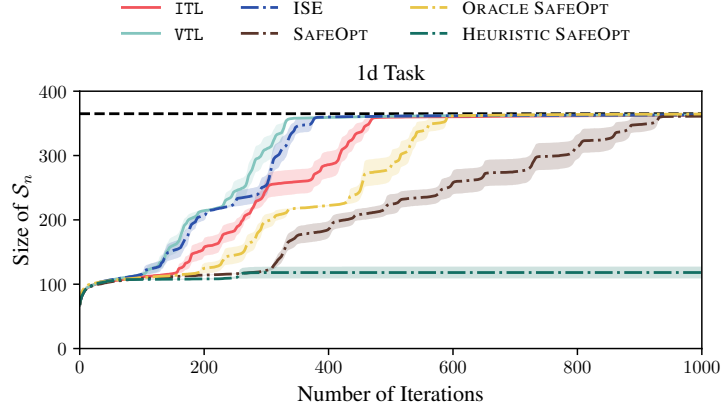

    \incplt[0.7\columnwidth]{safe_bo_safe_set_size}
    \vspace{-0.5cm}
    \caption{Size of $\spS_n$ in 1d synthetic experiment. The dashed black line denotes the size of $\opt{\spS}$. In this task, ``discovering'' the optimum is closely linked to expansion of the safe set, and \textsc{Heuristic SafeOpt} fails since it does not expand the safe set sufficiently.}
    \label{fig:safe_bo_safe_set_size}
\end{figure}

\paragraph{2d task}

\begin{figure}[t]
  \centering
  \includesvg[width=0.7\columnwidth]{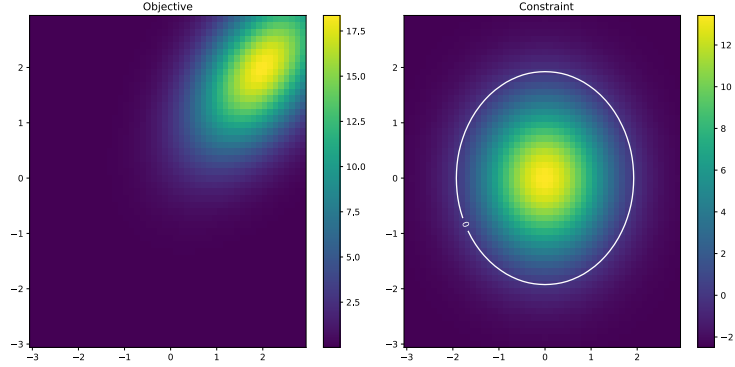}
  \caption{Ground truth in 2d synthetic experiment.}
  \label{fig:safe_bo_2d_islands}
\end{figure}

We model our constraint in the form of a spherical ``island'' where the goal is to get a good view of the coral reef located to the north-east of the island while staying in the interior of the island during exploration~(cf.~\cref{fig:safe_bo_2d_islands}).
The precise objective and constraint functions are unknown to the agent.
Hence, the agent has to gradually and safely update its belief about boundaries of the ``island'' and the location of the coral reef.
The prior is obtained by a single observation within the center of the island $[-0.5, 0.5]^2$.
We discretize using $2\,500$ points.\looseness=-1

\subsubsection{Safe Controller Tuning for Quadcopter}\label{sec:safe_bo_appendix:details:quadcopter}

\paragraph{Modeling the real-world dynamics}

We learn a feedback policy (i.e., ``control gains'') to compensate for inaccuracies in the initial controller.
In our experiment, we model the real world dynamics and the adjusted model using the PD control feedback \citep{widmer2023tuning}, \begin{align}
  \vdelta_t(\vx) \defeq (\opt{\vx} - \vx) [(\opt{\vs} - \vs_t) \; (\opt{\dot{\vs}} - \dot{\vs}_t)], \label{eq:pd_control}
\end{align} where $\opt{\vx}$ are the \emph{unknown} ground truth disturbance parameters, and $\opt{\vs}$ and $\opt{\dot{\vs}}$ are the desired state and state derivative, respectively.
This yields the following ground truth dynamics: \begin{align}
  \vs_{t+1}(\vx) = \vT(\vs_t, \vu_t + \vdelta_t(\vx)).
\end{align}

The feedback parameters $\vx = \transpose{[\vx_p \; \vx_d]}$ can be split into $\vx_p$ tuning the state difference which are called \emph{proportional parameters} and $\vx_d$ tuning the state derivative difference which are called \emph{derivative parameters}.
We use the ``critical damping'' heuristic to relate the proportional and derivative parameters: $\vx_d = 2 \sqrt{\vx_p}$.
We thus consider the restricted domain $\spX = [0, 20]^4$ where each dimension corresponds to the proportional feedback to one of the four rotors.\looseness=-1

Ground truth disturbance parameters are sampled from a chi-squared distribution with one degree of freedom (i.e., the square of a standard normal distribution), $\opt{\vx_p} \sim \chi_1^2$, and $\opt{\vx_d}$ is determined according to the critical damping heuristic.\looseness=-1

\paragraph{The learning problem}

The goal of our learning problem is to move the quadcopter from its initial position ${\vs(0) = \transpose{[1 \; 1 \; 1]}}$ (in Euclidean space with meter as unit) to position ${\opt{\vs} = \transpose{[0 \; 0 \; 2]}}$.
Moreover, we aim to stabilize the quadcopter at the goal position, and therefore regularize the control signal towards an action $\opt{\vu}$ which results in hovering (approximately) without any disturbances.
We formalize these goals with the following objective function: \begin{align}
  \opt{f}(\vx) \defeq - \sigma\parentheses*{\sum_{t=0}^T \norm{\opt{\vs} - \vs_t(\vx)}_{\mQ}^2 + \norm{\opt{\vu} - \vu_t(\vx)}_{\mR}^2} \label{eq:quadcopter_objective}
\end{align} where $\sigma(v) \defeq \tanh((v - 100) / 100)$ is used to smoothen the objective function and ensure that its range is $[-1,1]$.
The non-smoothed control objective in \cref{eq:quadcopter_objective} is known as a \emph{linear-quadratic regulator} (LQR) which we solve exactly for the undisturbed system using ILQR \citep{trajax2023github}.
Finally, we want to ensure at all times that the quadcopter is at least $0.5$ meter above the ground, that is, \begin{align}
  \opt{g}(\vx) \defeq \min_{t \in [T]} \vs_t^z(\vx) - 0.5
\end{align} where we denote by $\vs_t^z$ the z-coordinate of state $\vs_t$.\looseness=-1

We use a time horizon of $T = 3$ seconds which we discretize using $100$ steps.
The objective is modeled by a zero-mean GP with a Matérn(${\nu=5/2}$) kernel with lengthscale $0.1$, and the constraint is modeled by a GP with mean $-0.5$ and a Matérn(${\nu = 5/2}$) kernel with lengthscale $0.1$.
The prior is obtained by a single observation of the ``safe seed'' $\transpose{[0 \; 0 \; 0 \; 10]}$.\looseness=-1

\paragraph{Adaptive discretization}

We discretize the domain $\spX$ adaptively using coordinate \textsc{LineBO} \citep{kirschner2019adaptive}.
That is, in each iteration, one of the four control dimensions is selected uniformly at random, and the active learning oracle is executed on the corresponding one-dimensional subspace.\looseness=-1

\paragraph{Safety}

Using the (unsafe) constrained BO algorithm EIC~\citep{gardner2014bayesian} leads constraint violation,\footnote{On average, $1.6$ iterations of the first $50$ violate the constraints.} while \itl and \vtl do not violate the constraints during learning for any of the random seeds.

\paragraph{Hyperparameters}

The observation noise is Gaussian with standard deviation $\rho = 0.1$.
We let $\beta = 10$.
The control target is $\opt{\vu} = \transpose{[1.766 \; 0 \; 0 \; 0]}$.\looseness=-1

The state space is $12$-dimensional where the first three states correspond to the velocity of the quadcopter, the next three states correspond to its acceleration, the following three states correspond to its angular velocity, and the last three states correspond to its angular velocity in local frame.
The LQR parameters are given by \begin{align*}
  \mQ &= \diag{\braces{1, 1, 1, 1, 1, 1, 0.1, 0.1, 0.1, 0.1, 0.1, 0.1}} \quad\text{and} \\
  \mR &= 0.01 \cdot  \diag{\braces{5, 0.8, 0.8, 0.3}}.
\end{align*}

The quadcopter simulation was adapted from \cite{quadcopter_sim2023github}.\looseness=-1

Each one-dimensional subspace is discretized using $2\,000$ points.\looseness=-1

\paragraph{Random seeds}

We repeat the experiment for $25$ different seeds where the randomness is over the ground truth disturbance, observation noise, and the randomness in the algorithm.\looseness=-1

\clearpage
\begin{table*}[t]
    \caption{Magnitudes of $\gamma_n$ for common kernels. The magnitudes hold under the assumption that $\spX$ is compact. Here, $B_{\nu}$ is the modified Bessel function. We take the magnitudes from Theorem 5 of \cite{srinivas2009gaussian} and Remark 2 of \cite{vakili2021information}. The notation $\smash{\BigOTil{\cdot}}$ subsumes log-factors. For $\nu = 1/2$, the Matérn kernel is equivalent to the Laplace kernel. For $\nu \to \infty$, the Matérn kernel is equivalent to the Gaussian kernel. The functions sampled from a Matérn kernel are $\lceil\nu\rceil-1$ mean square differentiable. The kernel-agnostic bound follows by simple reduction to a linear kernel in $\abs{\spX}$ dimensions.}
    \label{table:gamma_rates}
    \vskip 0.15in
    \begin{center}
    \begin{tabular}{@{}lll@{}}
      \toprule
      Kernel & $k(\vx,\vxp)$ & $\gamma_n$ \\
      \midrule
      Linear & $\transpose{\vx} \vxp$ & $\BigO{d \log(n)}$ \\
      Gaussian & $\exp\parentheses*{-\frac{\norm{\vx - \vxp}_2^2}{2 h^2}}$ & $\BigOTil{\log^{d+1}(n)}$ \\
      Laplace & $\exp\parentheses*{-\frac{\norm{\vx - \vxp}_1}{h}}$ & $\BigOTil{n^{\frac{d}{1 + d}}\log^{\frac{1}{1+d}}(n)}$ \\
      Matérn & $\frac{2^{1-\nu}}{\Gamma(\nu)}\parentheses*{\frac{\sqrt{2\nu}\norm{\vx-\vxp}_2}{h}}^{\nu} B_{\nu} \parentheses*{\frac{\sqrt{2\nu}\norm{\vx-\vxp}_2}{h}}$ & $\BigOTil{n^{\frac{d}{2\nu + d}}\log^{\frac{2\nu}{2\nu+d}}(n)}$ \\
      \midrule
      any & & $\BigO{\abs{\spX} \log(n)}$ \\
      \bottomrule
    \end{tabular}
    \end{center}
    \vskip -0.1in
\end{table*}

\end{document}